\definecolor{color0}  {RGB}{174,225,254} 
\definecolor{color1}  {RGB}{220,227,248} 
\definecolor{color2}  {RGB}{28,130,185} 
\definecolor{color3}  {RGB}{255,253,250} 
\newenvironment{svgraybox}{%
	\MakeFramed{\advance\hsize-\width \FrameRestore\FrameRestore}}%
{\endMakeFramed}
\definecolor{shadecolor}{gray}{0.75}
\newcommand{\mdframecolor}{gray!10}
\newcommand{\mdframehideline}{true}
\newcommand{\mdframecolorNote}{gray!10}
\newcommand{\mdframehidelineNote}{true}
\newcommand{\exampbar}{\hfill $\square$\par}
\newcommand{\gap}{\,\,\,\,\,\,\,\,\,}
\newcommand{\natu}{\mathbb{N}}
\newcommand{\real}{\mathbb{R}}
\newcommand{\leadto}{\qquad\underrightarrow{ \text{leads to} }\qquad}
\mathchardef\mhyphen="2D
\newcommand{\indicator}{\mathbbm{1}}
\newcommand{\niw}{\mathrm{NIW}}
\newcommand{\nix}{\mathrm{NIX}}
\newcommand{\nig}{\mathrm{NIG}}
\newcommand{\diag}{\mathrm{diag}}
\newcommand{\bxn}{\mathbf{x}_n}
\newcommand{\bxi}{\mathbf{x}_i}
\newcommand{\xknoi}{\mathcal{X}_{k,-i}}
\newcommand{\bznoi}{\mathbf{z}_{-i}}
\newcommand{\nknoi}{\textit{N}_{k,-i}}  
\newcommand{\bmo}{\bm{m}_0}
\newcommand{\bso}{\bm{S}_0}
\newcommand{\bxstar}{\bx^{\star}}
\newcommand{\xstar}{x^{\star}}
\newcommand{\Nstar}{{N^{\star}}}
\newcommand{\mathcalX}{\mathcal{X}}
\newcommand{\smu}{\mu}
\newcommand{\ssigma}{\sigma}
\newcommand{\discrete}{\mathrm{Discrete}}
\newcommand{\dirichlet}{\mathrm{Dirichlet}}
\newcommand{\multinomial}{\mathrm{Multinomial}}
\newcommand{\tr}{\mathrm{tr}}
\newcommand{\gammadist}{\mathrm{Ga}}
\newcommand{\inversegammadist}{\mathrm{IG}}
\newcommand{\betadist}{\mathrm{Beta}}
\newcommand{\wishartdist}{\mathrm{Wi}}
\newcommand{\inversewishart}{\mathrm{IW}}
\newcommand{\inversechidist}{\mathrm{\chi^{-2}}}
\newcommand{\bernoulli}{\mathrm{Bernoulli}}
\newcommand{\normal}{\mathcal{N}}
\newtheorem{examp2}{Example}[section]
\newcommand{\Exp}{\mathrm{E}}
\newcommand{\E}{\mathrm{E}}
\newcommand{\Var}{\mathrm{Var}}
\newcommand{\Cov}{\mathrm{Cov}}
\newcommand{\bzero}{\boldsymbol{0}}
\newcommand{\bpi}{\bm{\pi}}
\newcommand{\balpha}{\boldsymbol\alpha}
\newcommand{\bbeta}{\boldsymbol\beta}
\newcommand{\bgamma}{\boldsymbol\gamma}
\newcommand{\bepsilon}{\boldsymbol\epsilon}
\newcommand{\bmu}{\boldsymbol\mu}
\newcommand{\bSigma}{\boldsymbol\Sigma}
\newcommand{\bomega}{\boldsymbol\omega}
\newcommand{\bOmega}{\boldsymbol\Omega}
\newcommand{\bLambda}{\boldsymbol\Lambda}
\newcommand{\btheta}{\boldsymbol\theta}
\newcommand{\bTheta}{\boldsymbol\Theta}
\newcommand{\bA}{\bm{A}}
\newcommand{\bb}{\bm{b}}
\newcommand{\bB}{\bm{B}}
\newcommand{\bC}{\bm{C}}
\newcommand{\bD}{\bm{D}}
\newcommand{\be}{\bm{e}}
\newcommand{\bI}{\bm{I}}
\newcommand{\bK}{\bm{K}}
\newcommand{\bL}{\bm{L}}
\newcommand{\bmm}{\bm{m}}
\newcommand{\bM}{\bm{M}}
\newcommand{\bN}{\bm{N}}
\newcommand{\bR}{\bm{R}}
\newcommand{\bS}{\bm{S}}
\newcommand{\bU}{\bm{U}}
\newcommand{\bv}{\bm{v}}
\newcommand{\bV}{\bm{V}}
\newcommand{\bx}{\bm{x}}
\newcommand{\bX}{\bm{X}}
\newcommand{\by}{\bm{y}}
\newcommand{\bY}{\bm{Y}}
\newcommand{\bz}{\bm{z}}
\newcommand{\bZ}{\bm{Z}}
\NewDocumentCommand{\emphbf}{O{G}}{\emph{\textbf{#1}}}
\begin{document}

\title{A survey on Bayesian inference for Gaussian mixture model}

\author{
\begin{center}
\name Jun Lu \\ 
\email jun.lu.locky@gmail.com
\end{center}
}


\maketitle

\begin{abstract}
Clustering has become a core technology in machine learning, largely due to its application in the field of unsupervised learning, clustering, classification and density estimation. 
A frequentist approach exists to hand clustering based on mixture model which is known as the EM algorithm where the parameters of the mixture model are usually estimated into a maximum likelihood estimation framework.
Bayesian approach for finite and infinite Gaussian mixture model generates point estimates for all variables as well as associated uncertainty in the form of the whole estimates' posterior distribution. 

The sole aim of this survey is to give a self-contained introduction to concepts and mathematical tools in Bayesian inference for finite and infinite Gaussian mixture model in order to seamlessly introduce their applications in subsequent sections. However, we clearly realize our inability to cover all the useful and interesting results concerning this field and given the paucity of scope to present this discussion, e.g., the separated analysis of the generation of Dirichlet samples by stick-breaking and Polya's Urn approaches. We refer the reader to literature in the field of Dirichlet process mixture model for a much detailed introduction to the related fields. Some excellent examples include \citep{frigyik2010introduction, murphy2012machine, bda2014, hoff2009first}.

This survey is primarily a summary of purpose, significance of important background and techniques for Gaussian mixture model, e.g., Dirichlet prior, Chinese restaurant process, and most importantly the origin and complexity of the methods which shed light on their modern applications. The mathematical prerequisite is a first course in probability. Other than this modest background, the development is 
self-contained, with rigorous proofs provided throughout.

\end{abstract}

\begin{keywords}
Dirichlet distribution, Gaussian models, Finite Gaussian mixture model, Infinite Gaussian mixture model, Chinese restaurant process, Exchangeability, Hyperprior, Log-concavity, ARS, Pruning Gibbs sampling, Clustering metrics, . 
\end{keywords}

\newpage
\tableofcontents

\newpage
\part{Introduction}

\section{Introduction}
Model-based approaches relies on discrete mixture models. The simplest approach to model-based clustering relies on a finite mixture model framework, which assumes that the number of clusters in the general population is a fixed finite number that does not grow with the sample size. The model-based approach, assuming the data come from a mixture of distributions, has the advantage of permitting principled statistical inferences compared to other procedures based largely on heuristics, such as K-means. 
It is well known that inference on the number of clusters and cluster allocation can be very sensitive to both the choice of within-cluster parametric distribution and to violations of the finite mixture assumption. For example, if the true data-generating distribution does not correspond exactly to a finite mixture, then usual estimates of the number of clusters will diverge with increasing sample size. These problems are compounded for high-dimensional data, and often as the dimension of the data increases, more and more clusters are introduced. As the number of clusters increases, clusters become less and less interpretable and statistical efficiency decreases. The goal of this survey is to introduce the mathematical background of the model-based approaches and summarize the existing methods that are robust and scalable.
Our general view is that it does not make sense for one to assume that only finitely many clusters are represented in an infinitely large population; indeed, as samples are added we fully expect new types of individuals to be observed that are not yet represented, though the rate of observing these new types is expected to be quite slow if the sample size is already large. In addition, we would very much like to avoid a common artifact in current clustering methods in which the number of clusters tends to increase as the dimensionality of the data increases. We take a nonparametric Bayesian view to allow for uncertainty in the true data generating model.
Bayesian approach for finite and infinite Gaussian mixture model generates point estimates for all variables as well as associated uncertainty in the form of the whole estimates' posterior distribution. 
For decades Dirichlet process mixture (DPM) models have been extensively used for clustering, classification
and density estimation. In analyses of infinite mixture models, a common concern is over-fitting with redundant mixture components having small weight value. This is called the non-identifiability problem. Specifically, many researchers have noticed that the DPM posterior tends to overestimate the number of components empirically \citep{ji2010cdp, west1993hierarchical, miller2013simple}. This overestimation seems to occur because there
are typically a few superfluous ``extra" clusters or noise, and among researchers using DPMs for clustering, this is an annoyance that is sometimes dealt with by pruning such clusters in an ad hoc way - that is, by removing them before calculating statistics such as, the weight of each clusters or the number of clusters \citep{fox2007hierarchical, west1993hierarchical}.
Many generalizations and alternatives to DPM mixtures and the corresponding CRP, e.g., Pitman-Yor process \citep{perman1992size}, weighted CRP \citep{ishwaran2003generalized, lo2005weighted} in general do not solve the problem with too many clusters. Indeed many of the generalizations are designed to introduce new clusters at a power law instead of log rate.
In \citep{mccullagh2008many}, an upper bound of the number of components is fixed in advance to limit the number in modeling. However, these two methods based on simple upper bound or pruning small cluster by some thresholds can not be directly used for real world data, because when you choose a larger upper bound, DPM models can still result in small clusters, and choosing the best thresholds is usually difficult. In this survey, along with the basic background about Bayesian inference for the finite and infinite mixture models, we will also introduce how to shrink small clusters during sampling. 

In analyses of finite mixture models, a common concern is over-fitting in which redundant mixture components having similar locations are introduced. Over-fitting can have an negative impact on mixture models especially for clustering, since this leads to an unnecessarily complex model and thus sacrifice the accuracy of result to a large extent. \citep{rousseau2011asymptotic} studied and proved the asymptotic behavior of the posterior distribution in an over-fitted Bayesian mixture models. In \citep{rousseau2011asymptotic}, they proved that a carefully choice for the parameters in Dirichlet distribution prior will asymptotically empty out the redundant or extra components when the number of observations grows. However, several challenging practical issues arise. For example, for small to moderate sample sizes, the weight assigned to redundant components is often not negligible or the result may not be satisfactory. This can be attributed to non-identifiability problems
in which case distinguishing between components with similar locations can be difficult. This issue results in substantial uncertainty in clustering and estimation of the number of components in practice.

Clustering is also one of the most widely used applications in the analysis of gene data \citep{lian2010sparse}, for example, for cancer subtype discovery. We may have two different problems in this discovery, 1) Obviously not all the gene features possess discriminative value for different cancer subtypes; 2) also if fewer gene features are used, the procedure might fail to distinguish between some of the subtypes. Many researchers proposed to first reduce dimension by performing the principal component analysis (PCA) on the features and then fitting a Bayesian mixture model to the reduced features, e.g., \citep{bernardo2003bayesian}. However, difficulty and un-necessarity in interpreting the raw attributes arise, and the top principal components usually do not necessarily carry the most significant discriminative features for clustering, thus the procedure is rather suboptimal. Interesting readers can find more details about this topic in the references above. Again, the sole aim of this survey is to introduce the mathematical background for Gaussian mixture model via Bayesian inference.


\subsection{Notations}
In all cases, scalars will be denoted in a non-bold font possibly with subscripts (e.g., $\alpha$, $\alpha_i$). We will use bold face lower case letters possibly with subscripts to denote vectors (e.g., $\bmu$, $\bx$, $\bx_n$, $\bz$) and
bold face upper case letters possibly with subscripts to denote matrices (e.g., $\bSigma$, $\bL$). The $i^{th}$ element of a vector $\bz$ will be denoted by $z_i$ in non-bold font. And in all cases, vectors are formulated in a column rather than in a row.

The transpose of a matrix $\bX$ will be denoted by $\bX^T$ and its inverse will be denoted by $\bX^{-1}$. We will denote the $p \times p$ identity matrix by $\bI_p$. A vector or matrix of all zeros will be denoted by a bold face zero $\bm{0}$ whose size should be clear from context, or we denote $\bm{0}_p$ to be the vector of all zeros with $p$ entries.

In specific, we will use the notation denoted in Table~\ref{table:general_notation}, Table~\ref{table:niw_prior} and Table~\ref{table:mixture_model} for the text, or otherwise indicated especially in each section. 

\begin{table}[htbp]\caption{Table of general notation}
	\begin{center}
		\begin{tabular}{r c p{10cm} }
			\toprule
			
			$f(\bx) \propto g(\bx)$ & $\triangleq$ & $f$ is proportional to $g$, means there is a constant $c$ such that $f(\bx) = cg(\bx)$ for all $\bx$ \\
			$p(\bx|\btheta)$ & $\triangleq$ & generator / likelihood\\
			$p(\btheta)$ & $\triangleq$ & prior likelihood\\
			$p(\btheta| \bx)$ & $\triangleq$ & posterior likelihood\\
			$p(\bx)$ & $\triangleq$ & marginal likelihood\\
			$p(\bx_{N+1} | \bx_{1:N})$ & $\triangleq$ & posterior predictive distribution\\
			$\triangle_K$ & $\triangleq$ & ($K-1$)-dimensional probability simplex living in $\mathbb{R}^K$\\
			
			\multicolumn{3}{c}{}\\
			\multicolumn{3}{c}{\underline{Decision Variables}}\\
			\multicolumn{3}{c}{}\\
			$\delta_{\bx_0}(\bx)$ & $=$ & \(\left\{\begin{array}{rl}
				1,  & \text{if $\bx$ = $\bx_0$} \\
				0,  & \text{otherwise} \end{array} \right.\)\\
			\bottomrule
		\end{tabular}
	\end{center}
	\label{table:general_notation}
\end{table}

\begin{table}[htbp]\caption{Table of notation for normal-inverse-Wishart prior}
	\begin{center}
		\begin{tabular}{r c p{10cm} }
			\toprule
			$\bbeta = (\bmo, \kappa_0, \nu_0, \bS_0)$& $\triangleq$ &  Parameters for the normal-inverse-Wishart prior on mean vector $\bmu$ and covariance matrix $\bSigma$ of a multivariate Gaussian distribution. The interpretation for the individual parameters are given below.\\
			$\bmo$& $\triangleq$ & Prior mean for $\bmu$.\\
			$\kappa_0$& $\triangleq$ & How strongly we believe the above prior.\\
			$\bS_0$ & $\triangleq$ & Proportional to prior mean for $\bSigma$.\\
			$\nu_0$ & $\triangleq$ & How strongly we believe the above prior.\\
			\bottomrule
		\end{tabular}
	\end{center}
	\label{table:niw_prior}
\end{table}

\begin{table}[htbp]\caption{Table of notation for mixture model}
	\begin{center}
		\begin{tabular}{r c p{10cm} }
			\toprule
			$N$ & $\triangleq$ & Number of data vectors.\\
			$D$ & $\triangleq$ & Dimension of data vectors.\\
			$\bx_i \in \mathbb{R}^D$ & $\triangleq$ & The $i^{th}$ data vector.\\
			$\mathcal{X} = \bx_{1:N}= \{\bx_1, \bx_2, \ldots , \bx_N\}$  & $\triangleq$ & Set of data vectors.\\
			$\mathcal{X}_{-i}$    & $\triangleq$ & All data vectors apart from $\bx_i$.\\
			$\mathcal{X}_k$    & $\triangleq$ & Set of data vectors from mixture component $k$.\\
			$\mathcal{X}_{k,-i} $     & $\triangleq$ & Set of data vectors from mixture component $k$, without taking $\bx_i$ into account.\\
			$N_k$  & $\triangleq$ & Number of data vectors from mixture component $k$.\\
			$N_{k,-i} $   & $\triangleq$ & Number of data vectors from mixture component $k$, without taking $\bx_i$ into account.\\
			$K$ & $\triangleq$ & Number of components in a finite mixture model.\\
			$z_i \in {1, 2, \ldots, K}$& $\triangleq$ & Discrete latent state indicating which component the observation $\bx_i$ belongs to.\\
			$\bz = (z_1, z_2, \ldots , z_N )$& $\triangleq$ & Latent states for all observations $\bx_1, \bx_2, \ldots , \bx_N$.\\
			$\bz_{-i}$& $\triangleq$ & All latent states excluding $z_i$.\\
			$\bmu$& $\triangleq$ & Mean vector of a multivariate Gaussian density. A subscript is used to for a particular component in a mixture model, e.g., $\bmu_k$.\\
			$\bSigma$& $\triangleq$ & Covariance matrix of a multivariate Gaussian density. A subscript is used for a particular component in a mixture model, e.g. $\bSigma_k$.\\
			$\pi_k = p(z_i = k)$ & $\triangleq$ &  Prior probability that data vector $\bx_i$ will be assigned to mixture component $k$.\\     
			$\bpi = (\pi_1, \pi_2, \ldots , \pi_K)$& $\triangleq$ & Prior assignment probability for all $K$ components.\\
			$\balpha = (\alpha_1, \alpha_2, \ldots, \alpha_K)$& $\triangleq$ & Parameter for Dirichlet prior on the mixing weights $\bpi$.\\
			\bottomrule
		\end{tabular}
	\end{center}
	\label{table:mixture_model}
\end{table}


\newpage
\part{Monte Carlo methods for probabilistic inference}\label{chapter:mc_methods}

This survey focuses on Markov chain Monte Carlo methods for probabilistic inference, which draws conclusions from a probabilistic model.

This chapter surveys the mathematical details of probabilistic inference, focusing on those aspects that will provide the foundation for the rest of this survey. 

\section{The Bayesian approach}
In modern statistics, Bayesian approaches
have become increasingly more important and widely used. Thomas Bayes came up this idea but died before publishing it. Fortunately, his friend Richard Price carried on his work and published it in 1764. In this section, we describe the basic ideas about Bayesian approach and use the Beta-Bernoulli model and Bayesian linear model as an appetizer of the pros and prior information of Bayesian models. 

Let $\mathcalX (\bx_{1:N})= \{\bx_1, \bx_2, \ldots, \bx_N\}$ be the observations of $N$ data points, and suppose they are independent and identically distributed (i.i.d.), with the probability parameterized by $\btheta$. Note that the parameters $\btheta$ might include the hidden variables, for example the latent variables in a mixture model to indicate which cluster a data point belongs to. 

The idea of Bayesian approach is to assume a \textit{prior} probability distribution for $\btheta$ with hyperparameters $\balpha$ (i.e., $p(\btheta| \balpha)$) - that is, a distribution representing the plausibility of each possible value of $\btheta$ before the data is observed. Then, to make inferences about $\btheta$, one simply considers the conditional distribution of $\btheta$ given the observed data. This is referred to as the \textit{posterior} distribution, since it represents the plausibility of each possible value of $\btheta$ after seeing the data.
Mathematically, this is expressed via Bayes’ theorem,
\begin{equation}
p(\btheta | \mathcalX, \balpha) = \frac{p(\mathcalX | \btheta ) p(\btheta | \balpha)}{p(\mathcalX | \balpha)} = \frac{p(\mathcalX | \btheta ) p(\btheta | \balpha)}{\int_{\btheta}  p(\mathcalX, \btheta | \balpha) }  = \frac{p(\mathcalX | \btheta ) p(\btheta | \balpha)}{\int_{\btheta}  p(\mathcalX | \btheta ) p(\btheta | \balpha) }  \propto p(\mathcalX | \btheta ) p(\btheta | \balpha),
\label{equation:posterior_abstract_for_mcmc}
\end{equation}
where $\mathcalX$ is the observed data set. In other words, we say the posterior is proportional to the likelihood times the prior. 

More generally, the Bayesian approach - in a nutshell - is to assume a prior distribution on any unknowns ($\btheta$ in our case), and then just follow the rules of probability to answer any questions of interest. For example, when we find the parameter based on the maximum posterior probability of $\bbeta$, we turn to maximum a posteriori (MAP) estimator.

\section{Approximate inference}
For this survey, we focus on approximate probabilistic inference methods. 
In certain cases, it is computationally feasible to compute the posterior exactly. For example, exponential families with conjugate priors often enable analytical solutions.
Although exact inference methods exist and are precise and useful for certain classes of problems, exact inference methods in complicated models is usually intractable, because these methods typically depend on integrals, summations, or intermediate representations that grow large as the state space grows too large so as to make the computation inefficient. For example, we may use conjugate priors in a Gaussian mixture model. However, the model is hierarchical and is too complicated to compute the exact posterior. In these cases, approximate probabilistic inference methods are rather useful and necessary. 

Generally, \textit{variational methods} and \textit{Monte Carlo methods} \citep{bonawitz2008composable} are two main classes of approximate inference. We here give a brief comparison of the two methods. 
In variational inference methods, we first approximate the full model with a simpler model in which the inference questions are tractable. Then, the parameters of this simplified model are calculated by some methods (e.g. by optimization methods) to minimize a measure of the dissimilarity between the original model and the simplified version; this calculation usually performs deterministically because of the optimization methods used. Finally, certain queries can be calculated and executed in the simplified model. In other words, the main idea behind variational methods is to pick a family of distributions over the parameters with its own \textit{variational parameters} - $q(\btheta | \boldsymbol\nu)$ where $\boldsymbol\nu$ is the variational parameters. Then, find the setting of the parameters that makes $q$ close to the posterior of interest. As a detailed example, we can refer to \citep{ma2014bayesian}. The main advantage of variational methods is deterministic; however, the corresponding results are in the form of a lower bound of the desired quantity, and the tightness of this bound depends on the degree to which the simplified distribution can model the original posterior distribution. The variational inference is an important tool for Bayesian deep learning \citep{jordan1999introduction, graves2011practical, hoffman2013stochastic, ranganath2014black, mandt2014smoothed}.

On the contrary, in Monte Carlo methods we first draw a sequence of samples from the true target posterior distribution. Then certain inference questions are then answered by using this set of samples as an approximation of the target distribution itself. Monte Carlo methods are guaranteed to converge – if you want a more accurate answer, you just need to run the inference for longer; in the limit of running the Monte Carlo algorithm forever, the approximation results from the samples converge to the the target distribution (see Section~\ref{sec:monte_carlo_methods}). 

\section{Monte Carlo methods (MC)}\label{sec:monte_carlo_methods}
In Monte Carlo methods, we first draw $N$ samples $\btheta_1, \btheta_2, \ldots, \btheta_N$ from the posterior distribution $p(\btheta | \mathcalX, \balpha)$ in \eqref{equation:posterior_abstract_for_mcmc},
and then approximate the distribution of interest by
\begin{equation}
p(\btheta | -) \approx \overset{\sim}{p}(\btheta | -) = \frac{1}{N} \sum_{n=1}^N \delta_{\btheta_n}(\btheta),
\end{equation}
where $\delta_{\btheta_i}(\btheta)$ is the Dirac delta function\footnote{The Dirac delta function $\delta_{\bx_0}
(\bx)$ has the properties that it is non-zero and equals to 1 only at $\bx = \bx_0$.}. As the number of samples increases, the approximation
(almost surely) converges to the true target distribution, i.e., $\overset{\sim}{p}(\btheta) \overset{\overset{a.s.}{N\rightarrow \infty} }{\longrightarrow} p(\btheta)$.

This kind of sampling-based methods are extensively used in modern statistics, due to their ease of use
and the generality with which they can be applied. The fundamental problem solved by
these methods is the approximation of expectations such as
\begin{equation}
\E {h(\bTheta)} = \int_{\btheta} h(\btheta) p(\btheta) d \btheta,
\end{equation}
in the case of a continuous random variable $\bTheta$ with probability density function (p.d.f.) $p$. Or 
\begin{equation}
\E {h(\bTheta)} = \sum_{\btheta} h(\btheta) p(\btheta),
\end{equation}
in the case of a discrete random variable $\bTheta$ with probability mass function (p.m.f.) $p$. The general principle at work is that such expectations can be approximated by
\begin{equation}
\E {h(\bTheta)} \approx \sum_{n=1}^N h(\btheta_n),
\end{equation}

If it were generally easy to draw samples directly from $p(\btheta | \mathcalX, \balpha)$, the Monte Carlo
story would end here. Unfortunately, this is usually intractable. We can consider the posterior form $p(\btheta | \mathcalX, \balpha) = \frac{p(\mathcalX | \btheta ) p(\btheta | \balpha)}{p(\mathcalX | \balpha)}$, where in many problems $p(\mathcalX | \btheta ) p(\btheta | \balpha)$ can be computed easily, but $p(\mathcalX | \balpha)$ cannot due to integrals, summations etc. In this case Markov chain Monte Carlo is especially useful. 

\subsection{Markov chain Monte Carlo (MCMC)}


Markov chain Monte Carlo (MCMC) algorithms, also called samplers, are numerical approximation algorithms. Intuitively, it is a stochastic hill-climbing approach to inference, operating over the complete data set. This inference method is designed to spend most of the computational efforts to sample points from the high probability regions of true target posterior distribution $p(\btheta | \mathcalX, \balpha)$ \citep{andrieu2003introduction, bonawitz2008composable, hoff2009first, geyer2011introduction}. In this sampler, a Markov chain stochastic walk is taken through the state space $\bTheta$ such that the probability of being in a particular state $\btheta_t$ at any point in the walk is $p(\btheta_t | \mathcalX, \balpha)$. Therefore, samples from the true posterior distribution $p(\btheta |\mathcalX, \balpha)$ can be approximated by recording the samples (states) visited by the stochastic walk and some other post-processing methods such as thinning. The stochastic walk is a Markov chain, i.e. the choice of state at time $t + 1$ depends only on its previous state - the state at time $t$.  Formally, if $\btheta_t$ is the state of the chain at time $t$, then $p(\btheta_{t+1}|\btheta_1, \btheta_2,\ldots, \btheta_t) = p(\btheta_{t+1}| \btheta_t)$. Markov chains are history-free, we can get two main advantages from this history-free property:
\begin{itemize}
\item From history-free, the Markov chain Monte Carlo methods  can be run for an unlimited number of iterations without consuming additional memory space;
\item The history-free property also indicates that the MCMC stochastic walk can be completely characterized by $p(\btheta_{t+1}| \btheta_t)$, known as the \textit{transition kernel}.
\end{itemize}
We then focus on the discussion of the transition kernel. The transition kernel $\bK$ can also be formulated as a linear transform, thus if $p_t = p_t (\btheta)$ is a row vector which encodes the probability of the walk being in state $\btheta$ at time $t$, then $p_{t+1} = p_t \bK$. If the stochastic walk starts from state $\btheta_0$, then the distribution from this initial state is the delta distribution $p_0 = \delta_{\btheta_0} (\btheta)$ and the state distribution for the chain after step $t$ is $p_t = p_0\bK^t$. We can easily find that the key to Markov chain Monte Carlo is to choose kernel $\bK$ such that $\underset{t\rightarrow \infty}{\mathrm{lim}} p_t = p(\btheta | \mathcalX, \balpha)$, independent on the choice of $\btheta_0$. Kernels with this property are said to converge to an \textbf{equilibrium distribution} $p_{eq} = p(\btheta | \mathcalX)$. Convergence is guaranteed if both of the following criteria meet (see \citep{bonawitz2008composable}):
\begin{itemize}
\item $p_{eq}$ is an invariant (or stationary) distribution for $\bK$. A distribution $p_{inv}$ is an invariant distribution for $\bK$ if $p_{inv} = p_{inv} \bK$;
\item $\bK$ is \textit{ergodic}. A kernel is ergodic if it is \textit{irreducible} (any state can be reached from any other state) and \textit{aperiodic} (the stochastic walk never gets stuck in cycles).
\end{itemize}

There are a large number of MCMC algorithms, too many to review here. Popular families include Gibbs sampling, Metropolis-Hastings (MH), slice sampling, Hamiltonian Monte Carlo, Adaptive rejection sampling and many others. Though the name is misleading, Metropolis-within-Gibbs (MWG) was developed first by \cite{metropolis1953equation}, and MH was a generalization of MWG \citep{hastings1970monte}. All MCMC algorithms are known as special cases of the MH algorithm. Regardless of the algorithm, the goal in Bayesian inference is to maximize the unnormalized joint posterior distribution and collect samples of the target distributions, which are marginal posterior distributions, later to be used for inference queries.

The most generalizable MCMC algorithm is the Metropolis-Hastings (MH) generalization \citep{metropolis1953equation, hastings1970monte} of the MWG algorithm. The MH algorithm extended MWG to include asymmetric proposal distributions. 
In this method, it converts an arbitrary proposal kernel $q(\btheta_{\star} | \btheta_t  )$ into a transition kernel with the desired invariant distribution $p_{eq}(\btheta)$. In order to generate a sample from
a MH transition kernel, we first draw a proposal $\btheta_{\star} \sim q( \btheta_{\star} | \btheta_t )$, then evaluates the MH acceptance probability by 
\begin{equation}
P[A(\btheta_{\star}| \btheta_{t} )]  = \min \left(1, \frac{p(\btheta_{\star} | \balpha)q(\btheta_{t} | \btheta_{\star}  )}{p(\btheta_{t} | \balpha)q(\btheta_{\star} | \btheta_{t}  )} \right),
\end{equation}
with probability $P[A(\btheta_{\star} | \btheta_{t}  )] $ being the proposal is accepted and we set $\btheta_{t+1} =  \btheta_{\star}$; otherwise the
proposal is rejected and we set $\btheta_{t+1} =  \btheta_{t}$. That is 
\begin{equation}
\btheta_{t+1} =\left\{
                \begin{array}{ll}
                  \btheta_\star,  \text{ with probability } P[A(\btheta_{\star} | \btheta_{t}  )]; \\
                  \btheta_{t},  \text{ with probability } 1 - P[A( \btheta_{\star} | \btheta_{t}  )] .
                \end{array}
              \right.
\end{equation}
Intuitively, we may find that $\frac{p(\btheta_{\star} | \balpha)}{p(\btheta_{t} | \balpha)}$ term tends to accept moves that lead to higher probability parts of the state space, while also the $\frac{q( \btheta_{t} | \btheta_{\star} )}{q(\btheta_{\star} | \btheta_{t}  )}$ term tends to accept moves that are easy to undo. A random walk demo of MH is available online by Chi Feng \footnote{\url{http://www.junlulocky.com/mcmc-demo/}}. Because in MH, we only evaluate $p(\btheta)$ as part of the ratio $\frac{p(\btheta_{\star} | \balpha)}{p(\btheta_{t} | \balpha)}$, we do not need compute $p(\mathcalX | \balpha)$ as mentioned in Section~\ref{sec:monte_carlo_methods}. 

The key in MH is the proposal kernel $q(\btheta_{\star} | \btheta_t  )$. However, the transition kernel is not $q(\btheta_{\star} | \btheta_t  )$. Informally, the kernel $K(\btheta_{t+1} | \btheta_t  )$ in MH is 
$$
p (\btheta_{t+1} | \mathrm{accept}) P[\mathrm{accept}] + p (\btheta_{t+1}|\mathrm{reject}) P[\mathrm{reject}].
$$ 
\citet{tierney1998note} introduced that the precise transition kernel is
\begin{equation}
\begin{aligned}
K(\btheta_t \rightarrow \btheta_{t+1}) &= p(\btheta_{t+1} | \btheta_t) \\
						&= q(\btheta_{t+1} | \btheta_t  ) A(\btheta_{t+1} | \btheta_t  ) + \delta_{\btheta_t}(\btheta_{t+1}) \int_{\btheta_\star} q(\btheta_{\star} | \btheta_t  ) (1 - A(\btheta_{\star} | \btheta_{t}  )) .
\end{aligned}
\end{equation}

%
%

\subsection{MC V.S. MCMC}
As shown in previous sections, the purpose of Monte Carlo or Markov chain Monte Carlo approximation is to obtain a sequence of parameters values $\{\btheta^{(1)}, \ldots, \btheta^{(N)}\}$ such that 
\begin{equation}
\frac{1}{N} \sum_{n=1}^N h(\btheta^{(n)}) \approx \int_{\btheta} h(\btheta) p(\btheta) d\btheta, 
\end{equation}
for any functions $h$ of interest in case of continuous random variable. In other words, we want the empirical average of $\{h(\btheta^{(1)}), \ldots,$ $h(\btheta^{(N)})\}$ to approximate the expected value of $h(\btheta)$ under a target probability distribution $p(\btheta)$. In order for this to be a good approximation for a wide range of functions $h$, we need the empirical distribution of the simulated sequence $\{\btheta^{(1)}, \ldots, \btheta^{(N)}\}$ to look like the target distribution $p(\btheta)$. MC and MCMC are two ways of generating such a sequence. MC simulation, in which we generate independent samples from the target distribution, is in some sense the "true situation". Independent MC samples automatically create a sequence that is representative of $p(\btheta)$, which means the probability that $\btheta^{(n)} \in A$ for any set $A$ is
\begin{equation}
\int_{A} p(\btheta) d\btheta.
\end{equation}
where $n \in \{1, \ldots, N\}$. However, this is not true for MCMC samples, in which case all we are sure of is that
\begin{equation}
\lim_{n \rightarrow \infty} Pr(\theta^{(n)} \in A) = \int_A p(\btheta) d\btheta.
\end{equation}

\subsection{Gibbs sampler}\label{section:gibbs-sampler}

Gibbs sampling was introduced by Turchin \citep{turchin1971computation}, and later by brothers Geman and Geman \citep{geman1984stochastic} in the context of image restoration. The Geman brothers named the algorithm after the physicist J. W. Gibbs, some eight decades after his death, in reference to an analogy between the sampling algorithm and statistical physics.


Gibbs sampling is applicable when the joint distribution is not known explicitly or is difficult to sample from directly, but the conditional distribution of each variable is known and easy to sample from. A Gibbs sampler generates a draw from the distribution of each parameter or variable in turn, conditional on the current values of the other parameters or variables. Therefore, a Gibbs sampler is a componentwise algorithm. In our example, given some data $\mathcalX$ and a probability distribution $p(\btheta | \mathcalX, \balpha)$ parameterized by $\btheta = \{\theta_1, \theta_2, \ldots, \theta_p\}$. We can successively draw samples from the distribution by sampling from
\begin{equation}
\theta_i^{(t)} \sim p(\theta_i | \btheta_{-i}^{(t-1)}, \mathcalX, \balpha),
\end{equation}
where $\btheta_{-i}^{(t-1)}$ is all current values of $\btheta$ in the $(t-1)^{th}$ iteration except for $\theta_i$. If we sample long enough, these $\theta_i$ values will be random samples from the distribution $p$. 

In deriving a Gibbs sampler, it is often helpful to observe that
\begin{equation}
p(\theta_i \,|\, \btheta_{- i}, \mathcalX)
        = \frac{
            p(\theta_1, \theta_2, \ldots,\theta_p, \mathcalX)
        }{
            p(\btheta_{- i}, \mathcalX)
        } \propto p(\theta_1, \theta_2, \ldots,\theta_p, \mathcalX).
\end{equation}
That is, the conditional distribution is proportional to the joint distribution. We will get a lot of benefits from this simple observation by dropping constant terms from the joint distribution (relative to the parameters we are conditioned on).

Shortly, as a simplified example, given a joint probability distribution $p(\theta_1,\theta_2|\mathcalX)$, a Gibbs sampler would draw $p(\theta_1|\theta_2,\mathcalX)$ , then $p(\theta_2|\theta_1,\mathcalX)$ iteratively.
The procedure defines a sequence of realization of random variables $\theta_1$ and $\theta_2$ 
\begin{equation}
	(\theta_1^0, \theta_2^0), (\theta_1^1, \theta_2^1), (\theta_1^2, \theta_2^2), \cdots \nonumber
\end{equation}
which converges to the joint distribution $p(\beta_1, \beta_2)$. More details about Gibbs sampling can be found in \citep{turchin1971computation, geman1984stochastic, hoff2009first, gelman2013bayesian}.

\subsection{Adaptive rejection sampling (ARS)}
The purpose of this algorithm is to provide an relatively efficient way to sample from a distribution from the large class of log-concave densities \citep{gilks1992adaptive, wild1993algorithm}. We only overview the algorithm here, we can find more details in \citet{gilks1992adaptive} and \citet{wild1993algorithm}. And a Python implementation is available online. \footnote{\url{https://github.com/junlulocky/ARS-MCMC}}



\subsubsection{Rejection Sampling}

\begin{figure}[h!]
\centering
  \includegraphics[width=0.5\textwidth]{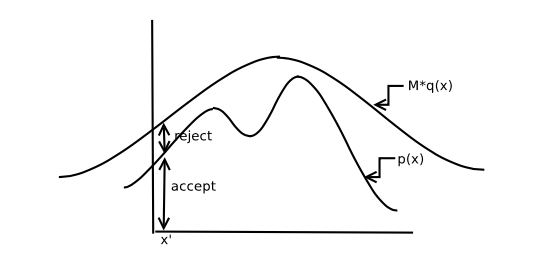}
  \caption{Rejection Sampling. Figure from Michael I. Jordan's lecture notes.}
  \label{fig:rejection_sampling}
\end{figure}

In rejection sampling, we want to sample from a target probability density function $p(x)$, given that we can sample from a probability density function $q(x)$ easily. The target density $p(x)$ is not known. But the idea is that, if $M \times q(x)$ forms an envelope over $p(x)$ for some $M > 1$ as shown in Figure~\ref{fig:rejection_sampling}, i.e.
\begin{equation}
\frac{p(x)}{q(x)} < M, \textit{ for all x.}
\end{equation}
Then if we sample some $x_i$ from $q(x)$, and if $y_i=u \times M\times q(x_i)$ lies below the region under $p(x)$ for some $u \sim \mathrm{Uniform}(0,1)$, then accept $x_i$, otherwise we reject $x_i$. 

Informally, what the method does is to sample $x_i$ from some distribution and then it decides whether to accept it or reject it.

\subsubsection{Adaptive Rejection Sampling}\label{section:ars-sampling}
\begin{figure}[h!]
\centering
  \includegraphics[width=0.5\textwidth]{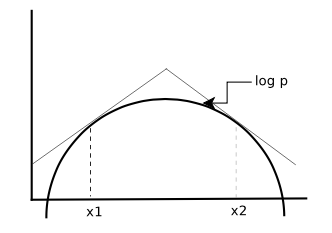}
  \caption{Adaptive Rejection Sampling. Figure from Michael I. Jordan's lecture notes.}
  \label{fig:adaptive_rejection_sampling}
\end{figure}

This method works only for log-concave densities. The basic idea is to form an upper envelope (the upper bound on $p(x)$) adaptively and use this to replace $M\times q(x)$ in rejection sampling.

As shown in Figure~\ref{fig:adaptive_rejection_sampling}, the log density $\log p(x)$ is considered. We then sample $x_i$ from the upper envelope, and either accepted or rejected as in rejection sampling. If it is rejected, a tangent is drawn passing through $x = x_i$ and $y = \log(p)$ and the tangent is used to reduce the upper envelope to reduce the number of rejected samples. The intersections of these tangent planes enable the formation of envelope adaptively. To sample from the upper envelope, we need to transform from log space by exponentiating and using properties of the exponential distribution.

\section{Bayesian appetizers}
In this section, we will take some examples to better understand the ideas behind Bayesian approaches where we will show the semi-conjugate priors with Gibbs sampler and full conjugate priors without approximate inference. Feel free to skip this section if the readers already have basic knowledge in Bayesian inference.
\subsection{An appetizer: Beta-Bernoulli model}\label{sec:beta-bernoulli}
We formally introduce a Beta-Bernoulli model to show how the Bayesian approach works.
The Bernoulli distribution models binary outcomes, i.e., outputting two possible values. The likelihood under this model is just the probability mass function of Bernoulli distribution:
\begin{equation}
	\bernoulli(x|\theta) = p(x|\theta) = \theta^x (1-\theta)^{1-x} \indicator(x\in \{0,1\}). \nonumber
\end{equation}
That is, 
$$
\bernoulli(x|\theta)=p(x|\theta)=\left\{
\begin{aligned}
	&1-\theta ,& \mathrm{\,\,if\,\,} x = 0;  \\
	&\theta , &\mathrm{\,\,if\,\,} x =1,
\end{aligned}
\right.
$$
where $\theta$ is the probability of outputting 1 and $1-\theta$ is the probability of outputting 0.
The mean of the Bernoulli distribution is $\theta$. 
Suppose $x_1, x_2, ..., x_n$ are drawn i.i.d. from $Bernoulli(x|\theta)$. Then, the likelihood under Bernoulli distribution is given by 
$$
\begin{aligned}
	\text{likelihood} = 	p(x_{1:n} |\theta) &= \theta^{\sum x_i} (1-\theta)^{n-\sum x_i},
\end{aligned}
$$
which is a distribution on $x_{1:n}$ and is called the \textbf{likelihood function} on $x_{1:n}$.

And we will see the prior under this model is the probability density function of Beta distribution:
\begin{equation}
	\mathrm{prior} = \betadist(\theta|a, b)=	p(\theta|a, b) =\frac{1}{B(a,b)} \theta^{a-1}(1-\theta)^{b-1} \indicator(0<\theta<1), \nonumber
\end{equation}
where $B(a,b)$ is the Euler's beta function and it can be seen as a normalization term.

We put a Beta prior on the parameter $\theta$ of Bernoulli distribution. The posterior is obtained by 
\begin{equation}
	\begin{aligned}
		\mathrm{posterior} = p(\theta|x_{1:n}) &\propto p(x_{1:n} |\theta) p(\theta|a,b) \\
		&=\theta^{\sum x_i} (1-\theta)^{n-\sum x_i} \times \frac{1}{B(a,b)} \theta^{a-1}(1-\theta)^{b-1}\indicator(0<\theta<1) \\
		&\propto \theta^{a+\sum x_i-1}(1-\theta)^{b+n-\sum x_i-1} \indicator(0<\theta<1) \\
		&\propto \betadist(\theta | a+\sum x_i, b+n-\sum x_i). \nonumber
	\end{aligned}
\end{equation}
We find that the posterior distribution shares the same form as the prior distribution. When this happens, we call the prior as \textbf{conjugate prior}. The conjugate prior has a nice form such that it is easy to work with for computing the posterior probability density function and its derivatives, and sampling from the posterior.

\begin{remark}[Prior Information in Beta-Bernoulli Model]
	A comparison of the prior and posterior formulation would find that the hyperparameter $a$ is the prior number of $1$'s in the output and $b$ is the prior number of 0's in the output. And $a+b$ is the prior information about the sample size.
\end{remark}

\begin{remark}[Bayesian Estimator]
	From this example by Beta-Bernoulli model, like maximum likelihood estimator and method of moment (MoM, i.e., using the moment information to get the model parameter.), Bayesian model is also a kind of point estimator. But Bayesian models output a probability of the parameter of interest $p(\theta |x_{1:n})$. 
	
	When we want to predict for new coming data, we do not give out the prediction by a direct model $p(x_{n+1} | \theta)$. But rather an integration:
	\begin{equation}
		p(x_{n+1} | x_{1:n}) = \int p(x_{n+1} | \theta) p(\theta | x_{1:n}) d\theta.\nonumber
	\end{equation}
	In another word, $x_{n+1}$ is dependent of $x_{1:n}$. $x_{1:n}$ provide information on $\theta$, which in turn provides information on $x_{n+1}$ (i.e., $x_{1:n} \rightarrow \theta \rightarrow x_{n+1}$).
\end{remark}

\begin{examp2}[Amount of Data Matters]\label{example:amountofdata}
	Suppose we have three observations for the success in Bernoulli distribution: 
	
	\item 1. 10 out of 10 are observed to be success (1's);
	\item 2. 48 out of 50 are observed to be success (1's);
	\item 3. 186 out of 200 are observed to be success (1's).
	
	So, what is the probability of success in the Bernoulli model? Normal answer to case 1, 2, 3 are 100\%, 96\% and 93\% respectively. But an observation of 10 inputs is rather a small amount of data and noise can make it less convincing. 
	
	Suppose we put a $Beta(1,1)$ prior on the Bernoulli distribution. The posterior probability of success for each case would be $\frac{11}{12}=91.6\%$, $\frac{49}{52}=94.2\%$ and $\frac{187}{202}=92.6\%$ respectively. Now we find the case 1 has less probability of success compared to case 2.
	
	A Bayesian view of the problem naturally incorporates the amount of data as well as its average. This special case shown here is also called the Laplace's rate of succession \citep{ollivier2015laplace}. Laplace's ``add-one" rule of succession modifies the observed frequencies in a sequence of successes and failures by adding one to the observed counts. This improves prediction by avoiding zero probabilities and corresponds to a uniform Bayesian prior on the parameter. 
	\hfill $\square$\par
\end{examp2}

\begin{mdframed}[hidealllines=\mdframehidelineNote,backgroundcolor=\mdframecolorNote,frametitle={Why Bayes?}]
	This example above shows that Bayesian models consider prior information on the parameters in the model making it particularly useful to regularize regression problems where data information is limited. And this is why the Bayesian approach gains worldwide attention for decades. 
	
	The prior information $p(\theta)$ and likelihood function $p(x|\theta)$ represent a rational person's belief, and then the Bayes' rule is an optimal method of updating this person's beliefs about $\theta$ given new information from the data \citep{fahrmeir2007regression, hoff2009first}.
	
	The prior information given by $p(\theta)$ might be wrong if it does not accurately represent our prior beliefs. However, this does not mean that the posterior $p(\theta | x)$ is not useful. A famous quote is ``all models are wrong, but some are useful" \citep{box1987empirical}. If the prior $p(\theta)$ approximates our beliefs, then the posterior $p(\theta | x)$ is also a good approximation to posterior beliefs.
\end{mdframed}

\subsection{An appetizer: Bayesian linear model with zero-mean prior}\label{sec:bayesian-zero-mean}
Assume $\boldsymbol{y} = \boldsymbol{X}\bbeta + \bepsilon$ where $\bepsilon \sim \normal(\bzero, \sigma^2 \boldsymbol{I})$ and $\sigma^2$ is fixed (a detailed analysis of this model can be found in \citep{rasmussen2003gaussian, hoff2009first, lu2021rigorous}), this additive Gaussian noise assumption gives rise to the likelihood. Let $\mathcalX (\bx_{1:n})= \{\bx_1, \bx_2, \ldots, \bx_n\}$ be the observations of $n$ data points,
\begin{equation}
	\mathrm{likelihood} = \by | \bX, \bbeta, \sigma^2 \sim \normal(\bX\bbeta, \sigma^2\bI). \nonumber
\end{equation}
Suppose we specify a Gaussian prior with zero-mean over the weight parameter 
\begin{equation}
	\mathrm{prior} = 	\bbeta \sim \normal(\bzero, \bSigma_0). \nonumber
\end{equation}
By the Bayes' theorem ``$\mathrm{posterior} \propto \mathrm{likelihood} \times \mathrm{prior} $", we get the posterior
\begin{equation}
	\begin{aligned}
		\mathrm{posterior}&= p(\bbeta|\by,\bX, \sigma^2) \\
		&\propto p(\by|\bX, \bbeta, \sigma^2) p(\bbeta | \bSigma_0) \\
		&=  \frac{1}{(2\pi \sigma^2)^{n/2}} \exp\left(-\frac{1}{2\sigma^2} (\by-\bX\bbeta)^\top(\by-\bX\bbeta)\right) \\
		&\,\,\,\,\,\,\, \times \frac{1}{(2\pi)^{n/2}|\bSigma_0|^{1/2}}\exp\left(-\frac{1}{2} \bbeta^\top\bSigma_0^{-1}\bbeta\right) \\
		&\propto \exp\left(-\frac{1}{2} (\bbeta - \bbeta_1)^\top \bSigma_1^{-1} (\bbeta - \bbeta_1)\right), \nonumber
	\end{aligned}
\end{equation}
where $\bSigma_1 = (\frac{1}{\sigma^2} \bX^\top\bX + \bSigma_0^{-1})^{-1}$ and $\bbeta_1 =  (\frac{1}{\sigma^2}\bX^\top\bX + \bSigma_0^{-1})^{-1}(\frac{1}{\sigma^2}\bX^\top\by)$. Therefore the posterior distribution is also a Gaussian distribution (same form as the prior distribution):
\begin{equation}
	\mathrm{posterior} = \bbeta|\by,\bX, \sigma^2 \sim \normal(\bbeta_1, \bSigma_1). \nonumber
\end{equation}
\textbf{A word on the notation}: note that we use $\{\bbeta_1,\bSigma_1\}$ to denote the posterior mean and posterior covariance in the zero-mean prior model. Similarly, the posterior mean and posterior covariance in semi-conjugate prior and full-conjugate prior models will be denoted as $\{\bbeta_2,\bSigma_2\}$ and $\{\bbeta_3,\bSigma_3\}$ respectively (see sections below).

\begin{mdframed}[hidealllines=\mdframehidelineNote,backgroundcolor=\mdframecolorNote,frametitle={Connection to OLS}]
	In this case, we do not need to assume $\boldsymbol{X}$ has full rank generally.
	Note further that if we assume $\bX$ has full rank, in the limit, when $\bSigma_0 \rightarrow \bzero$, $\bbeta_1 \rightarrow \hat{\bbeta} = (\bX^\top\bX)^{-1}\bX\by$, in which case, maximum a posteriori (MAP) estimator from Bayesian model goes back to ordinary least squares estimator. And the posterior is $\bbeta|\by,\bX, \sigma^2 \sim \normal (\hat{\bbeta}, \sigma^2(\boldsymbol{X}^\top \boldsymbol{X})^{-1})$, which shares similar form as the OLS estimator $\hat{\bbeta} \sim \normal(\bbeta, \sigma^2(\boldsymbol{X}^\top \boldsymbol{X})^{-1})$ under Gaussian disturbance (see \citep{lu2021rigorous}).
\end{mdframed}


\begin{svgraybox}
	\begin{remark}[Ridge Regression]
		In least squares approximation, we use $\bX\bbeta$ to approximate $\by$. Two issues arise: the model can potentially overfit and $\bX$ may not have full rank. In ridge regression, we regularize large value of $\bbeta$ and thus favor simpler models. Instead of minimizing $||\by-\bX\bbeta||^2$, we minimize $||\by-\bX\bbeta||^2+\lambda||\bbeta||^2$, where $\lambda$ is a hyper-parameter that can be tuned:
		\begin{equation}
			\mathop{\arg\min}_{\bbeta}{(\by-\bX\bbeta)^\top(\by-\bX\bbeta) + \lambda \bbeta^\top\bbeta}. \nonumber
		\end{equation}
		By differentiating and setting the derivative to zero we get 
		\begin{equation}
			\hat{\bbeta}_{ridge} = (\bX^\top\bX + \lambda \bI)^{-1} \bX^\top\by, \nonumber
		\end{equation}
		in which case, $(\bX^\top\bX + \lambda \bI)$ is invertible even when $\bX$ does not have full rank. We leave more details about ridge regression to the readers. 
	\end{remark}
\end{svgraybox}

\begin{mdframed}[hidealllines=\mdframehidelineNote,backgroundcolor=\mdframecolorNote,frametitle={Connection to Ridge Regression}]
	Realize that when we set $\bSigma_0 = \bI$, we obtain $\bbeta_1 = (\bX^\top\bX + \sigma^2 \bI)^{-1} \bX^\top\by$ and $\bSigma_1 = (\frac{1}{\sigma^2}\bX^\top\bX+ \bI)^{-1}$. Since $\mathrm{posterior} = \bbeta|\by,\bX, \sigma^2 \sim \normal(\bbeta_1, \bSigma_1)$. The MAP estimator of $\bbeta = \bbeta_1 =  (\bX^\top\bX + \sigma^2 \bI)^{-1} \bX^\top\by$, which shares the same form as ridge regression by letting $\sigma^2 = \lambda$. Thus we notice ridge regression is a special case of Bayesian linear model with zero-mean prior. And ridge regression has a nice interpretation from the Bayesian approach - finding the mode of the posterior.
	
	An example is shown in \citep{rasmussen2003gaussian} where the ``well determined" (i.e., the distribution around the slope is more compact) slope of $\bbeta$ is almost unchanged after the posterior process while the intercept which is more dispersed shrunk towards zero. This is actually a regularization effect on the parameter like ridge regression.
\end{mdframed}

\subsection{An appetizer: Bayesian linear model with semi-conjugate prior Distribution}\label{sec:semiconjugate}

We will use gamma distribution as the prior of the inverse variance (precision) of Gaussian distribution. Before the discussion about gamma distribution, we first introduce a special gamma distribution, which is often used and known as chi-square distribution.
\begin{svgraybox}
	\begin{definition}[Chi-Square Distribution]
		Let $\bA \sim \normal(0, \bI_{p\times p})$. Then $X=\sum_i^p \bA_{ii}$ has the Chi-square distribution with $p$ degrees of freedom. We write $X \sim \chi_{(p)}^2$, and we will see this is equivalent to $X\sim \gammadist(p/2, 1/2)$.
		
		$$ f(x; p)=\left\{
		\begin{aligned}
			&\frac{1}{2^{p/2}\Gamma(\frac{p}{2})} x^{\frac{p}{2}-1} \exp(-\frac{x}{2}) ,& \mathrm{\,\,if\,\,} x \geq 0.  \\
			&0 , &\mathrm{\,\,if\,\,} x <0.
		\end{aligned}
		\right.
		$$
		The mean, variance of $X\sim \chi_{(p)}^2$ are given by $\Exp[X]=p$, $\Var[X]=2p$.
		
		The function $\Gamma(\alpha) = \int_{0}^{\infty}  t^{\alpha-1} e^{-t} dt $ is the gamma function and we can just take it as a function to normalize the distribution into sum to 1. In special case when $y$ is a positive integer, $\Gamma(y) = (y-1)!$.
	\end{definition}
\end{svgraybox}

\begin{svgraybox}
\begin{definition}[Gamma Distribution]\label{definition:gamma-distribution}
A random variable $X$ is said to follow the gamma distribution with parameter $r>0$ and $\lambda>0$, denoted by $X \sim \gammadist(r, \lambda)$ if 

$$ f(x; r, \lambda)=\left\{
\begin{aligned}
&\frac{\lambda^r}{\Gamma(r)} x^{r-1} \exp(-\lambda x) ,& \mathrm{\,\,if\,\,} x \geq 0.  \\
&0 , &\mathrm{\,\,if\,\,} x <0.
\end{aligned}
\right.
$$
So if $X \sim \chi_{(p)}^2$, then $X\sim \gammadist(p/2, 1/2)$, i.e., Chi-square distribution is a special case of Gamma distribution.
The mean and variance of $X \sim \gammadist(r, \lambda)$ are given by 
\begin{equation}
\Exp[X] = \frac{r}{\lambda}, \qquad \Var[X] = \frac{r}{\lambda^2}. \nonumber
\end{equation}
Specially, let $X_1, X_2, \ldots, X_n$ be i.i.d., random variables drawn from $\gammadist(r_i, \lambda)$ for each $i \in \{1, 2, \ldots, n\}$. Then $Y = \sum_{i=1}^{n} X_i$ is a random variable following from $\gammadist(\sum_{i=1}^{n}r_i, \lambda)$.
\end{definition}
\end{svgraybox}
As for the reason of using the gamma distribution as the prior for precision, we quote the description from \citep{kruschke2014doing}:
\begin{mdframed}[hidealllines=\mdframehidelineNote,backgroundcolor=\mdframecolorNote]
	Because of its role in conjugate priors for normal likelihood function, the gamma distribution is routinely used as a prior for precision (i.e., inverse variance). But there is no logical necessity to do so, and modern MCMC methods permit more flexible specification of priors. Indeed, because precision is less intuitive than standard deviation, it can be more useful to give standard deviation a uniform prior that spans a wide range.
\end{mdframed}

Same setting as Section~\ref{sec:bayesian-zero-mean}, but we assume now $\sigma^2$ is not fixed. Again, we have likelihood function by  
\begin{equation}
	\mathrm{likelihood} = \by | \bX, \bbeta, \sigma^2 \sim \normal(\bX\bbeta, \sigma^2\bI). \nonumber
\end{equation}
We specify a non zero-mean Gaussian prior over the weight parameter 
\begin{equation}
	\begin{aligned}
		{\color{blue}\mathrm{prior:\,}} &\bbeta \sim \normal({\color{blue}\bbeta_0}, \bSigma_0) \\
		&{\color{blue}\gamma = 1/\sigma^2 \sim \gammadist(a_0, b_0)}, \nonumber
	\end{aligned}
\end{equation}
where we differentiate from previous descriptions by blue text.

(1). Then, given $\sigma^2$, by the Bayes' theorem ``$\mathrm{posterior} \propto \mathrm{likelihood} \times \mathrm{prior} $", we get the posterior
\begin{equation}
	\begin{aligned}
		\mathrm{posterior}&= p(\bbeta|\by,\bX, \sigma^2) \\
		&\propto p(\by|\bX, \bbeta, \sigma^2) p(\bbeta | \bbeta_0, \bSigma_0) \\
		&=  \frac{1}{(2\pi \sigma^2)^{n/2}} \exp\left(-\frac{1}{2\sigma^2} (\by-\bX\bbeta)^\top(\by-\bX\bbeta)\right) \\
		&\,\,\,\,\,\, \times \frac{1}{(2\pi)^{n/2}|\bSigma_0|^{1/2}}\exp\left(-\frac{1}{2} (\bbeta-\bbeta_0)^\top\bSigma_0^{-1}(\bbeta-\bbeta_0)\right) \\
		&\propto \exp\left(-\frac{1}{2} (\bbeta - \bbeta_2)^\top \bSigma_2^{-1} (\bbeta - \bbeta_2)\right), \nonumber
	\end{aligned}
\end{equation}
where $\bSigma_2 = (\frac{1}{\sigma^2} \bX^\top\bX + \bSigma_0^{-1})^{-1}$ and  
$$
\bbeta_2 = \bSigma_2 (\bSigma_0^{-1}\bbeta_0+\frac{1}{\sigma^2}\bX^\top\by)= (\frac{1}{\sigma^2}\bX^\top\bX + \bSigma_0^{-1})^{-1}(\textcolor{blue}{\bSigma_0^{-1}\bbeta_0}+\frac{1}{\sigma^2}\bX^\top\by).
$$
Therefore, the posterior is from a Gaussian distribution:
\begin{equation}
	\mathrm{posterior} = \bbeta|\by,\bX, \sigma^2 \sim \normal(\bbeta_2, \bSigma_2). \nonumber
\end{equation}

\begin{mdframed}[hidealllines=\mdframehidelineNote,backgroundcolor=\mdframecolorNote,frametitle={Connection to Zero-Mean Prior}]
	\item 1. $\bSigma_0$ here is a fixed hyperparameter.
	\item 2. We note that $\bbeta_1$ in Section~\ref{sec:bayesian-zero-mean} is a special case of $\bbeta_2$ when $\bbeta_0=\bzero$. 
	\item 3. And if we assume further $\bX$ has full rank. When $\bSigma_0^{-1} \rightarrow \bzero$, $\bbeta_2 \rightarrow \hat{\bbeta} = (\bX^\top\bX)^{-1}\bX\by$ which is the OLS estimator. 
	\item 4. When $\sigma^2 \rightarrow \infty$, $\bbeta_2$ is approximately approaching to $\bbeta_0$, the prior expectation of parameter. However, in zero-mean prior, $\sigma^2 \rightarrow \infty$ will make $\bbeta_1$ approach to $\bzero$.
	\item 5. \textbf{Weighted average}: we reformulate
	\begin{equation}
		\begin{aligned}
			\bbeta_2 &=  (\frac{1}{\sigma^2}\bX^\top\bX + \bSigma_0^{-1})^{-1}(\bSigma_0^{-1}\bbeta_0+\frac{1}{\sigma^2}\bX^\top\by) \\
			&= (\frac{1}{\sigma^2}\bX^\top\bX + \bSigma_0^{-1})^{-1} \bSigma_0^{-1}\bbeta_0 + (\frac{1}{\sigma^2}\bX^\top\bX + \bSigma_0^{-1})^{-1} \frac{\bX^\top\bX}{\sigma^2} (\bX^\top\bX)^{-1}\bX^\top\by \\
			&=(\bI-\bA)\bbeta_0 + \bA \hat{\bbeta}, \nonumber
		\end{aligned}
	\end{equation}
	where $\hat{\bbeta}=(\bX^\top\bX)^{-1}\bX^\top\by$ is the OLS estimator of $\bbeta$ and $\bA=(\frac{1}{\sigma^2}\bX^\top\bX + \bSigma_0^{-1})^{-1} \frac{\bX^\top\bX}{\sigma^2}$. We see that the posterior mean of $\bbeta$ is a weighted average of prior mean and OLS estimator of $\bbeta$. Thus, if we set the prior parameter $\bbeta_0 = \hat{\bbeta}$, the posterior mean of $\bbeta$ will be exactly $\hat{\bbeta}$.
\end{mdframed}

(2). Given $\bbeta$, again, by Bayes' theorem, we obtain the posterior

\begin{equation}
	\begin{aligned}
		\mathrm{posterior}&= p(\gamma=\frac{1}{\sigma^2}|\by,\bX, \bbeta) \\
		&\propto p(\by|\bX, \bbeta, \gamma) p(\gamma | a_0, b_0) \\
		&=  \frac{\gamma^{n/2}}{(2\pi )^{n/2}} \exp\left(-\frac{\gamma}{2} (\by-\bX\bbeta)^\top(\by-\bX\bbeta)\right) \\
		&\,\,\,\,\,\,\, \times \frac{{b_0}^{a_0}}{\Gamma(a_0)} \gamma^{a_0-1} \exp(-b_0 \gamma) \\
		&\propto \gamma(a_0+\frac{n}{2}-1) \exp\left(-\gamma\left[b_0+\frac{1}{2}(\by-\bX\bbeta)^\top(\by-\bX\bbeta)\right]\right), \nonumber
	\end{aligned}
\end{equation}
and the posterior is a Gamma distribution:
\begin{equation}
	\mathrm{posterior\,\, of\,\,} \gamma \mathrm{\,\,given\,\,} \bbeta  = \gamma|\by,\bX, \bbeta \sim \gammadist\left(a_0+\frac{n}{2}, [b_0+\frac{1}{2}(\by-\bX\bbeta)^\top(\by-\bX\bbeta)]\right). \nonumber
\end{equation}

\begin{mdframed}[hidealllines=\mdframehidelineNote,backgroundcolor=\mdframecolorNote,frametitle={Prior Information on the Noise}]
	\item 1. We notice that the prior mean and posterior mean of $\gamma$ are $\Exp[\gamma]=\frac{a_0}{b_0}$ and $\Exp[\gamma |\bbeta]=\frac{a_0 + \frac{n}{2}}{b_0 +\frac{1}{2}(\by-\bX\bbeta)^\top(\by-\bX\bbeta)}$ respectively. So the inside meaning of $2 a_0$ is the prior sample size for the noise $\sigma^2 = \frac{1}{\gamma}$. 
	
	\item 2. As we assume $\by=\bX\bbeta +\bepsilon$ where $\bepsilon \sim \normal(\bzero, \sigma^2\bI)$, then $\frac{(\by-\bX\bbeta)^\top(\by-\bX\bbeta)}{\sigma^2} \sim \chi_{(n)}^2$ and $\Exp[\frac{1}{2}(\by-\bX\bbeta)^\top(\by-\bX\bbeta)] = \frac{n}{2}\sigma^2$. So the inside meaning of $\frac{2b_0}{a_0}$ is the prior variance of the noise.
	
	\item 3. Some textbooks would write $\gamma \sim  \gammadist(n_0/2, n_0\sigma_0^2/2)$ to make this explicit (in which case, $n_0$ is the prior sample size, and $\sigma_0^2$ is the prior variance). But a prior in this form seems coming from nowhere at first glance.
\end{mdframed}

By this Gibbs sampling method introduced in Section~\ref{section:gibbs-sampler}, we can construct a Gibbs sampler for Bayesian linear model with semi-conjugate prior in Section~\ref{sec:semiconjugate}:

0. Set initial values to $\bbeta$ and $\gamma = \frac{1}{\sigma^2}$;

1. update $\bbeta$: $\mathrm{posterior} = \bbeta|\by,\bX, \gamma \sim \normal(\bbeta_2, \bSigma_2)$;

2. update $\gamma$: $\mathrm{posterior}  = \gamma|\by,\bX, \bbeta \sim \gammadist\left(a_0+\frac{n}{2}, [b_0+\frac{1}{2}(\by-\bX\bbeta)^\top(\by-\bX\bbeta)]\right)$.

\subsection{An appetizer: Bayesian linear model with full conjugate prior}\label{section:blm-fullconjugate}
Putting a gamma prior on the inverse variance is equivalent to putting a inverse-gamma prior on the variance: 
\begin{svgraybox}
	\begin{definition}[Inverse-Gamma Distribution]\label{definition:inverse-gamma}
		A random variable $Y$ is said to follow the inverse-gamma distribution with parameter $r>0$ and $\lambda>0$ if
		
		$$ f(y; r, \lambda)=\left\{
		\begin{aligned}
			&\frac{\lambda^r}{\Gamma(r)} y^{-r-1} \exp(- \frac{\lambda}{y} ) ,& \mathrm{\,\,if\,\,} y > 0.  \\
			&0 , &\mathrm{\,\,if\,\,} y \leq 0.
		\end{aligned}
		\right.
		$$
		And it is denoted by $Y \sim \inversegammadist(r, \lambda)$.
		The mean and variance of inverse-gamma distribution are given by 
		$$ \Exp[Y]=\left\{
		\begin{aligned}
			&\frac{\lambda}{r-1}, \, &\mathrm{if\,} r\geq 1. \\
			&\infty, \, &\mathrm{if\,} 0<r<1.
		\end{aligned}
		\right.\qquad
		\Var[Y]=\left\{
		\begin{aligned}
			&\frac{\lambda^2}{(r-1)^2(r-2)}, \, &\mathrm{if\,} r\geq 2. \\
			&\infty, \, &\mathrm{if\,} 0<r<2.
		\end{aligned}
		\right.
		$$
	\end{definition}
\end{svgraybox}
Note that the inverse-gamma density is not simply the gamma density with
$x$ replaced by $\frac{1}{y}$. There is an additional factor of $y^{-2}$. \footnote{Which is from the Jacobian in the change-of-variables formula. A short proof is provided here. Let $y=\frac{1}{x}$ where $y\sim \inversegammadist(r, \lambda)$ and $x\sim \gammadist(r, \lambda)$. Then, $f(y) |dy| = f(x) |dx|$ which results in $f(y) = f(x) |\frac{dx}{dy}| = f(x)x^2 \xlongequal{ \mathrm{y}=\frac{1}{x}} \frac{\lambda^r}{\Gamma(r)} y^{-r-1} exp(- \frac{\lambda}{y})$ for $y>0$. }

Same setting as semiconjugate prior distribution in Section~\ref{sec:semiconjugate}. We have the likelihood function:
\begin{equation}
	\mathrm{likelihood} = \by | \bX, \bbeta, \sigma^2 \sim \normal(\bX\bbeta, \sigma^2\bI). \nonumber
\end{equation}
But now we specify a Gaussian prior over the weight parameter by
\begin{equation}
	\begin{aligned}
		{\color{blue}\mathrm{prior:\,}} &\bbeta|\sigma^2 \sim \normal(\bbeta_0, {\color{blue}\sigma^2} \bSigma_0) \\
		&{\color{blue}\sigma^2 \sim \inversegammadist(a_0, b_0)}, \nonumber
	\end{aligned}
\end{equation}
where again we differentiate from previous descriptions by blue text. 
Equivalently, we can formulate the prior into one which is called the normal-inverse-gamma (NIG) distribution:
\begin{equation}
	\begin{aligned}
		\mathrm{prior:\,} &\bbeta,\sigma^2 \sim \nig(\bbeta_0, \bSigma_0, a_0, b_0) =  \normal(\bbeta_0, \sigma^2 \bSigma_0)\cdot \inversegammadist(a_0, b_0) . \nonumber
	\end{aligned}
\end{equation}
Again by the Bayes' theorem ``$\mathrm{posterior} \propto \mathrm{likelihood} \times \mathrm{prior} $", we obtain the posterior

\begin{equation}
	\begin{aligned}
		\mathrm{posterior}&= p(\bbeta,\sigma^2|\by,\bX) \\
		&\propto p(\by|\bX, \bbeta, \sigma^2) p(\bbeta, \sigma^2 | \bbeta_0, \bSigma_0, a_0, b_0) \\
		&=  \frac{1}{(2\pi \sigma^2)^{n/2}} \exp\left\{-\frac{1}{2\sigma^2} (\by-\bX\bbeta)^\top(\by-\bX\bbeta)\right\} \\
		&\,\,\,\,\,\, \times \frac{1}{(2\pi \sigma^2)^{p/2} |\bSigma_0|^{1/2}} \exp\left\{-\frac{1}{2\sigma^2} (\bbeta - \bbeta_0)^\top\bSigma_0^{-1} (\bbeta - \bbeta_0)\right\} \\
		&\,\,\,\,\,\, \times  \frac{{b_0}^{a_0}}{\Gamma(a_0)} \frac{1}{(\sigma^2)^{a_0+1}} \exp\{-\frac{b_0}{\sigma^2} \} \\
		&\propto \frac{1}{(2\pi \sigma^2)^{p/2} }  \exp\left\{  \frac{1}{2\sigma^2} (\bbeta -\bbeta_3)^\top\bSigma_3^{-1}(\bbeta -\bbeta_3) \right\} \\
		&\,\,\,\,\,\, \times \frac{1}{(\sigma^2)^{a_0 +\frac{n}{2}+1}} \exp\left\{-\frac{1}{\sigma^2} [b_0+\frac{1}{2} (\by^\top\by +\bbeta_0^\top\bSigma_0^{-1}\bbeta_0 -\bbeta_3^\top\bSigma_3^{-1}\bbeta_3) ]\right\}, \nonumber
	\end{aligned}
\end{equation}
where $\bSigma_3 = ( \bX^\top\bX + \bSigma_0^{-1})^{-1}$ and 
$$
\bbeta_3 = \bSigma_3(\bX^\top\by + \bSigma_0^{-1}\bbeta_0) = ( \bX^\top\bX + \bSigma_0^{-1})^{-1}(\bSigma_0^{-1}\bbeta_0 + \bX^\top\by).
$$ 
Let $a_n = a_0 +\frac{n}{2}+1$ and $b_n=b_0+\frac{1}{2} (\by^\top\by +\bbeta_0^\top\bSigma_0^{-1}\bbeta_0 -\bbeta_3^\top\bSigma_3^{-1}\bbeta_3) $. The posterior is thus a NIG distribution:
\begin{equation}
	\begin{aligned}
		\mathrm{posterior}&=  \bbeta, \sigma^2 | \by, \bX \sim \nig(\bbeta_3, \bSigma_3, a_n, b_n). \nonumber
	\end{aligned}
\end{equation}
\begin{mdframed}[hidealllines=\mdframehidelineNote,backgroundcolor=\mdframecolorNote,frametitle={Connection to Zero-Mean Prior and Semiconjugate Prior}]
	\item 1. $\bSigma_0$ here is a fixed hyperparameter.
	
	
	\item 2. If we assume further $\bX$ has full rank, when $\bSigma_0^{-1} \rightarrow \bzero$, $\bbeta_3 \rightarrow \hat{\bbeta} = (\bX^\top\bX)^{-1}\bX\by$ which is the OLS estimator. 
	
	\item 3. When $b_0 \rightarrow \infty$, then $\sigma^2 \rightarrow \infty$ and $\bbeta_3$ is approximately $\bbeta_0$, the prior expectation of parameter. Compared to $\bbeta_2$ in Section~\ref{sec:semiconjugate}, $\sigma^2 \rightarrow \infty$ will make $\bbeta_2$ approach to $\bbeta_0$ where $\sigma^2$ is a fixed hyperparameter.

	\item 4. \textbf{Weighted average}: we reformulate
	\begin{equation}
		\begin{aligned}
			\bbeta_3 &=  (\bX^\top\bX + \bSigma_0^{-1})^{-1}(\bSigma_0^{-1}\bbeta_0+\bX^\top\by) \\
			&= (\bX^\top\bX + \bSigma_0^{-1})^{-1} \bSigma_0^{-1}\bbeta_0 + (\bX^\top\bX + \bSigma_0^{-1})^{-1} (\bX^\top\bX) (\bX^\top\bX)^{-1}\bX^\top\by \\
			&=(\bI-\bC)\bbeta_0 + \bC \hat{\bbeta}, \nonumber
		\end{aligned}
	\end{equation}
	where $\hat{\bbeta}=(\bX^\top\bX)^{-1}\bX^\top\by$ is the OLS estimator of $\bbeta$ and $\bC=(\bX^\top\bX + \bSigma_0^{-1})^{-1} (\bX^\top\bX)$. We see that the posterior mean of $\bbeta$ is a weighted average of the prior mean and the OLS estimator of $\bbeta$. Thus, if we set $\bbeta_0 = \hat{\bbeta}$, the posterior mean of $\bbeta$ will be exactly $\hat{\bbeta}$.
	
	\item 5. From $a_n = a_0 +\frac{n}{2}+1$, we know that $2a_0$ is the prior sample size for $\sigma^2$.
	
	\item 6. $\bSigma_3^{-1} = \bX^\top\bX + \bSigma_0^{-1}$: The posterior inverse covariance is equal to $\bX^\top\bX$ + prior inverse covariance.
\end{mdframed}

A Bayesian and non-Bayesian variable selection procedure can be referred to \citep{hoff2009first, lu2021rigorous}. In the Bayesian case, the Zeller's g-prior is taken to give rise to a mask vector on the variables such that the selected variables will have mask 1 and 0 otherwise. From the three different priors on the same model, we have a taste for different situations in Bayesian approaches. Especially, the priors may be semi-conjugate or full conjugate which result in different sampling algorithms.


\newpage
\part{Conjugate priors for Gaussian mixture model}\label{chapter:conjugate_gmm}

\section{Conjugate priors}
In Section~\ref{sec:beta-bernoulli}, we discussed about conjugate priors. We now give the formal definition as follows.
\begin{svgraybox}
\begin{definition}[Conjugate Prior]
Given a family $\{p(\mathcalX | \btheta): \btheta \in \bTheta\}$ of generating distributions, a collection of priors $p_\omega (\btheta)$ indexed by $\bomega \in \bOmega$ is called a conjugate prior family if for any $\bomega$ and any data, the resulting posterior equals to $p_{\bomega^\prime} (\btheta | \mathcalX)$ for some $\bomega^\prime \in \bOmega$. 
\end{definition}
\end{svgraybox}

\begin{example}[Beta-Bernoulli]
	Suppose $x_1, x_2, ..., x_N$ are drawn i.i.d. from $Bernoulli(x|\theta)$.
$Beta(\theta | a,b)$ distribution, with $a, b >0$, is conjugate to $Bernoulli(x|\theta)$, since the posterior is $p(\theta | x_{1:N}) = Beta(\theta | a+\sum x_i, b+N-\sum x_i)$. 
\exampbar
\end{example}

Conjugate priors make it possible to do Bayesian reasoning in a computationally efficient manner, as well as having the philosophically satisfying interpretation of representing real or imaginary prior data.

\section{Conjugate prior for the multinomial distribution}\label{sec:dirichlet_prior_on_multinomial}
This section and the next section serve as reference for the rest of the document. 

\subsection{Multinomial distribution}
The multinomial distribution is widely used in Bayesian mixture model to introduce latent variable. And the use of conjugate priors allows all the results to be derived in closed form. 
The multinomial distribution is parametrized by an integer $N$ and a p.m.f. $\bpi = \{\pi_1, \pi_2, \ldots , \pi_K\}$, and can be thought of as following: If we have $N$ independent events, and for each event, the probability of outcome $k$ is $\pi_k$, then the multinomial distribution specifies the probability that outcome $k$ occurs $N_k$ times,  for $k = 1, 2, \ldots , K$. For example, the multinomial distribution can model the probability of an $N$-sample empirical histogram, if each sample is drawn i.i.d., from $\bpi$. Formally, we have the following definition of Multinomial distribution.

\begin{svgraybox}
\begin{definition}[Multinomial Distribution]
A random vector $\bN=[N_1, N_2, \ldots, N_K]\in \{0, 1, 2, \ldots, N\}^K$ where $\sum_{k=1}^{K} N_k=N$ is said to follow the multinomial distribution with parameter $N\in \natu$ and $\bpi =[\pi_1, \pi_2, \ldots, \pi_K]\in [0,1]^K$ such that $\sum_{k=1}^{K} \pi_k=1$. Denoted by $\bN \sim $ $\multinomial_K(N, \bpi)$. Then its probability mass function is given by 
\begin{equation*}
	p(N_1, N_2, \ldots , N_K | N, \bpi = (\pi_1, \pi_2, \ldots , \pi_K)) = \frac{N!}{N_1! N_2!  \ldots  N_K!} \prod^K_{k=1}\pi_k^{N_k} \cdot \indicator\left\{\sum_{k=1}^{K}N_k = N\right\},
\end{equation*}
where $\{0, 1, 2, \ldots, N\}$ is a set of $N+1$ elements and $[0,1]$ is an closed set with values between 0 and 1.
The mean, variance, covariance are
$$
\Exp[N_k] = N\pi_k, \qquad \Var[N_k] = N\pi_k(1-\pi_k), \qquad \Cov[N_k, N_m] = -N\pi_k\pi_m.
$$
When $K=2$, the multinomial distribution reduces to the binomial distribution. 
\end{definition}
\end{svgraybox}

\subsection{Dirichlet distribution}\label{section:dirichlet-dist}
\begin{figure}[htp]
	\centering
	\subfigure[ $ \balpha=\begin{bmatrix}
		10,10,10
	\end{bmatrix} $, z-axis is pdf. ]{\includegraphics[width=0.33
		\textwidth]{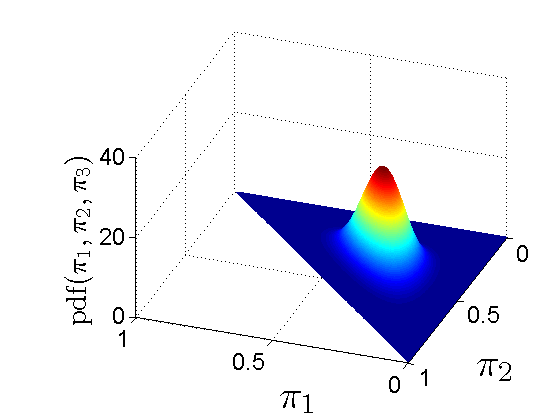} \label{fig:dirichlet_pdf}}
	~
	\subfigure[$\balpha=\begin{bmatrix}
		10,10,10
	\end{bmatrix}$, z-axis is $\pi_3$.]{\includegraphics[width=0.33
		\textwidth]{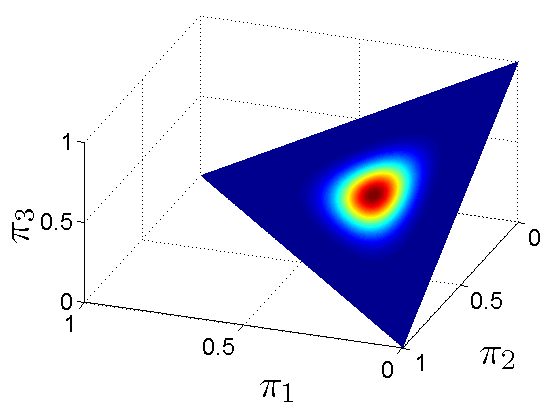} \label{fig:dirichlet_surface}}
	\centering
	\subfigure[ $ \balpha=\begin{bmatrix}
		1,1,1
	\end{bmatrix} $ ]{\includegraphics[width=0.4
		\textwidth]{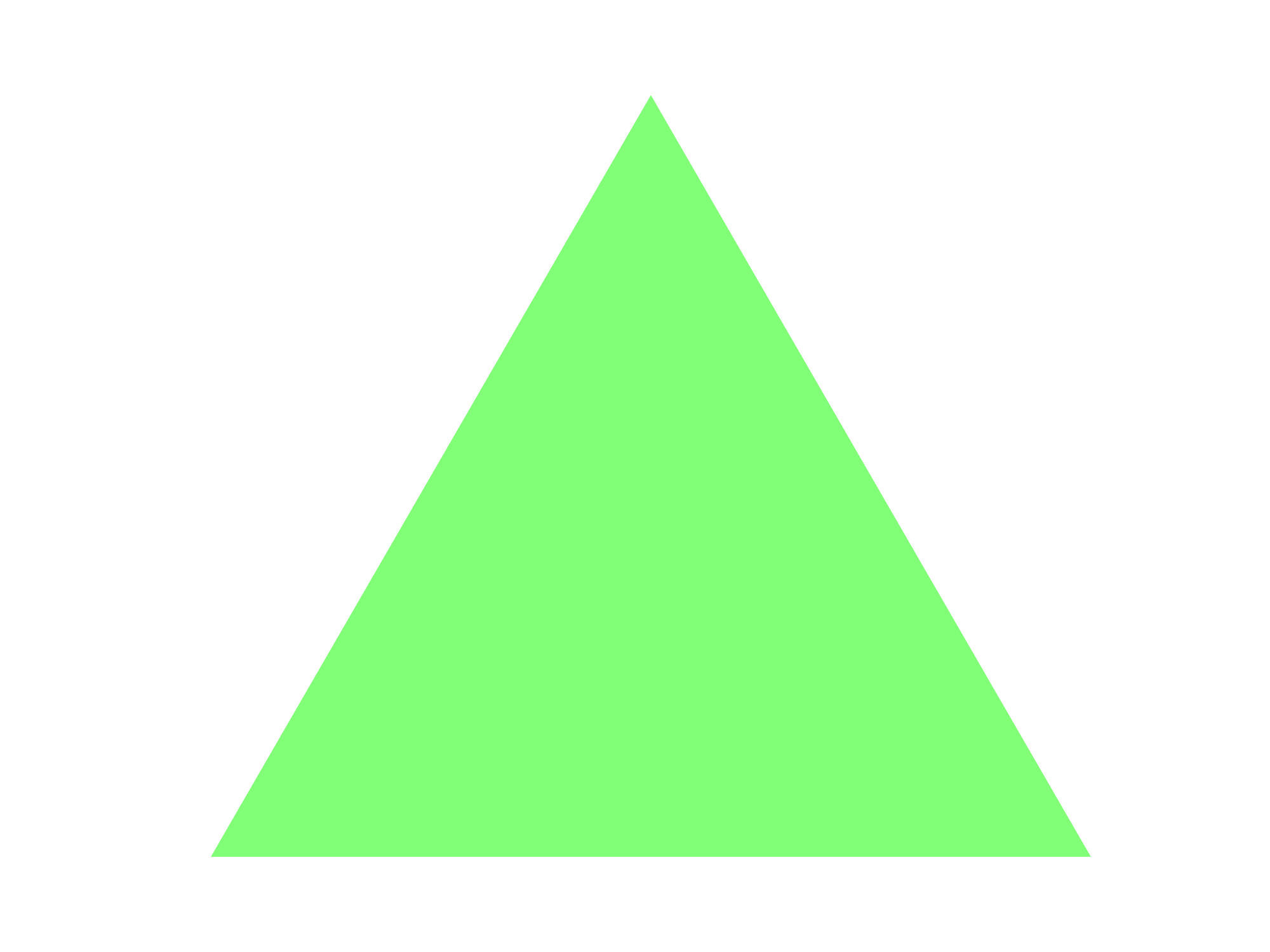} \label{fig:dirichlet_sample_111}}
	~
	\subfigure[$\balpha=\begin{bmatrix}
		0.9,0.9,0.9
	\end{bmatrix}$]{\includegraphics[width=0.4
		\textwidth]{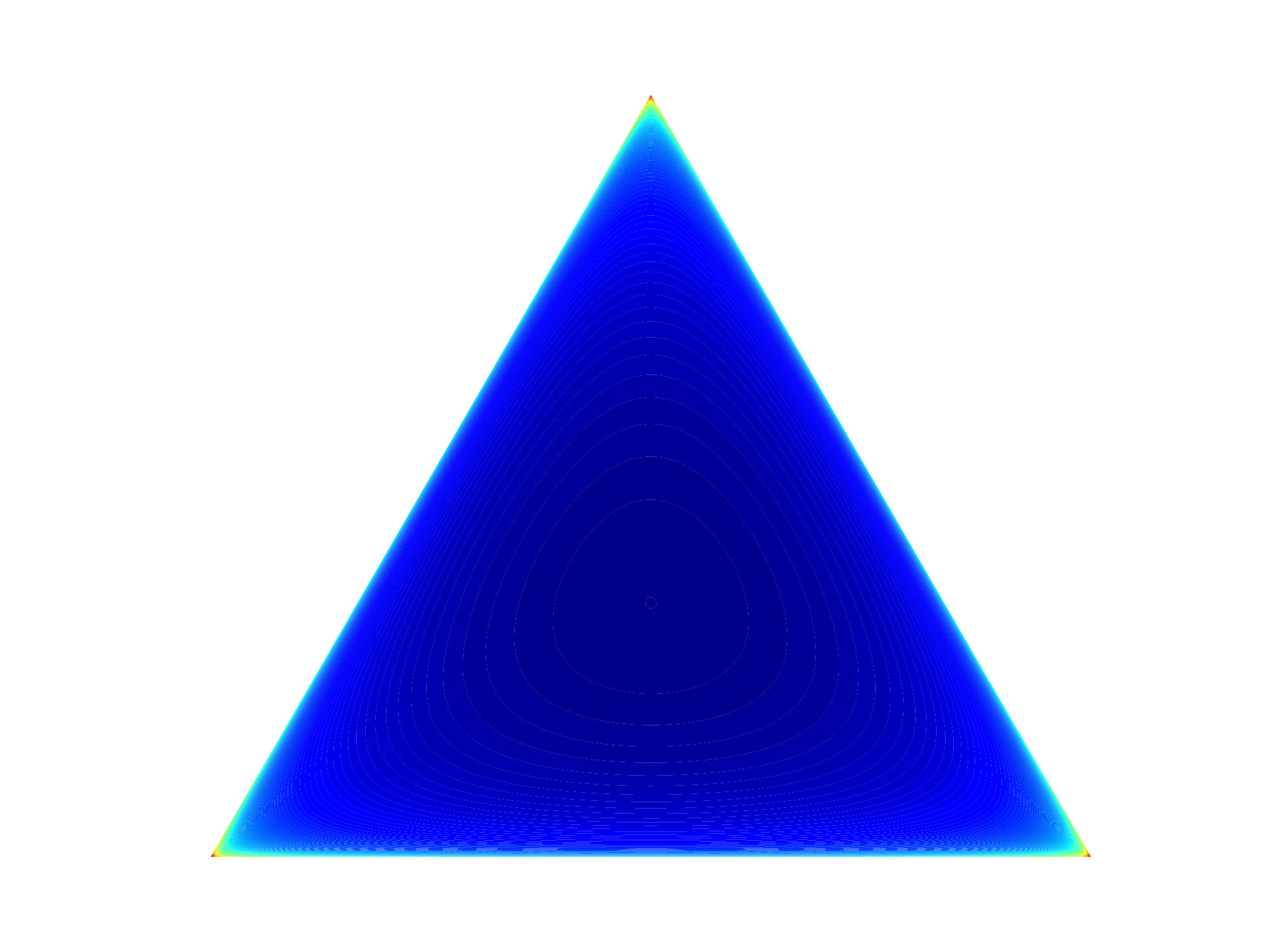} \label{fig:dirichlet_sample_090909}}
	\centering
	\subfigure[$\balpha=\begin{bmatrix}
		10,10,10
	\end{bmatrix}$]{\includegraphics[width=0.4
		\textwidth]{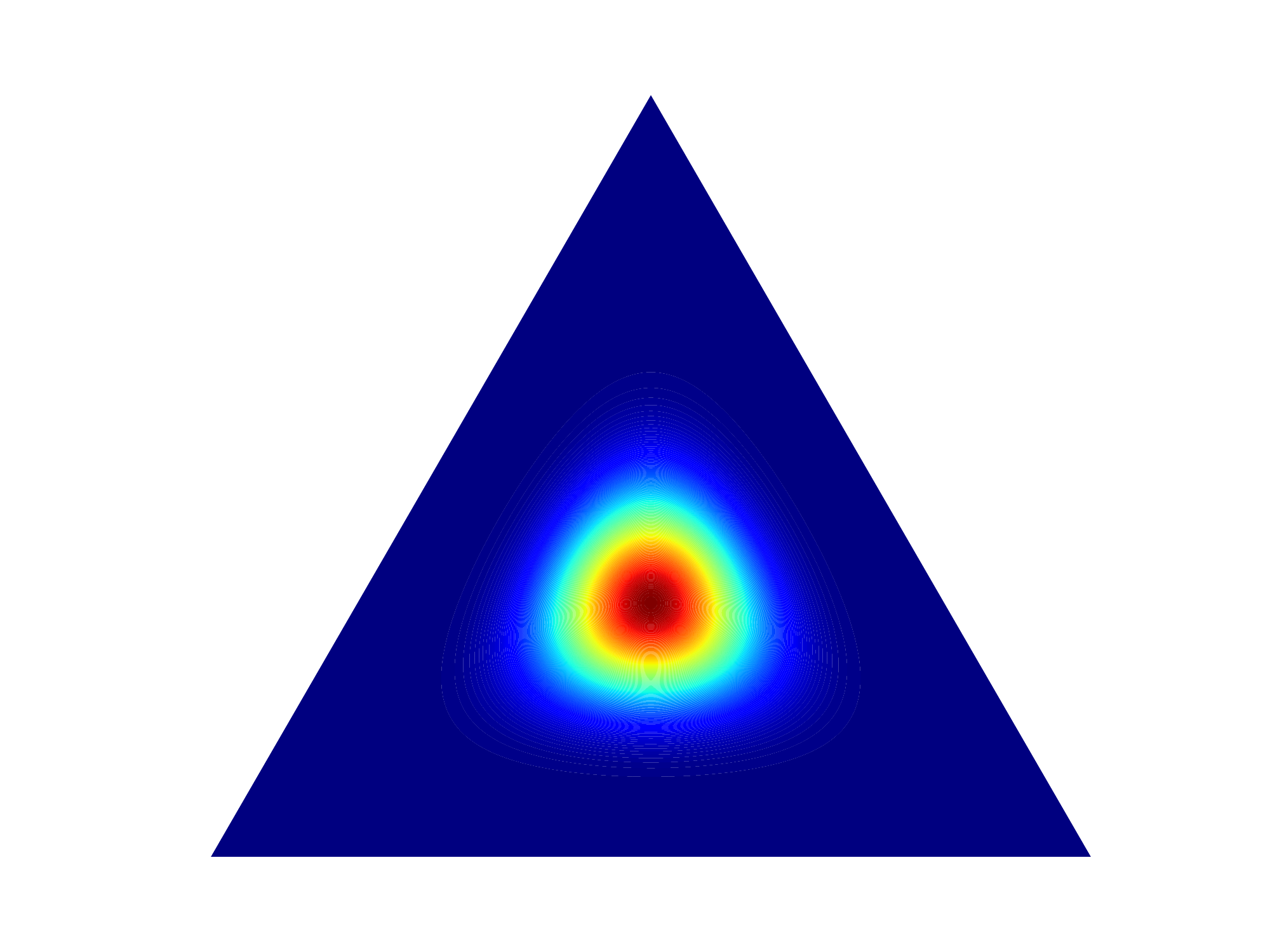} \label{fig:dirichlet_sample_101010}}
	~
	\subfigure[$\balpha=\begin{bmatrix}
		15,5,2
	\end{bmatrix}$]{\includegraphics[width=0.4
		\textwidth]{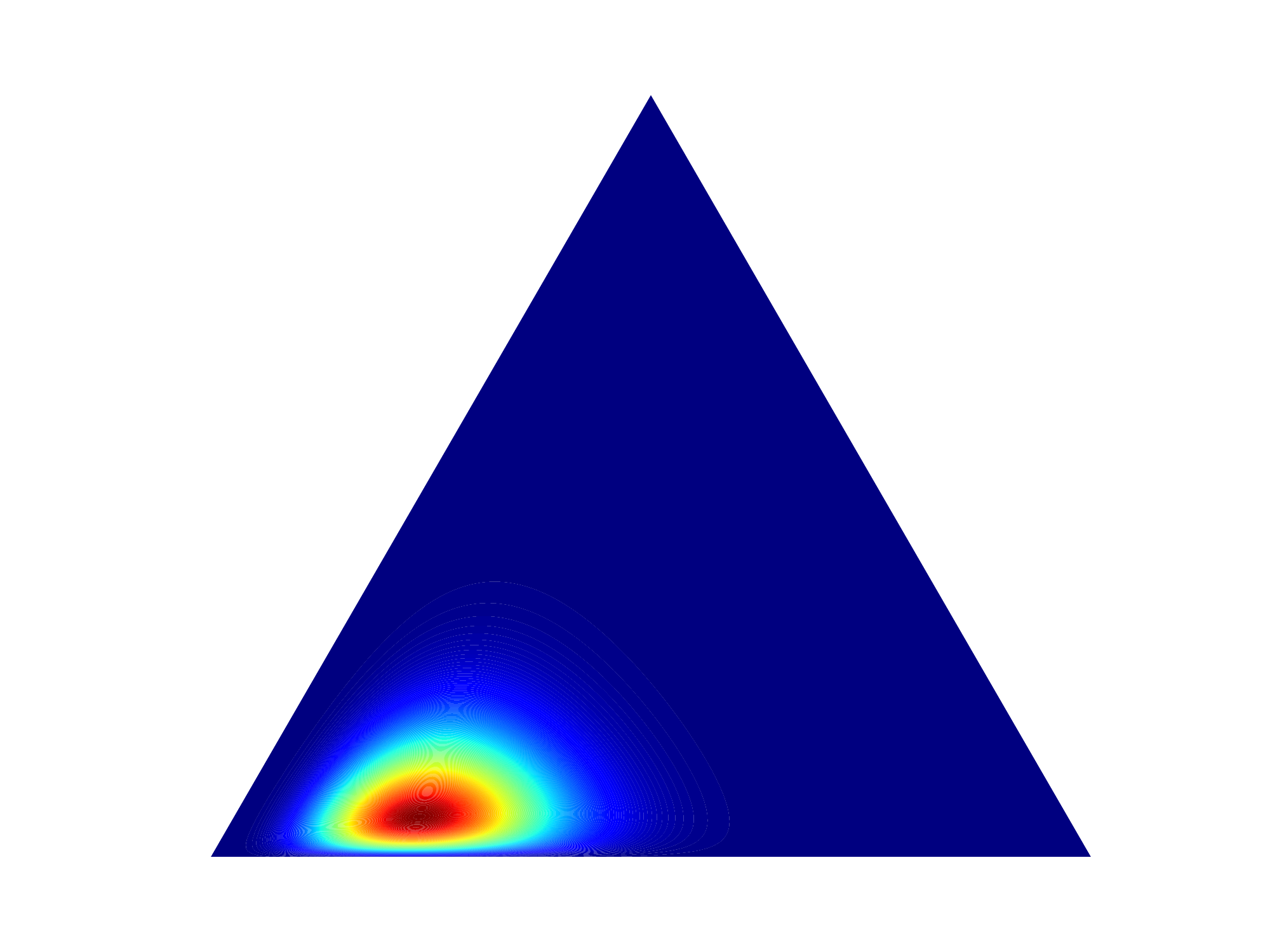} \label{fig:dirichlet_sample_1552}}
	\centering
	\caption{Density plots (blue=low, red=high) for the Dirichlet distribution over the probability simplex in $\mathbb{R}^3$ for various values of the concentration parameter $\balpha$. 
		When $\balpha=[c, c, c]$, the distribution is called a \textbf{symmetric Dirichlet distribution} and the density is symmetric about the uniform probability mass function (i.e., occurs in the middle of the simplex). When $0<c<1$, there are sharp peaks of density almost at the vertices of the simplex. When $c>1$, the density becomes monomodal and concentrated in the center of the simplex. And when $c=1$, it is uniform distributed over the simplex. Finally, if $\balpha$ is not a constant vector, the density is not symmetric.}\centering
	\label{fig:dirichlet_samples}
\end{figure}
The Dirichlet distribution serves as a conjugate prior for the probability parameter $\bpi$ of the multinomial distribution. 
\begin{svgraybox}
\begin{definition}[Dirichlet Distribution]
A random vector $\bX=[x_1, x_2, \ldots, x_K]\in [0,1]^K$ is said to follow Dirichlet distribution if 
\begin{equation} 
	\dirichlet(\bX | \balpha) \triangleq  \frac{1}{D(\balpha)}  \prod_{k=1}^K x_k ^ {\alpha_k - 1},
	\label{equation:dirichlet_distribution2}
\end{equation}  
such that $\sum_{k=1}^K x_k = 1$, $x_k \in$ [0, 1] and 
\begin{equation} 
	D(\balpha) = \frac{\prod_{k=1}^K \Gamma(\alpha_k)}{\Gamma(\alpha_+)},
	\label{equation:dirichlet_distribution3}
\end{equation}  
where $\balpha = [\alpha_1, \alpha_2, \ldots, \alpha_K]$ is a vector of reals with $\alpha_k>0, \forall k$, $\alpha_+ = \sum_{k=1}^K \alpha_k$. The $\balpha$ is also known as the \textbf{concentration parameter} in Dirichlet distribution. $\Gamma(\cdot)$ is the Gamma function which is a generalization of the factorial function. For $m>0$, $\Gamma(m+1) = m\Gamma(m)$ which implies for positive integers $n$, $\Gamma(n)=(n-1)!$ since $\Gamma(1)=1$. The mean, variance, covariance are 
$$
\Exp[x_k] = \frac{\alpha_k}{\alpha_+}, \qquad \Var[x_k] = \frac{\alpha_k(\alpha_+-\alpha_k)}{\alpha_+^2(\alpha_++1)}, \qquad \Cov[x_k, x_m]= \frac{-\alpha_k\alpha_m}{\alpha_+^2(\alpha_++1)}.
$$
When $K=2$, the Dirichlet distribution reduces to the Beta distribution, The Beta distribution $Beta(\alpha, \beta)$ is defined on $[0,1]$ with the probability density function given by 
$$
\betadist(x| \alpha, \beta) = \frac{\Gamma(\alpha+\beta)}{\Gamma(\alpha)\Gamma(\beta)} x^{\alpha-1}(1-x)^{\beta-1}.
$$
That is, if $X\sim \betadist(\alpha, \beta)$, then $\bX=[X, 1-X] \sim \dirichlet(\balpha) $, where $\balpha=[\alpha, \beta]$.
\end{definition}
\end{svgraybox}
Interesting readers can refer to Appendix~\ref{appendix:drive-dirichlet} for a derivation of the Dirichlet distribution.
The sample space of the Dirichlet distribution lies on the $(K-1)$-dimensional probability simplex, which is a surface in $\real^K$ denoted by $\triangle_K$. That is a set of vectors in $\real^K$ whose components are non-negative and sum to 1.
$$
\triangle_K = \{ \bpi: 0 \leq \pi_k\leq 1, \sum_{k=1}^{K}\pi_k=1 \}.
$$
Notice that $\triangle_K$ lies on a $(K-1)$-dimensional space since each component is non-negative, and the components sum to 1.

Figure~\ref{fig:dirichlet_samples} shows various plots of the density of the Dirichlet distribution over the two-dimensional simplex in $\mathbb{R}^3$ for a handful of values of the parameter vector $\balpha$ and Figure~\ref{fig:dirichlet_points} shows the draw of 5, 000 points for each setting. 
In specific, the density plots of Dirichlet in $\real^3$ is a surface plot in 4$d$-space. The Figure~\ref{fig:dirichlet_pdf} is a projection of a surface into 3$d$-space where the z-axis is the probability density function and Figure~\ref{fig:dirichlet_surface} is a projection of a surface into 3$d$-space where the z-axis is $\pi_3$. Figure~\ref{fig:dirichlet_sample_111} to Figure~\ref{fig:dirichlet_sample_1552} are the projections into a 2$d$-space.

When the concentration parameter $\balpha=[1,1,1]$, the Dirichlet distribution reduces to the uniform distribution over the simplex. This can be easily verified that $Dirichlet(\bX | \balpha=[1,1,1]) =  \frac{\Gamma(3)}{(\Gamma(1))^3}= 2 $ which is a constant that does not depend on the specific value of $\bX$. When $\balpha=[c,c,c]$ with $c>1$, the density becomes a monomodal and concentrated in the center of the simplex. This can be seen from $\dirichlet(\bX | \balpha=[c,c,c]) =  \frac{\Gamma(3c)}{(\Gamma(c))^3}\prod_{k=1}^3 x_k ^ {c - 1} $ such that small value of $x_k$ will make the probability density approach to zero. On the contrary, when $\balpha=[c,c,c]$ with $c<1$, the density has sharp peaks almost at the vertices of the simplex.

More properties of the Dirichlet distribution is provided in Table~\ref{table:dirichlet-property}, and the proof can be found in Appendix~\ref{appendix:drive-dirichlet}. And the derivation on the Dirichlet distribution in Appendix~\ref{appendix:drive-dirichlet} can also be utilized to generate samples from the Dirichlet distribution by a set of samples from a set of Gamma distributions.
\begin{table}[]
	\begin{tabular}{l|l}
		\hline
		\begin{tabular}[c]{@{}l@{}}Marginal \\ Distribution\end{tabular}    & $X_i \sim \betadist(\alpha_i, \alpha_+-\alpha_i)$.                                                                                                                                                                                                                                                                                                                                                                                                                                                                                                                                                           \\ \hline
		\begin{tabular}[c]{@{}l@{}}Conditional \\ Distribution\end{tabular} & 
		\begin{tabular}[c]{@{}l@{}}$\bX_{-i} | X_i \sim (1-X_i)\dirichlet(\alpha_{-i})$, \\ 
			 where $\bX_{-i}$ is a random vector excluding $X_i$. \end{tabular}                                                                                                                                                                                                                                                                                                                                                                                                                                                                      \\ \hline
		\begin{tabular}[c]{@{}l@{}}Aggregation\\ Property\end{tabular}      & \begin{tabular}[c]{@{}l@{}}If $M=X_i+X_j$, then $[X_1, \ldots X_{i-1}, X_{i+1}, \ldots, X_{j-1}, X_{j+1}, \ldots, X_K, M] \sim $\\ 
			$\dirichlet([\alpha_1, \ldots, \alpha_{i-1}, \alpha_{i+1}, \ldots, \alpha_{j-1}, \alpha_{j+1}, \ldots, \alpha_K, \alpha_i+\alpha_j])$.\\ 
			In general, If $\{A_1, A_2, \ldots, A_r\}$ is a partition of $\{1, 2, \ldots, K\}$, then \\ 
			$\left[\sum_{i\in A_1} X_i, \sum_{i\in A_2} X_i, \ldots, \sum_{i\in A_r} X_i\right] \sim$\\ $\dirichlet\left(\left[\sum_{i\in A_1} \alpha_i, \sum_{i\in A_2} \alpha_i, \ldots, \sum_{i\in A_r} \alpha_i\right]\right)$.\end{tabular} \\ \hline
	\end{tabular}
	\caption{Properties of Dirichlet distribution.}
\label{table:dirichlet-property}
\end{table}

\begin{figure}[!h]
\center
\subfigure[ $ \balpha=\begin{bmatrix}1,1,1\end{bmatrix}$ ]{\includegraphics[width=0.4
\textwidth]{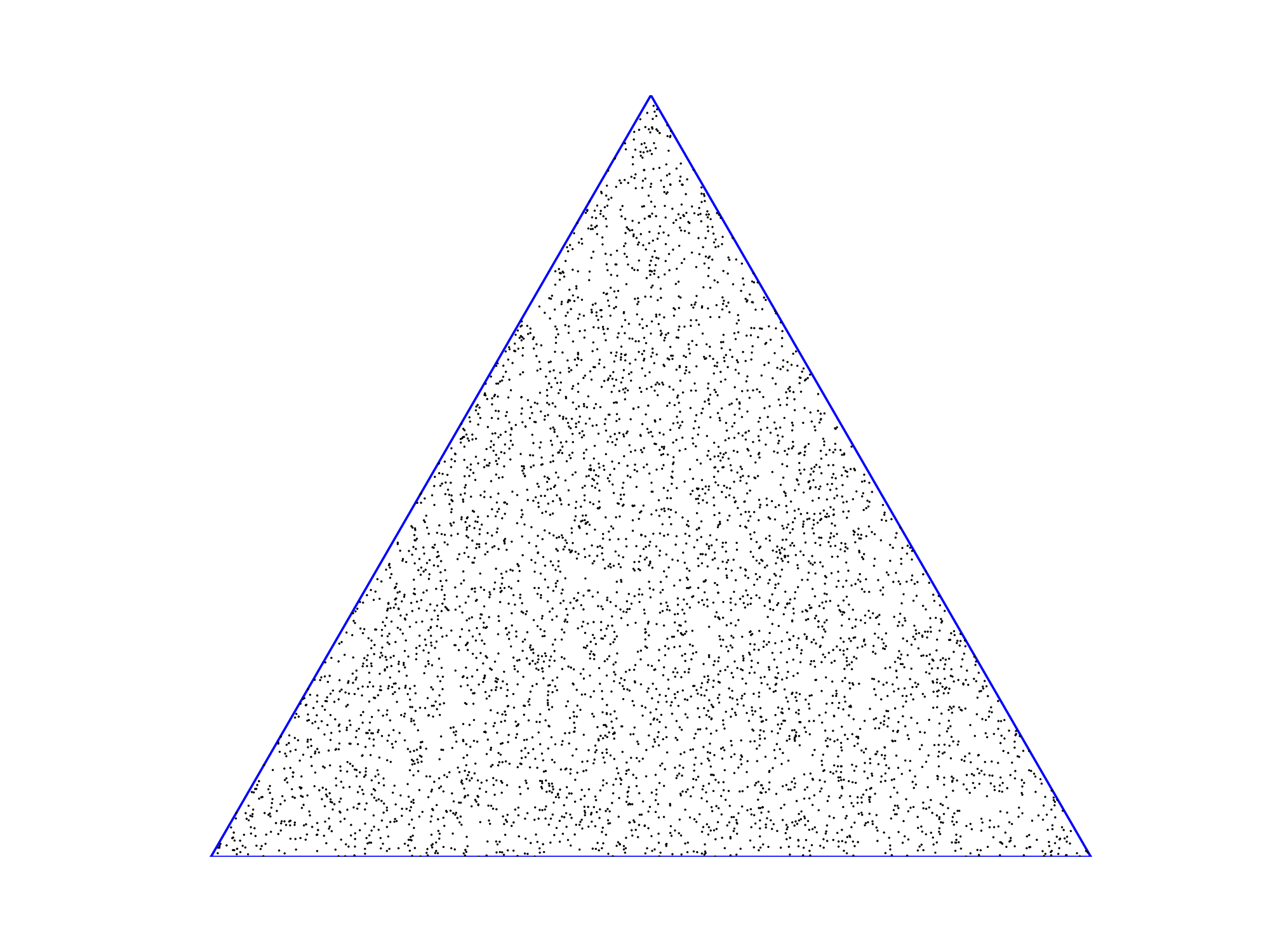} \label{fig:dirichlet_points_111}}
~
\subfigure[$\balpha=\begin{bmatrix}0.9,0.9,0.9\end{bmatrix}$]{\includegraphics[width=0.4
\textwidth]{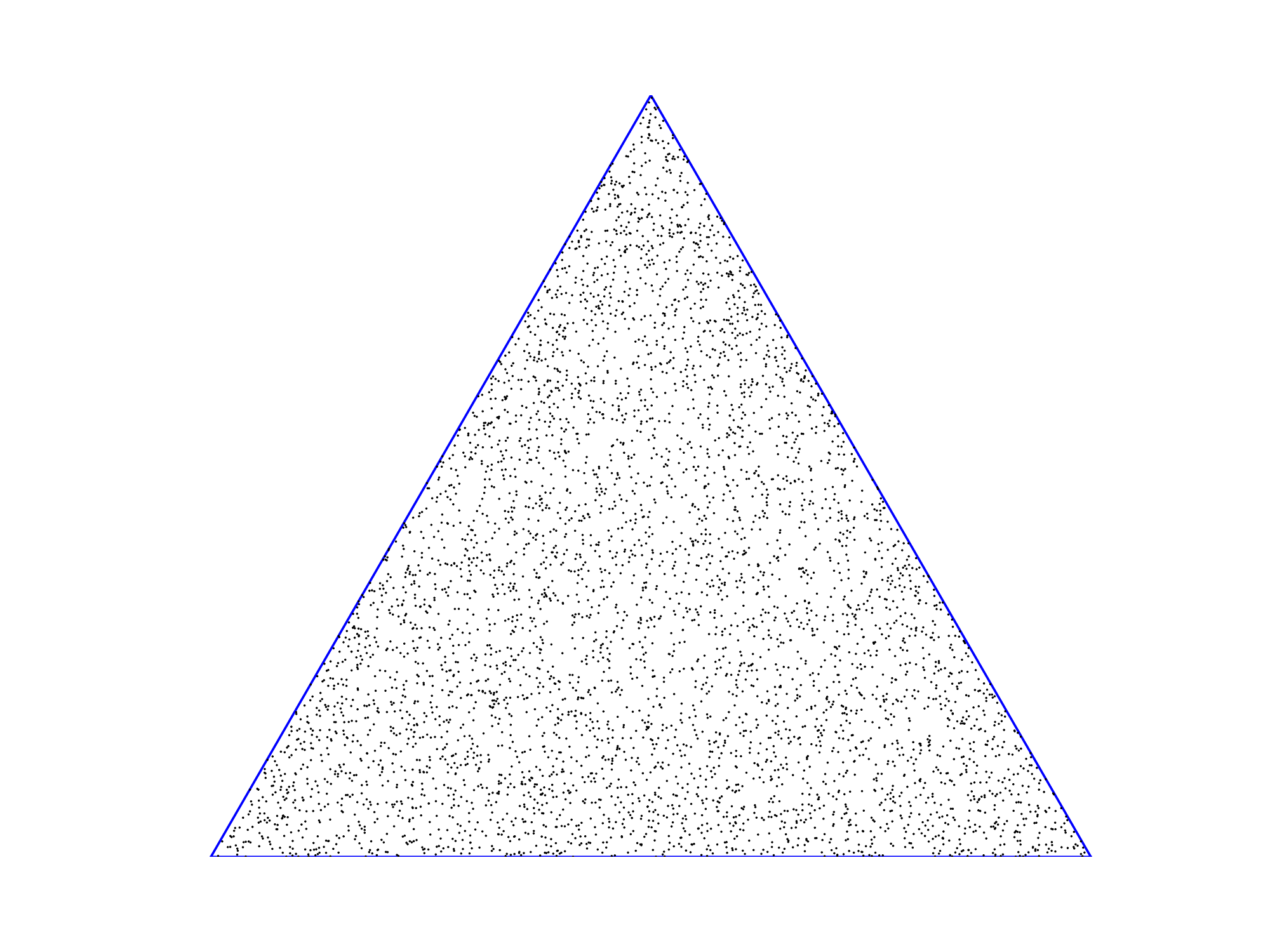} \label{fig:dirichlet_points_090909}}
\center
\subfigure[$\balpha=\begin{bmatrix}10,10,10\end{bmatrix}$]{\includegraphics[width=0.4
\textwidth]{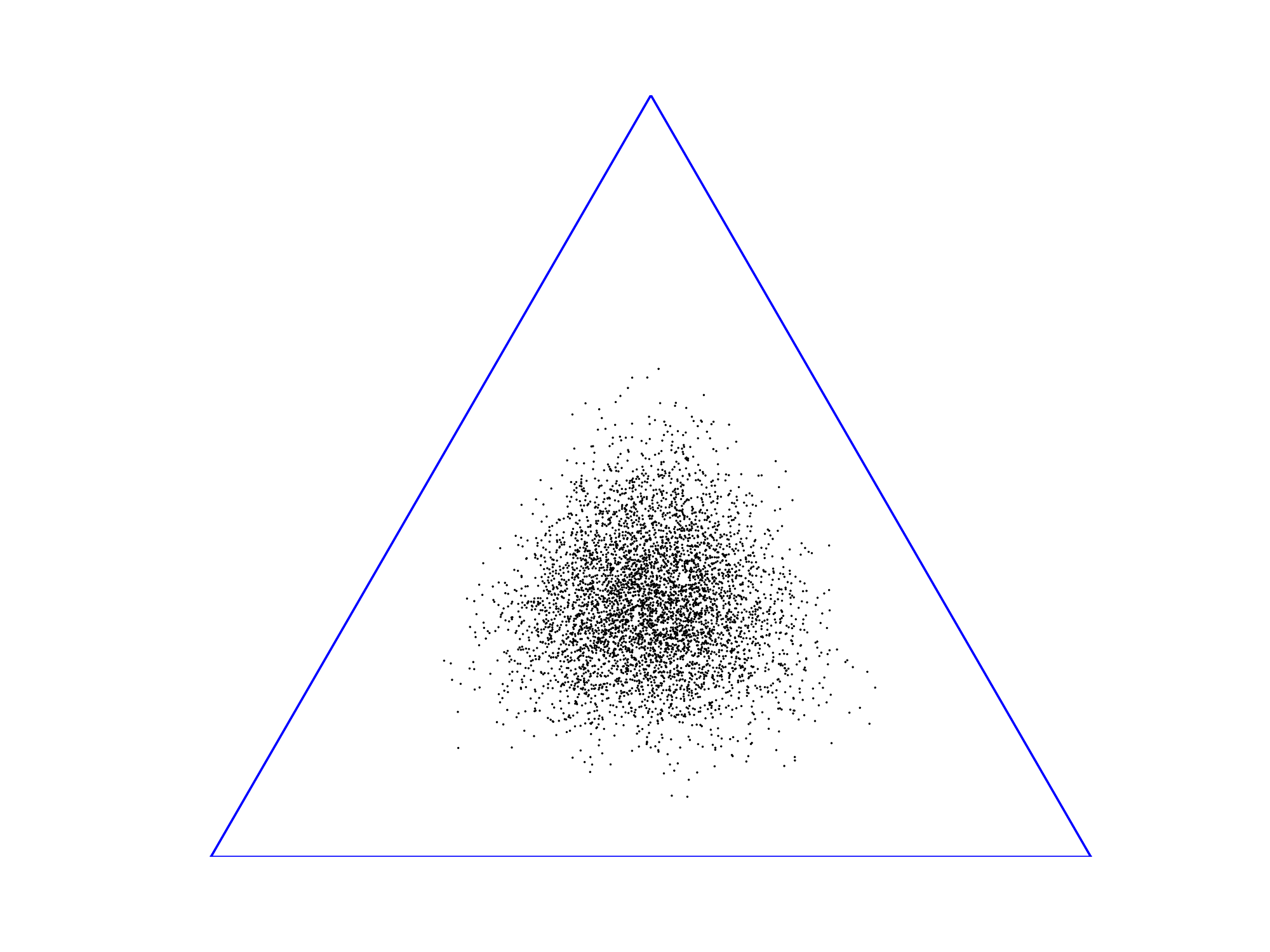} \label{fig:dirichlet_points_101010}}
~
\subfigure[$\balpha=\begin{bmatrix}15,5,2\end{bmatrix}$]{\includegraphics[width=0.4
\textwidth]{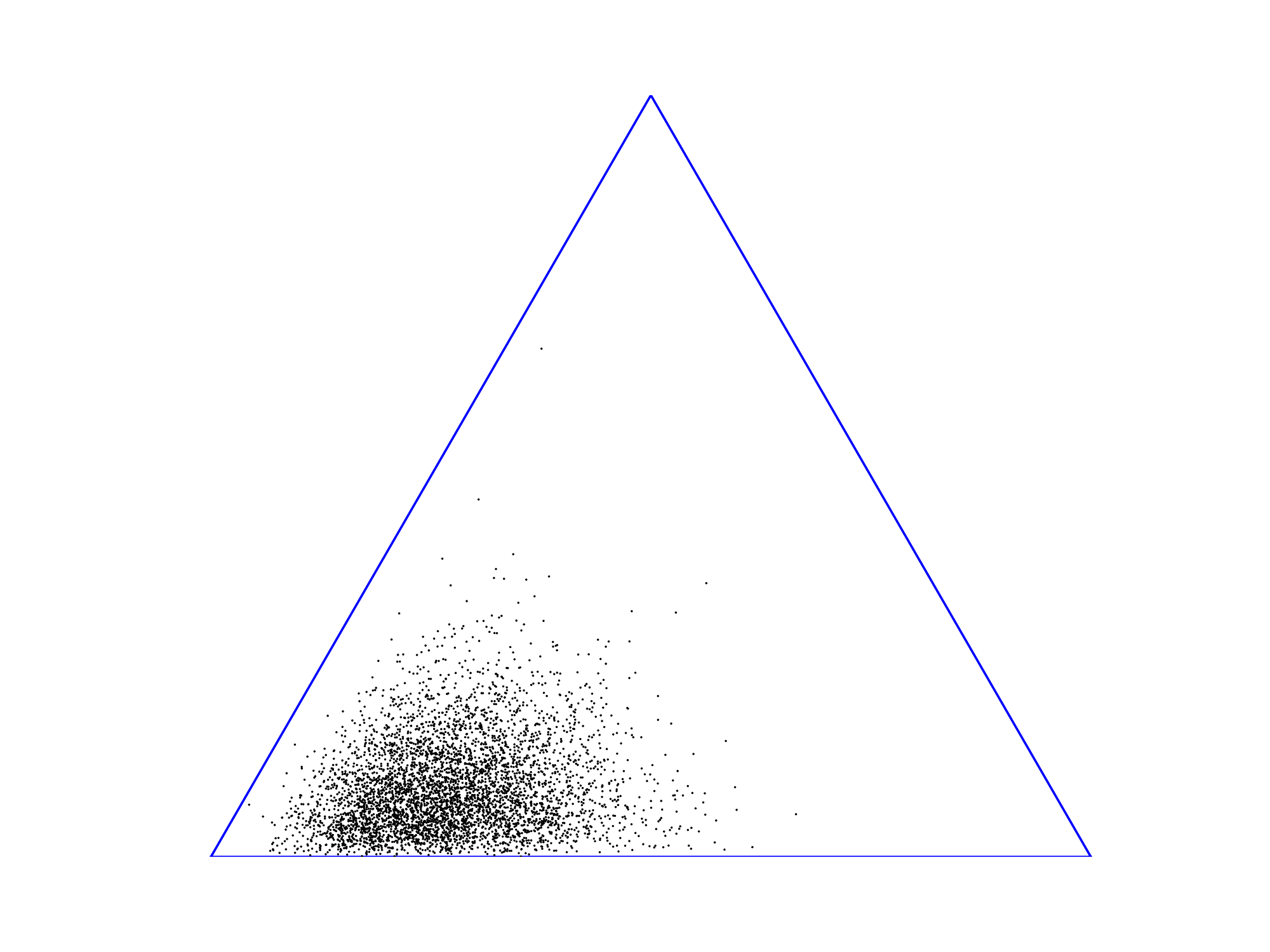} \label{fig:dirichlet_points_1552}}
\center
\caption{Draw of 5, 000 points from Dirichlet distribution over the probability simplex in $\mathbb{R}^3$ for various values of the concentration parameter $\balpha$.}
\label{fig:dirichlet_points}
\end{figure}

\subsection{Posterior distribution for multinomial Distribution}\label{section:dirichlet-dist-post}
For the conjugacy, that is, if $(\bN | \bpi) \sim $ $\multinomial_K(N, \bpi)$ and $\bpi \sim$ $\dirichlet(\balpha)$, then $(\bpi |\bN) \sim$ $\dirichlet(\balpha+\bN)$ = $\dirichlet(\alpha_1+N_1, \ldots, \alpha_K+N_K)$.
\begin{proof}[of conjugate prior of multinomial distribution]
	By the Bayes' theorem ``$\mathrm{posterior} \propto \mathrm{likelihood} \times \mathrm{prior} $", we get the posterior
	$$
	\begin{aligned}
		\mathrm{posterior}&= p(\bpi|\balpha, \bN) \\
		&\propto \multinomial_K(\bN|N, \bpi) \cdot \dirichlet(\bpi|\balpha) \\
		&= \left(\frac{N!}{N_1! N_2!  \ldots  N_K!} \prod^K_{k=1}\pi_k^{N_k}\right) \cdot  \left(\frac{1}{D(\balpha)}  \prod_{k=1}^K \pi_k ^ {\alpha_k - 1}\right)\\
		&\propto   \prod_{k=1}^K \pi_k ^ {\alpha_k +N_k - 1} \propto \dirichlet(\bpi|\balpha+\bN).
	\end{aligned}
	$$
	Therefore, $(\bpi | \bN) \sim$ $\dirichlet(\balpha+\bN)$ = $\dirichlet(\alpha_1+N_1, \ldots, \alpha_K+N_K)$.
\end{proof}

A comparison between the prior and posterior distribution reveals that the relative sizes of the Dirichlet parameters $\alpha_k$ describe the mean of the prior distribution of $\bpi$, and the sum of $\alpha_k$'s is a measure of the strength of the prior distribution. The prior distribution is mathematically equivalent to a likelihood resulting from $\sum_{k=1}^K(\alpha_k - 1)$ observations with $\alpha_k - 1$ observations of the $k^{th}$ group. 

To be noted, the Dirichlet distribution is a multivariate generalization of the beta distribution, which is the conjugate prior for binomial distribution \citep{hoff2009first, frigyik2010introduction}.

To conclude, here are some important points on Dirichlet distribution:
\begin{itemize}
\item A sample from a Dirichlet distribution is a probability vector (positive and sum to 1). In other words, a Dirichlet distribution is a probability distribution over all possible multinomial distributions with $K$ dimensions.
\item Dirichlet distribution is a conjugate prior of multinomial distribution as mentioned in the beginning of this section.
\end{itemize}

\section{Conjugate prior for multivariate Gaussian distribution}\label{sec:multi_gaussian_conjugate_prior}
The content is based on \citep{murphy2012machine, teh2007exponential, kamper2013gibbs, das2014dpgmm}. And also, \citep{murphy2007conjugate} provides us all other kinds of prior on Gaussian distribution as well.

\subsection{Multivariate Gaussian distribution}
\begin{svgraybox}
	\begin{definition}[Multivariate Guassian Distribution]
		A random vector $\bx \in \real^D$ is said to follow the multivariate Gaussian distribution with parameter $\bmu$ and $\bSigma$ if
		$$
		\begin{aligned}
			\normal(\bx| \bmu, \bSigma)&= (2\pi)^{-D/2} |\bSigma|^{-1/2}\exp\left(-\frac{1}{2}(\bx - \bmu)^\top \bSigma^{-1}(\bx - \bmu)\right),
		\end{aligned}
		$$
		where $\bmu \in \real^D$ is called the \textbf{mean vector}, and $\bSigma\in \real^{D\times D}$ is positive definite and is called the \textbf{covariance matrix}.
		The mean, mode, and covariance of the multivariate Gaussian distribution are given by 
		\begin{equation*}
			\begin{aligned}
				\Exp [\bx] &= \bmu,\\
				\mathrm{Mode}[\bx] &= \bmu, \\
				\Cov [\bx] &= \bSigma. 
			\end{aligned}
		\end{equation*}
	\end{definition}
\end{svgraybox}

The likelihood of $N$ random observations $\mathcal{X} = \{\bx_1, \bx_2, \ldots , \bx_N \}$ being generated by a multivariate Gaussian with mean vector $\bmu$ and covariance matrix $\bSigma$ is given by 
\begin{equation}
\begin{aligned}
&\gap p(\mathcal{X} | \bmu, \bSigma) =\prod^N_{n=1} \mathcal{N} (\bxn| \bmu, \bSigma) \\
&\overset{(a)}{=} (2\pi)^{-ND/2} |\bSigma|^{-N/2}\exp\left(-\frac{1}{2} \sum^N_{n=1}(\bxn - \bmu)^\top \bSigma^{-1}(\bxn - \bmu)\right) \\
&\overset{(b)}{=} (2\pi)^{-ND/2} |\bSigma|^{-N/2}\exp\left(-\frac{1}{2} \tr( \bSigma^{-1}\bS_{\bmu} )  \right)\\
&\overset{(c)}{=} (2\pi)^{-ND/2} |\bSigma|^{-N/2}\exp\left(-\frac{N}{2}(\bmu - \overline{\bx})^\top \bSigma^{-1}(\bmu - \overline{\bx})\right)  \exp\left(-\frac{1}{2}\tr( \bSigma^{-1}\bS_{\overline{x}} )\right),
\end{aligned}
\label{equation:multi_gaussian_likelihood}
\end{equation}
where 
\begin{equation}\label{equation:mvu-sample-covariance}
\begin{aligned}
\bS_{\bmu}            &\triangleq \sum^N_{n=1}(\bxn - \bmu)(\bxn - \bmu)^\top,\\
\bS_{\overline{x}}            &\triangleq \sum^N_{n=1}(\bxn - \overline{\bx})(\bxn - \overline{\bx})^\top, \\
\overline{\bx}		&\triangleq \frac{1}{N}\sum^N_{n=1}\bxn, 
\end{aligned}
\end{equation}
The equivalence of Equation (a) and Equation (c) above in Equation~\eqref{equation:multi_gaussian_likelihood} follows from the identity (similar reason for the equivalence of Equation (a) and Equation (b)):
\begin{align}
\sum^N_{n=1}(\bxn - \bmu)^\top\bSigma^{-1}(\bxn - \bmu) = \tr(\bSigma^{-1}\bS_{\overline{x}}) + N \cdot (\overline{\bx} - \bmu)^\top\bSigma^{-1}(\overline{\bx} - \bmu).
\label{equation:multi_gaussian_identity}
\end{align}
where the trace of a square matrix $\bm{A}$ is defined to be the sum of the diagonal elements $a_{ii}$ of $\bm{A}$:
\begin{equation}
\tr(\bm{A}) \triangleq \sum_i a_{ii}.
\end{equation}
The formulation in Equation (b) is useful for the separated view of the conjugate prior for $\bSigma$, and Equation (c) is useful for the unified view of the conjugate prior for $\bmu, \bSigma$ in the sequel.
\begin{proof}[Proof of Identity~\ref{equation:multi_gaussian_identity}]
There is a “trick” involving the trace that makes such calculations easy (see also Chapter 3 of \citep{gentle2007matrix})
\begin{equation}
\bx^\top \bm{A} \bx = \tr(\bx^\top \bm{A} \bx) = \tr(\bx \bx^\top \bm{A}) =  \tr(\bm{A} \bx \bx^\top )
\end{equation}
where the first equality follows from the fact that $\bx^\top \bm{A} \bx$ is a scalar and the trace of a product is invariant under cyclical permutations of the factors \footnote{Trace is invariant under cyclical permutation: $\tr(\bA\bB\bC) = \tr(\bB\bC\bA) = \tr(\bC\bA\bB)$ if all $\bA\bB\bC$, $\bB\bC\bA$, and $\bC\bA\bB$ exist.} . 

We can then rewrite $\sum^N_{n=1}(\bxn - \bmu)^\top\bSigma^{-1}(\bxn - \bmu)$ by 
\begin{equation}
\begin{aligned}
&\, \sum^N_{n=1}(\bxn - \overline{\bx})^\top\bSigma^{-1}(\bxn - \overline{\bx}) + \sum^N_{n=1}(\overline{\bx} - \bmu)^\top\bSigma^{-1}(\overline{\bx} - \bmu) \\
&= \tr(\bSigma^{-1} \bS_{\overline{x}}  ) + N \cdot (\overline{\bx} - \bmu)^\top\bSigma^{-1}(\overline{\bx} - \bmu).
\end{aligned}
\end{equation}
This concludes the proof.
\end{proof}

By equivalence from the Identity~\eqref{equation:multi_gaussian_identity}, we cannot reduce the complexity, but it is useful to show the the conjugacy in Section~\ref{sec:niw_posterior_conjugacy} below.

\subsection{Multivariate Student $t$ distribution}
The multivariate Student $t$ distribution will be often used in the posterior predictive distribution of multivariate Gaussian parameters. We rigorously define the distribution as follows.
\begin{figure}[htp]
	\subfigure[Gaussian, $\bSigma =\begin{bmatrix}
		1&0\\
		0&1
	\end{bmatrix}. $ ]{\includegraphics[width=0.31
		\textwidth]{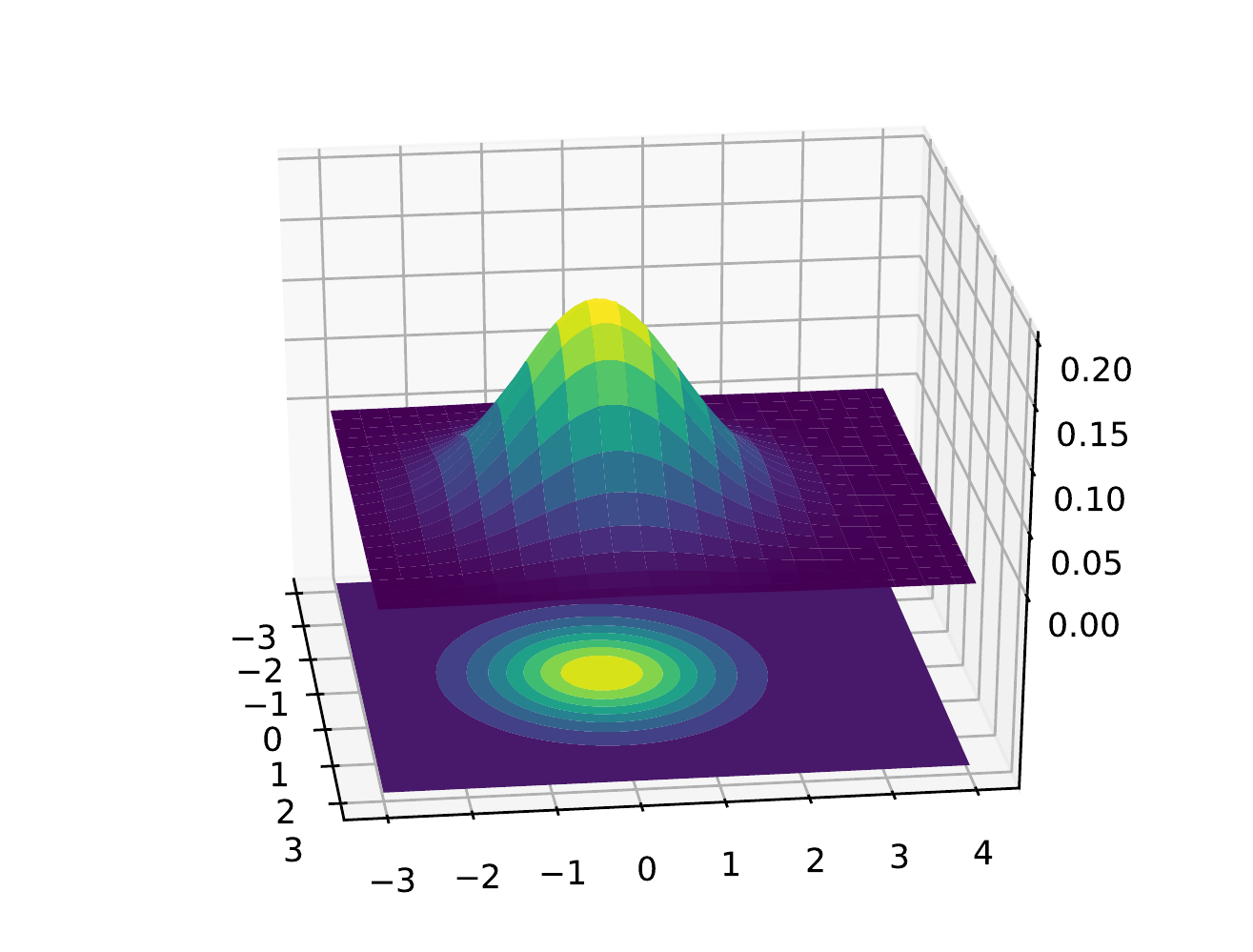} \label{fig:gauss-diagonal}}
	\subfigure[Gaussian, $\bSigma =\begin{bmatrix}
		1&0\\
		0&3
	\end{bmatrix}.$]{\includegraphics[width=0.31
		\textwidth]{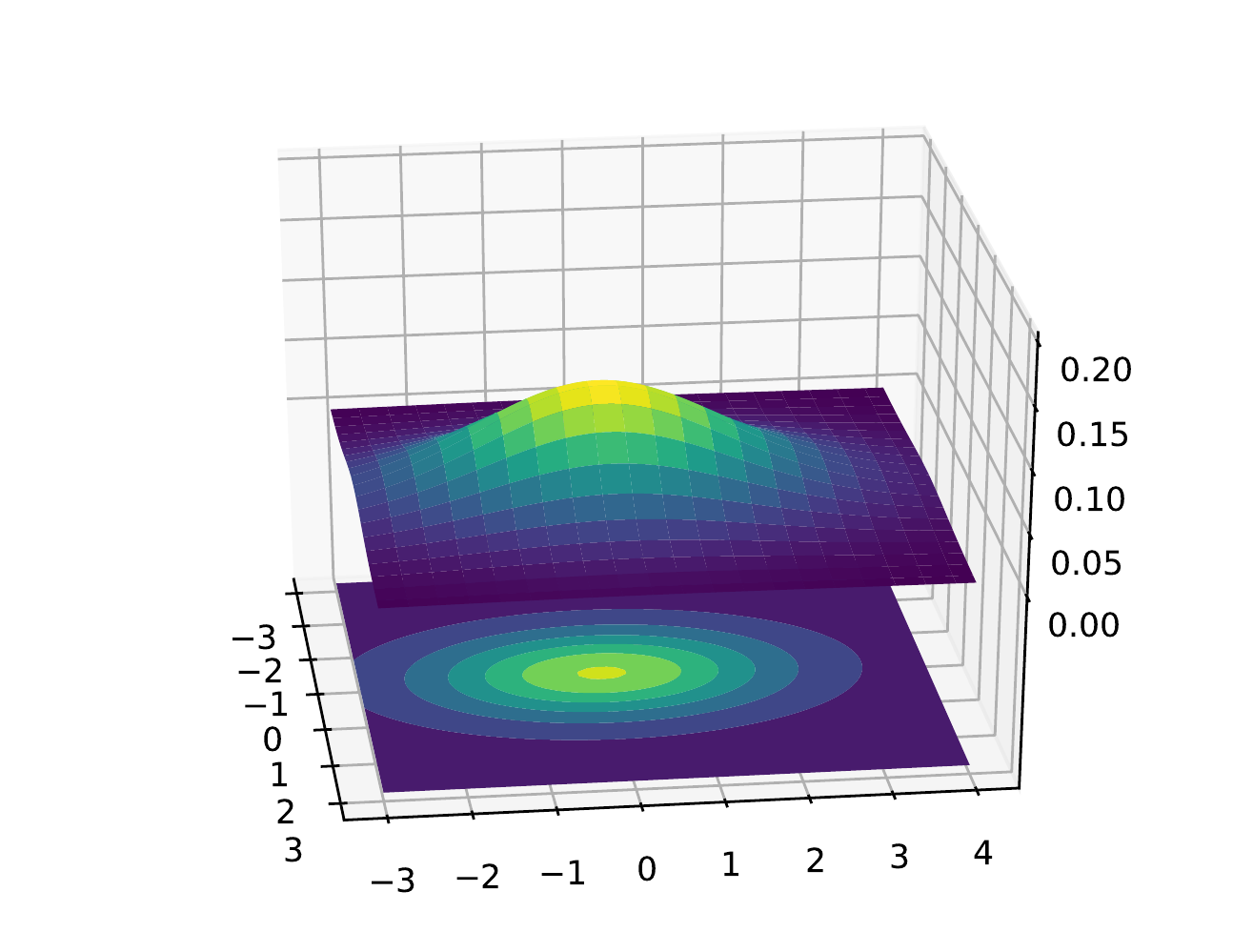} \label{fig:gauss-spherical}}
	\subfigure[Gaussian, $\bSigma =\begin{bmatrix}
		1&-0.5\\
		-0.5&1.5
	\end{bmatrix}.$]{\includegraphics[width=0.31 
		\textwidth]{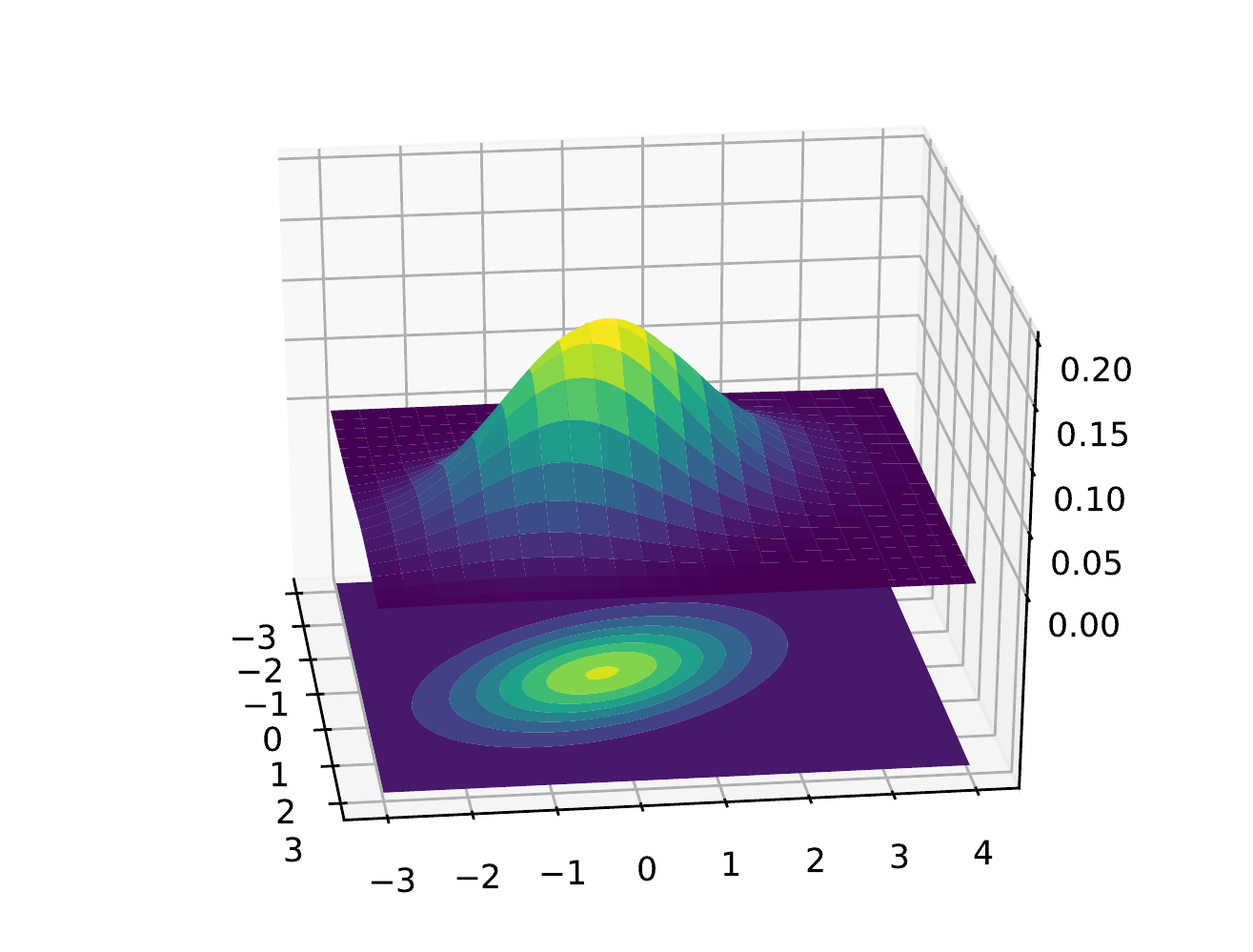} \label{fig:gauss-full}}
	\subfigure[Student $t$, $\bSigma =\begin{bmatrix}
		1&0\\
		0&1
	\end{bmatrix}, \nu=1. $ ]{\includegraphics[width=0.31
		\textwidth]{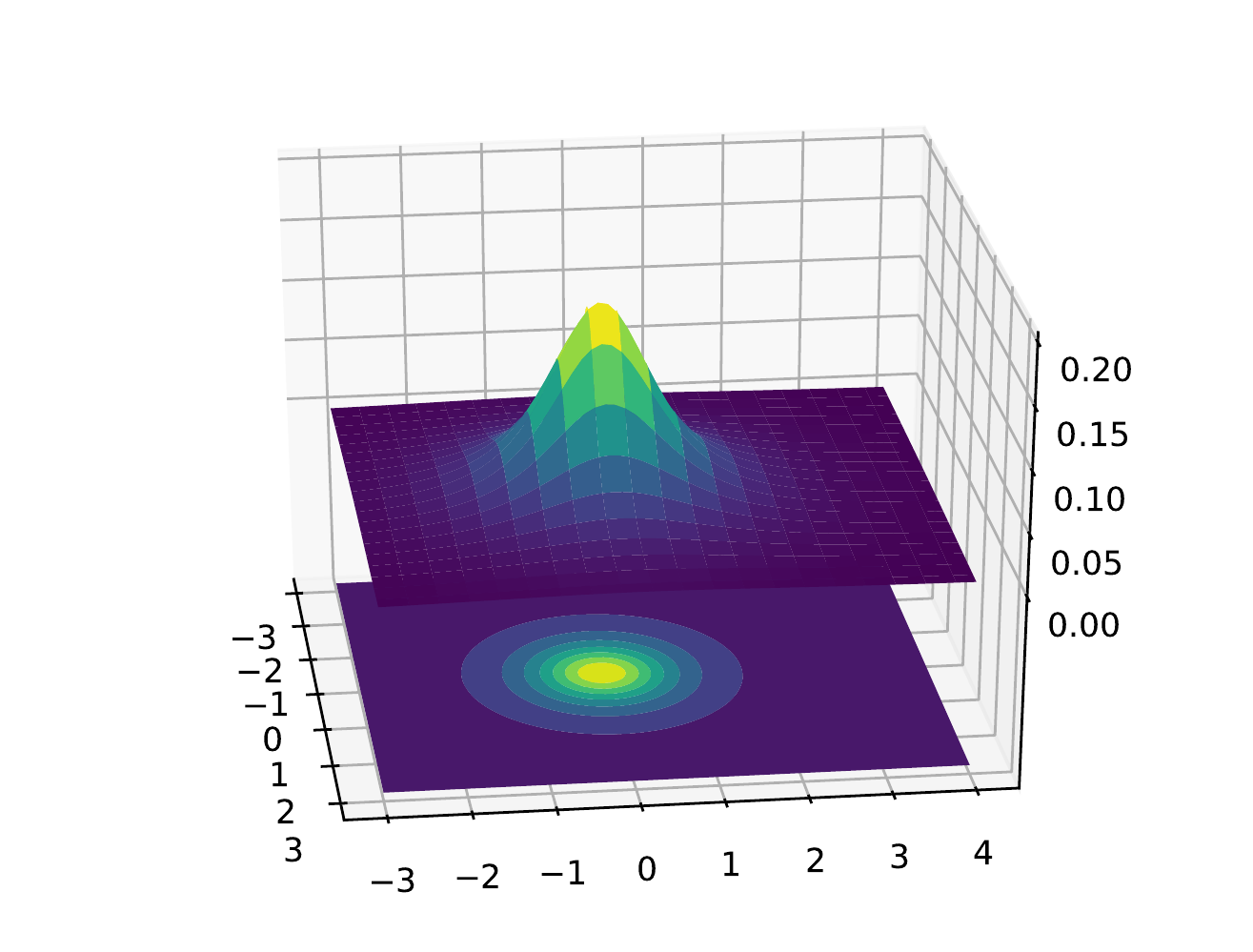} \label{fig:student-1}}
	\subfigure[Student $t$, $\bSigma =\begin{bmatrix}
		1&0\\
		0&1
	\end{bmatrix}, \nu=3. $ ]{\includegraphics[width=0.31
		\textwidth]{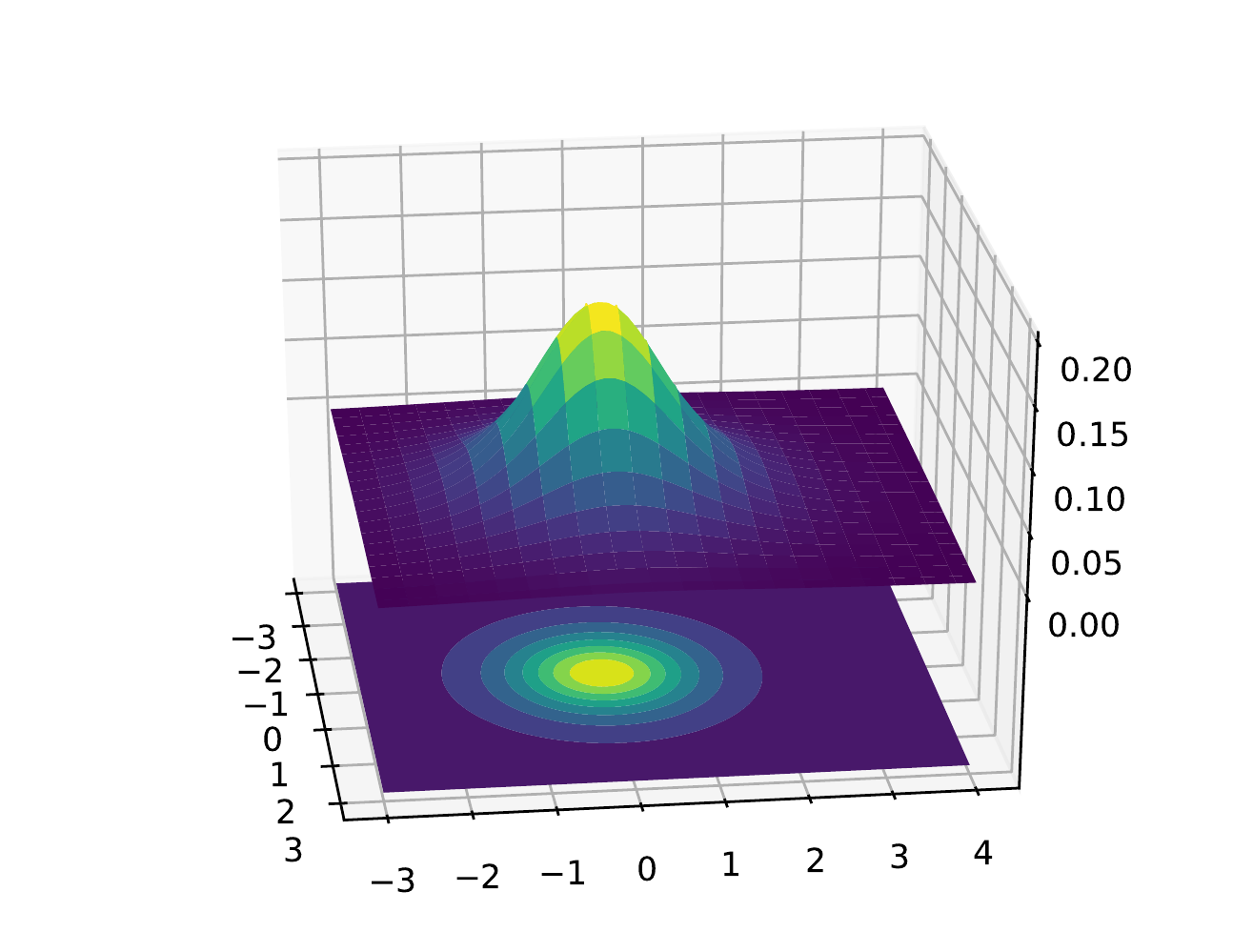} \label{fig:student-3}}
	\subfigure[Stu $t$, $\bSigma =\begin{bmatrix}
		1&0\\
		0&1
	\end{bmatrix}, \nu=200. $ ]{\includegraphics[width=0.31
		\textwidth]{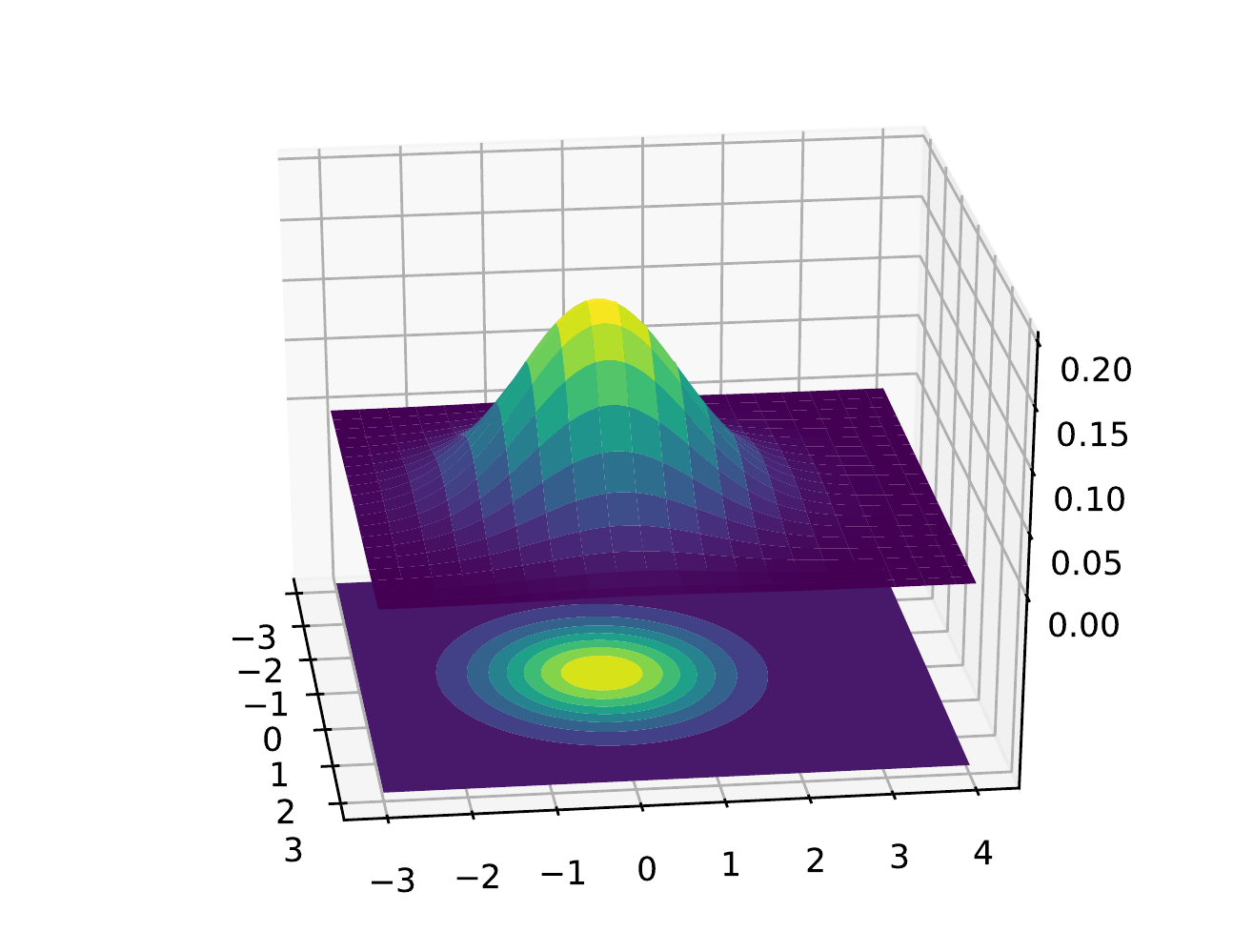} \label{fig:student200}}
	\subfigure[Diff between (a) and (d)]{\includegraphics[width=0.31
		\textwidth]{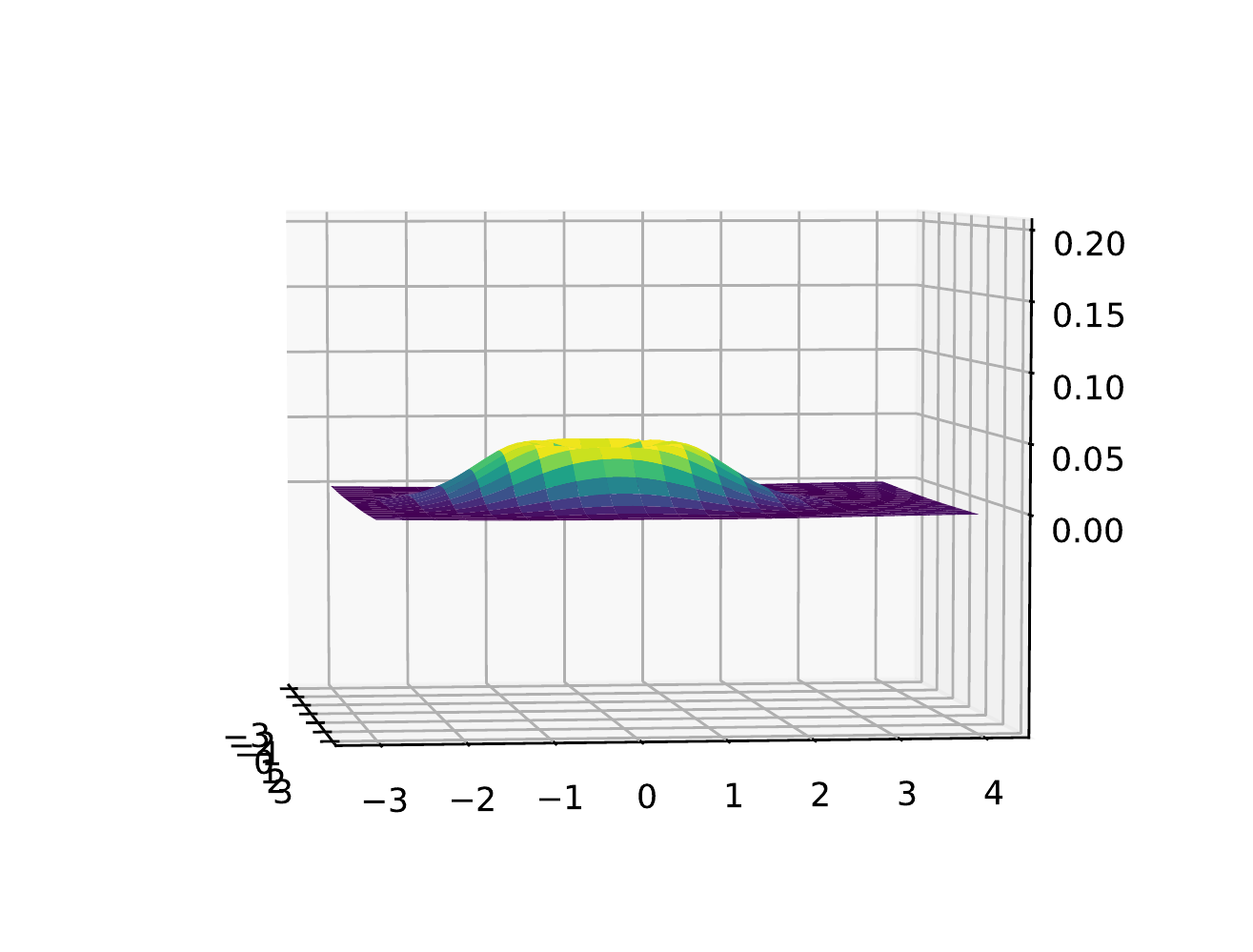} \label{fig:gauss-stu-diff1}}
	\subfigure[Diff between (a) and (e)]{\includegraphics[width=0.31
		\textwidth]{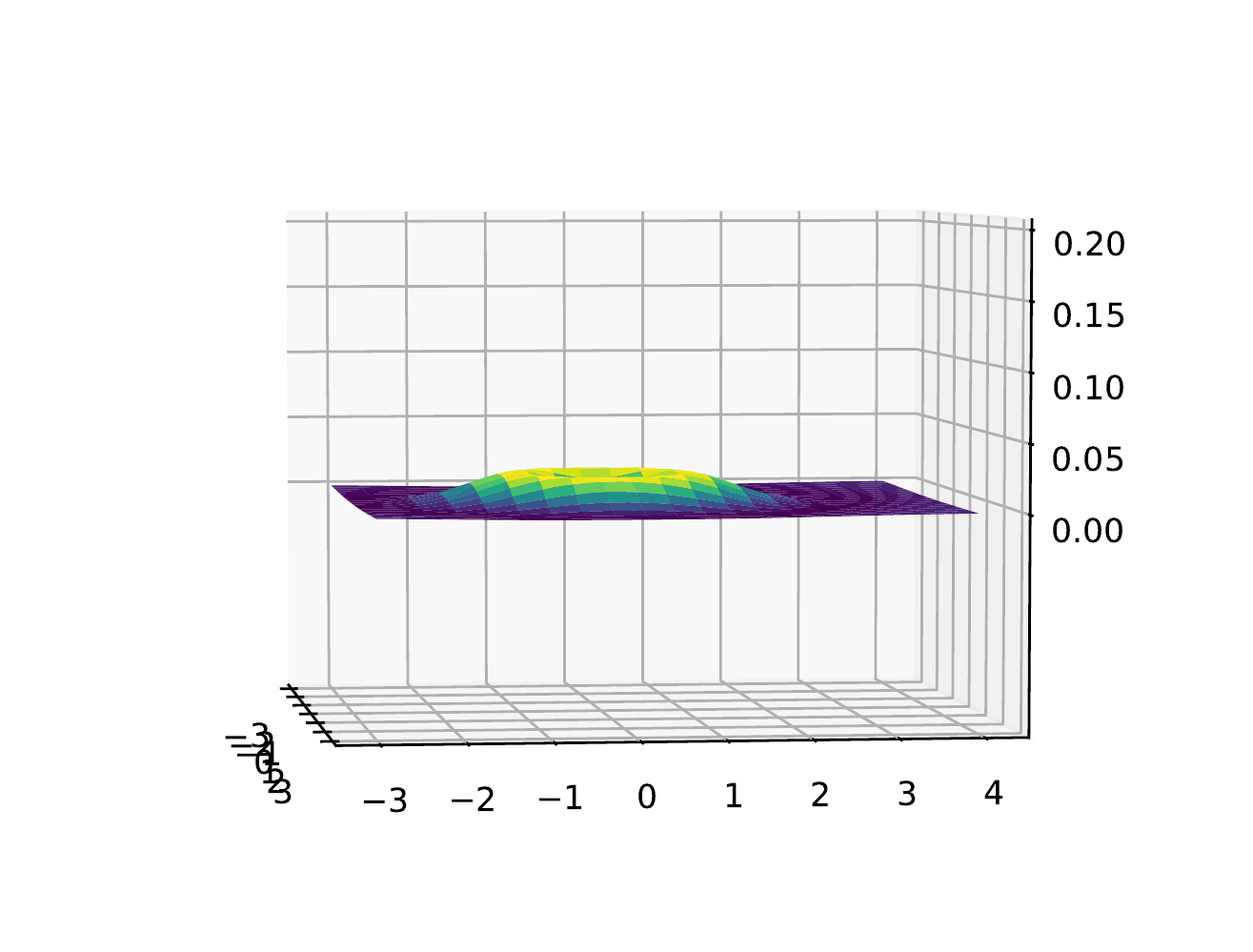} \label{fig:gauss-stu-diff3}}
	\subfigure[Diff between (a) and (f)]{\includegraphics[width=0.31
		\textwidth]{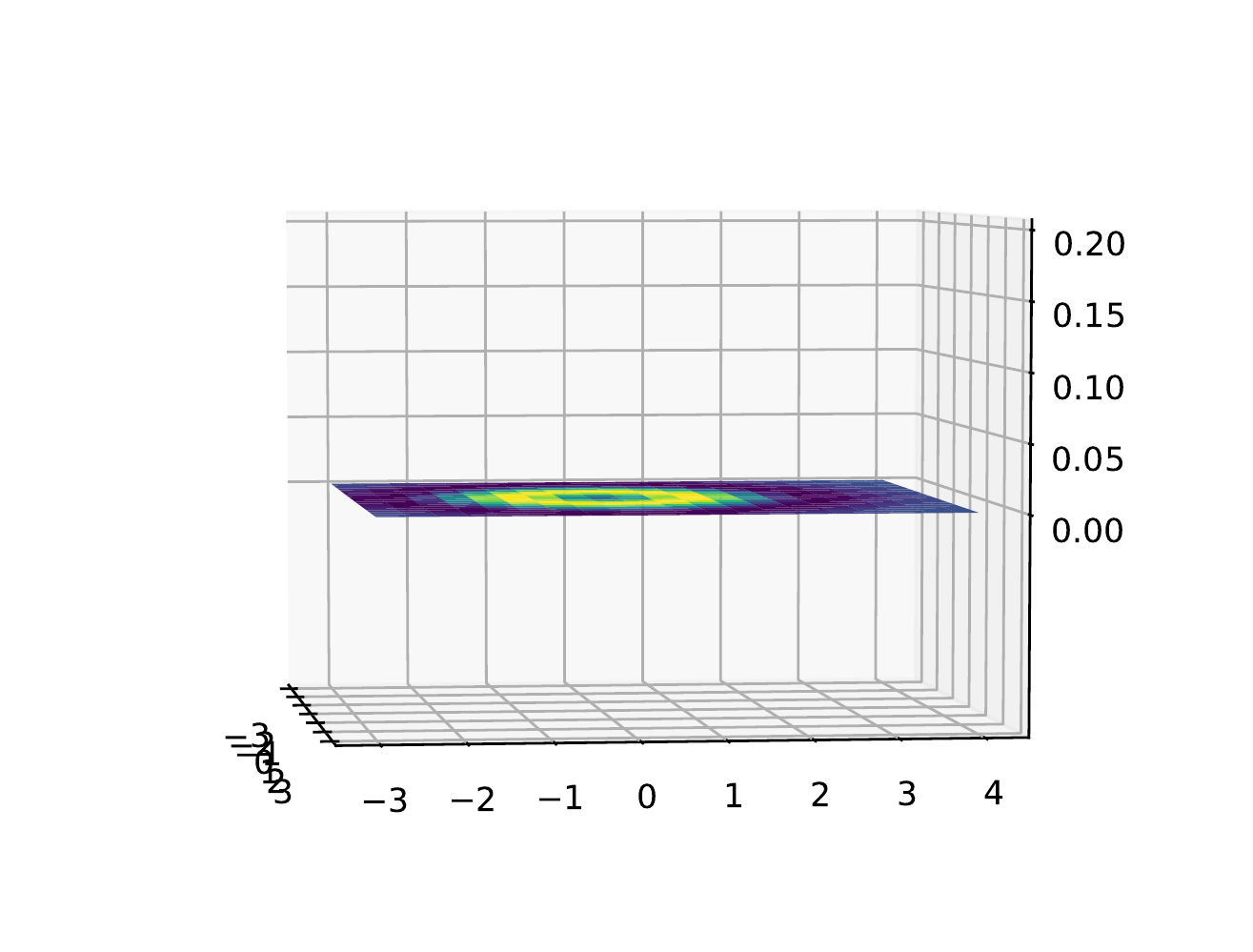} \label{fig:gauss-stu-diff200}}
	\centering
	\caption{Density and contour plots (blue=low, yellow=high) for the multivariate Gaussian distribution and multivariate Student $t$ distribution over the $\mathbb{R}^2$ space for various values of the covariance/scale matrix with zero mean vector.  Fig~\ref{fig:gauss-diagonal}: A spherical covariance matrix has a circular shape; Fig~\ref{fig:gauss-spherical}: A diagonal covariance matrix is an \textbf{axis aligned} ellipse; Fig~\ref{fig:gauss-full}: A full covariance matrix has a elliptical shape; \\
		Fig~\ref{fig:student-1} to Fig~\ref{fig:student200} for Student $t$ distribution with same scale matrix and increasing $\nu$ such that the difference between (a) and (f) in Fig~\ref{fig:gauss-stu-diff200} is approaching to zero.}\centering
	\label{fig:studentt_densitys-1}
\end{figure}
\begin{svgraybox}
\begin{definition}[Multivariate Student $t$ Distribution]\label{definition:multivariate-stu-t}
A random vector $\bx$ is said to follow the multivariate Student's $t$ distribution with parameter $\bmu$, $\bSigma$, and $\nu$ if
$$
\begin{aligned}
\tau(\bx| \bmu, \bSigma, \nu)&= \frac{\Gamma(\nu/2 + D/2)}{\Gamma(\nu/2)} \frac{|\bSigma|^{-1/2}}{\nu^{D/2} \pi^{D/2}} \times \left[ 1+ \frac{1}{\nu} (\bx-\bmu)^\top \bSigma^{-1} (\bx-\bmu)  \right]^{-(\frac{\nu+D}{2})}\\
&= \frac{\Gamma(\nu/2 + D/2)}{\Gamma(\nu/2)} |\pi\bV|^{-1/2} \times \left[ 1+ \frac{1}{\nu} (\bx-\bmu)^\top \bV^{-1} (\bx-\bmu)  \right]^{-(\frac{\nu+D}{2})},
\end{aligned}
$$
where $\bSigma$ is called the \textbf{scale matrix} and $\bV=\nu\bSigma$, and $\nu$ is the \textbf{degree of freedom}. This distribution has fatter tails than a Gaussian one. The smaller the $\nu$ is, the fatter the tails. As $\nu \rightarrow \infty$, the distribution converges towards a Gaussian.
The mean, mode, and covariance of the multivariate Student's $t$ distribution are given by 
\begin{equation*}
\begin{aligned}
\Exp [\bx] &= \bmu,\\
\mathrm{Mode}[\bx] &= \bmu, \\
\Cov [\bx] &= \frac{\nu}{\nu-2}\bSigma. 
\end{aligned}
\end{equation*}
Note that the $\bSigma$ is called the scale matrix since it is not exactly the covariance matrix as that in multivariate Gaussian distribution. 

Specifically, When $D=1$, it follows that
\begin{equation}\label{equation:uni-stu-nonzero}
\begin{aligned}
	\tau(x| \mu, \sigma^2, \nu)&= \frac{\Gamma(\frac{\nu+1}{2})}{\Gamma(\frac{\nu}{2})} \frac{1}{\sigma\sqrt{\nu\pi}} \times \left[ 1+ \frac{(x-\mu)^2}{\nu \sigma^2}   \right]^{-(\frac{\nu+1}{2})}.
\end{aligned}
\end{equation}
When $D=1, \bmu=0, \bSigma=1$, then the p.d.f., defines the \textbf{univariate $t$ distribution}.
\begin{equation*}
\begin{aligned}
\tau(x| , \nu)&= \frac{\Gamma(\frac{\nu+1}{2})}{\Gamma(\frac{\nu}{2})} \frac{1}{\sqrt{\nu\pi}} \times \left[ 1+ \frac{x^2}{\nu }   \right]^{-(\frac{\nu+1}{2})}.
\end{aligned}
\end{equation*}
\end{definition}
\end{svgraybox}
Figure~\ref{fig:studentt_densitys-1} compares the Gaussian and the Student $t$ distribution for various values such that when $\nu\rightarrow \infty$, the difference between the densities is approaching to zero. For same parameters in the densities, Student $t$ in general has longer ``tails" than a Gaussian which can be seen from the comparison between Figure~\ref{fig:gauss-diagonal} and Figure~\ref{fig:student-1}. This gives the Student $t$ distribution an important property called \textbf{robustness}, which means that it is much less sensitive than the Gaussian to the presence of a few data points which are outliers \citep{bishop2006pattern, murphy2012machine}.

A Student $t$ distribution can be written as a \textbf{Gaussian scale mixture}
\begin{equation}\label{equation:gauss-scale-mixture}
\tau(\bx| \bmu, \bSigma, \nu) = \int_0^{\infty} \normal(\bx | \bmu, \bSigma/z)\cdot \gammadist(z|\frac{\nu}{2}, \frac{\nu}{2})dz.
\end{equation}
This can be thought of as an ``infinite'' mixture of Gaussians, each with a slightly different covariance matrix. That is, Student $t$ distribution is obtained by adding up an
infinite number of Gaussian distributions having the same mean vector but different precision matrices. 
From this Gaussian scale mixture view, when $\nu \rightarrow \infty$, the gamma distribution becomes a degenerate random variable with all the non-zero mass at the point unity such that the multivariate Student $t$ distribution converges to multivariate Gaussian distribution.


\subsection{Prior on parameters of multivariate Gaussian distribution}
In Section~\ref{section:blm-fullconjugate}, we have shown that the inverse-gamma distribution is a conjugate prior to the magnitude of covariance matrix of multivariate Gaussian distribution. A generalization to this is the inverse-Wishart distribution which is a conjugate prior to the full covariance matrix of multivariate Gaussian distribution. That is, the inverse-Wishart distribution is a probability distribution of random positive definite matrices that is used to model random covariance matrices.

Before the discussion about inverse-Wishart distribution. We shall notice that it derives from the Wishart distribution. \citep{anderson1962introduction} has said ``The Wishart distribution ranks next to the (muiltivariate) normal distribution in order of importance and usefulness in multivariate statistics".
\begin{svgraybox}
	\begin{definition}[Wishart Distribution]
		A random symmetric positive definite matrix $\bLambda\in \real^{D\times D}$ is said to follow the Wishart distribution with parameter $\bM_0$ and $\nu_0$ if
		$$
		\begin{aligned}
			\mathrm{Wi}(\bLambda| \textcolor{red}{\bM_0}, \nu_0)&= |\bLambda|^{\textcolor{blue}{\frac{\nu_0-D-1}{2}}} \exp\left(-\frac{1}{2}\tr(\textcolor{blue}{\bLambda} \textcolor{red}{\bM_0^{-1}})\right)\\
			&\gap \times \left[2^{\frac{\nu_0 D}{2}}  \pi^{D(D-1)/4}  \textcolor{red}{|\bM_0|^{\nu_0/2}  } \prod_{d=1}^D\Gamma(\frac{\nu_0+1-d}{2}) \right]^{-1},
		\end{aligned}
		$$
		where $\nu_0 > D$ and $\bM_0$ is a $D\times D$ symmetric positive definite matrix, and $|\bLambda| = \det(\bLambda)$.
		The $\nu_0$ is called the \textbf{number of degrees of freedom}, and $\bM_0$ is called the \textbf{scale matrix}.
		And it is denoted by $\bLambda \sim \mathrm{W}(\bM_0, \nu_0)$.
		The mean and variance of Wishart distribution are given by 
		$$
		\begin{aligned}
			\E [\bLambda] &= \nu_0 \bM_0,\\
			\Var[\bLambda_{i,j}] &= \nu_0 (m_{i,j}^2 + m_{i,i}m_{j,j}),
			\label{equation:wishart_expectation}
		\end{aligned}
		$$
		where $m_{i,j}$ is the $i$-th row $j$-th column element of $\bM_0$.
	\end{definition}
\end{svgraybox}
An interpretation of the Wishart distribution is as follows. Suppose we sample i.i.d., $\bz_1, \bz_2, \ldots, \bz_{\nu_0}$ from $\normal(\bzero, \bM_0)$. The sum of squares matrix of the collection of multivariate vectors is given by 
$$
\sum_{i=1}^{\nu_0} \bz_i\bz_i^\top = \bZ^\top\bZ,
$$
where $\bZ$ is the $\nu_0 \times D$ matrix whose $i$-th row is $\bz_i^\top$. It is trivial that $\bZ^\top\bZ$ is positive semidefinite and symmetric. If $\nu_0 >D$ and the $\bz_i$'s are linearly independent, then $\bZ^\top \bZ$ will be positive definite and symmetric. That is $\bZ\bx=\bzero$ only happens when $\bx=\bzero$. We can repeat over and over again, generating matrices $\bZ_1^\top\bZ_1, \bZ_2^\top\bZ_2, \ldots, \bZ_l^\top\bZ_l$. The population distribution of these matrices has a Wishart distribution with parameters $(\bM_0, \nu_0)$. By definition, 
$$
\begin{aligned}
\bLambda&=\bZ^\top\bZ = \sum_{i=1}^{\nu_0} \bz_i\bz_i^\top \\
\Exp[\bLambda]&=\Exp[\bZ^\top\bZ] = \Exp\left[\sum_{i=1}^{\nu_0} \bz_i\bz_i^\top\right] = \nu_0 \Exp[\bz_i\bz_i^\top] = \nu_0\bM_0. \\
\end{aligned}
$$

When $D=1$, this reduces to the case that if $z$ is a mean-zero univariate normal random variable, then $z^2$ is a Gamma random variable. To be specific, 
$$
\mathrm{suppose } \qquad  z \sim \normal(0, a) , \qquad \mathrm{then } \qquad z^2\sim \gammadist(a/2, 1/2).
$$
Just like the relationship between inverse-Gamma distribution and Gamma distribution that if $x \sim \gammadist(r, \lambda)$, then $y=\frac{1}{x} \sim \inversegammadist(r, \lambda)$. There is a similar connection between the inverse-Wishart distribution and Wishart-distribution.

Since we often use the inverse-Wishart distribution as a prior distribution for a covariance matrix, it is often useful to replace $\bM_0$ in the Wishart distribution by $\bso=\bM_0^{-1}$. This results in that
A random $D\times D$ symmetric positive definite matrix $\bSigma$ has an $\mathrm{IW}(\bSigma| \bso, \nu_0)$ distribution if $\bSigma^{-1}=\bLambda$ has a Wishart $\mathrm{Wi}(\bLambda| \bM_0, \nu_0)$ distribution. \footnote{However, if we do not replace $\bM_0$ by $\bso$, the relationship will be: A random $D\times D$ symmetric positive definite matrix $\bSigma$ has an $\mathrm{IW}(\bSigma| \textcolor{blue}{\bM_0}, \nu_0)$ distribution if $\bSigma^{-1}$ has a Wishart $\mathrm{Wi}(\bLambda| \bM_0, \nu_0)$ distribution.}
\begin{svgraybox}
\begin{definition}[Inverse-Wishart Distribution]
A random symmetric positive definite matrix $\bSigma\in \real^{D\times D}$ is said to follow the inverse-Wishart distribution with parameter $\bso$ and $\nu_0$ if
$$
\begin{aligned}
\mathrm{IW}(\bSigma| \textcolor{red}{\bso}, \nu_0)&= |\bSigma|^{\textcolor{blue}{-\frac{\nu_0+D+1}{2}}} \exp\left(-\frac{1}{2}\tr(\textcolor{blue}{\bSigma^{-1}} \textcolor{red}{\bso})\right)\\
&\gap \times \left[2^{\frac{\nu_0 D}{2}}  \pi^{D(D-1)/4}  \textcolor{red}{|\bso|^{-\nu_0/2}}   \prod_{d=1}^D\Gamma(\frac{\nu_0+1-d}{2}) \right]^{-1},
\end{aligned}
$$
where $\nu_0 > D$ and $\bso$ is a $D\times D$ symmetric positive definite matrix, and $|\bSigma| = \det(\bSigma)$.
The $\nu_0$ is called the \textbf{number of degrees of freedom}, and $\bso$ is called the \textbf{scale matrix}.
And it is denoted by $\bSigma \sim \mathrm{IW}(\bso, \nu_0)$.
The mean and mode of inverse-Wishart distribution is given by 
\begin{equation}
\begin{aligned}
\E [\bSigma ^{-1}] &= \nu_0 \bso^{-1}=\nu_0 \bM_0,\\
\E [\bSigma] &= \frac{1}{\nu_0 - D - 1} \bso, \\
\mathrm{Mode}[\bSigma] &= \frac{1}{\nu_0 + D + 1} \bso.
\label{equation:iw_expectation}
\end{aligned}
\end{equation}
Note that, sometimes, we replace $\bso$ by $\bM_0=\bso^{-1}$ such that $\E [\bSigma ^{-1}] = \nu_0 \bM_0$ which does not involve inverse of the matrix.

When $D=1$, the inverse-Wishart distribution reduces to the inverse Gamma such that $\frac{\nu_0}{2} = r$ and $\frac{S_0}{2}=\lambda$, see Definition~\ref{definition:inverse-gamma}:
$$
\mathrm{IW}(y| S_0, \nu_0) = \inversegammadist(y|r, \lambda).
$$
\end{definition}
\end{svgraybox}

Note that the Wishart density is not simply the inverse-Wishart density with
$\bSigma$ replaced by $\bLambda = \bSigma^{-1}$. There is an additional factor of $|\bSigma|^{-(D+1)}$. See \citep{anderson1962introduction} Theorem 7.7.1 that the Jacobian of the transformation $\bLambda = \bSigma^{-1}$ is $|\bSigma|^{-(D+1)}$. Substitution of $\bSigma^{-1}$ in the definition of Wishart distribution and multiply by $|\bSigma|^{-(D+1)}$ can yield the inverse-Wishart distribution.
 \footnote{Which is from the Jacobian in the change-of-variables formula. A short proof is provided here. Let $\bLambda = g(\bSigma)=\bSigma^{-1}$ where $\bSigma\sim \inversewishart(\bso, \nu_0)$ and $\bLambda\sim \wishartdist(\bso, \nu_0)$. Then, $f(\bSigma)  = f(\bLambda) |J_g|$ where $J_g$ is the Jacobian matrix which results in $f(\bSigma) = f(\bLambda) |J_g| = f(\bLambda)|\bSigma|^{-(D+1)} $. }

We will see that a sample drawn from a normal-inverse-Wishart distribution gives a mean vector and a covariance matrix which can define a multivariate gaussian distribution. Separately, we can first sample a matrix $\bSigma$ from an inverse-Wishart distribution parameterized by ($\bS_0, \nu_0$, $\bmu$) which is called a semi-conjugate prior, and then sample a mean vector from a Gaussian distribution parameterized by ($\bmm_0, \bV_0, \bSigma$).

\subsection{Posterior distribution of $\bmu$: Separated view}
Suppose the covariance matrix $\bSigma$ is known in Equation~\eqref{equation:multi_gaussian_likelihood}, the likelihood is
$$
\begin{aligned}
\mathrm{\textbf{likelihood}}&=p(\mathcalX | \bmu) =\normal(\mathcalX| \bmu, \bSigma)\\
&= (2\pi)^{-ND/2} |\bSigma|^{-N/2}\exp\left(-\frac{1}{2} \sum^N_{n=1}(\bxn - \bmu)^\top \bSigma^{-1}(\bxn - \bmu)\right)\\
&\propto \exp\left( N\overline{\bx}^\top \bSigma^{-1} \bmu - \frac{1}{2}N \bmu^\top \bSigma^{-1}\bmu    \right).
\end{aligned}
$$
The conjugate prior of the mean vector is also a Gaussian $p(\bmu)= \normal(\bmu | \bmo, \bV_0)$. 
$$
\begin{aligned}
\mathrm{\textbf{prior}}&=p(\bmu)= \normal(\bmu | \bmo, \bV_0)\\
&= (2\pi)^{-D/2} |\bV_0|^{-1/2}\exp\left(-\frac{1}{2} (\bmu - \bmo)^\top \bV_0^{-1}(\bmu - \bmo)\right) \\
&= (2\pi)^{-D/2} |\bV_0|^{-1/2}\exp \left( -\frac{1}{2}\bmu^\top\bV_0^{-1}\bmu + \bmu^\top \bV_0^{-1}\bmo - \frac{1}{2} \bmo^\top\bV_0^{-1} \bmo     \right)\\
&\propto  \exp \left( -\frac{1}{2}\bmu^\top\bV_0^{-1}\bmu + \bmu^\top \bV_0^{-1}\bmo\right).
\end{aligned}
$$
By the Bayes' theorem ``$\mathrm{posterior} \propto \mathrm{likelihood} \times \mathrm{prior} $", we can derive a Gaussian posterior for $\bmu$:
$$
\begin{aligned}
\mathrm{\textbf{posterior}}&=p(\bmu | \mathcalX, \bSigma) \propto p(\mathcalX | \bmu ) \times  p(\bmu)\\
&=\exp\left( N\overline{\bx}^\top \bSigma^{-1} \bmu - \frac{1}{2}N \bmu^\top \bSigma^{-1}\bmu    \right) \times \exp \left( -\frac{1}{2}\bmu^\top\bV_0^{-1}\bmu + \bmu^\top \bV_0^{-1}\bmo\right)\\
&= \exp\left(  -\frac{1}{2}\bmu^\top(\bV_0^{-1} + N\bSigma^{-1})\bmu + \bmu^\top( \bV_0^{-1}\bmo +  N\bSigma^{-1}\overline{\bx} )   \right) \\
&\propto \normal(\bmu | \bmm_N, \bV_N) 
\end{aligned}
$$
where $\bV_N^{-1} = \bV_0^{-1} + N\bSigma^{-1}$, and $\bmm_N = \bV_N ( \bV_0^{-1}\bmo +  N\bSigma^{-1}\overline{\bx} )$. In which case, the posterior precision matrix is the sum of the prior precision matrix and data precision matrix. 
By letting $\bV_0 \rightarrow \infty \bI$, we can model an uninformative prior such that the posterior distribution of the mean is $p(\bmu | \mathcalX, \bSigma) =\normal(\bmu | \overline{\bx}, \frac{1}{N}\bSigma)$.

\subsection{Posterior distribution of $\bSigma$: Separated view}
Suppose the mean vector $\bmu$ is known in Equation~\eqref{equation:multi_gaussian_likelihood}, the likelihood is
\begin{equation*}
\begin{aligned}
\mathrm{\textbf{likelihood}}= p(\mathcal{X} | \bmu, \bSigma) =\prod^N_{n=1} \mathcal{N} (\bxn| \bmu, \bSigma) = (2\pi)^{-ND/2} |\bSigma|^{-N/2}\exp\left(-\frac{1}{2} \tr(\bSigma^{-1}\bS_{\bmu} )  \right).
\end{aligned}
\end{equation*}
The corresponding conjugate prior is the inverse-Wishart distribution:
$$
\begin{aligned}
\mathrm{\textbf{prior}}=	\mathrm{IW}(\bSigma| \bso, \nu_0)&= |\bSigma|^{-\frac{\nu_0+D+1}{2}} \exp\left(-\frac{1}{2}\tr(\bSigma^{-1} \bso)\right)\\
	&\gap \times \left[2^{\frac{\nu_0 D}{2}}  \pi^{D(D-1)/4}  |\bso|^{-\nu_0/2}   \prod_{d=1}^D\Gamma(\frac{\nu_0+1-d}{2}) \right]^{-1}.
\end{aligned}
$$
By the Bayes' theorem ``$\mathrm{posterior} \propto \mathrm{likelihood} \times \mathrm{prior} $", we can derive a inverse-Wishart posterior for $\bSigma$:
$$
\begin{aligned}
\mathrm{\textbf{posterior}}  &=p(\bSigma | \mathcalX, \bmu) \propto p(\mathcalX | \bSigma ) \times  p(\bSigma)\\
&\propto |\bSigma|^{-N/2}\exp\left(-\frac{1}{2} \tr(\bSigma^{-1}\bS_{\bmu} )  \right) \times |\bSigma|^{-\frac{\nu_0+D+1}{2}} \exp\left(-\frac{1}{2}\tr(\bSigma^{-1} \bso)\right) \\
&= |\bSigma|^{-\frac{\nu_0+N+D+1}{2}} \exp\left(-\frac{1}{2}\tr(\bSigma^{-1} [\bso+\bS_{\bmu}])\right)\\
&\propto \inversewishart(\bSigma |\bso+\bS_{\bmu}, \nu_0+N).
\end{aligned}
$$
The posterior degree of freedom is the prior degree of freedom $\nu_0$ plus the number of observations $N$. And the posterior scale matrix is the prior scale matrix $\bso$ plus the data scale matrix $\bS_{\bmu}$. 
The mean of the posterior $\bSigma$ is given by
$$
\begin{aligned}
\Exp[\bSigma] &= \frac{1}{\nu_0 +N- D - 1} (\bso+\bS_{\bmu}) \\
&= \frac{\nu_0 -D-1}{\nu_0 +N- D - 1} \cdot (\frac{1}{\nu_0 -D-1} \bso) +    \frac{N}{\nu_0 +N- D - 1} \cdot (\frac{1}{N}\bS_{\bmu})\\
&= \lambda \cdot (\frac{1}{\nu_0 -D-1} \bso) +   (1-\lambda) \cdot (\frac{1}{N}\bS_{\bmu}),
\end{aligned}
$$
where $\lambda=\frac{\nu_0 -D-1}{\nu_0 +N- D - 1}$, $(\frac{1}{\nu_0 -D-1} \bso)$ is the prior mean of $\bSigma$, and $(\frac{1}{N}\bS_{\bmu})$ is an unbiased estimator of the covariance such that $(\frac{1}{N}\bS_{\bmu})$ converges to the true population covariance matrix. Thus, the posterior mean of the covariance matrix can be seen as the weighted average of the prior expectation and the unbiased estimator. The unbiased estimator can also be shown to be equal to the maximum likelihood estimator (MLE) of $\bSigma$. As $N\rightarrow \infty$, it can be shown that the posterior expectation of $\bSigma$ is a consistent \footnote{An estimator $\hat{\theta}_N$ of $\theta$ constructed on the basis of a sample of size $N$ is said to consistent if $\hat{\theta}_N \stackrel{p}{\longrightarrow} \theta$ as $N\rightarrow \infty$. See also \citep{lu2021rigorous}.} estimator of the population covariance. When we set $\nu_0=D+1$, $\lambda=0$ and we recover the MLE.

Similarly, the mode of the posterior $\bSigma$ is given by
\begin{equation}\label{equation:map-covariance-multigauss}
\begin{aligned}
	\mathrm{Mode}[\bSigma] &= \frac{1}{\nu_0 +N+ D + 1} (\bso+\bS_{\bmu})\\
	&= \frac{\nu_0+D+1}{\nu_0 +N+ D + 1} (\frac{1}{\nu_0+D+1} \bso)+  \frac{N}{\nu_0 +N+ D + 1} (\frac{1}{N}  \bS_{\bmu})\\
	&=\beta (\frac{1}{\nu_0+D+1} \bso)+  (1-\beta)(\frac{1}{N}  \bS_{\bmu}),
\end{aligned}
\end{equation}
where $\beta=\frac{\nu_0+D+1}{\nu_0 +N+ D + 1}$, and $(\frac{1}{\nu_0+D+1} \bso)$ is the prior mode of $\bSigma$. The posterior mode is a weighted average of the prior mode and the unbiased estimator. Again, the maximum a posterior (MAP) estimator in Equation~\eqref{equation:map-covariance-multigauss} is a consistent estimator.

\subsection{Gibbs sampling of the mean and covariance: Separated view}
The separated view here is known as a semi-conjugate prior on the mean and covariance of multivariate Gaussian distribution since both conditionals, $p(\bmu|\mathcalX,\bSigma)$ and $p(\bSigma|\mathcalX,\bmu)$, are individually conjugate.
In last two sections, we have shown 
$$
\begin{aligned}
\bmu | \mathcalX, \bSigma &\sim \normal(\bmm_N, \bV_N),\\
\bSigma | \mathcalX, \bmu &\sim \inversewishart(\bso+\bS_{\bmu}, \nu_0+N).
\end{aligned}
$$
The two full conditional distributions can be used to construct a Gibbs sampler. The Gibbs sampler generates the mean and covariance $\{\bmu^{t+1}, \bSigma^{t+1}\}$ in step $t+1$ from $\{\bmu^{t}, \bSigma^{t}\}$ in step $t$ via the following two steps:

1. Sample $\bmu^{t+1}$ from its full conditional distribution: $\bmu^{t+1} \sim \normal(\bmm_N, \bV_N)$, where $(\bmm_N, \bV_N)$ depend on $\bSigma^{t}$.

2. Sample $\bSigma^{t+1}$ from its full conditional distribution: $\bSigma^{t+1} \sim \inversewishart(\bso+\bS_{\bmu}, \nu_0+N)$, where $(\bso+\bS_{\bmu}, \nu_0+N)$ depend on $\bmu^{t+1}$.

\subsection{Posterior distribution of $\bmu$ and $\bSigma$ under NIW: Unified view}\label{sec:niw_posterior_conjugacy}

\paragraph{Likelihood}
The likelihood of $N$ random observations $\mathcal{X} = \{\bx_1, \bx_2, \ldots , \bx_N \}$ being generated by a multivariate Gaussian with mean vector $\bmu$ and covariance matrix $\bSigma$ is given by Equation~\eqref{equation:multi_gaussian_likelihood}
$$
\begin{aligned}
p(\mathcal{X} | \bmu, \bSigma)=  \frac{1}{(2\pi)^{ND/2}} |\bSigma|^{-N/2}\exp\left(-\frac{N}{2}(\bmu - \overline{\bx})^\top \bSigma^{-1}(\bmu - \overline{\bx})-\frac{1}{2}\tr(\bSigma^{-1} \bS_{\overline{x}})\right).
\end{aligned}
$$
\paragraph{Prior}
A trivial prior is to combine the conjugate priors for $\bmu$ and $\bSigma$ respectively in the above sections:
$$
p(\bmu, \bSigma) = \normal(\bmu | \bmo, \bV_0)\cdot \inversewishart(\bSigma | \bso, \nu_0).
$$
However, this is not a conjugate prior to the likelihood with parameters $\bmu, \bSigma$ since $\bmu$ and $\bSigma$ appear together in a non-factorized way in the likelihood.
For the full parameters of a multivariate Gaussian distribution (i.e., mean vector $\bmu$ and covariance matrix $\bSigma$), the normal-inverse-Wishart (NIW) prior is fully conjugate and defined as follows:
\begin{equation}
	\begin{aligned}
&\gap \niw (\bmu, \bSigma| \bmo, \kappa_0, \nu_0, \bso) \\
		&\triangleq \mathcal{N}(\bmu| \bmo, \frac{1}{\kappa_0}\bSigma) \cdot  \mathrm{IW}(\bSigma| \bso, \nu_0) \\
		&=\frac{1}{Z_{\niw}(D, \kappa_0, \nu_0, \bso)} |\bSigma|^{-1/2}\exp\left(\frac{\kappa_0}{2}(\bmu - \bmo)^\top\bSigma^{-1}(\bmu - \bmo)\right) \\
		&\gap \times |\bSigma|^{-\frac{\nu_0+D+1}{2}} \exp\left(-\frac{1}{2}\tr(\bSigma^{-1} \bso)\right) \\
		&= \frac{1}{Z_{\niw}(D, \kappa_0, \nu_0, \bso)} |\bSigma|^{-\frac{\nu_0+D+2}{2}}\\
		&\gap \times \exp\left(-\frac{\kappa_0}{2}(\bmu - \bmo)^\top\bSigma^{-1}(\bmu - \bmo) -\frac{1}{2}\tr(\bSigma^{-1} \bso)\right), 
	\end{aligned}
	\label{equation:multi_gaussian_prior}
\end{equation}
where
\begin{equation}
	Z_{\niw}(D, \kappa_0, \nu_0, \bso) =  2^{\frac{(\nu_0+1)D}{2}} \pi^{D(D+1)/4} \kappa_0^{-D/2} | \bso|^{-\nu_0/2}\prod_{d=1}^D\Gamma(\frac{\nu_0+1-d}{2}).
	\label{equation:multi_gaussian_giw_constant}
\end{equation}
The specific form of the normalization term $Z_{\niw}(D, \kappa_0, \nu_0, \bso)$ will be useful to show the posterior marginal likelihood of the data in Section~\ref{section:posterior-marginal-of-data}.

\paragraph{A ``prior" interpretation for the NIW prior}
The inverse-Wishart distribution will ensure that the resulting covariance matrix is positive definite when $\nu_0 > D$. And if we are confident that the true covariance matrix is near some covariance matrix $\bSigma_0$, then we might choose $\nu_0$ to be large and set $\bso = (\nu_0 - D - 1) \bSigma_0$, making the distribution of the covariance matrix $\bSigma$ concentrated around $\bSigma_0$. On the other hand, choosing $\nu_0 = D+2$ and $\bso = \bSigma_0$ will make $\bSigma$ loosely concentrated around $\bSigma_0$.
More details can be referred to \citep{chipman2001practical, fraley2007bayesian, hoff2009first, murphy2012machine}.


An intuitive interpretation of the hyper-parameters \citep{murphy2012machine, hoff2009first}: $\bmo$ is our prior
mean for $\bmu$, $\kappa_0$ is how strongly we believe this prior for $\bmu$ (the larger the stronger we believe this prior mean), $\bso$ is proportional to our prior mean for $\bSigma$, and $\nu_0$ controls how strongly we believe this prior for $\bSigma$. Because the Gamma function is not defined for negative integers and zero, from Equation~\eqref{equation:multi_gaussian_giw_constant} we require $\nu_0 > D - 1$ (which also can be shown from the expectation of the covariance matrix Equation~\eqref{equation:iw_expectation}. And also $\bso$ needs to be a positive definite matrix, where an intuitive reason can be shown from Equation~\eqref{equation:iw_expectation}. A more detailed reason can be found in \citep{hoff2009first}.

\paragraph{Posterior}
By the Bayes' theorem ``$\mathrm{posterior} \propto \mathrm{likelihood} \times \mathrm{prior} $", the posterior of the $\bmu$ and $\bSigma$ parameters under the NIW prior is
\begin{equation}
p(\bmu, \bSigma| \mathcalX, \bbeta ) \propto p(\mathcalX | \bmu, \bSigma) p(\bmu, \bSigma | \bbeta) = p(\mathcalX, \bmu, \bSigma | \bbeta),  
\label{equation:niw_full_posterior}
\end{equation}
where $\bbeta=(\bmo, \kappa_0, \nu_0, \bso)$ are the hyperparameters and the right hand side of Equation~\eqref{equation:niw_full_posterior} is also known as the full joint distribution $p(\mathcalX, \bmu, \bSigma | \bbeta)$, and is given by 
\begin{equation}
\begin{aligned}
p(\mathcalX, \bmu, \bSigma | \bbeta)&=p(\mathcalX | \bmu, \bSigma) \cdot p(\bmu, \bSigma | \bbeta) \\
&= C\times  |\bSigma|^{- \frac{\nu_0+N+D+2}{2}} \times\\
&\gap \exp \Bigg\{ -\frac{N}{2}(\bmu - \overline{\bx})^\top \bSigma^{-1} (\bmu - \overline{\bx}) - \frac{\kappa_0}{2} (\bmu - \bmo)^\top \bSigma^{-1}(\bmu - \bmo) \\
&\gap -\frac{1}{2} \tr(\bSigma^{-1} \bS_{\overline{x}}) - \frac{1}{2} \tr(\bSigma^{-1} \bS_0)  \Bigg\},\\
\end{aligned}
\label{equation:niw_full_joint}
\end{equation}
where $C =\frac{(2\pi)^{-ND/2}}{Z_{\niw}(D, \kappa_0, \nu_0, \bS_0)}$ is a constant normalization term.
This can be reduced to 
\begin{equation}
\begin{aligned}
&\gap p(\mathcalX, \bmu, \bSigma | \bbeta)\\ 
&=C|\bSigma|^{- \frac{\nu_0+N+D+2}{2}} \times \\
&\gap \exp \Bigg\{-\frac{\kappa_0+N}{2} \left(\bmu - \frac{\kappa_0 \bmo+N \overline{\bx}}{\kappa_N} \right)^\top \bSigma^{-1} \left(\bmu - \frac{\kappa_0 \bmo+N \overline{\bx}}{\kappa_N} \right) \\
&\gap - \frac{1}{2} \tr \left[\bSigma^{-1} \left( \bS_0 + \bS_{\overline{x}} + \frac{\kappa_0 N}{\kappa_0 + N} (\overline{\bx} - \bmo)(\overline{\bx}-\bmo)^\top \right) \right] \Bigg\}, 
\end{aligned}
\label{equation:niw_full_joint2}
\end{equation}
which is calculated to compare with the NIW form in Equation~\eqref{equation:multi_gaussian_prior}, and we can see the reason why we rewrite the multivariate Gaussian distribution into Equation~\eqref{equation:multi_gaussian_identity} by the trace trick. It follows that the posterior is also a NIW density with updated parameters and gives the view of conjugacy for multivariate Gaussian distribution:
\begin{equation}
p(\bmu, \bSigma| \mathcalX , \bbeta) = \niw (\bmu, \bSigma | \bmm_N, \kappa_N, \nu_N, \bS_N), \label{equation:niw_posterior_equation_1}
\end{equation}
where 
\begin{align}
\bmm_N &= \frac{\kappa_0\bmo + N\overline{\bx}}{\kappa_N} = 
\frac{\kappa_0 }{\kappa_N}\bmo+\frac{N}{\kappa_N}\overline{\bx}  \label{equation:niw_posterior_equation_2}\\
\kappa_N        &= \kappa_0 + N  \label{equation:niw_posterior_equation_3}\\
\nu_N           &=\nu_0 + N  \label{equation:niw_posterior_equation_4}\\
\bS_N  		&=\bS_0 + \bS_{\overline{x}} + \frac{\kappa_0N}{\kappa_0 + N}(\overline{\bx} - \bmo)(\overline{\bx} - \bmo)^\top \label{equation:niw_posterior_equation_5}\\
&=\bS_0 + \sum_{n=1}^N \bx_n \bx_n^\top + \kappa_0 \bmo \bmo^\top - \kappa_N \bmm_N \bmm_N^\top . \label{equation:niw_posterior_equation_6}
\end{align}
\paragraph{A ``posterior" interpretation for the NIW prior}
An intuitive interpretation for the parameters in NIW can be obtained from the updated parameters above. $\nu_0$ is the prior number of samples to observe the covariance matrix, and $\nu_N =\nu_0 + N$ is the posterior number of samples. The posterior mean $\bmm_N$ of the model mean $\bmu$ is a weighted average of the prior mean and the sample mean. The posterior scale matrix $\bS_N$ is the sum of the prior scale matrix, empirical covariance matrix $\bS_{\overline{x}}$, and an extra term due to the uncertainty in the mean.

\subsubsection{Parameter choice}
In practice, it is often better to use a weakly informative data-dependent prior. A common choice is to set $\bS_0 = \diag(\bS_{\overline{x}})/N$, and $\nu_0 =D+2$, to ensure $\E[\bSigma]=\bS_0$, and to set $\bmm_0 =\overline{\bx}$ and $\kappa_0$ to some small number, such as 0.01, where $\bS_{\overline{x}}$ is the sample covariance matrix and $\overline{\bx}$ is the sample mean vector as shown in Equation~\eqref{equation:mvu-sample-covariance} \citep{chipman2001practical, fraley2007bayesian, hoff2009first, murphy2012machine}. Equivalently, we can also standardize the observation matrix $\mathcalX$ first to have zero mean and unit variance for every feature, and then let $\bS_0 = \bm{I}_D$, and $\nu_0 =D+2$, to ensure $\E[\bSigma]=\bm{I}_D$, and to set $\bmm_0 =\bm{0}_D$ and $\kappa_0$ to some small number, such as 0.01.


\subsubsection{Reducing sampling time by maintaining squared sum of customers}\label{section:reduce-sampling-sum-square}
In this section, we introduce some tricks to implement NIW in Gaussian mixture model more efficiently. The content can also be found in \citep{das2014dpgmm}. The readers will better understand the Chinese restaurant process terminology in this section after reading Section \ref{sec:fmm_collabsed_gibbs} or Section \ref{sec:ifmm_collabsed_gibbs}. Feel free to skip this section on a first reading.

We have seen the equivalence between the Equation~\eqref{equation:niw_posterior_equation_5} and Equation~\eqref{equation:niw_posterior_equation_6}. The reason why we make a step further to Equation~\eqref{equation:niw_posterior_equation_6} from Equation~\eqref{equation:niw_posterior_equation_5} is to reduce sampling time.
Suppose now that the data is not fixed and some data points can be removed from or added to $\mathcalX$. If we stick to the form in Equation~\eqref{equation:niw_posterior_equation_5}, we need to calculate $\bS_{\overline{x}}$ and $\overline{\bx}$ over and over again whenever the data points are updated.

 In Chinese restaurant process/clustering terminology, if we use Equation~\eqref{equation:niw_posterior_equation_5} instead of Equation~\eqref{equation:niw_posterior_equation_6}, whenever a customer is removed from (or added to) a table, we have to compute the matrix $\bS_{\overline{x}}$, which requires to go over each point in this cluster (or each customer in this table following the term from Chinese restaurant process, this could be clear when you finish reading the collapsed Gibbs sampler for finite Gaussian mixture model or infinite Gaussian mixture model later). Computing this term everytime when a customer is removed or added, could be computationally expensive. 
 
We realize that the data terms in Equation~\eqref{equation:niw_posterior_equation_6} only involves a sum of the outer product which does not contain any cross product (e.g., $\bx_i\bx_j^\top$ for $i \neq j$). 
By reformulating into Equation~\eqref{equation:niw_posterior_equation_6}, whenever a customer is removed or added, we just have to subtract or add $\bx_n \bx_n^\top$. Thus for each table, we only have to maintain the squared sum of customer vectors $\sum_{n=1}^N \bx_n \bx_n^\top$ for $\bS_N$. 

Similarly, for $\bmm_N$, we need to maintain the sum of customer vectors $\sum_{n=1}^N\bx_n$ for the same reason from Equation~\eqref{equation:niw_posterior_equation_2}.

\subsection{Posterior marginal likelihood of parameters}

The posterior marginal for $\bSigma$ is given by
\begin{equation*}
	\begin{aligned}
		p(\bSigma |\mathcalX,\bbeta) &= \int_{\bmu} p(\bmu, \bSigma |\mathcalX,\bbeta) d\bmu\\ &=\inversewishart(\bSigma|\bS_N, \nu_N),
	\end{aligned}
\end{equation*}
where the mean and mode can be obtained by Equation~\ref{equation:iw_expectation}, and they are given by 
$$
\begin{aligned}
	\Exp[\bSigma | \mathcalX, \bbeta] &= \frac{\bS_N}{\nu_N - D-1}, \\
	\mathrm{Mode}[\bSigma | \mathcalX, \bbeta]&=\frac{\bS_N}{\nu_N + D+1}.
\end{aligned}
$$
The posterior marginal for $\bmu$ follows from a Student $t$ distribution. We can show the posterior marginal for $\bmu$ is given by
\begin{equation*}
\begin{aligned}
p(\bmu |\mathcalX,\bbeta) &= \int_{\bSigma} p(\bmu, \bSigma |\mathcalX,\bbeta) d\bSigma\\
&= \int_{\bSigma} \niw (\bmu, \bSigma | \bmm_N, \kappa_N, \nu_N, \bS_N) d\bSigma \\
&=\tau(\bmu | \bmm_N, \frac{1}{\kappa_N(\nu_N-D+1)}\bS_N, \nu_N-D+1),
\end{aligned}
\end{equation*}
which is from the Gaussian scale mixture property of Student $t$ distribution, see Equation~\ref{equation:gauss-scale-mixture} and further discussed in \citep{murphy2012machine}.


\subsection{Posterior marginal likelihood of data}\label{section:posterior-marginal-of-data}
By integrating the full joint distribution in Equation~\eqref{equation:niw_full_joint2}, we can get the marginal likelihood of data under hyper-parameter $\bbeta=(\bmo, \kappa_0, \nu_0, \bso)$:
\begin{equation}
\begin{aligned}
p(\mathcal{X}|\bbeta) &= \int_{\bmu} \int_{\bSigma} p(\mathcal{X}, \bmu, \bSigma | \bbeta) d\bmu d\bSigma \\
	&= \int_{\bmu} \int_{\bSigma} \normal(\mathcal{X}|\bmu, \bSigma) \cdot \niw(\bmu, \bSigma|\bbeta)  d\bmu d\bSigma \\
	&= \frac{(2\pi)^{-ND/2}}{Z_{\niw}(D, \kappa_0, \nu_0, \bS_0)} \int_{\bmu}\int_{\bSigma}|\bSigma|^{-\frac{\nu_0+N+D+2}{2}}  \\
	&\gap \times \exp \left(-\frac{\kappa_N}{2} (\bmu - \bmm_N) \bSigma^{-1} (\bmu - \bmm_N ) - \frac{1}{2} \tr(\bSigma^{-1} \bS_N )  \right)d \bmu d \bSigma \\
	&\overset{(*)}{=} (2\pi)^{-ND/2} \frac{Z_{\niw}(D, \kappa_N, \nu_N, \bS_N)}{Z_{\niw}(D, \kappa_0, \nu_0, \bS_0)} \\
	&= \pi^{-\frac{ND}{2}} \cdot\frac{\kappa_0^{D/2}\cdot |\bS_0|^{\nu_0/2}}{\kappa_N^{D/2}\cdot |\bS_N|^{\nu_N/2}} \prod_{d=1}^D \frac{\Gamma(\frac{\nu_N+1-d}{2})}{\Gamma(\frac{\nu_0+1-d}{2})}, 
\end{aligned}
\label{equation:niw_marginal_data}
\end{equation}
where the Identity (*) above is from the fact that the integral reduces to the normalizing constant of the NIW density given in Equation~\eqref{equation:niw_posterior_equation_1}. 

\subsection{Posterior predictive for data without observations}
Similarly, suppose now we observe a data vector $\bx^{\star}$ without observing any old datas. Then the predictive for the data vector can be obtained by 
\begin{equation}
\begin{aligned}
p(\bx^{\star} | \bbeta) 
&= \int_{\bmu} \int_{\bSigma} p(\bx^{\star}, \bmu, \bSigma | \bbeta) d\bmu d\bSigma \\
&= \int_{\bmu} \int_{\bSigma} \normal(\bx^{\star} | \bmu, \bSigma) \cdot \niw(\bmu, \bSigma | \bbeta) d\bmu d\bSigma\\
&= \pi^{-D/2} \frac{\kappa_0^{D/2}  |\bso|^{\nu_0/2}  }{(\kappa_0 + 1) ^{D/2}  |\bS_1|^{\nu_1/2}}  \prod_{d=1}^D \frac{\Gamma(\frac{\nu_1+ 1-d}{2})}{\Gamma(\frac{\nu_0 + 1-d}{2})}\\
&= \pi^{-D/2} \frac{\kappa_0^{D/2}  |\bso|^{\nu_0/2}  }{(\kappa_0 + 1) ^{D/2}  |\bS_1|^{\nu_1/2}}   \frac{\Gamma(\frac{\nu_0+ 2-D}{2})}{\Gamma(\frac{\nu_0 }{2})},
\end{aligned}
\label{equation:niw_prior_predictive_abstract}
\end{equation}
where $\nu_1 = \nu_0+1$, $\bS_1 = \bso+ \frac{\kappa_0 }{\kappa_0+1} (\bx^{\star}-\bmm_0)(\bx^{\star}-\bmm_0)^\top$.
An alternative form of Equation~\eqref{equation:niw_prior_predictive_abstract} is to rewrite by a multivariate Student $t$ distribution
\begin{equation}
p(\bx^{\star} | \bbeta) = \tau(\bx^{\star} | \bmo, \frac{\kappa_0 + 1}{\kappa_0(\nu_0 - D + 1)}\bS_0, \nu_0 - D + 1).
\end{equation}

\subsection{Posterior predictive for new data with observations}

Similar to posterior predictive for data without observation, now suppose we observe a new data vector $\bx^{\star}$ give old observations $\mathcalX$. Then the posterior predictive for this vector is
\begin{equation}
p(\bx^{\star} | \mathcalX, \bbeta) = \frac{p(\bx^{\star}, \mathcalX | \bbeta) }{p(\mathcalX | \bbeta)}. 
\label{equation:niw_posterior_predictive_abstract}
\end{equation}

The denominator of Equation~\eqref{equation:niw_posterior_predictive_abstract} can be obtained directly from Equation~\eqref{equation:niw_marginal_data}. The numerator of it can be obtained in a similar way from Equation~\eqref{equation:niw_marginal_data} by considering the marginal likelihood of the new set $\{\mathcalX, \bx^{\star}\}$. We just need to replace $N$ by $N^{\star}=N+1$ in Equation~\eqref{equation:niw_posterior_equation_2}, Equation~\eqref{equation:niw_posterior_equation_3}, and Equation~\eqref{equation:niw_posterior_equation_4}, and replace $\bS_N$ by $\bS_{N^{\star}}$  in Equation~\eqref{equation:niw_posterior_equation_5}. Therefore, we obtain
\begin{equation}
\begin{aligned}
p(\bx^{\star} | \mathcalX, \bbeta) &= (2\pi)^{-D/2} \frac{Z_{\niw}(D, \kappa_{N^{\star}}, \nu_{N^{\star}}, \bS_{N^{\star}})}{Z_{\niw}(D, \kappa_{N}, \nu_{N}, \bS_{N})} \\
&= \pi^{-D/2} \frac{(\kappa_{N^{\star}})^{-D/2}|\bS_{N} |^{(\nu_N)/2}}{(\kappa_{N})^{-D/2}  |\bS_{N^{\star}} |^{(\nu_{N^{\star}})/2} }  
\prod_{d=1}^D\frac{ \Gamma(\frac{\nu_{N^{\star}} + 1-d}{2})}{ \Gamma(\frac{\nu_{N} + 1-d}{2})} \\
&= \pi^{-D/2} \frac{(\kappa_{N^{\star}})^{-D/2}|\bS_{N} |^{(\nu_N)/2}}{(\kappa_{N})^{-D/2}  |\bS_{N^{\star}} |^{(\nu_{N^{\star}})/2} }  
\frac{ \Gamma(\frac{\nu_{0} + N+2-D}{2})}{ \Gamma(\frac{\nu_{0} + N}{2})} . 
\end{aligned}
\label{equation:niw_posterior_predictive_equation}
\end{equation}
Again an alternative form of Equation~\eqref{equation:niw_posterior_predictive_equation} is to rewrite by a multivariate Student $t$ distribution:
\begin{equation*}
p(\bx^{\star} | \mathcalX, \bbeta) = \tau (\bx^{\star}  |  \bmm_N, \frac{\kappa_N + 1}{\kappa_N (\nu_N - D + 1)} \bS_N, \nu_N - D + 1).
\end{equation*}
Thus, the mean and covariance of $\bx^\star$ are given by
\begin{equation*}
\begin{aligned}
\Exp [\bx^\star |\mathcalX, \bbeta] &= \bmm_N = \frac{\kappa_0 }{\kappa_0+N}\bmo+\frac{N}{\kappa_0+N}\overline{\bx} ,\\
\Cov [\bx^\star |\mathcalX, \bbeta] &= \frac{\kappa_N + 1}{\kappa_N (\nu_N - D - 1)} \bS_N
= \frac{\kappa_0+N + 1}{(\kappa_0+N) (\nu_0+N - D - 1)} \bS_N,
\end{aligned}
\end{equation*}
where we can find, on average, the new coming data has expectation $\bmm_N$. We mentioned previously, $\kappa_0$ controls how strongly we believe this prior for $\bmu$. When $\kappa_0$ is large enough, $\Exp [\bx^\star |\mathcalX, \bbeta]$ converges to $\bmm_0$, the prior mean, and $\Cov [\bx^\star |\mathcalX, \bbeta]$ converges to $\frac{\bS_N}{(\kappa_0+N) (\nu_0+N - D - 1)} $. In the meantime, if we set $\nu_0$ large enough, the covariance matrix $\bSigma$ concentrated around $\bSigma_0$, and 
$$
\bS_N \rightarrow \frac{\bS_{\overline{x}}}{\nu_0} + \frac{\kappa_0N}{\nu_0(\kappa_0 + N)}(\overline{\bx} - \bmo)(\overline{\bx} - \bmo)^\top ,
$$
which is largely controlled by data sample and data magnitude (rather than the prior hyperparameters), so as the posterior variance $\Cov [\bx^\star |\mathcalX, \bbeta]$.

\subsection{Further optimization via the Cholesky decomposition}
\subsubsection{Definition}
The Cholesky decomposition of a symmetric positive definite matrix $\bS$ is its decomposition into the product of a lower triangular matrix $\bL$ and it's transpose:
\begin{equation}
\bS = \bL \bL^\top, 
\end{equation}
where $\bL$ is called the \textbf{Cholesky factor} of $\bS$. We realize that an alternative form of the Cholesky decomposition is using it's upper triangular $\bU$, i.e., $\bS = \bU^\top \bU$. A triangular matrix is a special kind of square matrix. Specifically, a square matrix is called lower triangular if all the entries are above the main diagonal are zero. Similarly, a square matrix is called upper triangular if all the entries below the main diagonal are zero. 

If the matrix has dimensionality $D$, the complexity of Cholesky decomposition is $O(D^3)$. In specific, it requires $\sim \frac{1}{3}D^3$ floating points operations (flops) to compute a Cholesky decomposition
of a $D\times D$ positive definite matrix \citep{lu2021numerical}, where the symbol ``$\sim$" has the usual asymptotic meaning
\begin{equation*}
	\lim_{D \to +\infty} \frac{\mathrm{number\, of\, flops}}{(1/3)D^3} = 1.
\end{equation*}
The existence of Cholesky decomposition is based on the existence of the LU decomposition and a rigorous proof can be found in \citep{lu2021numerical}.

\subsubsection{Rank one update}
A rank 1 update of matrix $\bS$ by vector $\bx$ is of the form \citep{seeger2004low}
\begin{equation*}
\bS^\prime = \bS + \bx \bx^\top. 
\end{equation*}
If we have already calculated the Cholesky factor $\bL$ of $\bS$, then the Cholesky factor $\bL^\prime$ of $\bS^\prime$ can be calculated efficiently.  Note that $\bS^\prime$ differs from $\bS$ only via three symmetric rank one matrices. Hence we can compute $\bL^\prime$ from $\bL$ using three rank one Cholesky update, which takes $O(D^2)$ operations each saving from $O(D^3)$ if we do know $\bL$, the Cholesky decomposition of $\bS$. 

\subsubsection{Speedup for determinant}

The determinant of a positive definite matrix $\bS$ can be computed from its Cholesky factor $\bL$:
\begin{equation*}
|\bS| = \prod_{d=1}^{D} \bL_{dd}^2,\qquad \log(|\bS|) = 2\log(|\bL|)= 2 \times \sum_{d=1}^D \log(\bL_{dd}), 
\end{equation*}
where $\bL_{dd}$ is the ($d,d$) entry of matrix $\bL$. This is an $O(D)$ operation, i.e., given the Cholesky decomposition, the determinant is just the product of the diagonal terms.

\subsubsection{Update in NIW}
Now we consider computing the marginal likelihood of data in Equation~\eqref{equation:niw_marginal_data} and the posterior predictive for new coming data in Equation~\eqref{equation:niw_posterior_predictive_abstract} of which the two cases are similar.  We will see this optimization will be often used in the Chinese restaurant process like that in Section~\ref{section:reduce-sampling-sum-square}. Feel free to skip this section on a first reading. 

Take the latter as an example, note that to compute posterior predictive for new coming data $p(\bx^{\star} | \mathcalX, \bbeta)$ in Equation~\eqref{equation:niw_posterior_predictive_abstract}, we just need to evaluate $ \frac{p(\bx^{\star}, \mathcalX | \bbeta) }{p(\mathcalX | \bbeta)}$, in which we must calculate $|\bS_N|$ and $|\bS_{N^{\star}}|$ efficiently where $N^{\star} = N+1$.We deal with computing the determinants $|\bS_N|$ and $|\bS_{N^{\star}}|$ by representing $\bS_N$ and $\bS_{N^{\star}}$ using their Cholesky decomposition. In particular, updates to $\bS_N$ and $\bS_{N^{\star}}$ will be carried out by directly updating their Cholesky decompositions given the Cholesky decomposition the determinant is just the product of the diagonal terms. Write out $\bS_{\star}$ by $\bS_N$:
\begin{align}
\bmm_N &= \frac{\kappa_{N^{\star}}\bmm_{N^{\star}} - x^\star}{\kappa_N}=\frac{(\kappa_0 + N + 1)\bmm_{N^{\star}} - x^\star}{\kappa_0 + N} , \\
\bmm_{N^{\star}} &= \frac{\kappa_{N} \bmm_N +\bx^{\star}}{\kappa_{N^{\star}}} 
= \frac{(\kappa_{0}+N) \bmm_N +\bx^{\star}}{\kappa_{0}+N+1},\\
\bS_{N^{\star}} &= \bS_N + \bx^{\star} \bx^{\star T} - \kappa_{N^{\star}} \bmm_{N^{\star}} \bmm_{N^{\star}}^\top + \kappa_N \bmm_N \bmm_N^\top \\
&= \bS_N + \frac{\kappa_0 + N + 1}{\kappa_0 + N}(\bmm_{N^{\star}} - \bx^\star)(\bmm_{N^{\star}} - \bx^\star)^\top, \label{equation:cholesky_rank_1_form}
\end{align}
where Equation~\eqref{equation:cholesky_rank_1_form} implies that Cholesky decomposition of $\bS_{N^\star}$ can be obtained from Cholesky decomposition of $\bS_N$ by a Rank 1 update. Therefore if we know the Cholesky decomposition of $\bS_N$, the Cholesky decomposition of $\bS_{N^\star}$ can be obtained in $O(D^2)$ complexity.

\subsection{Last words on the conjugate prior for Gaussian distribution}
The univariate analog of normal-inverse-Wishart distribution is the mormal-inverse-Chi-squared (NIX) distribution. 
For simplicity, we only write the likelihood and prior distribution for univariate Gaussian distribution here, all the analysis in the following sections will be described in the multivariate case.

\paragraph{Likelihood}
The univariate Gaussian distribution is
\begin{equation}
	\begin{aligned}
		p(\mathcal{X} |\smu, \ssigma^2) &= \prod^N_{n=1} \mathcal{N} (x_n|\smu, \ssigma^2) \\
		&= (2\pi)^{-N/2}  (\ssigma^2)^{-N/2} \exp\left(-\frac{1}{2 \ssigma^2}  \left[ N(\overline{x} - \smu)^2 + N \sum_{n=1}^N(x_n - \overline{x})^2   \right] \right)  \\
		&= (2\pi)^{-N/2}  (\ssigma^2)^{-N/2} \exp\left(-\frac{1}{2 \ssigma^2}  \left[  N(\overline{x} - \smu)^2 +  N S_{\overline{x}} \right] \right),
	\end{aligned}
	\label{equation:uni_gaussian_likelihood}
\end{equation}
where $S_{\overline{x}}=\sum_{n=1}^N(x_n - \overline{x})^2$.
\subsubsection{Normal-inverse-chi-squared prior}
\paragraph{Prior on parameters}
Follow from the definition of inverse-gamma distribution in Definition~\ref{definition:inverse-gamma}, we give the rigorous definition of inverse-chi-squared distribution as follows.
\begin{svgraybox}
\begin{definition}[Inverse-Chi-Squared Distribution]\label{definition:inverse-chi-square}
A random variable $Y$ is said to follow the inverse-chi-squared distribution with parameter $\nu_0>0$ and $\sigma_0^2>0$ if $Y\sim \inversegammadist(\frac{\nu_0}{2}, \frac{\nu_0 \sigma_0^2}{2})$:

$$ f(y; \nu_0, \sigma_0^2)=\left\{
\begin{aligned}
	&\frac{{(\frac{\nu_0 \sigma_0^2}{2})}^{\frac{\nu_0}{2}}}{\Gamma(\frac{\nu_0}{2})} y^{-\frac{\nu_0}{2}-1} \exp(- \frac{\nu_0 \sigma_0^2}{2y} ) ,& \mathrm{\,\,if\,\,} y > 0.  \\
	&0 , &\mathrm{\,\,if\,\,} y \leq 0.
\end{aligned}
\right.
$$
And it is denoted by $Y \sim \inversechidist(\nu_0, \sigma_0^2)$. The parameter $\nu >0$ is called the \textbf{degrees of freedom}, and $\sigma_0^2 > 0$ is the \textbf{scale parameter}. And it is also known as the \textbf{scaled} inverse-chi-squared distribution.
The mean and variance of inverse-gamma distribution are given by 
$$ \Exp[Y]=\left\{
\begin{aligned}
	&\frac{\nu_0 \sigma_0^2}{\nu_0-2}, \, &\mathrm{if\,} \nu_0\geq 2. \\
	&\infty, \, &\mathrm{if\,} 0<\nu_0<2.
\end{aligned}
\right.\qquad
\Var[Y]=\left\{
\begin{aligned}
	&\frac{2\nu_0^2 \sigma_0^4}{(\nu_0-2)^2(\nu_0-4)}, \, &\mathrm{if\,} \nu_0\geq 4. \\
	&\infty, \, &\mathrm{if\,} 0<\nu_0<4.
\end{aligned}
\right.
$$
\end{definition}
\end{svgraybox}

To make connection to inverse-Wishart distribution, we can set $S_0=\nu_0\sigma_0^2$. Then the inverse-Chi-squared distribution can also be denoted by $Y\sim \inversegammadist(\frac{\nu_0}{2}, \frac{S_0}{2})$ if $Y \sim \inversechidist(\nu_0, \sigma_0^2)$ of which the form conforms to the univariate case of inverse-Wishart distribution. And we will see the similarity in the posterior parameters as well.

%
Similarly to the normal-inverse-Wishart prior, the normal-inverse-chi-squared prior is defined as
\begin{equation}
\begin{aligned}
&\gap \nix (\smu, \sigma^2 | m_0, \kappa_0, \nu_0, S_0) \\
&= \mathcal{N} (\mu| m_0, \frac{\ssigma^2}{\kappa_0})  \cdot \inversechidist (\ssigma^2 | \nu_0,  \sigma^2_0) \\
&=\frac{1}{Z_{\nix}(\kappa_0, \nu_0, \sigma^2_0)} (\ssigma^2)^{-(\nu_0/2 + 3/2)} \exp\left( -\frac{1}{2 \ssigma^2} \left[\nu_0 \sigma^2_0 + \kappa_0(m_0-\mu)^2\right] \right), \\
&\stackrel{S_0 = \nu_0\sigma^2_0}{=}\frac{1}{Z_{\nix}(\kappa_0, \nu_0, \sigma^2_0)} (\ssigma^2)^{-(\nu_0/2 + 3/2)} \exp\left( -\frac{1}{2 \ssigma^2} \left[S_0 + \kappa_0(m_0-\mu)^2\right] \right)
\end{aligned}
\label{equation:uni_gaussian_prior}
\end{equation}
where 
\begin{equation}\label{equation:uni_gaussian_giw_constant}
Z_{\nix}(\kappa_0, \nu_0, \sigma^2_0) = \frac{\sqrt{(2\pi)}}{\sqrt{\kappa_0}} \Gamma(\frac{\nu_0}{2}) (\frac{2}{\nu_0 \sigma^2_0})^{\nu_0/2} 
= \frac{\sqrt{(2\pi)}}{\sqrt{\kappa_0}} \Gamma(\frac{\nu_0}{2}) (\frac{2}{S_0})^{\nu_0/2}.
\end{equation}
\paragraph{Posterior under NIX}
Again, by the Bayes' theorem ``$\mathrm{posterior} \propto \mathrm{likelihood} \times \mathrm{prior} $", the posterior of the $\bmu$ and $\bSigma$ parameters under the NIW prior is
\begin{equation}\label{equation:nix-posterior}
\begin{aligned}
p(\mu, \sigma^2| \mathcalX, \bbeta ) &\propto p(\mathcalX | \mu, \sigma^2) p(\mu, \sigma^2 | \bbeta) = p(\mathcalX, \mu, \sigma^2 | \bbeta)\\
&= C \times (\sigma^2)^{-\frac{\nu_0 + 3+N}{2}} \exp\left(-\frac{1}{2 \ssigma^2}  \left[  N(\overline{x} - \smu)^2 +  N S_{\overline{x}} \right] \right) \\
&\gap \times \exp\left( -\frac{1}{2 \ssigma^2} \left[S_0 + \kappa_0(m_0-\mu)^2\right] \right)\\
&= C\times  (\sigma^2)^{-\frac{\nu_N + 3}{2}}\exp\left( -\frac{1}{2 \ssigma^2} \left[ S_N + \kappa_N(m_N-\mu)^2\right] \right)\\
&\propto\nix(\mu, \sigma^2 | m_N, \kappa_{N}, \nu_N, \textcolor{blue}{S_N})= \normal (\mu| m_N, \frac{\ssigma^2}{\kappa_N})  \cdot \inversechidist (\ssigma^2 | \nu_N,  \textcolor{blue}{\sigma^2_N}),
\end{aligned}
\end{equation}
where $\bbeta=( m_0, \kappa_0, \nu_0, S_0=\nu_0\sigma_0^2)$, $C=\frac{(2\pi)^{-N/2}}{Z_{\nix}(\kappa_0, \nu_0, \sigma^2_0)}$, and 

$$
\begin{aligned}
m_N &= \frac{\kappa_0 m_0 + N\overline{x}}{\kappa_{N}} = \frac{\kappa_0 }{\kappa_{N}}m_0 + \frac{N}{\kappa_{N}}\overline{x},\\ 
\kappa_{N}&= \kappa_{0} +N,\\
\nu_N &= \nu_0 +N,\\
S_N &= S_0 +NS_{\overline{x}} + N\overline{x}^2 + \kappa_{0} m_0^2 -\kappa_{N}m_N^2\\
&=S_0 +NS_{\overline{x}} + \frac{\kappa_0 N }{\kappa_{0}+N} (\overline{x} - m_0)^2,\\
\nu_N \sigma_N^2 &= S_N \leadto     \sigma_N^2 = \frac{S_N}{\nu_N} ,
\end{aligned}
$$
which shares same form as that in the multivariate case from Equation~\ref{equation:niw_posterior_equation_1} except the $N$ in $NS_{\overline{x}}$ which arise from the difference between the multivariate Gaussian distribution and the univariate Gaussian distribution. Similarly, in inverse-chi-squared language, we can show the $\nu_N \sigma_N^2 = S_N$.

Suppose $\nu_0\geq 2$, or $N\geq 2$ such that $\nu_N\geq 2$, the posterior expectations are given by 
$$
\Exp[\mu |\mathcalX, \bbeta] = m_N, \qquad
\Exp[\sigma^2 | \mathcalX, \bbeta] = \frac{S_N}{\nu_N-2}.
$$

\paragraph{Marginal posterior of $\sigma^2$}
Integrate out $\mu$, we have
$$
\begin{aligned}
p(\sigma^2 | \mathcalX, \bbeta) &= \int_{\mu} p(\mu, \sigma^2 | \mathcalX, \bbeta) d \mu \\
&= \int_{\mu }  \mathcal{N} (\mu| m_N, \frac{\ssigma^2}{\kappa_N})  \cdot \inversechidist (\ssigma^2 | \nu_N,  \sigma^2_N)d\mu \\
&= \inversechidist (\ssigma^2 | \nu_N,  \sigma^2_N),
\end{aligned}
$$
which is just an integral over a Gaussian distribution.

\paragraph{Marginal posterior of $\mu$}
Integrate out $\sigma^2$, we have
$$
\begin{aligned}
p(\mu | \mathcalX, \bbeta) &= \int_{\sigma^2} p(\mu, \sigma^2 | \mathcalX, \bbeta) d \sigma^2 \\
&= \int_{\sigma^2 }  \mathcal{N} (\mu| m_N, \frac{\ssigma^2}{\kappa_N})  \cdot \inversechidist (\ssigma^2 | \nu_N,  \sigma^2_N)d\sigma^2 \\
&= \int_{\sigma^2}C(\sigma^2)^{-\frac{\nu_N + 3}{2}}\exp\left( -\frac{1}{2 \ssigma^2} \left[ S_N + \kappa_N(m_N-\mu)^2\right] \right)d\sigma^2. 
\end{aligned}
$$
Let $\phi = \sigma^2$ and $\alpha = (\nu_N+1)/2$, $A =  S_N + \kappa_N(m_N-\mu)^2$, and $x = \frac{A}{2\phi}$, we have 
$$
\frac{d \phi}{d x} = -\frac{A}{2}x^{-2}.
$$
where $A$ can be easily verified to be positive and $\phi=\sigma^2>0$.
It follows that
\begin{equation*}
	\begin{aligned}
		p(\mu | \mathcalX, \bbeta) &=\int_{0}^{\infty} C(\phi)^{-\alpha-1} \exp\left(-\frac{A}{2 \phi} \right)d\phi\\
		&=\int_{\textcolor{red}{\infty}}^{\textcolor{red}{0}}  C(\frac{A}{2x})^{-\alpha-1} \exp\left( -x\right)   ( \textcolor{red}{-}\frac{A}{2}x^{-2}) dx \qquad &\text{(since $x=\frac{A}{2\phi}$)}\\
		&=\int_{\textcolor{red}{0}}^{\textcolor{red}{\infty}}  C(\frac{A}{2x})^{-\alpha-1} \exp\left( -x\right)   ( \frac{A}{2}x^{-2}) dx\\
		&= (\frac{A}{2})^{-\alpha} \int_{x}  Cx^{\alpha-1} \exp\left( -x\right)   dx \\
		&= (\frac{A}{2})^{-\alpha}  (C\cdot \Gamma(1)) \int_{x}\gammadist(x|\alpha, 1) dx\qquad &\text{(see Definition~\ref{definition:gamma-distribution})}\\
		&= (C\cdot \Gamma(1))\left[ \nu_N\sigma_N^2 +\kappa_{N}(m_N-\mu)^2 \right]^{-\frac{\nu_N+1}{2}}\\
		&\overset{(a)}{=} (C\cdot \Gamma(1)) (\nu_N\sigma_N^2)^{-\frac{\nu_N+1}{2}} \left[ 1 +\frac{\kappa_{N}}{\nu_N\sigma_N^2}(m_N-\mu)^2 \right]^{-\frac{\nu_N+1}{2}}
	\end{aligned}
\end{equation*}
We notice that $C$ is defined in Equation~\ref{equation:nix-posterior} (in terms of $(\kappa_N, \nu_N, \sigma^2_N)$) that 
$$
C\overset{(b)}{=}\frac{(2\pi)^{-N/2}}{Z_{\nix}(\kappa_N, \nu_N, \sigma^2_N)} = \frac{(2\pi)^{-N/2}}{\frac{\sqrt{(2\pi)}}{\sqrt{\kappa_N}} \Gamma(\frac{\nu_N}{2}) (\frac{2}{\nu_N \sigma^2_N})^{\nu_N/2}} \propto (\nu_N \sigma^2_N)^{\nu_N/2}.
$$
Combine Equation (a) and (b) above, we obtain 
$$
p(\mu | \mathcalX, \bbeta)  \propto \frac{1}{\sigma_N/\sqrt{\kappa_N}} \left[ 1 +\frac{\kappa_{N}}{\nu_N\sigma_N^2}(\mu-m_N)^2 \right]^{-\frac{\nu_N+1}{2}} 
\propto \tau(\mu| m_N, \sigma_N^2/\kappa_N, \nu_N),
$$
which is a univariate Student $t$ distribution see Definition~\ref{definition:multivariate-stu-t}.

\paragraph{Marginal likelihood of data}
By Equation~\ref{equation:nix-posterior}, we can get the marginal likelihood of data under hyper-parameter $\bbeta=( m_0, \kappa_0, \nu_0, S_0=\nu_0\sigma_0^2)$
$$
\begin{aligned}
p(\mathcalX | \bbeta) &= \int_{\mu} \int_{\sigma^2}  p(\mathcalX, \mu, \sigma^2 |\bbeta) d\mu d\sigma^2\\ 
&=\frac{(2\pi)^{-N/2}}{Z_{\nix}(\kappa_0, \nu_0, \sigma^2_0)}  \int_{\mu} \int_{\sigma^2}  
(\sigma^2)^{-\frac{\nu_N + 3}{2}}\exp\left( -\frac{1}{2 \ssigma^2} \left[ S_N + \kappa_N(m_N-\mu)^2\right] \right)
  d\mu d\sigma^2\\
&= (2\pi)^{-N/2}\frac{Z_{\nix}(\kappa_N, \nu_N, \sigma^2_N)}{Z_{\nix}(\kappa_0, \nu_0, \sigma^2_0)} \\
&= (\pi)^{-N/2} \frac{\Gamma(\nu_N/2)}{\Gamma(\nu_0/2)} \sqrt{\frac{\kappa_0}{\kappa_N}} \frac{(\nu_0\sigma^2_0)^{\nu_0/2}}{(\nu_N\sigma^2_N)^{\nu_N/2}}.
\end{aligned}
$$

\paragraph{Posterior predictive for new data with observations}
Let the number of samples for data set $\{\xstar, \mathcalX\}$ be $\Nstar = N+1$, we have
\begin{equation}\label{equation:nix-posterior-new-withobser}
\begin{aligned}
	p(\xstar |\mathcalX, \bbeta) &= \frac{p(\xstar, \mathcalX | \bbeta)}{p(\mathcalX | \bbeta)}\\
	&=\left\{(2\pi)^{-\Nstar/2}\frac{Z_{\nix}(\kappa_\Nstar, \nu_\Nstar, \sigma^2_\Nstar)}{Z_{\nix}(\kappa_0, \nu_0, \sigma^2_0)}\right\}
	/\left\{(2\pi)^{-N/2}\frac{Z_{\nix}(\kappa_N, \nu_N, \sigma^2_N)}{Z_{\nix}(\kappa_0, \nu_0, \sigma^2_0)}\right\}\\
	&=(2\pi)^{-1/2} \frac{Z_{\nix}(\kappa_\Nstar, \nu_\Nstar, \sigma^2_\Nstar)}{Z_{\nix}(\kappa_N, \nu_N, \sigma^2_N)}\\
	&=(\pi)^{-1/2} 
	\sqrt{\frac{\kappa_N}{\kappa_{N^{\star}}}}    \frac{\Gamma(\frac{\nu_{N^{\star}}}{2})}{\Gamma(\frac{\nu_{N}}{2})}
	\frac{(\nu_N \sigma_N^2)^{\frac{\nu_N}{2}}}{(\nu_{\Nstar}\sigma_{\Nstar}^2)^{\frac{\nu_{\Nstar}}{2}}}\\
	&=\frac{\Gamma(\frac{\nu_{N}+1}{2})}{\Gamma(\frac{\nu_{N}}{2})}
	\sqrt{\frac{\kappa_N}{(\kappa_{N}+1)}   \frac{1}{(\pi\nu_{N}\sigma_{N}^2)}}   
	\left(\frac{(\nu_{\Nstar}\sigma_{\Nstar}^2)}{(\nu_N \sigma_N^2)}\right)^{-\frac{\nu_{N}+1}{2}}.
\end{aligned}
\end{equation}
We realize that 
$$
\begin{aligned}
m_N &= \frac{\kappa_{N^{\star}}m_{N^{\star}} - x^\star}{\kappa_N}=\frac{(\kappa_0 + N + 1)m_{N^{\star}} - x^\star}{\kappa_0 + N} , \\
m_{N^{\star}} &= \frac{\kappa_{N} m_N +x^{\star}}{\kappa_{N^{\star}}} 
= \frac{(\kappa_{0}+N) m_N +x^{\star}}{\kappa_{0}+N+1},\\
S_{N^{\star}} &= S_N + x^{\star} x^{\star T} - \kappa_{N^{\star}} m_{N^{\star}}^2  + \kappa_N m_N^2 \\
&= S_N + \frac{\kappa_N + 1}{\kappa_N}(m_{N^{\star}} - x^\star)^2\\
&=S_N + \frac{\kappa_N }{\kappa_N+ 1}(m_{N} - x^\star)^2,
\end{aligned}
$$
Thus, 
\begin{equation}\label{equation:nix-substitute-posterior-withobser}
\begin{aligned}
	\left(\frac{(\nu_{\Nstar}\sigma_{\Nstar}^2)}{(\nu_N \sigma_N^2)}\right)^{-\frac{\nu_{N}+1}{2}}&=
	\left(\frac{S_\Nstar}{S_N}\right)^{-\frac{\nu_{N}+1}{2}} =1 +  \frac{\kappa_N(m_{N} - x^\star)^2 }{(\kappa_N+ 1)\nu_N\sigma_N^2}.
\end{aligned}
\end{equation}
Substitute Equation~\ref{equation:nix-substitute-posterior-withobser} into Equation~\ref{equation:nix-posterior-new-withobser}, it follows that 
$$
\begin{aligned}
p(\bxstar |\mathcalX, \bbeta) 
&=\frac{\Gamma(\frac{\nu_{N}+1}{2})}{\Gamma(\frac{\nu_{N}}{2})}
\sqrt{\frac{\kappa_N}{(\kappa_{N}+1)}   \frac{1}{(\pi\nu_{N}\sigma_{N}^2)}}   
\left(1 +  \frac{\kappa_N(m_{N} - x^\star)^2 }{(\kappa_N+ 1)\nu_N\sigma_N^2}\right)^{-\frac{\nu_{N}+1}{2}}\\
&= \tau(x^\star | m_N, \frac{\kappa_{N}+1}{\kappa_{N}}\sigma^2_N, \nu_N  ).
\end{aligned}
$$

\paragraph{Posterior predictive for new data without observations}
Similarly, we have
$$
\begin{aligned}
p(\xstar | \bbeta) &=  \int_{\mu} \int_{\sigma^2} p(\xstar, \mu, \sigma^2 |\bbeta) d\mu d\sigma^2 \\
&=(2\pi)^{-1/2}\frac{Z_{\nix}(\kappa_1, \nu_1, \sigma^2_1)}{Z_{\nix}(\kappa_0, \nu_0, \sigma^2_0)}\\
&=(\pi)^{-1/2} 
\sqrt{\frac{\kappa_0}{\kappa_{1}}}    \frac{\Gamma(\frac{\nu_{1}}{2})}{\Gamma(\frac{\nu_{0}}{2})}
\frac{(\nu_0 \sigma_0^2)^{\frac{\nu_0}{2}}}{(\nu_{1}\sigma_{1}^2)^{\frac{\nu_{1}}{2}}}\\
&=\frac{\Gamma(\frac{\nu_{0}+1}{2})}{\Gamma(\frac{\nu_{0}}{2})}
\sqrt{\frac{\kappa_0}{(\kappa_{0}+1)}   \frac{1}{(\pi\nu_{0}\sigma_{0}^2)}}   
\left(\frac{(\nu_{1}\sigma_{1}^2)}{(\nu_0 \sigma_0^2)}\right)^{-\frac{\nu_{0}+1}{2}}\\
&=\tau(x^\star | m_0, \frac{\kappa_{0}+1}{\kappa_{0}}\sigma^2_0, \nu_0  ).
\end{aligned}
$$

\subsubsection{Normal-inverse-gamma prior*}
\paragraph{Prior on parameters}
We realize that inverse-chi-squared distribution is a special inverse-gamma distribution (defined in Definition~\ref{definition:inverse-gamma}). The particularity is in the similarity with the inverse-Wishart distribution. Similarly and more generally, we can define the normal-inverse-gamma prior as follows (as we have shown the inverse-gamma distribution is often used as a conjugate prior for the variance parameter in Section~\ref{section:blm-fullconjugate}):
\begin{equation}
	\begin{aligned}
		&\gap \nig (\smu, \sigma^2 | m_0, \kappa_0, r_0, \lambda_0) \\
		&= \mathcal{N} (\mu| m_0, \frac{\ssigma^2}{\kappa_0})  \cdot \inversegammadist (\ssigma^2 | r_0,  \lambda_0) \\
		&=\frac{1}{Z_{\nig}(\kappa_0, r_0, \lambda_0)}  (\ssigma^2)^{-\frac{2r_0 +3}{2}} \exp\left(-\frac{1}{2 \ssigma^2}\left[\kappa_0(m_0-\mu)^2 + 2\lambda_0 \right] \right)  \\
	\end{aligned}
	\label{equation:uni_gaussian_prior-nig}
\end{equation}
where 
\begin{equation}\label{equation:uni_gaussian_giw_constant-nig}
	Z_{\nig}(\kappa_0, r_0, \lambda_0) = \frac{\Gamma(r_0)}{\lambda_0^{r_0}}(2\pi)^{-1/2}.
\end{equation}
This is equivalent to set $r_0 = \frac{\nu_0}{2}$ and $\lambda_0 = \frac{S_0}{2}$ in $\nix$.

\paragraph{Posterior under NIG}
Again, by the Bayes' theorem ``$\mathrm{posterior} \propto \mathrm{likelihood} \times \mathrm{prior} $", the posterior of the $\bmu$ and $\bSigma$ parameters under the NIG prior is
\begin{equation}
	\begin{aligned}
		p(\mu, \sigma^2| \mathcalX, \bbeta ) &\propto p(\mathcalX | \mu, \sigma^2) p(\mu, \sigma^2 | \bbeta) = p(\mathcalX, \mu, \sigma^2 | \bbeta)\\
		&= C \times (\sigma^2)^{-\frac{2r_0 + 3+N}{2}} \exp\left(-\frac{1}{2 \ssigma^2}  \left[  N(\overline{x} - \smu)^2 +  N S_{\overline{x}} \right] \right) \\
		&\gap \times \exp\left( -\frac{1}{2 \ssigma^2} \left[2\lambda_0 + \kappa_0(m_0-\mu)^2\right] \right)\\
		&\propto (\sigma^2)^{-\frac{2r_N + 3}{2}}\exp\left( -\frac{1}{2 \ssigma^2} \left[ \lambda_N + \kappa_N(m_N-\mu)^2\right] \right)\\
		&\propto\nig(\mu, \sigma^2 | m_N, \kappa_{N}, r_N, \lambda_N).
	\end{aligned}
\end{equation}
where $\bbeta=( m_0, \kappa_0, r_0, \lambda_0)$, $C=\frac{(2\pi)^{-N/2}}{Z_{\nig}(\kappa_0, r_0, \lambda_0)}$, and 

$$
\begin{aligned}
m_N &= \frac{\kappa_0 m_0 + N\overline{x}}{\kappa_{N}} = \frac{\kappa_0 }{\kappa_{N}}m_0 + \frac{N}{\kappa_{N}}\overline{x},\\ 
\kappa_{N}&= \kappa_{0} +N,\\
r_N &= r_0 +\frac{N}{2},\\
\lambda_N &=\lambda_0 +\frac{1}{2}(NS_{\overline{x}} + N\overline{x}^2 + \kappa_{0} m_0^2 -\kappa_{N}m_N^2)\\
&= \lambda_0+\frac{1}{2}(NS_{\overline{x}} + \frac{\kappa_0 N }{\kappa_{0}+N} (\overline{x} - m_0)^2).
\end{aligned}
$$
Further discussion on the posterior marginal likelihood can be found in \citep{murphy2007conjugate}. We will leave this to the readers as it is rather similar as that in the NIX prior.

%
%

\newpage
\part{Bayesian inference for mixture model}\label{chapter:review_bayesian_mixture_model}

\section{General mixture model}
\begin{figure}[h!]
\centering
  \includegraphics[width=5cm]{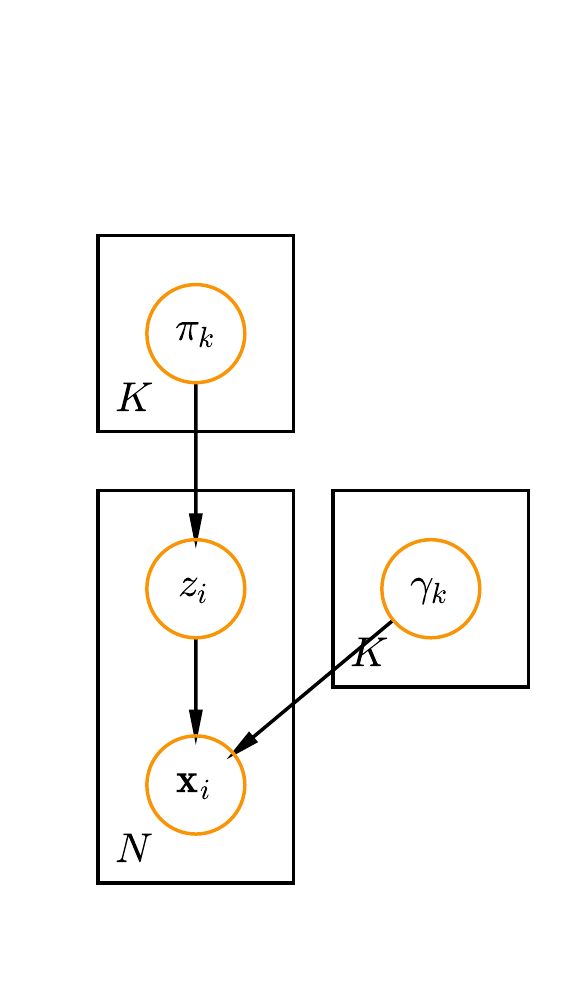}
  \caption{A General finite mixture model, $\bgamma_k$'s are parameters for the specific distribution of each cluster. When it is of Gaussian form, the mixture model is known as Gaussian mixture model (GMM), and the parameters $\bgamma_k$'s include the mean vectors and covariance matrices for the Gaussian distributions. }
  \label{fig:general_mm_finite}
\end{figure}

A typical finite-dimensional mixture model is a hierarchical model shown in Figure~\ref{fig:general_mm_finite} and consists of the following components: \footnote{See also \url{https://en.wikipedia.org/wiki/Mixture_model} for a nice description for the components in mixture models.}
\begin{itemize}
\item N random variables corresponding to observations $\mathcalX = \{\bx_1, \bx_2, \ldots, \bx_N\}$, where each assumed to be distributed according to a mixture of $K$ components, with each component belonging to the same parametric family of distributions (e.g., all Gaussian which have conjugate prior as we have shown previously, all Student $t$ which is not an exponential family and therefore there is no conjugate prior, etc.), but with different parameters;
\item A set of $K$ mixture weight vector $\bpi = \{\pi_1, \pi_2, \ldots, \pi_K\}$, each of which is a probability (a real number between 0 and 1 inclusively), all of which sum to 1 so that $\bpi$ is in a $(K-1)$-dimensional simplex;
\item A set of $K$ parameters, each specifying the parameter of the corresponding mixture component. In many cases, each ``parameter" is actually a set of parameters. For example, observations distributed according to a mixture of one-dimensional Gaussian distributions will have a mean parameter and a variance parameter for each component. And its multivariate version will have a mean vector and covariance matrix for each component.
\end{itemize}

Specifically, assume we have $N$ observations $\mathcalX = \{\bx_1, \bx_2, \ldots, \bx_N\}$ sampled i.i.d., from a finite mixture distribution with density 
\begin{equation}
p(\bx | \bpi, \bgamma) = \sum_{k=1}^{K} \pi_k \Phi(\bx | \bgamma_{k}),
\label{equation:general_mixture_model_expression}
\end{equation}
with $\bgamma_k \in \Gamma$, $\Gamma$ is the metric space of the parameter for some kernel functions, $\Phi$ is the kernel function of each component, and $K$ is finite and known. We wish to make (Bayesian) inference for the model parameters $\btheta = \{\bpi, \bgamma \}$. The likelihood is given by
\begin{equation}
p(\mathcalX |\bpi, \bgamma  ) = \prod_{i=1}^N \sum_{k=1}^K \pi_k \Phi(\bx_i | \bgamma_k), 
\label{equation:mm_maximum_likelihood}
\end{equation}
which is given by $K^N$ terms. This implies a large computational cost for even a not very large sample size, $N$. 

\paragraph{Bayesian Approaches VS EM Algorithm} In this article, we use Bayesian inference to do the calculation. However, an alternative frequentist approach exists to handle clustering based on mixture model which is known as the Expectation-Maximization (EM) algorithm where the parameters of the mixture model are usually estimated into a maximum likelihood estimation (MLE) framework by maximizing the observed data likelihood, i.e., the mixture model parameters $\{\bpi, \bgamma \} $ can be estimated by maximizing the observed data likelihood in Equation~\eqref{equation:mm_maximum_likelihood}. The EM algorithm is advanced in the sense of allowing for different size, shapes, and orientations among the clusters. However, it comes with some limitations that we can overcome with the Bayesian approach. For example, the Bayesian approach will eventually reach the target distribution, even if it takes some time. The EM algorithm estimator runs the risk of getting stuck in a local maximum if present \citep{stephens1997bayesian, fraley2007bayesian}. In addition, the method only outputs point estimates \footnote{\textbf{Estimation method} vs \textbf{estimator} vs \textbf{estimate}: \textbf{Estimation method} is a general algorithm to produce the estimator. \textbf{An estimate} is the specific value that \textbf{an estimator} takes when observing the specific value, i.e., an estimator is a random variable and the realization of this random variable is called an estimate.}, and produces no estimates concerning the \textbf{uncertainty of the parameters}. However, in Bayesian inference approaches, these problems can be avoided by replacing the MLE by the maximum a posterior (MAP) estimation, i.e., a MAP estimation (Bayesian) framework by maximizing the posterior parameter distribution, that is, $p(\bpi, \bgamma|\mathcalX) \propto p(\bpi, \bgamma)p(\mathcalX|\bpi, \bgamma)$, where $p(\bpi, \bgamma)$ is a chosen prior distribution on the model parameters $\{\bpi, \bgamma\}$, and $p(\mathcalX|\bpi, \bgamma)$ is the likelihood of the data under the mixture model. This is namely achieved by introducing a regularization over the model parameters via prior parameter distributions \footnote{See Section~\ref{sec:bayesian-zero-mean} as an example of how Bayesian approach can do the regularization in the linear model context and refer to \citep{lu2021rigorous2} for more details.}, which are assumed to be uniform distributed in the case of MLE.  And the Bayesian approaches generate point estimates for all variables as well as associated uncertainty in the form of the whole estimates' posterior distribution. 

In order to simplify the likelihood, we can introduce latent variables $z_i$ such that:
\begin{equation}
(\bx_i | z_i = k) \sim \Phi(\bx | \bgamma_k) \quad \mathrm{and} \quad p(z_i=k) = \pi_k. 
\end{equation}
These auxiliary variables allow us to identify the mixture component from which each observation has been generated. Therefore, for each sample of data $\mathcalX = \{\bx_1, \bx_2, \ldots, \bx_N\}$, we assume a missing/latent data set $\bz=\{z_1, z_2, \ldots, z_N\}$, which provides the labels indicating the mixture components from which the observations have been generated. Using this missing data set, the likelihood simplifies to 
\begin{equation}\label{equation:mixture-assign-mle}
\begin{aligned}
p(\bpi, \bgamma | \mathcalX, \bz) &= \prod_{i=1}^N \pi_{z_i} \Phi(\bx_i | \bgamma_{z_i})
				&= \prod_{k=1}^K \pi_k^{N_k}\left[  \prod_{i:z_i=k} \Phi(\bx_i | \bgamma_k)   \right], 
\end{aligned}
\end{equation}
where $N_k$ is the count of component $k$ in $\bz$, i.e., $N_k = \# \{z_i = k\}$, and $N = \sum_{k=1}^KN_k$. 

Considering the finite mixture model in Equation~\eqref{equation:general_mixture_model_expression}, a Bayesian approach
is completed by choosing priors for the number of components $K$, the probability weights $\bpi$, and the component-specific parameters $\bgamma=\{\bgamma_1, \bgamma_2, \ldots, \bgamma_K\}$.
Typically, $K$ is assigned a Poisson or multinomial prior, or $K$ can be chosen with an upper bound of mixture components in an over-fitting mixture model setting, $\bpi$ is assigned a Dirichlet$(\balpha)$ prior with $\balpha = \{\alpha_1, \alpha_2, \ldots, \alpha_K\}$, and $\bgamma_k \sim P0$ independently, with $P0$ often chosen to be conjugate to the kernel $\Phi$. As an example, when $\Phi$ is the multivariate Gaussian kernel/distribution and $\bgamma$ is a matrix containing mean vector and covariance matrix, i.e., $\bgamma =\{\bmu, \bSigma\}$, a normal-inverse-Wishart prior can be assigned to $\bgamma$. This mixture model is often referred as the Gaussian mixture model (GMM). We then introduce the mathematical details of this setting in the following sections.

\paragraph{Model-Based Clustering VS Deterministic Clustering} This kind of model-based clustering arised from the Gaussian mixture model has several advantages compared to traditional, deterministic clustering methods (such as k-means). Deterministic methods use different measures between objects, and between objects and centroids, to create cohesive and homogeneous groups. However, they assume equal structure among clusters, and thus cannot handle clusters of different shapes, sizes and directions. Model-based clustering is better able to handle overlapping groups by taking into account cluster membership probabilities in these areas. 

\section{Bayesian finite Gaussian mixture model}\label{section:review_finite_mixture_model} \label{sec:finite_gaussian_mixture}
This section is primarily based on \citep{murphy2012machine, kamper2013gibbs, lu2017robust, lu2017hyperprior, lu2018reducing, franzen2006nbayesian}.

\subsection{Background}\label{sec:fmm_background}
We present a background of Bayesian finite Gaussian mixture model (GMM) here, also the background can be extended to the situation of infinite Gaussian mixture model. In our case, data is assumed to come from a mixture model of $K$ distributions, where each distribution represents a cluster. All clusters have a multivariate Gaussian distribution, but each with its specific mean vector $\bmu_k$ and covariance matrix $\bSigma_k$. Along with the mean vectors and covariance matrices, the probabilities for each cluster, and the probabilities of a single observation is belonging to a given cluster, are estimated. Assume we have $N$ observations $\mathcalX = \{\bx_1, \bx_2, \ldots, \bx_N\}$ sampled i.i.d., from a finite mixture distribution with density 
\begin{equation}
p(\bx | \bpi, \bmu, \bSigma) = \sum_{k=1}^{K} \pi_k \mathcal{N}(\bx | \bmu_{k}, \bSigma_k),
\end{equation}
with $\bmu_k \in \mathbb{R}^D$, $\bSigma_k \in \mathbb{R}^{D \times D} $, and $K$ being finite and known. We wish to make Bayesian inference for the model parameters $\btheta = \{\bpi, \bmu, \bSigma \}$. The likelihood is, 
\begin{equation}
p(\bpi, \bmu, \bSigma | \mathcalX) = \prod_{i=1}^N \sum_{k=1}^K \pi_k \mathcal{N}(\bx_i | \bmu_k, \bSigma_k).
\end{equation}

\subsection{Bayesian finite Gaussian mixture model}
We will work with the following definition of Bayesian finite Gaussian mixture model
\begin{equation}
\begin{aligned}
\bx_i | z_i, \{\bmu_k, \bSigma_k \} &\sim \mathcal{N}(\bmu_{z_i}, \bSigma_{z_i}) \\
z_i | \bpi    									&\sim \discrete(\pi_1, \ldots, \pi_K) \\
\{\bmu_k, \bSigma_k\}					&\sim \niw(\bbeta) \\
\bpi 											&\sim \dirichlet(\alpha_1, \ldots, \alpha_K),
\end{aligned}
\end{equation}
where $\balpha=\{\alpha_1, \ldots, \alpha_K\}$ is a hyper-parameter to generate the probability vector $\bpi$, and $\bbeta$ is a hyper-parameter to generate mean vectors and covariance matrices for multivariate Gaussian distributions.
Using the introduced latent variables $z_i$'s, the Bayesian finite Gaussian mixture model is illustrated in Figure \ref{fig:gmm_finite_without_hyper}, where hyper-parameters are denoted in green cycles. For each observed data vector $\bxi$, we have a latent variable $z_i \in \{1, 2, \ldots, K\}$ indicating which of the K components $\bxi$ belongs to. Using this latent variable by $\pi_k$ = $p$($z_i = k$), we indicate the prior probability that $\bxi$ belongs to component $k$. Given $z_i = k$, $\bxi$ is generated by the $k^{\textit{th}}$ Gaussian mixture component with mean vector $\bmu_k$ and covariance matrix $\bSigma_k$.

\begin{figure}[h!]
\center
\subfigure[General mixture model]{\includegraphics[width=0.33\textwidth]{img_visual/general_mixture_model.pdf} \label{fig:general_gmm_finite}}
\subfigure[A Bayesian finite GMM]{\includegraphics[width=0.33\textwidth]{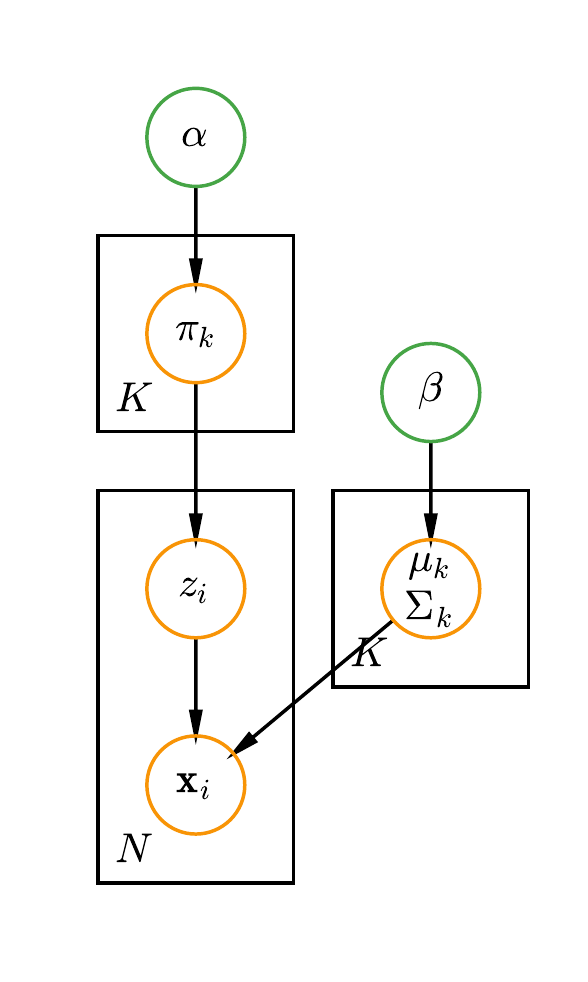} \label{fig:gmm_finite_without_hyper}}
\caption{A Bayesian finite GMM (compared with general mixture models that are not necessarily Gaussian mixture model). Hyper-parameters $\balpha$ and $\bbeta$ are denoted in green cycles. Assume a Dirichlet prior for the probability vector $p(\bpi | \balpha)$  (i.e., the mixture weights). And a NIW prior for the parameters in multivariate Gaussian distributions.}
\label{fig:gmm_finite_without_hyper_allset}
\end{figure}


Here we use a Dirichlet distribution/prior for $p(\bpi | \balpha)$ since Dirichlet distribution is a conjugate prior for the multinomial distribution as introduced in Section \ref{sec:dirichlet_prior_on_multinomial}. The upper left of the Figure~\ref{fig:gmm_finite_without_hyper} shows that we use a Dirichlet distribution as a prior over the mixture weights $\bpi = \{\pi_1, \pi_2, \ldots , \pi_K\}$:
\begin{equation} 
     p(\bpi | \balpha) = \dirichlet(\bpi | \balpha). 
    \label{equation:dirichlet_distribution}
\end{equation}  
If using hyperprior (that is, a prior over a prior) on Dirichlet distribution, we represent the hyper-parameter of the hyperprior as $a, b$. We will give the detail of hyperprior in later sections.

For the mean vector $\bmu_k$ and covariance matrix $\bSigma_k$ of each of the K Gaussian mixture components, again we use a NIW distribution with hyper-parameters $\bbeta  = (\bmo, \kappa_0, \nu_0, \bso)$:
\begin{equation} 
    p(\bmu_k, \bSigma_k | \bbeta) = \niw(\bmu_k, \bSigma_k | \bmo, \kappa_0, \nu_0, \bso). 
    \label{equation:giw}
\end{equation}  
We use NIW as the prior of Gaussian component since the NIW is fully conjugate to the multivariate Gaussian likelihood as introduced in Section~\ref{sec:multi_gaussian_conjugate_prior}.

\subsection{Inference by uncollapsed Gibbs sampling}\label{sec:fmm_uncollapsed_gibbs}
The most widely used posterior inference methods in Bayesian inference models are Markoc Chain Monte Carlo (MCMC) methods as discussed in Section~\ref{sec:monte_carlo_methods}. The idea of MCMC methods is to define a Markov chain on the hidden variables that has the posterior as its equilibrium distribution \citep{andrieu2003introduction}. By drawing samples from this Markov chain, one eventually obtains samples from the posterior. A simple form of MCMC sampling is Gibbs sampling, where the Markov chain is constructed by considering the conditional distribution of each hidden variable given the others and the observations.  

\begin{figure}[h!]
\centering
\includegraphics[width=5cm]{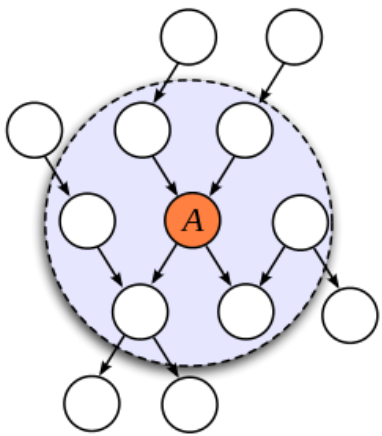}
\caption{The Markov blanket of a directed acyclic graphical model. In a Bayesian network, the Markov blanket of node $A$ includes its \textbf{parents, children and the other parents of all of its children}. That is, the nodes in the cycle are in the Markov blanket of node $A$. The figure is from wikipedia page of Markov blanket.}
\label{fig:markov_blanket}
\end{figure}

To do Gibbs sampling, we need to derive the conditional posterior distributions for each parameters conditioned on all the other parameters $p(\theta_i | \btheta_{-i}, \mathcalX)$, where $\mathcalX$ is again the set of $N$ data points and $\theta_i$'s are the variables for which we want to sample the distributions. But for a graphical model, this conditional distribution is a function only of the nodes in the Markov blanket. For the finite Gaussian mixture model shown in Figure~\ref{fig:gmm_finite_without_hyper}, which is  a directed acyclic graphical (DAG) model, the Markov blanket of a node includes \textbf{the parents, the children, and the co-parents} \citep{jordan2004introduction}, as shown in Figure~\ref{fig:markov_blanket}. The Markov blanket of node A is all nodes in the cycle. 

\paragraph{An Example on the Markov Blanket} This might be mysterious at first glance. Suppose we want to sample $z_i$'s for the distribution of it. From Figure~\ref{fig:gmm_finite_without_hyper}, we find its parents, children, and coparents are $\{\bpi \}$, $\{\mathcalX\}$, and $\{\bmu_k, \bSigma_k\}$, respectively. Therefore the conditional distribution of $z_i$ only depends on the three pairs of parameters:
$$
p(z_i = k | -)  = p(z_i=k | \bznoi, \bpi, \{\bmu_k\}, \{\bSigma_k\}, \mathcalX) .
$$
More specifically, from this graphical representation, we can find the Markov blanket for each parameter in the finite Gaussian mixture model, and then figure out their conditional posterior distributions to be derived:
\begin{align}
	p(z_i = k | -) 					&= p(z_i=k | \bznoi, \bpi, \{\bmu_k\}, \{\bSigma_k\}, \mathcalX)  \label{equation:fmm_blocked_z} \\
	p(\bpi | -)   					    &= p(\bpi | \bz, \balpha) \label{equation:fmm_blocked_pi}\\
	p(\bmu_k, \bSigma_k | -) &= p(\bmu_k, \bSigma_k | \mathcalX, \bz, \bbeta)  \label{equation:fmm_blocked_mu_sigma}.
\end{align} 
In other words, Gibbs sampler moves the chain forward by one step as follows:
\begin{itemize}
\item Sample the cluster assignment $z_i$ for each observation from Equation~\eqref{equation:fmm_blocked_z} which is known as the \textbf{conditional distribution of assignment};
\item Sample the mixture weights $\bpi$ for each cluster from Equation~\eqref{equation:fmm_blocked_pi}, which is known as the \textbf{conditional distribution of mixture weights};
\item Sample the cluster mean vector $\bmu$ and covariance matrix $\bSigma$ for each cluster  from Equation~\eqref{equation:fmm_blocked_mu_sigma} which is known as the \textbf{conditional distribution of cluster parameters}.
\end{itemize}

\subsubsection{Conditional distribution of assignment}
As discussed, the conditional distribution of assignment for each observation is given by 
\begin{equation}
\begin{aligned}
p(z_i = k | -) &= p(z_i=k | \bznoi, \bpi, \{\bmu_k\}, \{\bSigma_k\}, \mathcalX) \\
&=\frac{p(z_i=k , \bznoi, \bpi, \{\bmu_k\}, \{\bSigma_k\}, \mathcalX) }{\cancel{p(\bznoi, \bpi, \{\bmu_k\}, \{\bSigma_k\}, \mathcalX) }}\\
&\propto p(\bz, \bpi,  \{\bmu_k\}, \{\bSigma_k\}, \mathcalX) \\
&= \prod_{n=1}^N \prod_{k=1}^K \left[\pi_k \mathcal{N}(\bx_n | \bmu_k, \bSigma_k)\right]^{\delta(z_n, k)}\\
&\propto \pi_k \normal(\bx_i | \bmu_k, \bSigma_k), 
\end{aligned}
\end{equation}
which comes from the fact that cluster assignements are conditionally independent given the cluster weights and paramters and $\delta(z_n, k)=1$ if $z_n=k$ and $\delta(z_n, k)=0$ if $z_n\neq k$. This equation intuitively makes sense: data point $i$ is more likely to be in cluster $k$ if $k$ is itself probable ($\pi_k\gg 0$ or $\pi_k > \pi_l$ for $k\neq l$) and $x_i$ has large probability in the $k^{th}$ component (i.e., the probability of $\normal(\bx_i | \bmu_k, \bSigma_k)$ is large).

In Gibbs sampling, for each data point $i$, we can compute $p(z_i=k | -)$ by $\pi_k \normal(\bx_i | \bmu_k, \bSigma_k)$ from the above deduction for each of cluster $k$. These values are the unnormalized parameters to a discrete distribution (since we use proportional distribution in the above deduction) from which we can sample assignments by normalizing it. Let $o_m = \pi_m \normal(\bx_i | \bmu_m, \bSigma_m)$, for $m\in \{1, 2, \ldots, K\}$, the probability to output $z_i=k$ is
$$
\frac{o_k}{\sum_{m=1}^{K} o_m}.
$$

\subsubsection{Conditional distribution of mixture weights}
We can similarly derive the conditional distributions of mixture weights by an application of Bayes' theorem. Instead of updating each component of $\bpi$ separately, we update them together (this is also known as the \textbf{blocked Gibbs sampling}).

\begin{equation}
\begin{aligned}
p(\bpi | -) &= p(\bpi | \bz, \balpha) \\
&=\frac{ p(\bpi , \bz, \balpha)}{ \cancel{p( \bz, \balpha)}} \\
&\propto p(\bpi, \bz, \balpha) \\
&= \cancel{p(\balpha)} p(\bpi | \balpha) p(\bz | \bpi, \cancel{\balpha}), \\
\end{aligned}
\end{equation}
where $p(\bpi | \balpha) = \dirichlet(\bpi | \balpha) \propto \prod_{k=1}^K \pi_k^{\alpha_k -1}$, and $p(\bz | \bpi, \cancel{\balpha})$ can be seen as a multinomial distribution $p(\bz | \bpi, \cancel{\balpha})=\multinomial_K(\bz | N, \bpi)\propto \prod_{k=1}^K \pi_k^{N_k}$, where $N_k$ is the number of $z_i$'s assigned in cluster $k$. From Section~\ref{section:dirichlet-dist-post}, therefore, we obtain 
\begin{equation}
	\begin{aligned}
		p(\bpi | -) &= p(\bpi | \bz, \balpha) \\
		&\propto \prod_{k=1}^K \pi_k^{\alpha_k -1} \prod_{k=1}^K \pi_k^{N_k} \\
		&\propto \dirichlet(N_1+\alpha_1, N_2+\alpha_2, \ldots, N_K+\alpha_K). 
	\end{aligned}
\end{equation}

\subsubsection{Conditional distribution of cluster parameters}
Finally, we need to compute the conditional distribution for the cluster means and covariance matrices:

\begin{equation}
\begin{aligned}
p(\bmu_k, \bSigma_k | -) &= p(\bmu_k, \bSigma_k | \mathcalX, \bz, \bbeta)\\
				&= p(\bmu_k, \bSigma_k | \mathcalX_k, \bbeta)
\end{aligned}
\end{equation}
where $\mathcalX_k$ is the data points in $k^{th}$ cluster, and this equation can be easily calculated from Equation~\eqref{equation:niw_posterior_equation_1}.

The pseudo code for uncollapsed Gibbs sampler for a finite Gaussian mixture model is given by Algorithm \ref{algo:fmm_plain_gibbs-uncollapsed}.

\IncMargin{1em}
\begin{algorithm}
\SetKwData{Left}{left}\SetKwData{This}{this}\SetKwData{Up}{up}
\SetKwFunction{Union}{Union}\SetKwFunction{FindCompress}{FindCompress}
\SetKwInOut{Input}{input}\SetKwInOut{Output}{output}
\Input{Choose an initial $\bz$}
\BlankLine

\For{$T$ iterations}{
\For{$i \leftarrow 1$ \KwTo $N$}{
Remove $\bxi$'s statistics from component $z_i$ \;
Calculate weights $p(\bpi | \bz, \balpha)=\dirichlet(N_1+\alpha_1, N_2+\alpha_2, \ldots, N_K+\alpha_K)$\;
\For{$k\leftarrow 1$ \KwTo $K$}{
Calculate Gaussian parameters for each cluster: $p(\bmu_k, \bSigma_k | \mathcalX_k, \bbeta)$ \;
Calculate $p(z_i = k | -) \propto \pi_k \normal(\bx_i | \bmu_k, \bSigma_k)$\;
}
Sample $k_{new}$ from $\frac{o_k}{\sum_{m}^{K} o_m}$ after normalizing \;
Add $\bxi$'s statistics to the component $z_i=k_{new}$ \;
}
}
\caption{Uncollapsed Gibbs sampler for a finite Gaussian mixture model}\label{algo:fmm_plain_gibbs-uncollapsed}
\end{algorithm}\DecMargin{1em}

\subsection{Inference by collapsed Gibbs sampling}\label{sec:fmm_collabsed_gibbs}



Since we choose $p(\bpi | \balpha)$ and $p(\bmu_k, \bSigma_k | \bbeta)$ to be conjugate, we are able to analytically integrate out the model parameters $\bpi$, $\bmu_k$ and $\bSigma_k$ and only sample the component assignments $\bz$. This is known as a \textbf{collapsed Gibbs sampler}, and the discussion about this Gibbs sampler can be found in \citep{neal2000markov, murphy2012machine} and many other articles. The collapsed Gibbs sampler is done as follows:
\begin{equation} 
\begin{aligned}
p(z_i = k| \bznoi, \mathcal{X} , \balpha, \bbeta)  &= 
\frac{p(z_i = k, \bznoi, \mathcal{X} , \balpha, \bbeta) }{\cancel{p(\bznoi, \mathcal{X} , \balpha, \bbeta)} }\\
&\propto p(z_i = k, \bznoi, \mathcal{X} , \balpha, \bbeta) \\
&=p(z_i = k, \bznoi, \balpha, \bbeta) p(\mathcal{X} |z_i = k, \bznoi, \cancel{\balpha}, \bbeta) \\
&=p(z_i = k| \bznoi, \balpha, \bbeta)\cancel{p( \bznoi, \balpha, \bbeta)} p(\mathcal{X} |z_i = k, \bznoi, \cancel{\balpha}, \bbeta) \\
& \varpropto p(z_i = k | \bznoi, \balpha, \cancel{\bbeta})  p(\mathcal{X} |z_i = k, \bznoi, \cancel{\balpha}, \bbeta) \\
& = p(z_i = k| \bznoi, \balpha) p(\bxi |\mathcal{X}_{-i}, z_i = k, \bznoi, \bbeta) p(\mathcal{X}_{-i} |\cancel{z_i = k}, \bznoi, \bbeta)\\
& \varpropto p(z_i = k| \bznoi, \balpha) p(\bxi|\mathcal{X}_{-i}, z_i = k, \bznoi, \bbeta)	 \\
&=  p(z_i = k| \bznoi, \balpha)  p(\bxi | \xknoi, \bbeta),										
\end{aligned}
\label{equation:fmm_collabsed_gibbs}
\end{equation}  
where $\xknoi$ is the set of data points assigned to component $k$ without taking $\bxi$ into account. 

Typically the $\balpha$ hyper-parameter is set to $\alpha_k = \alpha = \alpha_+/K$ (termed as \textbf{standard setting of Dirichlet distribution}, also known as a \textbf{symmetrical Dirichlet prior}). For this setting we also have $\alpha_+ =\sum_{k=1}^K \alpha_k $. In the following section, we will give the solutions and proofs for both the unsymmetrical and symmetrical Dirichlet prior. We give the expressions for the first and second terms on the right hand side of Equation~\eqref{equation:fmm_collabsed_gibbs} respectively in the following two sections.

\subsubsection{First term: $ p(z_i = k| \bznoi, \balpha)$}
We can express $p(z_i = k| \bznoi, \balpha)$ in Equation~\eqref{equation:fmm_collabsed_gibbs} as follows
\begin{align*}
p(z_i = k| \bznoi, \balpha) = \frac{p(z_i = k, \bznoi | \balpha)}{p(\bznoi | \balpha)} = \frac{p(\bz |\balpha)}{p(\bznoi | \balpha)},
\end{align*}
We can find the numerator and denominator of the above equation is marginal $p(\bz |\balpha)$ with $N$ and $N-1$ samples in $\bz$ respectively. Thus we can calculate both the numerator and denominator above if we can find an expression for the marginal $p(\bz |\balpha)$ with appropriate modification for the observation sets.

By marginalizing out $\bpi$ for the following equation:
\begin{align*}
p(\bz |\balpha) = \int_{\bpi} p(\bz | \bpi) p(\bpi | \balpha) d \bpi. 
\end{align*}
Again, the first term in the integrand is from multinomial distribution
\begin{align*}
p(\bz |\bpi) =\multinomial_K(\bz | N, \bpi) \propto \prod_{k=1}^K \pi_k^{N_k}, 
\end{align*}
where $N_k$ is the count of component $k$ in $\bz$. And again, the second term in the integrand is given in Equation~\eqref{equation:dirichlet_distribution}: $p(\bpi | \balpha) = \dirichlet(\bpi | \balpha) \propto \prod_{k=1}^K \pi_k^{\alpha_k -1}$.
We can thus marginalize: 
\begin{align}
p(\bz | \balpha) &= \int_{\bpi} p(\bz | \bpi) p(\bpi | \balpha) d\bpi \\
&\propto \int_{\bpi} \prod_{k=1}^K \pi_k^{N_k} \frac{1}{D(\balpha)} \prod_{k=1}^K \pi_k^{\alpha_{k}-1} d\bpi \\
&\overset{(a)}{=} \frac{1}{D(\balpha)} \int_{\bpi} \prod^K_{k=1} \pi_k^{N_k+\alpha_k-1} d\bpi \\
&\overset{(b)}{=} \frac{\Gamma(\alpha_+)}{\Gamma(N + \alpha_+)} \prod^K_{k=1} \frac{\Gamma(N_k + \alpha_k)}{\Gamma(\alpha_k)} \qquad &\text{(unsymmetric $\balpha$ setting)}  \label{equation:fmm_z_alpha}\\
&\overset{(c)}{=} \frac{\Gamma(\alpha_+)}{\Gamma(N + \alpha_+)} \prod^K_{k=1} \frac{\Gamma(N_k + \alpha_+/K)}{\Gamma(\alpha_+/K)}. \qquad &\text{(symmetric $\balpha$ setting)}  \label{equation:fmm_z_alpha2}
\end{align}
The Equation ($b$) above follows from the Equation ($a$) since the integral reduces to the normalizing constant of the
Dirichlet distribution proportional to $\prod^K_{k=1} \pi_k^{N_k+\alpha_k-1}$ (see Section~\ref{section:dirichlet-dist}). In Equation ($c$) above we use the standard symmetric $\balpha$ setting where $\alpha_k = \alpha = \alpha_+/K$ for $k=1, \ldots, K$. 
Therefore, we can find an expression for the desired term:
\begin{align}
p(z_i = k | \bznoi, \balpha) &= \frac{p(z_i = k, \bznoi | \alpha)}{p(\bznoi | \alpha)} = \frac{p(\bz |\alpha)}{p(\bznoi | \alpha)} \\
&\propto \frac{\frac{\Gamma(\alpha_+)}{\Gamma(N+\alpha_+)}\frac{\Gamma(N_k+\alpha_k)}{\Gamma(\alpha_k)} \prod^K_{j=1,j\neq k}\frac{\Gamma(N_j+\alpha_j )}{\Gamma(\alpha_j)}}{\frac{\Gamma(\alpha_+)}{\Gamma(N+\alpha_+-1)} \frac{\Gamma(\nknoi+\alpha_k)}{\Gamma(\alpha_k)} \prod^K_{j=1,j\neq k} \frac{\Gamma(N_j+\alpha_j)}{\Gamma(\alpha_j )}}\\
&=\frac{\Gamma(N + \alpha_+ - 1)}{\Gamma(N + \alpha_+)} \frac{\Gamma(N_k + \alpha_k)}{\Gamma(\nknoi + \alpha_k)}\\
&= \frac{\nknoi + \alpha_k}{N + \alpha_+ - 1}  \qquad\qquad\qquad \text{(unsymmetric $\balpha$ setting)}\label{equation:fmm_first_term_derivation}\\
&= \frac{\nknoi + \alpha_+/K}{N + \alpha_+ - 1}.  \qquad\qquad \text{(symmetric $\balpha$ setting)}  \label{equation:fmm_first_term_derivation2}
\end{align}
where we used $\Gamma(x+1) = x\Gamma(x)$ and $\nknoi = N_k - 1$ is the number of observations in cluster $k$ except $\bx_i$. Note that the latter statement $\nknoi = N_k - 1$ is not true in general. But in our case, $\nknoi = N_k - 1$ comes from the fact that we set $z_i=k$ for the numerator in the first equation (i.e. $p(\bz |\alpha)$). In Equation~\eqref{equation:fmm_first_term_derivation2}, we still use standard symmetric $\balpha$ setting, in which case, $\alpha_k=\alpha = \alpha_+/K$ for $k=1, \ldots, K$ for the convenience for the following limiting analysis in Section \ref{sec:infinite_gaussian_mixture}.

\subsubsection{Second term: $p(\bxi | \xknoi, \bbeta)$}\label{sec:fmm_second_term}


The second term in Equation~\eqref{equation:fmm_collabsed_gibbs} can be written as
\begin{equation}
p(\bxi| \xknoi, \bbeta) = \frac{p(\bxi, \xknoi | \bbeta)}{p(\xknoi | \bbeta)} = \frac{p(\mathcal{X}_{k} | \bbeta)}{p(\xknoi | \bbeta)}, 
\label{equation:second_equivalent_form}
\end{equation}
where $\bxi$ is assumed to be assigned to component $k$ in the numerator. 

Same as in the previous section, we can calculate both the numerator and denominator above if we can find an expression for the marginal $p(\mathcal{X}_k | \bbeta)$ with $N_k$ and $N_k - 1$ samples in each observation set, where $N_k$ is the number of samples in cluster $k$. We can easily get this equation by marginalizing out $\bmu_k$ and $\bSigma_k$:
\begin{equation}
\begin{aligned}
p(\mathcal{X}_k|\bbeta) &= \int_{\bmu_k} \int_{\bSigma_k} p(\mathcal{X}_k, \bmu_k, \bSigma_k | \bbeta) d\bmu_k d\bSigma_k \\
	&= \int_{\bmu_k} \int_{\bSigma_k} p(\mathcal{X}_k|\bmu_k, \bSigma_k) p(\bmu_k, \bSigma_k|\bbeta)  d\bmu_k d\bSigma_k,
\end{aligned}
\label{equation:fmm_second_term_marginal}
\end{equation}
which is exactly the posterior marginal likelihood of data in multivariate Gaussian distribution under normal-inverse-Wishart prior.
We realize that the marginalization in Equation~\eqref{equation:fmm_second_term_marginal} is exactly equivalent to the marginalization performed in Equation~\eqref{equation:niw_marginal_data} with appropriate modification in the observation set. Let $N_m = N_k-1$, we obtain  
\begin{equation}
\begin{aligned}
p(\mathcalX_k|\bbeta) 
&= (2\pi)^{-N_{k}D/2} \frac{Z_{\niw}(D, \kappa_{N_{k}}, \nu_{N_{k}}, \bS_{N_{k}})}{Z_{\niw}(D, \kappa_0, \nu_0, \bS_0)} \\
&= \pi^{-\frac{{N_{k}}D}{2}} \cdot\frac{\kappa_0^{D/2}\cdot |\bS_0|^{\nu_0/2}}{\kappa_{N_{k}}^{D/2}\cdot |\bS_{N_{k}}|^{\nu_{N_{k}}/2}} \prod_{d=1}^D \frac{\Gamma(\frac{\nu_{N_{k}}+1-d}{2})}{\Gamma(\frac{\nu_0+1-d}{2})},\\ 
p(\mathcalX_{k,-i}|\bbeta) 
&= (2\pi)^{-N_{m}D/2} \frac{Z_{\niw}(D, \kappa_{N_{m}}, \nu_{N_{m}}, \bS_{N_{m}})}{Z_{\niw}(D, \kappa_0, \nu_0, \bS_0)} \\
&= \pi^{-\frac{{N_{m}}D}{2}} \cdot\frac{\kappa_0^{D/2}\cdot |\bS_0|^{\nu_0/2}}{\kappa_{N_{m}}^{D/2}\cdot |\bS_{N_{m}}|^{\nu_{N_{m}}/2}} \prod_{d=1}^D \frac{\Gamma(\frac{\nu_{N_{m}}+1-d}{2})}{\Gamma(\frac{\nu_0+1-d}{2})}, 
\end{aligned}
\end{equation}
This implies
\begin{equation}\label{equation:finite-second-term}
\begin{aligned}
p(\bxi| \xknoi, \bbeta) & = \frac{p(\mathcal{X}_{k} | \bbeta)}{p(\xknoi | \bbeta)}\\
&=(2\pi)^{-D/2} \frac{Z_{\niw}(D, \kappa_{N_{k}}, \nu_{N_{k}}, \bS_{N_{k}})}{Z_{\niw}(D, \kappa_{N_{m}}, \nu_{N_{m}}, \bS_{N_{m}})}\\
&=\pi^{-\frac{D}{2}} \cdot
\frac{\kappa_{N_{m}}^{D/2}\cdot |\bS_{N_{m}}|^{\nu_{N_{m}}/2}}{\kappa_{N_{k}}^{D/2}\cdot |\bS_{N_{k}}|^{\nu_{N_{k}}/2}} 
\prod_{d=1}^D \frac{\Gamma(\frac{\nu_{N_{k}}+1-d}{2})}{\Gamma(\frac{\nu_{N_{m}}+1-d}{2})}.\\
\end{aligned}
\end{equation}
Again, an alternative form is given by
\begin{equation*}
p(\bxi| \xknoi, \bbeta) = \tau (\bxi |  \bmm_{N_m}, \frac{\kappa_{N_m} + 1}{\kappa_{N_m} (\nu_{N_m} - D + 1)} \bS_{N_m}, \nu_{N_m} - D + 1),
\end{equation*}
where $\bmm_{N_m}, \bS_{N_m},  \nu_{N_m}, \kappa_{N_m}$ can be calculated from Equation~\eqref{equation:niw_posterior_equation_1}.
In fact, we may also notice that Equation~\eqref{equation:second_equivalent_form} is equivalent to the posterior predictive for new data with observations in Equation~\eqref{equation:niw_posterior_predictive_abstract}. The full expression for Equation~\eqref{equation:second_equivalent_form} is thus given by Equation~\eqref{equation:niw_posterior_predictive_equation} with appropriate changes to the observation sets in numerator and denominator.

\IncMargin{1em}
\begin{algorithm}
	\SetKwData{Left}{left}\SetKwData{This}{this}\SetKwData{Up}{up}
	\SetKwFunction{Union}{Union}\SetKwFunction{FindCompress}{FindCompress}
	\SetKwInOut{Input}{input}\SetKwInOut{Output}{output}
	\Input{Choose an initial $\bz$}
	\BlankLine
	
	\For{$T$ iterations}{
		\For{$i \leftarrow 1$ \KwTo $N$}{
			Remove $\bxi$'s statistics from component $z_i$ \;
			\For{$k\leftarrow 1$ \KwTo $K$}{
				Calculate $p(z_i=k| \bznoi, \balpha)$ \;
				Calculate $p(\bxi | \xknoi, \bbeta)$\;
				Calculate $p(z_i = k | \bznoi, \mathcal{X}, \balpha, \bbeta) \propto p(z_i=k| \bznoi, \balpha) p(\bxi | \xknoi, \bbeta)$\;
			}
			Sample $k_{new}$ from $p(z_i | \bznoi, \mathcalX, \balpha, \bbeta)$ after normalizing\;
			Add $\bxi$'s statistics to the component $z_i=k_{new}$ \;
		}
	}
	\caption{Collapsed Gibbs sampler for a finite Gaussian mixture model}\label{algo:fmm_plain_gibbs}
\end{algorithm}\DecMargin{1em}

The pseudo code for collapsed Gibbs sampler for a finite Gaussian mixture model is given by Algorithm \ref{algo:fmm_plain_gibbs}.

\subsection{Get the posterior distribution for every parameter}
We leave this derivation in Bayesian infinite mixture model case, i.e., Section \ref{sec:ifmm_posterior_for_all_parameters}. The derivation in finite and infinite cases are the same.

\subsection{Hyperprior on symmetric Dirichlet distribution}\label{section:hyper_fbgmm_background}

\begin{figure}[h!]
\center
\subfigure[Without hyperprior]{\includegraphics[width=0.33\textwidth]{img_visual/bgmm_finite_without_hyper.pdf} \label{fig:gmm_finite_with_hyper_compare}}
\subfigure[With hyperprior]{\includegraphics[width=0.33\textwidth]{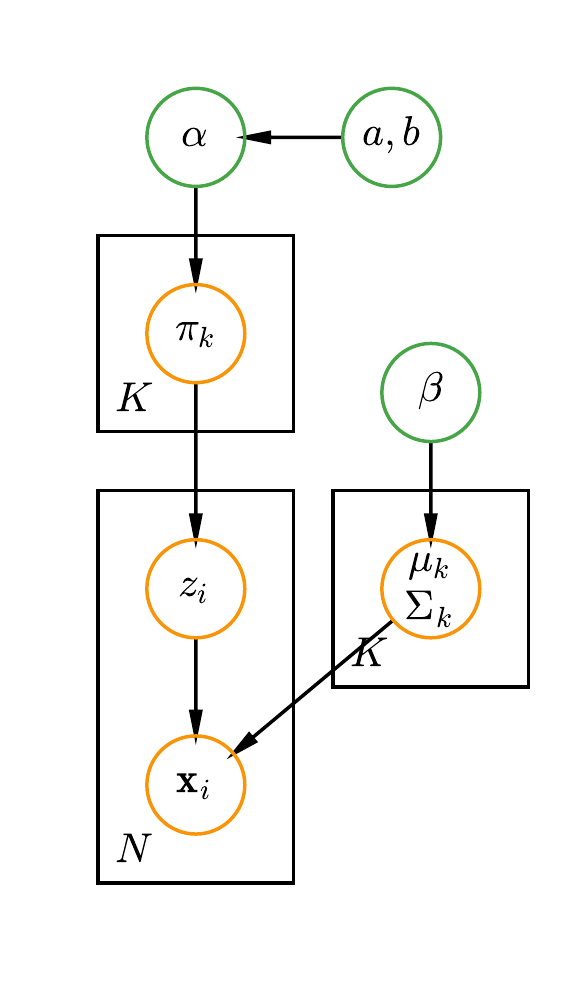} \label{fig:gmm_finite_with_hyper}}
\caption{A Bayesian finite GMM with hyperprior on concentration parameter (compared with non-hyperprior version).}
\label{fig:gmm_finite_with_hyper_allset}
\end{figure}

It has become popular to use over-fitted mixture models in which number of cluster $K$ is chosen as a conservative upper bound on the number of components under the expectation that only relatively few of the components $K^\prime$ will be occupied by data points in the samples $\mathcalX$. This kind of over-fitted mixture models has been successfully due to the ease in computation. 
Previously, \citep{rousseau2011asymptotic} proved that quite generally, the posterior behaviour of overfitted mixtures depends on the chosen prior on the weights, and on the number of free parameters in the emission distributions (here $D$, i.e., the dimension of data). Specifically, they have proved that (a) If $\underline{\alpha}$=min$(\alpha_k, k \leq K)>D/2$ and if the number of components is larger than it should be, asymptotically two or more components in an overfitted mixture model will tend to merge with non-negligible weights. (b) In contrast, if $\overline{\alpha}$=max$(\alpha_k, k \leqslant K)<D/2$, the extra components are emptied at a rate of $N^{-1/2}$. Hence, if none of the components are small, it implies that $K$ is probably not larger than $K_0$. In the intermediate case, if min$(\alpha_k, k \leq K)\leq D/2 \leq$ max$(\alpha_k, k \leqslant K)$, then the situation varies depending on the $\alpha_k$'s and on the difference between $K$ and $K_0$. In particular, in the case where all $\alpha_k$'s are equal to $D/2$, then although the author does not prove definite result, they conjecture that the posterior distribution does not have a stable limit. See also an example conducted in Section~\ref{section:asymptotic}.

As introduced in \citep{rasmussen1999infinite} and further discussed in \citep{gorur2010dirichlet, lu2017hyperprior}, they introduced a hyperprior on symmetric Dirichlet distribution prior. We here put a vague prior of Gamma shape on the concentration parameter $\alpha$ and use the standard symmetric $\balpha$ setting where $\alpha_k = \alpha = \alpha_+/K$ for $k=1, \ldots, K$. The hyperprior is defined by Gamma distribution follows:
\begin{equation*}
\alpha | a, b \sim \gammadist(a, b) \Longrightarrow p(\alpha | a,b) \propto \alpha^{a-1} e^{-b\alpha}. 
\end{equation*}
To get the conditioned posterior distributions on $\alpha$ we need to derive the conditioned posterior distributions on all the other parameters. But for a graphical model, this conditional distribution is a function only of the nodes in the Markov blanket (see Section~\ref{sec:fmm_uncollapsed_gibbs}). In our case, the Bayesian finite Gaussian mixture model, a directed acyclic graphical (DAG) model, the Markov blanket includes the parents ($a, b$), the children ($\pi_k$'s), and the co-parents (none in this case), as shown in Figure \ref{fig:gmm_finite_with_hyper}. From this graphical representation, we can find the Markov blanket for each parameter in the model, and then figure out their conditional posterior distribution to be derived:

\begin{equation}
\begin{aligned}
p(\alpha | \bpi, a, b) &\propto p(\alpha | a,b) p(\bpi | \alpha) \\
					&\propto \alpha^{a-1} e^{-b\alpha} \frac{\Gamma(K \alpha)}{\prod_{k=1}^K \Gamma(\alpha)} \prod_{k=1}^K \pi_k^{\alpha-1} \\
					&= \alpha^{a-1} e^{-b\alpha} (\pi_1\times \pi_2 \times \ldots \times\pi_K)^{\alpha-1} \frac{\Gamma(K\alpha)}{[\Gamma(\alpha)]^K}. 
\end{aligned}
\end{equation}

\begin{svgraybox}
\begin{theorem}\label{theorem:gx_log_concave}
Define the $G$ function:
\begin{equation}
	G(x) = \frac{\Gamma(K x)}{[\Gamma(x)]^K}. 
	\label{equation:gx_log_concave}
\end{equation}
For $x > 0$ and an arbitrary positive integer $K$,  the function $G$ is strictly log-concave. 
\end{theorem}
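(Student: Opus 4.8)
The plan is to take logarithms and reduce the claim to a single inequality for the trigamma function. Writing $\log G(x) = \log\Gamma(Kx) - K\log\Gamma(x)$, I would differentiate twice using the digamma function $\psi = \Gamma'/\Gamma$ and the trigamma function $\psi'$. The first derivative is $K\psi(Kx) - K\psi(x)$, and differentiating once more (with the chain rule on the $Kx$ argument) gives $(\log G)''(x) = K^2\psi'(Kx) - K\psi'(x) = K\bigl(K\psi'(Kx) - \psi'(x)\bigr)$. Since $K>0$, strict log-concavity (i.e. $(\log G)''<0$ on all of $(0,\infty)$) is exactly equivalent to the pointwise bound $K\psi'(Kx) < \psi'(x)$ for every $x>0$, and the rest of the argument is devoted to this inequality.

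For this I would invoke the classical series representation $\psi'(x) = \sum_{n=0}^\infty (x+n)^{-2}$, valid for $x>0$. Rescaling gives $K\psi'(Kx) = K\sum_{n=0}^\infty (Kx+n)^{-2} = \frac{1}{K}\sum_{n=0}^\infty (x+n/K)^{-2}$. The key step is to regroup this sum by writing $n = mK+j$ with $m\ge 0$ and $0\le j\le K-1$, so that $K\psi'(Kx) = \frac{1}{K}\sum_{m=0}^\infty \sum_{j=0}^{K-1}(x+m+j/K)^{-2}$. Comparing each block against the corresponding term of $\psi'(x) = \sum_{m=0}^\infty (x+m)^{-2}$, every summand satisfies $(x+m+j/K)^{-2} \le (x+m)^{-2}$ because $j/K\ge 0$, and this is strict whenever $j\ge 1$. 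Averaging the $K$ terms of each block therefore yields $\frac{1}{K}\sum_{j=0}^{K-1}(x+m+j/K)^{-2} < (x+m)^{-2}$ for every $m$ as soon as $K\ge 2$; summing over $m$ gives $K\psi'(Kx) < \psi'(x)$, which is precisely what is required.

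An alternative I would keep in reserve uses the integral representation $\psi'(x) = \int_0^\infty \frac{t\,e^{-xt}}{1-e^{-t}}\,dt$; after the substitution $s=Kt$ the desired inequality reduces to the pointwise bound $1 - e^{-s} < K\bigl(1 - e^{-s/K}\bigr)$ for $s>0$, which follows from strict concavity of $t\mapsto 1-e^{-t}$ together with its vanishing at the origin. Either route makes the strictness transparent for $K\ge 2$. The only genuine subtlety is the degenerate case $K=1$, where $G\equiv 1$ and $(\log G)''\equiv 0$, so the statement must be read for $K\ge 2$ (the occupied-component regime in which it is applied); I would flag this at the outset. Beyond that, the one point demanding care is the interchange of differentiation with the infinite sum (or integral) defining $\psi'$, which is justified by uniform convergence on compact subsets of $(0,\infty)$, so I do not expect it to be the main obstacle.
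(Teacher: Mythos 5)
Your proof is correct, and it arrives at the paper's key inequality through a genuinely different door. The paper invokes Gauss's multiplication formula $\Gamma(Kx) = (2\pi)^{(1-K)/2}K^{Kx-1/2}\prod_{i=0}^{K-1}\Gamma(x+i/K)$ (cited from Abramowitz--Stegun), so that after two logarithmic differentiations the second derivative of $\log G$ appears as $\sum_{i=0}^{K-1}\Psi'(x+\tfrac{i}{K}) - K\Psi'(x)$, and negativity follows from the strict monotone decrease of the trigamma function $\Psi'$. You instead differentiate $\log\Gamma(Kx)-K\log\Gamma(x)$ directly, obtaining $(\log G)''(x)=K^2\Psi'(Kx)-K\Psi'(x)$, and reduce everything to the bound $K\Psi'(Kx)<\Psi'(x)$, which you prove by rescaling the series $\Psi'(x)=\sum_{n\ge 0}(x+n)^{-2}$ and regrouping indices as $n=mK+j$. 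Observe that your regrouping identity $K^2\Psi'(Kx)=\sum_{j=0}^{K-1}\Psi'(x+\tfrac{j}{K})$ is exactly the second logarithmic derivative of the multiplication formula, so both proofs ultimately compare the same two quantities and close with the same monotonicity argument; what your route buys is self-containedness, since you never need the multiplication theorem as a black box but only the elementary series representation of $\Psi'$. One further point in your favor: you correctly flag that for $K=1$ one has $G\equiv 1$, hence $(\log G)''\equiv 0$ and strict log-concavity fails; the theorem as stated (``arbitrary positive integer $K$'') and the paper's proof (which asserts a strict inequality for all $K$) both silently break down in this degenerate case, and your restriction to $K\ge 2$ is the correct reading --- it is also the only case that matters in the application, where $K$ is the number of mixture components.
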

\end{svgraybox}

\begin{proof}[of Theorem~\ref{theorem:gx_log_concave}]
Follow from \citep{abramowitz1966handbook} we obtain 
$$
\Gamma(Kx) = (2\pi)^{\frac{1}{2}(1-K)} K^{K x-\frac{1}{2} } \prod_{i=0}^{K-1}\Gamma(x + \frac{i}{K}).
$$  
Then, taking log, we have
\begin{equation}
\log G(x) = \frac{1}{2}(1-K) \log(2\pi) + (Kx - \frac{1}{2})\log K + \sum_{i=0}^{K-1} \log \Gamma(x + \frac{i}{K}) - K \log \Gamma(x),
\end{equation}
and the derivative 
\begin{equation}
[\log G(x)]^\prime =K \log K + \sum_{i=0}^{K-1} \Psi(x + \frac{i}{K}) - K \Psi(x),
\end{equation}
where $\Psi(x)$ is the Digamma function, and its derivative is given by
\begin{equation}
\Psi^\prime(x)  = \sum_{h=0}^\infty \frac{1}{(x+h)^2}. 
\label{equation:fbgmm_hyperprior_digamma_derivative}
\end{equation}
Therefore, the second derivative of $\log G(x)$ is given by
\begin{equation}
[\log G(x)]^{\prime \prime} = \left[\sum_{i=0}^{K-1} \Psi^\prime(x + \frac{i}{K}) \right]- K \Psi^\prime(x) < 0, \quad (x>0). 
\end{equation}
The last inequality comes from Equation~\eqref{equation:fbgmm_hyperprior_digamma_derivative} and concludes the theorem. 
\end{proof}
Note that the theorem above is a general case of Theorem 1 in \citep{merkle1997log}. See Appendix~\ref{appendix:revisit} of \citep{lu2017revisit} for further discussion on the convexity of ratio of Gamma functions.

\begin{svgraybox}
\begin{theorem}\label{theorem:finite-GMM-hyper-alpha-logconvex}
	In $p(\alpha | \bpi, a, b)$, when $a \geq 1$, $p(\alpha | \bpi, a, b)$ is log-concave.
\end{theorem}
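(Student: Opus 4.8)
The plan is to reduce the claim directly to the log-concavity result already established in Theorem~\ref{theorem:gx_log_concave}. Since log-concavity of a positive density is equivalent to the logarithm of the density having a nonpositive (here strictly negative) second derivative on $(0,\infty)$, I would start from the unnormalized posterior derived immediately above the statement,
\begin{equation}
p(\alpha | \bpi, a, b) \propto \alpha^{a-1} e^{-b\alpha} \left(\prod_{k=1}^K \pi_k\right)^{\alpha-1} \frac{\Gamma(K\alpha)}{[\Gamma(\alpha)]^K},
\end{equation}
and take logarithms to obtain, up to an additive constant,
\begin{equation}
\log p(\alpha | \bpi, a, b) = (a-1)\log \alpha - b\alpha + (\alpha-1)\sum_{k=1}^K \log \pi_k + \log G(\alpha) + \mathrm{const},
\end{equation}
where $G(\alpha) = \Gamma(K\alpha)/[\Gamma(\alpha)]^K$ is exactly the function treated in Theorem~\ref{theorem:gx_log_concave}.

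The next step is to differentiate twice in $\alpha$, term by term. The two terms that are linear in $\alpha$, namely $-b\alpha$ and $(\alpha-1)\sum_k \log \pi_k$, drop out of the second derivative. The remaining contributions come from $(a-1)\log\alpha$, whose second derivative is $-(a-1)/\alpha^2$, and from $\log G(\alpha)$, so that
\begin{equation}
[\log p(\alpha | \bpi, a, b)]'' = -\frac{a-1}{\alpha^2} + [\log G(\alpha)]''.
\end{equation}

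Finally I would assemble the two sign facts. By Theorem~\ref{theorem:gx_log_concave} the function $\log G$ is strictly concave, so $[\log G(\alpha)]'' < 0$ for every $\alpha > 0$; and under the hypothesis $a \geq 1$ the term $-(a-1)/\alpha^2$ is nonpositive for every $\alpha > 0$. The sum of a nonpositive quantity and a strictly negative one is strictly negative, which gives $[\log p]'' < 0$ and hence strict log-concavity. There is no real obstacle here beyond this bookkeeping, since the analytic difficulty was entirely absorbed into Theorem~\ref{theorem:gx_log_concave}; the only point worth flagging is that the assumption $a \geq 1$ is exactly what is needed, because for $a < 1$ the contribution $(1-a)/\alpha^2$ is positive and diverges as $\alpha \to 0^+$, so it could overwhelm the strict concavity of $\log G$ near the origin and break the conclusion.
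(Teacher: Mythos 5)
Your proposal is correct and is essentially the paper's own argument: the paper also splits the unnormalized posterior into the elementary factor $\alpha^{a-1}e^{-b\alpha}(\pi_1\cdots\pi_K)^{\alpha-1}$ (log-concave when $a\geq 1$) and the ratio $G(\alpha)=\Gamma(K\alpha)/[\Gamma(\alpha)]^K$ handled by Theorem~\ref{theorem:gx_log_concave}, then invokes closure of log-concavity under products, which is exactly your additivity of second derivatives of the logarithm. Your explicit second-derivative bookkeeping and the remark on why $a\geq 1$ is needed are fine refinements, but the decomposition and the key lemma are the same.
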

\end{svgraybox}

\begin{proof}[of Theorem~\ref{theorem:finite-GMM-hyper-alpha-logconvex}]
It is easy to verify that $ \alpha^{a-1} e^{-b\alpha} (\pi_1 \ldots \pi_K)^{\alpha-1} $ is log-concave when $a \geq 1$. In view of that the product of two log-concave functions is log-concave and Theorem \ref{theorem:gx_log_concave}, it follows that $\frac{\Gamma(K\alpha)}{[\Gamma(\alpha)]^K}$ is log-concave. This concludes the proof.
\end{proof}

The conditional posterior for $\alpha$ depends only on the weight of each cluster. The distribution $p(\alpha | \bpi, a, b)$ is log-concave when $a\geq 1$, so we may efficiently generate independent samples from this distribution using Adaptive Rejection Sampling (ARS) technique, see Section~\ref{section:ars-sampling} and \citep{gilks1992adaptive} for more details on ARS.

Although the proposed hyperprior on Dirichlet distribution prior for mixture model  is generic, we focus on its application in Gaussian mixture models for concreteness. We develop a collapsed Gibbs sampling algorithm based on \citep{neal2000markov} for posterior computation.

Again, let $\mathcalX$ be the data observations, assumed to follow a mixture of multivariate Gaussian distributions. We use a conjugate normal-inverse-Wishart (NIW) prior $p(\bmu, \bSigma | \bbeta)$ for the mean vector $\bmu$ and covariance matrix $\bSigma$ in each multivariate Gaussian component, where $\bbeta$ consists of all the hyperparameters in NIW. A key quantity in a collapsed Gibbs sampler is the probability of each customer $i$ sitting with table $k$: $p(z_i = k | \bznoi, \mathcalX, \alpha, \bbeta)$, where $\bznoi$ are the seating assignments of all the other customers and $\alpha$ is the concentration parameter in Dirichlet distribution (a symmetric one, i.e., $\balpha=[\alpha, \alpha, \ldots, \alpha]$). The derivation is exactly the same as that in Section~\ref{sec:fmm_collabsed_gibbs}, except that we are now using symmetric Dirichlet distribution and the concentration parameter is now becoming a scalar: $\balpha \rightarrow \alpha$. This probability is calculated as follows:
\begin{equation} 
	\begin{aligned}
		p(z_i = k| \bznoi, \mathcal{X} , \alpha, \bbeta)  & \varpropto p(z_i = k | \bznoi, \alpha, \cancel{\bbeta})  p(\mathcal{X} |z_i = k, \bznoi, \cancel{\alpha}, \bbeta) \\
		& = p(z_i = k| \bznoi, \alpha) p(\bxi |\mathcal{X}_{-i}, z_i = k, \bznoi, \bbeta) p(\mathcal{X}_{-i} |\cancel{z_i = k}, \bznoi, \bbeta)\\
		& \varpropto p(z_i = k| \bznoi, \alpha) p(\bxi|\mathcal{X}_{-i}, z_i = k, \bznoi, \bbeta) \\
		& \varpropto p(z_i = k| \bznoi, \alpha) p(\bxi | \xknoi, \bbeta), 										
	\end{aligned}
	\label{equation:sdir_fmm_collabsed_gibbs}
\end{equation}  
where $\xknoi$ are the observations in table $k$ excluding the $i^{th}$ observation. Algorithm~\ref{algo:sdir_fmm_plain_gibbs} gives the pseudo code of the collapsed Gibbs sampler to implement hyperprior for Dirichlet distribution prior in Gaussian mixture models. Note that ARS may require even 10-20 times the computational effort per iteration over sampling once from a gamma density and there is the issue of mixing being worse if we don’t marginalize out the $\pi$ in updating $\alpha$.  So this might have a very large impact on effective sample size (ESS) of the Markov chain. Hence, marginalizing out $\bpi$ and using an approximation to the conditional distribution (perhaps with correction through an accept/reject step via usual Metropolis-Hastings or even just using importance weighting without the accept/reject) or even just a Metropolis-Hastings normal random walk for $\log(\alpha)$ may be much more efficient than ARS in practice. We here only introduce the update by ARS.

\IncMargin{1em}
\begin{algorithm}
	\SetKwData{Left}{left}\SetKwData{This}{this}\SetKwData{Up}{up}
	\SetKwFunction{Union}{Union}\SetKwFunction{FindCompress}{FindCompress}
	\SetKwInOut{Input}{input}\SetKwInOut{Output}{output}
	\Input{Choose an initial $\bz$, $\alpha$ and $\bbeta$;}
	\BlankLine
	
	\For{$T$ iterations}{
		\For{$i \leftarrow 1$ \KwTo $N$}{
			Remove $\bxi$'s statistics from component $z_i$ \;
			\For{$k\leftarrow 1$ \KwTo $K$}{
				Calculate $p(z_i=k| \bznoi, \alpha)$ \;
				Calculate $p(\bxi | \xknoi, \bbeta)$\;
				Calculate $p(z_i = k | \bznoi, \mathcal{X}, \alpha, \bbeta) \propto p(z_i=k| \bznoi, \alpha) p(\bxi | \xknoi, \bbeta)$\;
			}
			Sample $k_{new}$ from $p(z_i | \bznoi, \mathcalX, \alpha, \bbeta)$ after normalizing\;
			Add $\bxi$'s statistics to the component $z_i=k_{new}$ \;
		}
		$\star$ Draw current weight variable $\bpi = \{\pi_1, \pi_2, \ldots, \pi_K\}$ \;
		$\star$ Update $\alpha$ using ARS\;
	}
	\caption{Collapsed Gibbs sampler for a finite Gaussian mixture model with hyperprior on Dirichlet distribution}\label{algo:sdir_fmm_plain_gibbs}
\end{algorithm}\DecMargin{1em}

\begin{figure}[!ht]
\centering
\includegraphics[width=0.95\textwidth]{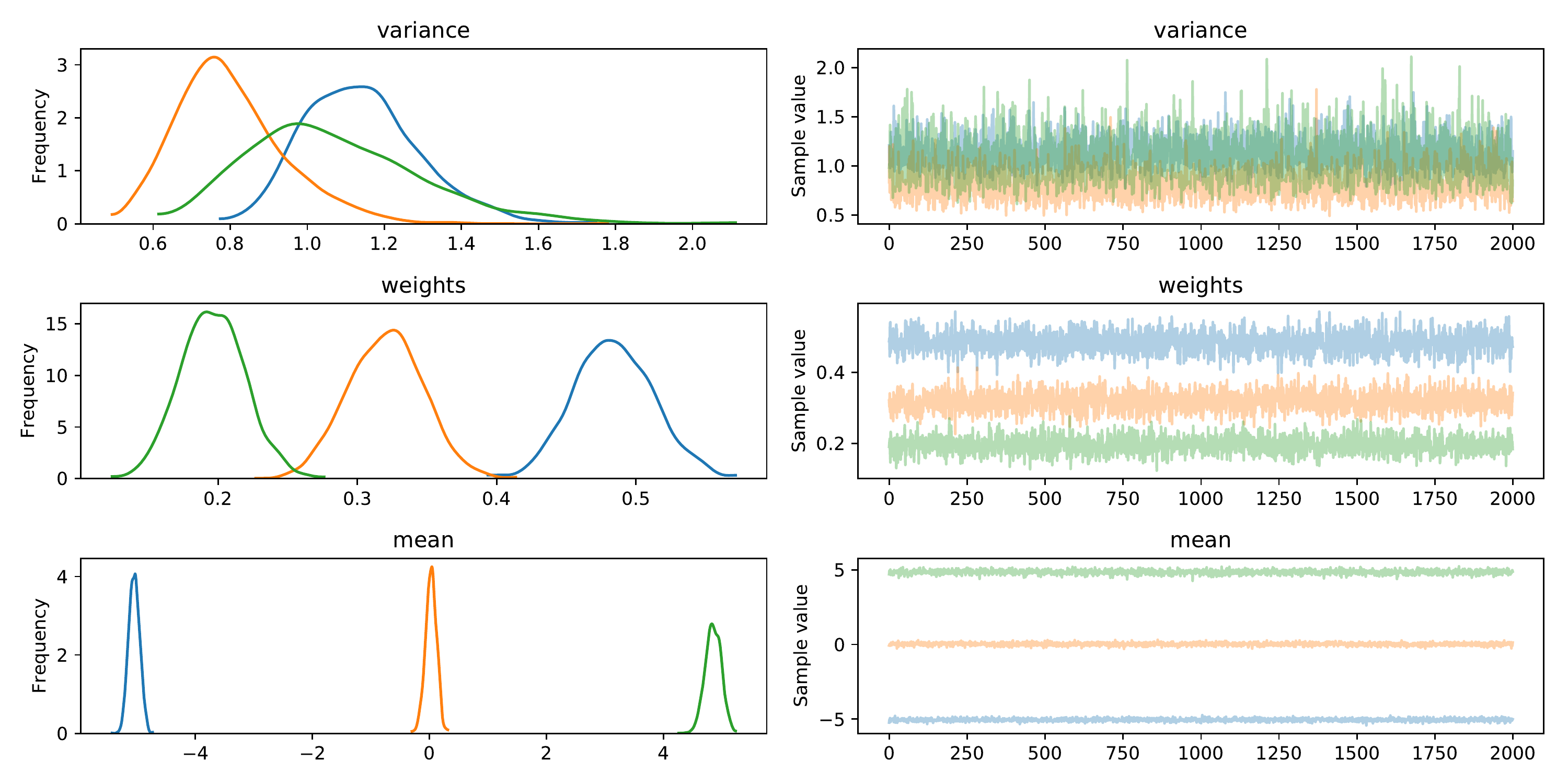}
\qquad
\begin{tabular}[b]{cccccc}\hline
	Sim 1 & \begin{tabular}[c]{@{}c@{}}NMI\\ (SE)\end{tabular}       & \begin{tabular}[c]{@{}c@{}}VI\\ (SE)\end{tabular}         & \begin{tabular}[c]{@{}c@{}}$\overline{K}$\\ (SE)\end{tabular} & \begin{tabular}[c]{@{}c@{}}$\overline{\alpha}$\\ (SE)\end{tabular} & $\overline{\pi}_{-}$ \\ \hline
	$K=3$ & \begin{tabular}[c]{@{}c@{}}0.931\\ (8.5e-5)\end{tabular} & \begin{tabular}[c]{@{}c@{}}0.203\\ (2.5e-4)\end{tabular}  & \begin{tabular}[c]{@{}c@{}}3.0\\ (0.0)\end{tabular}           & \begin{tabular}[c]{@{}c@{}}1.84\\ (5.4e-3)\end{tabular}           & 0.0                  \\
	$K=4$ & \begin{tabular}[c]{@{}c@{}}0.869\\ (2.0e-4)\end{tabular} & \begin{tabular}[c]{@{}c@{}}0.437\\ (7.7e-4)\end{tabular} & \begin{tabular}[c]{@{}c@{}}3.842\\ (2.6e-3)\end{tabular}      & \begin{tabular}[c]{@{}c@{}}1.43\\ (5.2e-3)\end{tabular}           & 0.08                 \\
	$K=5$ & \begin{tabular}[c]{@{}c@{}}0.843\\ (2.7e-4)\end{tabular} & \begin{tabular}[c]{@{}c@{}}0.560\\ (11.1e-4)\end{tabular} & \begin{tabular}[c]{@{}c@{}}4.508\\ (3.2e-3)\end{tabular}      & \begin{tabular}[c]{@{}c@{}}1.01\\ (4.5e-3)\end{tabular}           & 0.12                 \\
	$K=6$ & \begin{tabular}[c]{@{}c@{}}0.846\\ (3.3e-4)\end{tabular} & \begin{tabular}[c]{@{}c@{}}0.564\\ ( 14.9e-4)\end{tabular} & \begin{tabular}[c]{@{}c@{}}4.703\\ (5.2e-3)\end{tabular}      & \begin{tabular}[c]{@{}c@{}}0.618\\ (3.9e-3)\end{tabular}           & 0.11                
	\label{table:hyperdirichlet_sim1_posterior_summary}
\end{tabular}
\caption{\textbf{Upper:} An example of traceplot of Gibbs sampling using hyperprior on Dirichlet distribution for variances, weights and means when $K=3$ in Sim 1 (Upper one: variance; Middle one: weights; Bottom one: mean). Where we find the MAP estimates of the mean and weights parameters are very close to the true ones. There is a bias in the variance estimates although they are in the tolerable error range. \textbf{Bottom:} Summary of posterior distribution in Sim 1: NMI is the normalized mutual information between true clustering and the resulting clustering. VI is the variation of information between true clustering and resulting clustering. SE is the standard error of mean.  $\overline{K}$ is the average occupied number of cluster, $\overline{\alpha}$ is the average $\alpha$ during sampling, $\overline{\pi}_{-}$ is the average of extra weight.}
\label{fig:hyperdirichlet_sim1_traceplot_and_table}
\end{figure}

\begin{figure}[!h]
	\centering     
	\subfigure[$K$=3]{\label{fig:sim1_alpha_k3}\includegraphics[width=0.4\textwidth]{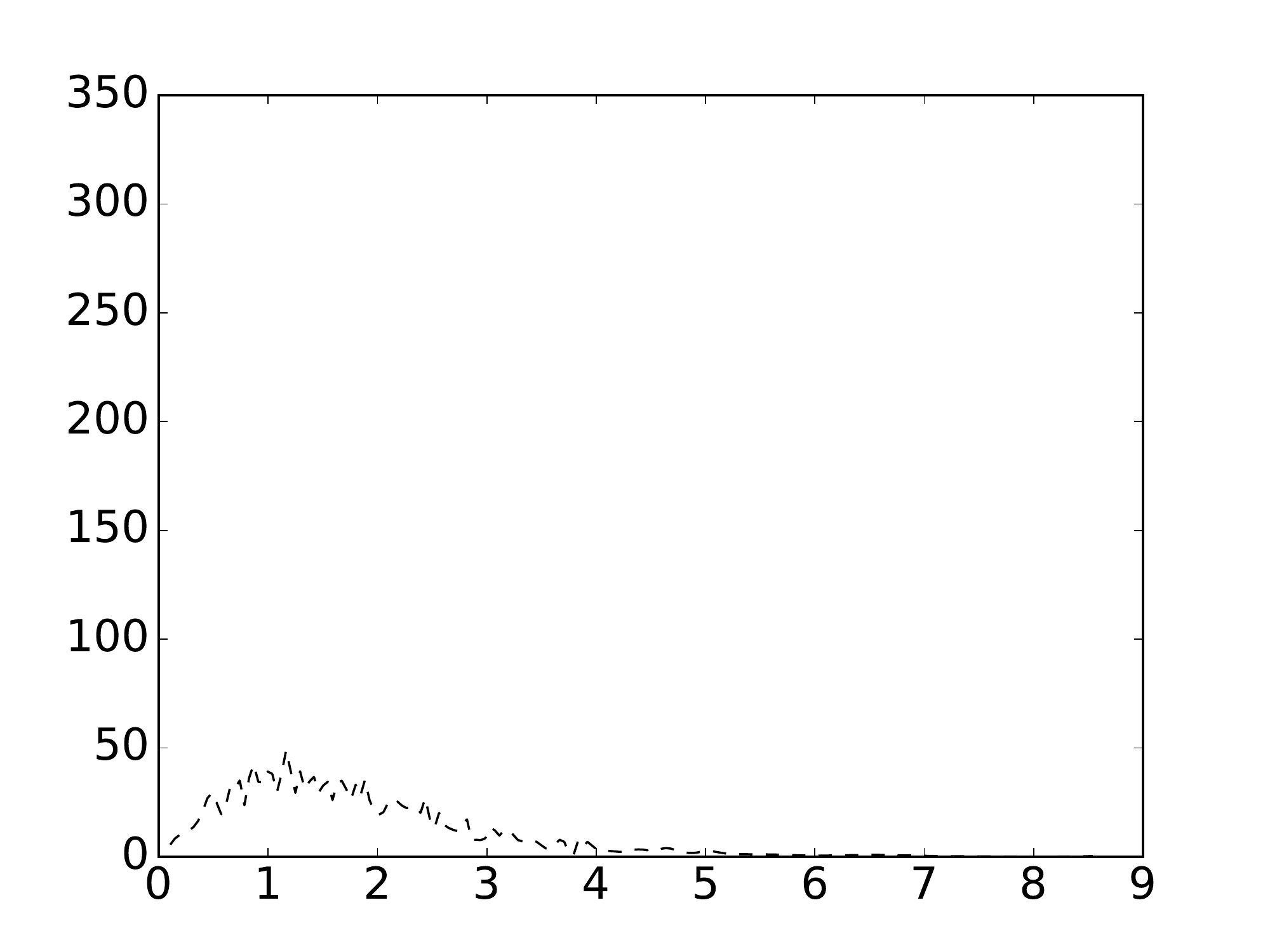}}
	\subfigure[$K$=4]{\label{fig:sim1_alpha_k4}\includegraphics[width=0.4\textwidth]{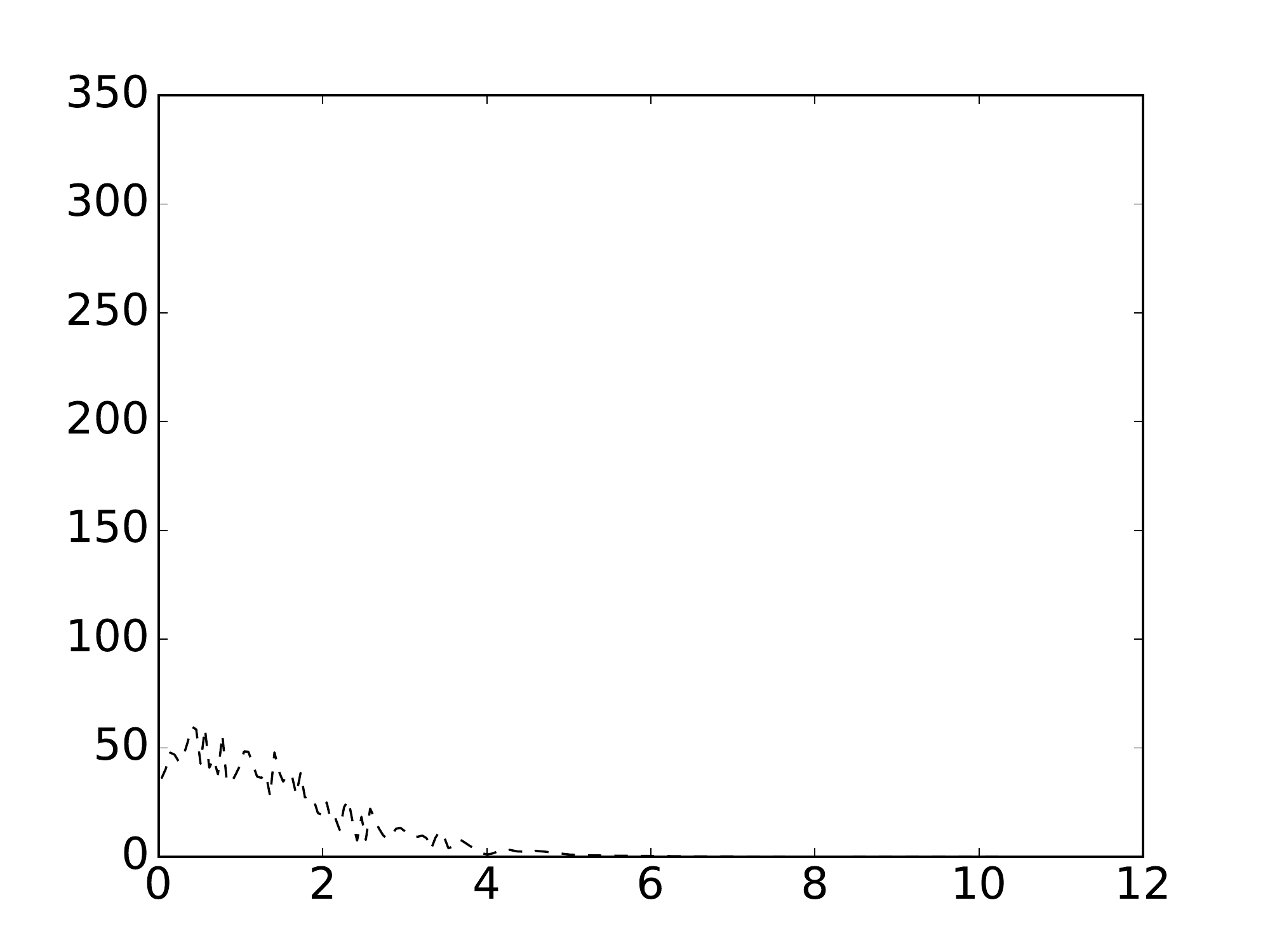}}
	\subfigure[$K$=5]{\label{fig:sim1_alpha_k5}\includegraphics[width=0.4\textwidth]{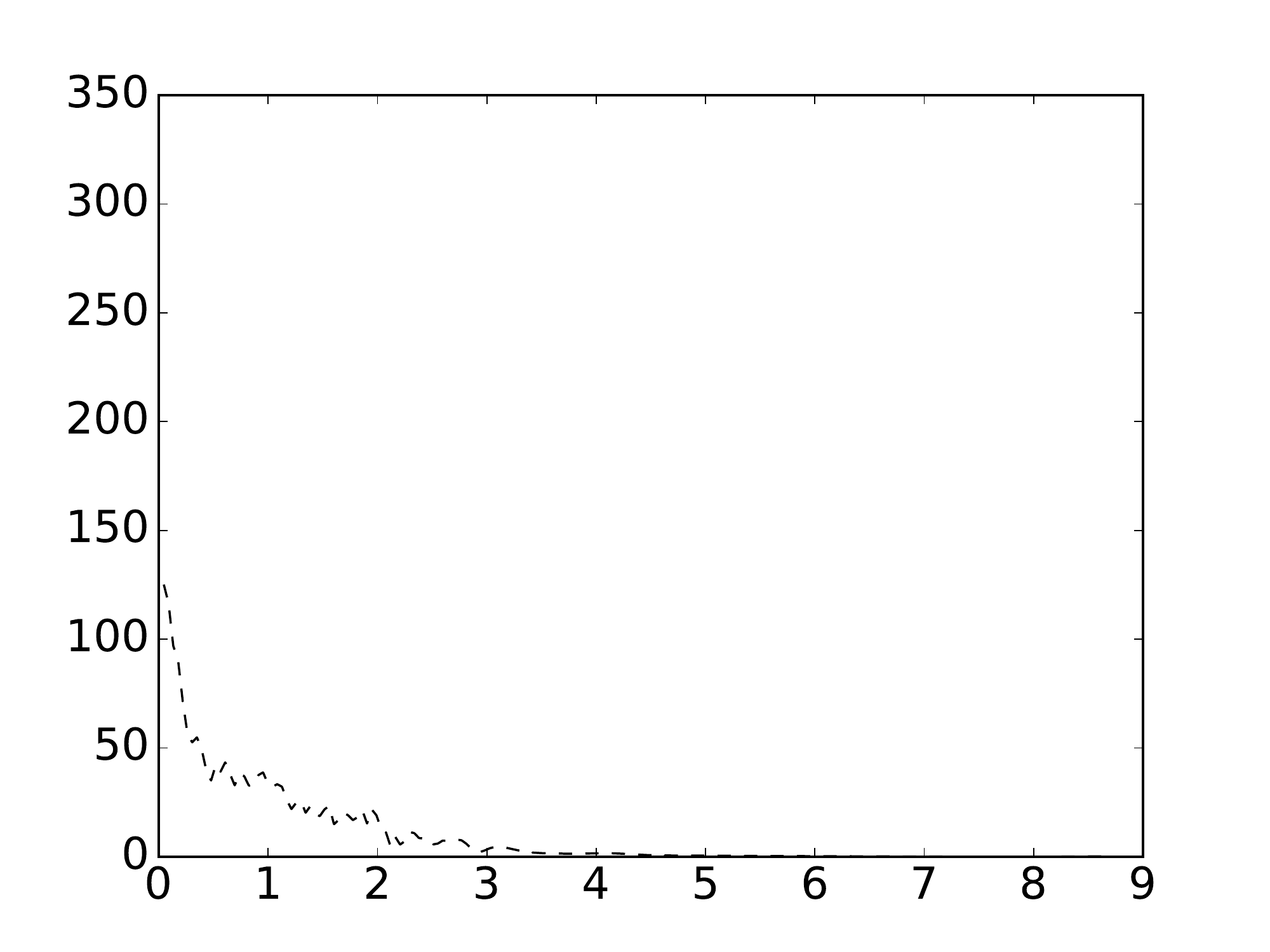}}
	\subfigure[$K$=6]{\label{fig:sim1_alpha_k6}\includegraphics[width=0.4\textwidth]{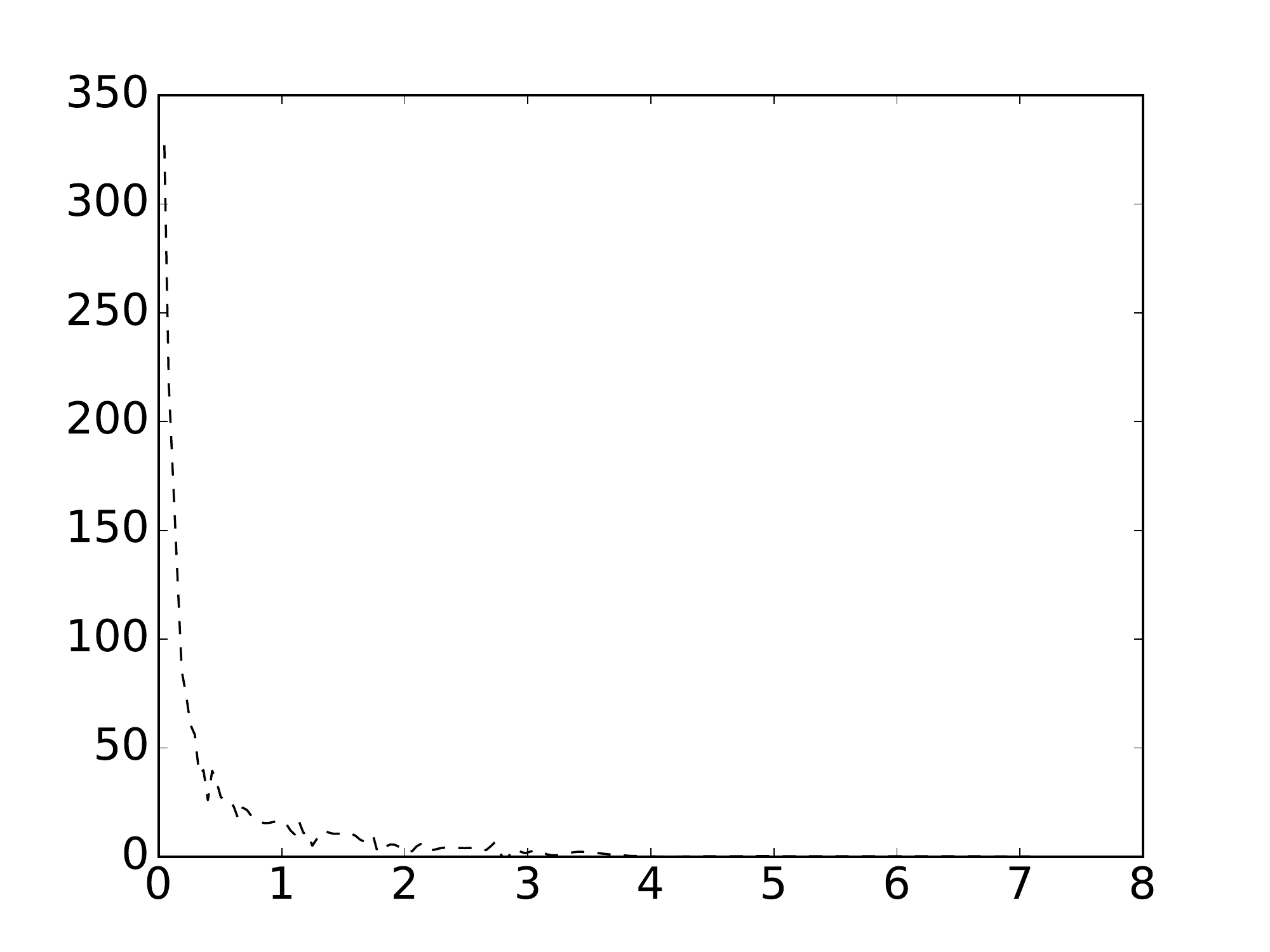}}
	\caption{Posterior distribution for $\alpha$ in different overfitting settings in Sim 1.}
	\label{fig:hyperdirichlet_sim1_alpha_several_k}
\end{figure}

In the following experiments we evaluate the effect of a hyperprior on symmetric Dirichlet prior in finite Bayesian mixture model. See also \citep{lu2017hyperprior}. Some metrics such as normalized mutual information, variation of information are used to evaluate the results where the metrics are discussed later in Section~\ref{section:some-metrics}. Feel free to skip this section for a first reading.

\subsubsection{Synthetic simulation example}

The parameters of the simulations are as follows, where $K_0$ is the true cluster number. And we use $K$ to indicate the cluster number we used in the test:

Sim 1: $K_0=3$, with $N$=300, $\bm{\pi}$=\{0.5, 0.3, 0.2\}, $\bm{\mu}$=\{-5, 0, 5\} and $\bm{\Sigma}$=\{1, 1, 1\};

In the test we put $\alpha \sim \gammadist(1, 1)$ as the hyperprior. Figure~\ref{fig:hyperdirichlet_sim1_traceplot_and_table} shows the result on Sim 1 with different sets of $K$. Figure~\ref{fig:hyperdirichlet_sim1_alpha_several_k} shows the posterior density of $\alpha$ in each set of $K$. We can find that the larger $K-K_0$, the smaller the poserior mean of $\alpha$. This is what we expect, as the larger overfitting, the smaller $\alpha$ will shrink the weight vector in the edge of a probability simplex.

\subsubsection{Conclusion}
We have proposed a new hyperprior on symmetric Dirichlet distribution in finite Bayesian mixture model. This hyperprior can learn the concentration parameter in Dirichlet prior due to over-fitting of the mixture model. The larger the overfitting (i.e., $K-K_0$ is larger, more overfitting), the smaller the concentration parameter. 

Although \citep{rousseau2011asymptotic} proved that $\overline{\alpha}$=max$(\alpha_k, k \leq K)<D/2$ where $D$ is the number of free parameters in the emission distributions which can be simply noted as the dimension, the extra components are emptied at a rate of $N^{-1/2}$ (see discussion in the next section), it is still risky to use such small $\alpha$ in practice, for example, how much do we overfit (i.e., how large the $K-K_0$). If $K-K_0$ is small, we will get very poor mixing from MCMC. Some efforts has been done further by \citep{van2015overfitting}. But simple hyperprior on Dirichlet distribution will somewhat release the burden.

\subsection{Theoretical properties in finite mixture models}

Let $\pi_{(1)} \geq \pi_{(2)} \ldots \geq \pi_{(K)}$ is the ordered sequence of $\pi_{1}, \pi_{2}, \ldots, \pi_{K}$.

\subsubsection{Asymptotic behaviour of the posterior distribution on the weights}\label{section:asymptotic}
\citep{rousseau2011asymptotic} proved that quite generally, the posterior behaviour of overfitted mixtures depends on the chosen prior on the weights, and on the number of free parameters in the emission distributions (here $D$). (a) If $\underline{\alpha}$=min$(\alpha_k, k \leq K)>D/2$ and if the number of components is larger than it should be, asymptotically two or more components in an overfitted mixture model will tend to merge with non-negligible weights. (b) In contrast, if $\overline{\alpha}$=max$(\alpha_k, k \leq  K)<D/2$, the extra components are emptied at a rate of $N^{-1/2}$. Hence, if none of the components are small, it implies that $K$ is probably not larger than $K_0$. In the intermediate case, if min$(\alpha_k, k \leq K)\leq D/2 \leq$ max$(\alpha_k, k \leqslant K)$, then the situation varies depending on the $\alpha_k$'s and on the difference between $K$ and $K_0$. In particular, in the case where all $\alpha_k$'s are equal to $D/2$, then although the author does not prove definite result, they conjecture that the posterior distribution does not have a stable limit. Formally, the author proved the following theorem:
\begin{svgraybox}
\begin{theorem}
	Under the assumption 1-5 in \citep{rousseau2011asymptotic} that the posterior distribution satisfies, let $\mathcal{S}_k$ be the set of permutations of $\{1, \ldots, K\}$, $\overline{\alpha}$=max$(\alpha_k, k \leq K)$, $\underline{\alpha}$=min$(\alpha_k, k \leq K)$. 
	
	(a) If $ \underline{\alpha} > D/2$, set $\rho^\prime=\{D K_0 + K_0-1 + D(D-K_0)/2 \}/ (\overline{\alpha} - D/2)(K-K_0)$; then
	\begin{equation}
		\lim_{\epsilon \rightarrow 0} \lim \sup_N(\E_0^N[  P^{\bpi}\{\min_{\Sigma \in \mathcal{S}_k} (\sum_{i=K_0+1}^K  \pi_{\Sigma(i)} ) < \epsilon \mathrm{log} (N)^{-q(1+\rho^\prime)}   | \mathcal{X} \}   ]) = 0.
	\end{equation}
	
	(b) If $ \overline{\alpha} < D/2$, set $\rho=\{D K_0 + K_0-1 + \overline{\alpha}  (K-K_0) \}/ (D/2 - \overline{\alpha})$; then
	\begin{equation}
		\lim_{M \rightarrow \infty} \lim \sup_N(\E_0^N[  P^{\bpi}\{\min_{\Sigma \in \mathcal{S}_k} (\sum_{i=K_0+1}^K  \pi_{\Sigma(i)} ) > Mn^{-1/2}\log(n)^{q(1+\rho)}     | \mathcal{X} \}   ]) = 0.
	\end{equation}
	where $P^{\bpi}(\cdot |\mathcal{X})$ is the posterior distribution.
\end{theorem}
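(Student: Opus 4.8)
The plan is to control the posterior expectation of the tail event on the extra weights by passing through the integrated (marginal) likelihood and exploiting a precise trade-off between the Dirichlet weight penalty and the number $D$ of free emission parameters. Write the fitted density as $g_{\btheta} = \sum_{k=1}^K \pi_k \Phi(\cdot \mid \bgamma_k)$ and the truth as $g_0 = \sum_{k=1}^{K_0} \pi_k^0 \Phi(\cdot \mid \bgamma_k^0)$ with $K > K_0$. First I would invoke a posterior concentration result of Schwartz / Ghosal--Ghosh--van der Vaart type: under Assumptions 1--5 the posterior concentrates on Hellinger (equivalently $L_1$) neighborhoods $\{ h(g_{\btheta}, g_0) \leq \epsilon_N \}$ with $\epsilon_N \to 0$, so that with $\E_0^N$-probability tending to one it suffices to analyze $\btheta$ confined to such a shrinking neighborhood. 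This reduction requires verifying the usual prior-mass and entropy/testing conditions, which follow from the smoothness and identifiability hypotheses placed on $\Phi$ in the assumptions.

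Next I would use the structural consequence of closeness to $g_0$: when $g_{\btheta}$ is $L_1$-close to a well-separated $K_0$-component truth, the $K$ fitted atoms organize, up to a permutation $\Sigma \in \mathcal{S}_k$, into $K_0$ groups, each group either carrying a weight near some $\pi_k^0$ with parameter near $\bgamma_k^0$, or consisting of ``extra'' atoms whose weight is small or whose parameter has merged into one of the true locations. Parametrizing these deviations and performing a second-order Taylor expansion of $\log(g_{\btheta}/g_0)$, the leading contribution of an extra atom of weight $\pi$ and parameter perturbation $\bd$ enters the log-likelihood through the product $\pi\,\bd$, and the quadratic form in $\pi\bd$ is what drives the subsequent integration.

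The crux is the resulting scaling. Integrating the emission parameter of an extra atom over a $D$-dimensional region where the likelihood is essentially flat produces a volume (Jacobian) factor of order $\pi^{-D/2}$, because the effective precision that atom contributes is proportional to $\pi$; against this, the Dirichlet prior contributes $\pi^{\alpha_k - 1}$, i.e. mass $\sim \pi^{\alpha_k}$ for a weight of size $\pi$. The integrated likelihood as a function of the extra weight is therefore governed by $\pi^{\alpha_k - D/2}$. In case (a), $\underline{\alpha} > D/2$ makes this exponent positive, so sending an extra weight to zero is penalized and the posterior instead keeps the weights bounded away from $0$ while merging parameters; converting this into the stated lower bound gives the factor $\epsilon\,\log(N)^{-q(1+\rho')}$ on $\min_{\Sigma}\sum_{i=K_0+1}^K \pi_{\Sigma(i)}$, with $\rho'$ emerging from balancing the $DK_0 + K_0 - 1 + D(D-K_0)/2$ free parameters against the rate $(\overline{\alpha}-D/2)(K-K_0)$. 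In case (b), $\overline{\alpha} < D/2$ reverses the sign, small extra weights are favoured, and the integrated likelihood forces $\sum_{i=K_0+1}^K \pi_{\Sigma(i)} = O(n^{-1/2})$ up to the logarithmic correction $\log(n)^{q(1+\rho)}$, the $n^{-1/2}$ being the parametric rate at which a genuinely-empty component is shed.

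The hard part will be making this informal volume/penalty scaling rigorous \emph{uniformly} over the shrinking neighborhood: the Taylor expansion must be controlled when several atoms simultaneously have small weights and nearly coincident parameters, precisely the regime in which the Fisher information degenerates and the naive Laplace approximation fails. Handling this degeneracy---through a careful stratification of the parameter space according to which atoms merge with which, uniform remainder bounds on each stratum, and the bookkeeping over $\mathcal{S}_k$ needed to quotient out label-switching non-identifiability---is where essentially all of the work lies, and I would follow the stratified reparametrization of \citep{rousseau2011asymptotic} rather than attempt a shortcut.
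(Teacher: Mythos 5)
You should first be aware that the paper itself contains no proof of this theorem: it is quoted directly from \citep{rousseau2011asymptotic} (``the author proved the following theorem''), and the survey only illustrates it afterwards with a small simulation. So there is no in-paper argument to compare against; the only meaningful comparison is with the original proof of Rousseau and Mengersen, which is evidently also the proof you are reconstructing.

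Judged on that basis, your sketch identifies the correct architecture: (i) a Ghosal--Ghosh--van der Vaart-type posterior concentration step reducing attention to an $L_1$ (Hellinger) neighborhood of $f_0$; (ii) a stratification of that neighborhood, up to permutations in $\mathcal{S}_k$, into configurations where extra components either empty or merge; and (iii) the prior-mass-versus-volume trade-off in which the Dirichlet factor $\pi^{\alpha_k-1}$ competes with the $D$-dimensional volume of the weakly identified emission parameter, so that the sign of $\alpha_k - D/2$ decides whether emptying (case (b)) or merging (case (a)) dominates the posterior. This is the right mechanism and the right sign in both cases. However, as a proof the proposal has a genuine gap, and you name it yourself: every quantitatively hard step --- the uniform second-order expansions on strata where several atoms have simultaneously small weights and nearly coincident parameters (degenerate Fisher information), the verification of the entropy/testing and prior-mass conditions under Assumptions 1--5, and the bookkeeping that produces the specific exponents $\rho$, $\rho^\prime$ and the $\log(N)^{q(1+\cdot)}$ factors --- is deferred to the stratified reparametrization of \citep{rousseau2011asymptotic} rather than carried out. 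In particular, the constants $D K_0 + K_0 - 1 + D(D-K_0)/2$ and the denominators $(\overline{\alpha}-D/2)(K-K_0)$, $(D/2-\overline{\alpha})$ are asserted to ``emerge from balancing'' without any computation that would actually produce them. So what you have is a faithful and correctly oriented plan of the original proof, not a proof; it would need the deferred uniformity and Laplace-type estimates filled in before the two displayed limits could be established.
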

\end{svgraybox}

We can simply test this theorem by the following simulated data set and set $K=7$ in our test:
$K_0=3$, with $N=50$, $N=100$ or $N=300$, $\bm{\pi}$=\{0.35, 0.4, 0.25\}, $\bm{\mu}$=\{0, 2, 5\} and $\bm{\Sigma}$=\{0.5, 0.5, 1\};

\begin{figure}[h!]
\center
\subfigure[$N$=50]{\includegraphics[width=0.3\textwidth]{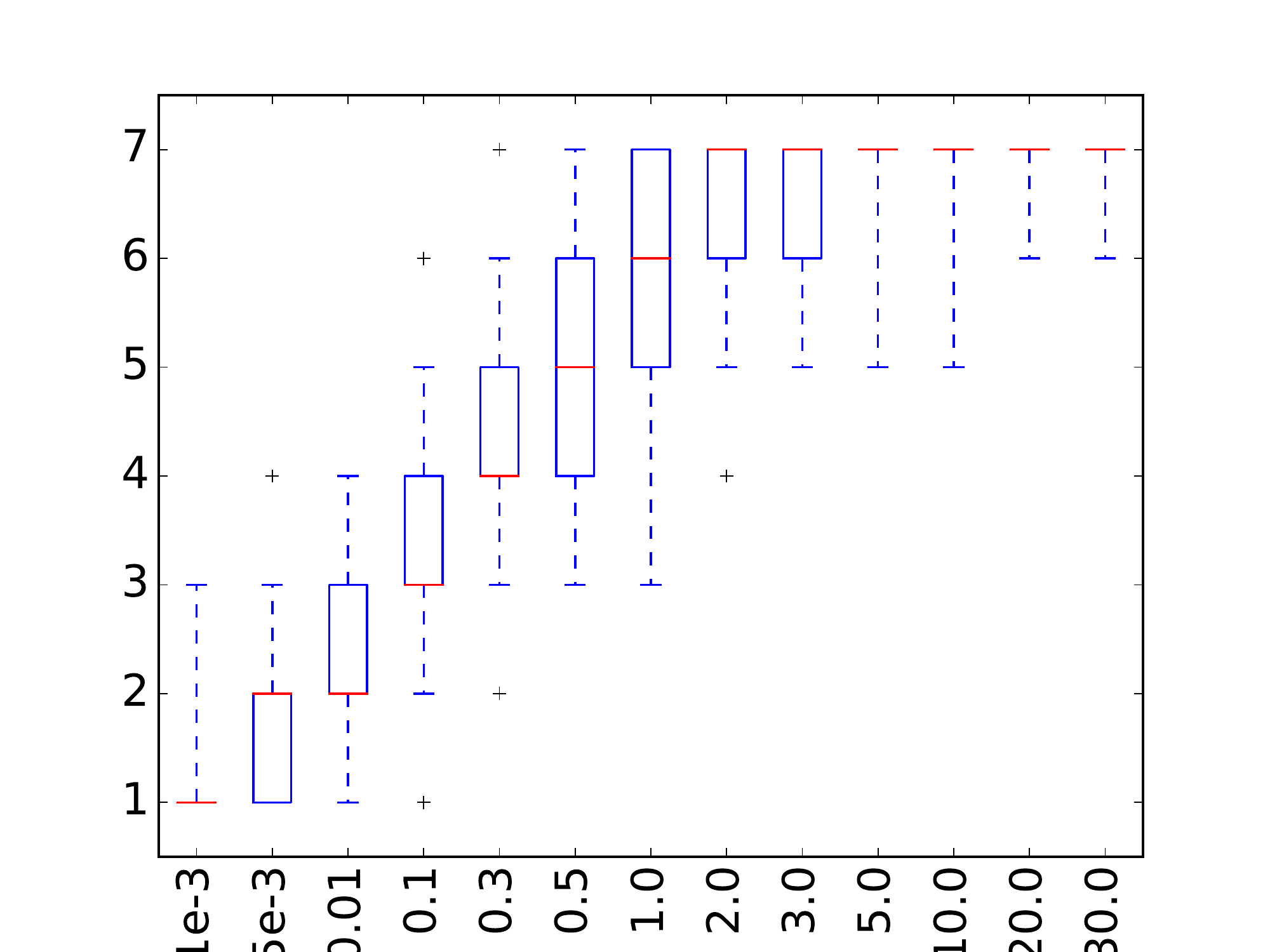} \label{fig:rousseau_alpha_n50}}
~
\subfigure[$N$=100]{\includegraphics[width=0.3\textwidth]{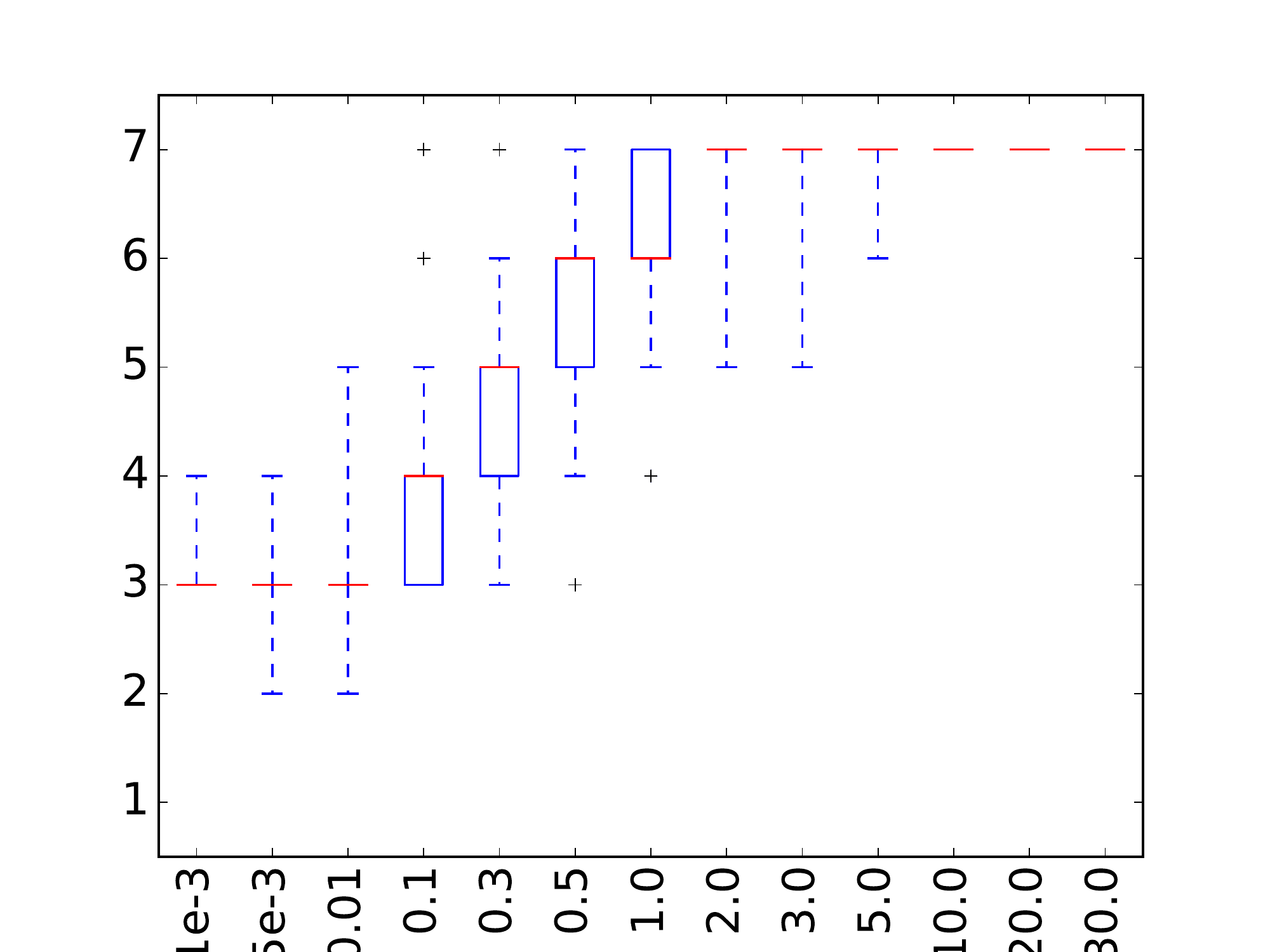} \label{fig:rousseau_alpha_n100}}
~
\subfigure[$N$=300]{\includegraphics[width=0.3\textwidth]{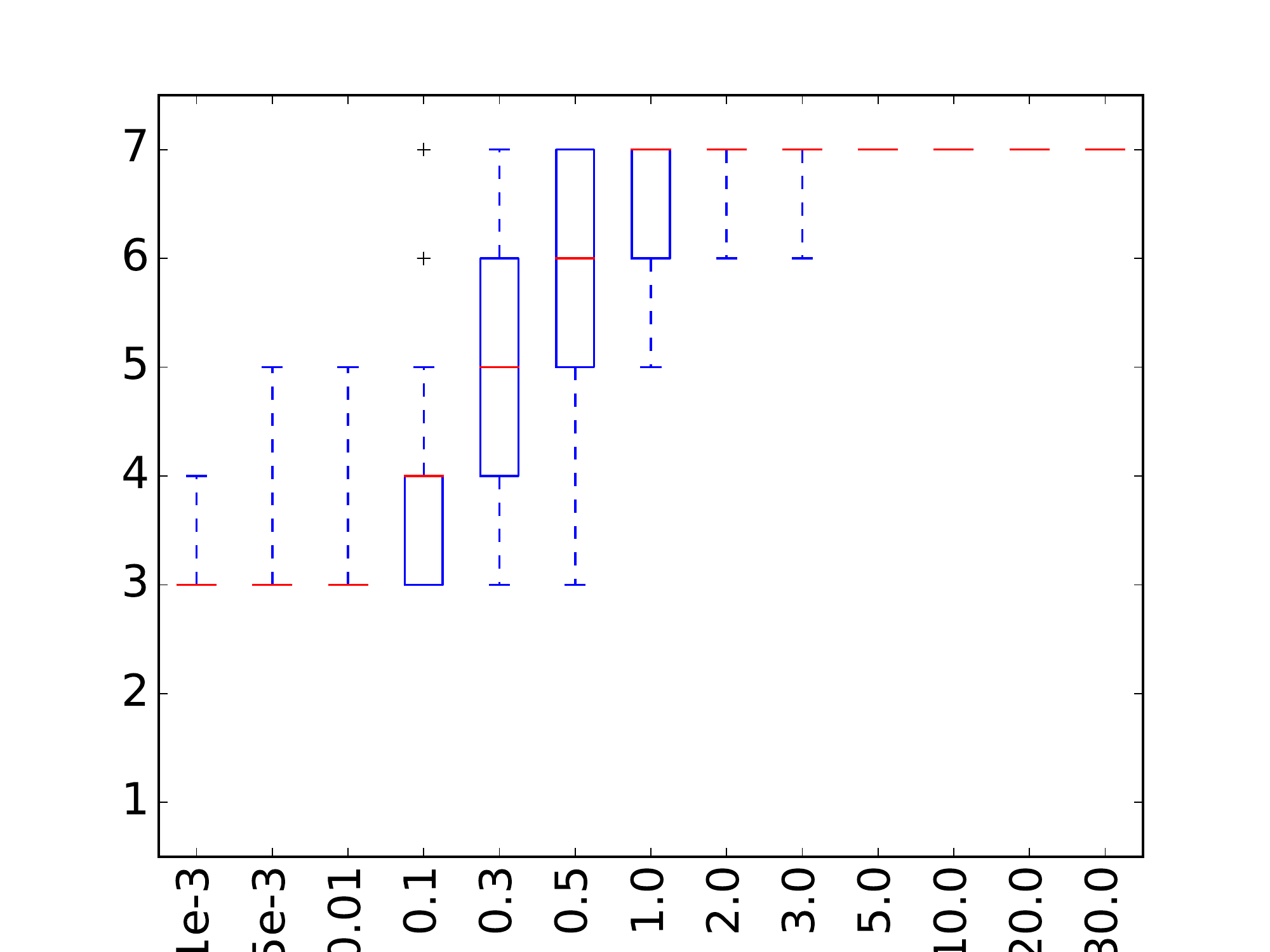} \label{fig:rousseau_alpha_n300} \label{fig:rousseau_alpha_n300} }
\center
\caption{Number of alive (non-empty) groups for a handful of concentration parameter $\alpha$'s. Results are shown for the simulation, $N=50$ (left), $N=100$ (middle) and $N=300$ (right). The x-axis is the choice of $\alpha$, the y-axis is distribution of number of clusters $K^\prime$ with a true cluster number $K_0=3$ and a maximal cluster number $K=7$. }
\label{fig:rousseau_alpha_n50-100-300}
\end{figure}

The result is shown in Figure~\ref{fig:rousseau_alpha_n50-100-300}. We have the following conclusion

\begin{itemize}
\item Once $\alpha$ is smaller than $D/2=0.5$, the posterior distribution of number of clusters $K^\prime$ appears to reach an equilibrium;
\item When the sample size is large enough ($N=300$), the posterior distribution of $K^\prime$ concentrates at 3 once $\alpha$ approaches and smaller than $D/2=0.5$;
\item In the case where $N=50$ or $N=100$, the range of $K^\prime$ includes a small subset of likely configurations, in this case fewer cluster number can be found.
\end{itemize}

\subsubsection{Concentration inequality for symmetric Dirichlet prior}
\citep{yang2014minimax} proved that some particular symmetric Dirichlet prior for probability vectors that can concentrate on sparse subvectors. 

Consider the following set indexed by a tolerance level $\epsilon > 0$ and a sparsity level $s \in \{1, \ldots, K\}: \Pi_{s,\epsilon} = \{ \bpi \in \Pi: \sum_{k=s+1}^K \pi_{(k)} \leq \epsilon \}$, where $\Pi$ is a ($K-1$)-dimensional simplex. 

\begin{svgraybox}
\begin{theorem}
Assume that $\bpi \sim \dirichlet(\alpha, \ldots, \alpha)$ with $\alpha = \rho/K^{\eta}$ and $\eta > 1$. Let $\bpi^\star \in \Pi_s$ be any s-sparse vector in the ($K-1$)-dimensional simplex $\Pi$. Then for any $\epsilon \in (0, 1)$ and some $C>0$, it follows that
\begin{equation}
p(||\bpi - \bpi^{\star}||_1 \leq \epsilon) \geq \{  -C \eta s log \frac{K}{\epsilon} \},
\end{equation}

\begin{equation}
p(\bpi \not\in \Pi_{s,\epsilon} ) \leq \exp\{ -C(\eta - 1)s log \frac{K}{\epsilon}  \}.
\end{equation}
\end{theorem}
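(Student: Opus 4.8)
The plan is to exploit two structural facts about the Dirichlet law recorded in Table~\ref{table:dirichlet-property} — the aggregation property and the marginal/conditional Beta laws — together with the representation $\pi_k = G_k/\sum_j G_j$ where the $G_j$ are i.i.d.\ $\gammadist(\alpha,1)$. Both inequalities then reduce to small-$\alpha$ asymptotics of Gamma and Beta densities, which is the regime we are in because $\alpha=\rho/K^\eta$ and $K\alpha=\rho K^{-(\eta-1)}$ are both small when $\eta>1$. Throughout I would use the elementary estimates $\Gamma(\alpha)\sim 1/\alpha$ and $\Gamma(K\alpha)\sim 1/(K\alpha)$ as $\alpha\to 0$ to evaluate the relevant normalising constants.

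For the first (lower) bound, after relabelling I may assume $\bpi^\star$ is supported on $\{1,\dots,s\}$. I would introduce the favourable event $E$ on which $T:=\sum_{k>s}\pi_k\le \epsilon/2$ and $\sum_{k\le s}|\pi_k-\pi_k^\star|\le\epsilon/2$; on $E$ one has $\|\bpi-\bpi^\star\|_1\le\epsilon$, so it suffices to bound $p(E)$ from below. By the aggregation property the vector $(\pi_1,\dots,\pi_s,T)$ is $\dirichlet(\alpha,\dots,\alpha,(K-s)\alpha)$, so $p(E)$ is the integral of its density over a neighbourhood of the boundary point $(\pi_1^\star,\dots,\pi_s^\star,0)$. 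Evaluating the $\dirichlet$ normalising constant (of order $\alpha^{s}$), integrating $\prod_{k\le s}\pi_k^{\alpha-1}$ over an $(s-1)$-dimensional $\epsilon$-ball about $\bpi^\star$ (which contributes $\epsilon^{s-1}$ up to a $\bpi^\star$-dependent constant), and integrating $T^{(K-s)\alpha-1}$ over $[0,\epsilon/2]$ (contributing a factor of order $1/((K-s)\alpha)$) yields $p(E)$ of order $(\alpha\epsilon)^{s-1}/K$. Taking logarithms and substituting $\alpha=\rho/K^\eta$ with $\eta>1$ converts this into $\exp\{-C\eta s\log(K/\epsilon)\}$, which is the asserted bound — note the displayed right-hand side should read $\exp\{-C\eta s\log(K/\epsilon)\}$, the $\exp$ being omitted in the statement.

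For the second (upper) bound I would first reduce the ``spread-out'' event to a large order-statistic event: since $\sum_{k=s+1}^K\pi_{(k)}\le (K-s)\,\pi_{(s+1)}$, the complement $\{\bpi\notin\Pi_{s,\epsilon}\}$ forces $\pi_{(s+1)}>\delta$ with $\delta:=\epsilon/(K-s)$, i.e.\ at least $s+1$ coordinates exceed $\delta$. A union bound over the $\binom{K}{s+1}$ candidate supports then leaves me to control, for a fixed index set $B_0$ of size $s+1$, the joint probability $p(\pi_k>\delta\ \forall k\in B_0)$. I would bound this by combining aggregation — the block mass $\sum_{k\in B_0}\pi_k$ is $\betadist((s+1)\alpha,(K-s-1)\alpha)$, carrying probability of order $(s+1)/K$ on the relevant range — with the conditional within-block law, which is itself $\dirichlet(\alpha,\dots,\alpha)$ on the $s+1$ coordinates and hence charges the region where all coordinates are simultaneously large only at order $\alpha^{s}$. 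Multiplying these factors with the binomial coefficient produces a quantity of order $(K\alpha)^s=\rho^s K^{-(\eta-1)s}$, up to polynomial and $\log(K/\epsilon)$ factors, which is absorbed into $\exp\{-C(\eta-1)s\log(K/\epsilon)\}$.

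The main obstacle is the joint estimate in the last step: the coordinates in $B_0$ are coupled through the common normaliser $S=\sum_j G_j$, so the naive product/negative-association estimate $\prod_{k\in B_0}p(\pi_k>\delta)$ is far too lossy — each marginal is only of order $1/K$, which gives a useless $O(1/(s+1)!)$ bound independent of $K$ and $\epsilon$. The crux is therefore to show rigorously that requiring all $s+1$ block coordinates to be simultaneously non-negligible costs the extra $\alpha^{s}$ factor coming from the within-block sparsity of the $\dirichlet$, and to carry the $\delta=\epsilon/(K-s)$ dependence faithfully through the incomplete-Beta tail so that the final exponent genuinely retains the stated $\log(K/\epsilon)$. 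This conditioning argument, rather than any single density computation, is where the real work lies (cf.\ \citep{yang2014minimax}).
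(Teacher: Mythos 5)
First, a framing point: the paper does not prove this theorem. It is stated (in a gray box) as a quotation of \citep{yang2014minimax}, with no argument given, so there is no internal proof to compare against; your proposal must be judged as a reconstruction of the external one. Your lower-bound part is sound and is the standard prior-mass computation: aggregation reduces to the $(s+1)$-dimensional $\dirichlet(\alpha,\ldots,\alpha,(K-s)\alpha)$, whose normalizing constant is of order $\alpha^s$ via $\Gamma(\alpha)=\Gamma(\alpha+1)/\alpha$; the $T$-integral contributes $1/((K-s)\alpha)$; and since $\prod_{k\le s}\pi_k^{\alpha-1}\ge 1$ on the region, the remaining integral is at least the Lebesgue volume $c_s\epsilon^{s-1}$ \emph{uniformly} in $\bpi^\star$ (better than your ``$\bpi^\star$-dependent constant'', since the claim is for every $s$-sparse $\bpi^\star$). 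This gives $p(E)\ge c_s\,\alpha^{s-1}\epsilon^{s-1}/K$, and taking logarithms with $\alpha=\rho/K^\eta$ yields the first display (with the $\exp$ you correctly restored) for a $C$ depending only on $\rho$.

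For the upper bound, your architecture (reduction to $\pi_{(s+1)}>\delta$, union bound over $\binom{K}{s+1}$ blocks, block-mass/within-block factorization) is right and does deliver $(K\alpha)^s$ up to logarithms, but the step you defer is precisely where the natural argument dies, so the requirement must be stated exactly. You need the within-block bound
\begin{equation*}
p\Bigl(\text{all $s+1$ coordinates of } \dirichlet(\alpha,\ldots,\alpha) > t\Bigr) \le C_s\,\bigl(\alpha\log(1/t)\bigr)^{s},
\end{equation*}
logarithmic in $1/t$. The bound one gets from ``sup of the density times volume'' is instead $C_s\,\alpha^{s}t^{-s(1-\alpha)}$, and feeding that (with $t=\delta/W$, $\delta=\epsilon/(K-s)$, $\E[W^s]\approx(s+1)/K$) into your union bound produces $C_s\,\rho^s K^{(2-\eta)s}\epsilon^{-s}$, which diverges in $K$ for $\eta\le 2$ and \emph{increases} as $\epsilon\downarrow 0$ — the opposite of the claim. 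The logarithmic version is provable: for $s=1$, $\betadist(\alpha,\alpha)$ gives $\int_t^{1/2}x^{\alpha-1}(1-x)^{\alpha-1}dx\approx\log(1/t)$ against a normalizer of order $\alpha$; for general $s$, induct, or use your Gamma representation — the event forces all $s+1$ variables $\log G_k$ into a window of length $\log(1/t)$, which costs $(\alpha\log(1/t))^s$. With it, the per-block probability is at most $C_s(\alpha L)^s\bigl[(s+1)/K+(s+1)\alpha L\bigr]$ with $L=\log((K-s)/\epsilon)$, and the union bound gives $C_s(K\alpha L)^s/s!$ whenever $K\alpha L\le 1$. So your sketch closes, but only with this estimate; as written (``order $\alpha^s$'', no $t$-dependence) it is not yet a proof.

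Carrying this through also exposes a defect in the statement itself, which you should record: the second display cannot hold verbatim with one constant $C$ for all $\epsilon\in(0,1)$, since as $\epsilon\to 0$ the left side tends to $1$ (the Dirichlet puts zero mass on exactly $s$-sparse vectors) while the right side tends to $0$. What the argument proves is $p(\bpi\notin\Pi_{s,\epsilon})\le C_s\bigl(\rho K^{-(\eta-1)}\log(K/\epsilon)\bigr)^{s}$, which implies the stated exponential form only under a restriction such as $\log(1/\epsilon)\le a\log K$, with $C$ then depending on $a$ and $\rho$. That implicit regime was evidently lost in the survey's transcription of \citep{yang2014minimax}; derive the bound above and state the regime rather than claiming the display for all $\epsilon$.
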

\end{svgraybox}


\section{Bayesian infinite Gaussian mixture model}\label{section:review_infinite_mixture_model} \label{sec:infinite_gaussian_mixture}
The infinite Gaussian mixture model is also sometimes referred to as a Dirichlet process Gaussian mixture model (DP GMM).

\subsection{Bayesian nonparametrics for infinite Gaussian mixture model}\label{sec:bayesian_nonparametric}
Let $(\bx_N)_{N \geq 1}$ be an (ideally) infinite sequence of observations, with each $\bx_n$ taking values in a complete and separable metric space $\mathbb{X}$. Let $\mathbf{P}_{\mathbb{X}}$ be the set of all probability measures on $\mathbb{X}$ endowed with the topology of weak convergence. In the most commonly employed Bayesian models, $(\bx_N)_{N \geq 1}$ is assumed to be exchangeable, so that, for some $Q$ on $\mathbf{P}_{\mathbb{X}}$, 
\begin{equation}
\bx_n | \btheta  \overset{iid}{\sim} \btheta, \quad \btheta \sim Q. 
\end{equation}
Hence, $\theta$ is a random probability measure on $\mathbb{X}$ whose probability distribution $Q$ is termed \textit{de Finetti measure} and acts as a prior for Bayesian inference. When $Q$ degenerates on a finite dimensional subspace of $\mathbf{P}_{\mathbb{X}}$, the inferential problem is called \textit{parametric}. On the other hand, when the support of $Q$ is infinite-dimensional, we call it a \textit{nonparametric} inferential problem and it is generally agreed that having a large topological support is a desirable property for a nonparametric prior \citep{ferguson1974prior}.

Generally speaking, for parametric models, the number of parameters is fixed. But for the nonparametric models, the number of parameters can grow with the sample size. I.e., the parameter space is $\infty$-dimensional. Combining with the Bayesian framework, the model complexity can be impacted from the prior which captures the beliefs on them. The nonparametric models can be derived by starting with a finite parametric model and taking the limit as number of parameters go to $\infty$. 

\paragraph{A word on the notation} In the finite mixture models, from Equation~\eqref{equation:fmm_first_term_derivation2}, when considering the collabsed Gibbs sampler, we have the following Gibbs moves on the cluster indicator $z_i$:
\begin{align}
p(z_i = k | \bznoi, \balpha) =\frac{\nknoi + \alpha_k}{N + \alpha_+ - 1}= \frac{\nknoi + \alpha/K}{N + \alpha - 1}.
\end{align}
Note that here we denote $\alpha_+ = \alpha$ for simplicity. We realize that, in finite Gaussian mixture model, we set $\alpha_+ = \alpha \times K$. In this sense, when we deal with finite Gaussian mixture model (Dirichlet distribution prior), the concentration parameter $\alpha$ is $\alpha_+/K$. When we deal with infinite Gaussian mixture model (Dirichlet process prior), the concentration parameter $\alpha$ is $\alpha_+$.

Now let the number of clusters, $K$, go to $\infty$, we will have 
\begin{equation}
\underset{K\rightarrow \infty}{\mathrm{lim}} p(z_i = k | \bznoi, \alpha) = \frac{\nknoi}{ N + \alpha - 1}.
\end{equation}
But when we sum all the clusters, this gives $\sum_k p(z_i = k | \bznoi, \alpha)= \frac{N-1}{N + \alpha -1} < 1$. In order to make this be an actual probability distribution, we need to add a probability $\frac{\alpha}{N + \alpha -1}$, which is corresponded to the assignment to a new cluster, that is 
\begin{equation}
p(z_i = \mathrm{new} | \bznoi, \alpha) = \frac{\alpha}{ N + \alpha - 1}.
\end{equation}

\subsection{The Chinese restaurant process}\label{sec:crp}


Following the nonparametric analysis, we come to the Chinese restaurant process. We here give an formal overview of the Chinese restaurant process (CRP). The CRP is a simple stochastic process that is exchangeable (see discussion below). In the analogy from which this process takes its name, in this process, customers (data points) seat themselves at a restaurant with an infinite number of tables (clusters). 
Each customer sits at a previously occupied table with probability proportional to the number of customers already sitting there, and at a new table with probability proportional to a concentration parameter $\alpha$. For example, the first customer enters and sits at the first table. The second customer enters and sits at the first table with probability $\frac{1}{1+\alpha}$ and at a new table with probability $\frac{\alpha}{1+\alpha}$. The $i^{th}$ customer sits at an occupied table with probability proportional to the number of customers already seated at that table, or sits at a new table with a probability proportional to $\alpha$. From the definition above, we can observe that the CRP is defined by a \textbf{rich-get-richer} property in which the probability of being allocated to a table increases in proportion to the number of customers already at that table. Formally, if $z_i$ is the table chosen by the $i^{th}$ customer, then
\begin{equation}
p(z_i = k | \bz_{1:i-1}, \alpha)=\left\{
                \begin{array}{ll}
                  \frac{N_k}{N+\alpha-1},  \text{ if \textit{k} is occupied, i.e., } N_k > 0,\\
                  \frac{\alpha}{N+\alpha-1},  \text{ if \textit{k} is a new table, i.e., } k = k^{\star} = K + 1,	
                \end{array}
              \right.
\label{equation:ifmm_crp_equation}
\end{equation}
where $\bz_{1:i-1} = (z_1, z_2, \ldots , z_{i-1})$, and $N_k$ is the number of customers already seated at table $k$. Note that there are $N-1$ customers excluding the $i^{th}$ customer in the above definition.

We can see from the above description that CRP is a sequential process. Each table assignment for the new customer is dependent on the table assignment of all the previous customers. And the CRP introduces a partition of customers based on table assignment. For example, the probability for a particular configuration for 3 customers is $p(z_1, z_2, z_3) = p(z_1)p(z_2 | z_1) p(z_3 | z_1,z_2)$.
In a CRP mixture model, each table is assigned a specific parameter in a kernel generating data at the observation level.  Customers assigned to a specific table are given the cluster index corresponding to that table, and have their data generated from the kernel with appropriate cluster/table-specific parameters.  The CRP provides a prior probability model on the clustering process, and this prior can be updated with the observed data to obtain a posterior over the cluster allocations for each observation in a data set.  The CRP provides an exchangeable prior on the partition of indices $\{1,2,\ldots,N\}$ into clusters; exchangeability means that the ordering of the indices has no impact on the probability of a particular configuration -- only the number of clusters $K_N$ and the size of each cluster can play a role.  The CRP implies that $\E[K_N|\alpha] = O(\alpha \log N)$~(see Theorem~\ref{theorem:number-of-clusters} or \citet{teh2011dirichlet}).
In a clustering context, we have the following definition of CRP which is slightly different from Equation~\eqref{equation:ifmm_crp_equation}:
\begin{equation}
	p(z_i = k | \bznoi, \alpha)=\left\{
	\begin{array}{ll}
		\frac{N_{k,-i}}{N+\alpha-1},  \text{ if \textit{k} is occupied, i.e., }  N_k > 0,\\
		\frac{\alpha}{N+\alpha-1},  \text{ if \textit{k} is a new table, i.e., }  k = k^{\star} = K + 1,
	\end{array}
	\right.
	\label{equation:crp_equation_in_pcrp}
\end{equation}
where $\bznoi = (z_1, z_2, \ldots , z_{i-1}, z_{i+1},  \ldots, z_N)$ and $N_{k,-i}$ is the number of customers seated at table $k$ excluding customer $i$.

Important points: 
\begin{itemize}
\item The more the customers at a table, the more probable it is that the next customer will join that table. This is the \textbf{rich-get-richer} property.
\item Probability of a new table (cluster) is proportional to $\alpha$. Thus, we consider $\alpha$ as a concentration parameter which determines the total number of clusters. The higher the $\alpha$, the higher is the number of clusters in a given set of data.
\item CRP specifies a distribution over partitions/table assignments but does not assign parameters to tables.
\end{itemize}

\subsubsection{Exchangeability}
Note that since Equation~\eqref{equation:ifmm_crp_equation} only depends on the number of customers seated at each table $N_k$, the probability of a particular seating arrangement does not depend on the order in which the customers arrived. The random variables $z_i$ in $\bz$ is therefore exchangeable.

\begin{figure}[!h]
  \centering
  \includegraphics[width=0.9\linewidth]{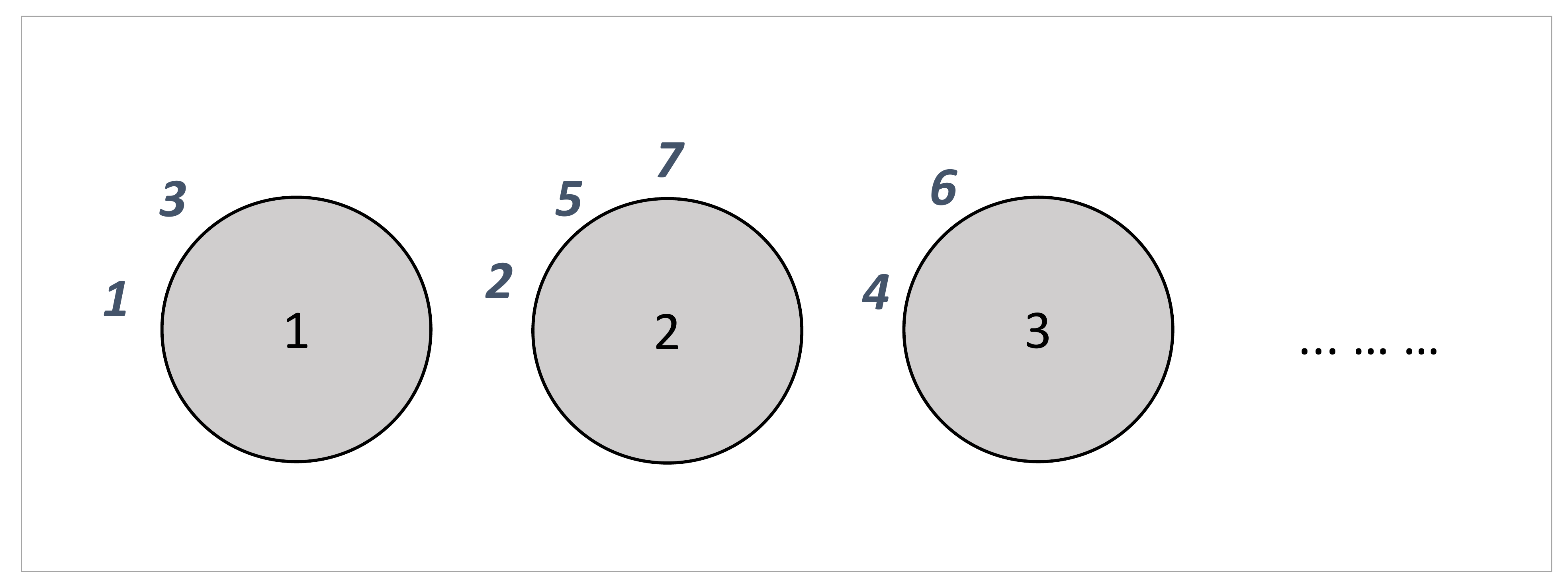}
  \caption{7 customers assignment in a CRP}
  \label{fig:crp_description_figure}
\end{figure}

We here illustrate the CRP by a specific example. For the 7 customers table assignments in Figure \ref{fig:crp_description_figure} where the number on the customer indicates the sequence of arriving, we can find the probability of this particular assignment, 
\begin{equation}
p(z_1, z_2, \ldots, z_7) = \frac{\alpha}{\alpha}  \cdot \frac{\alpha}{1+\alpha} \cdot \frac{1}{2+\alpha} \cdot \frac{\alpha}{3+\alpha} \cdot \frac{1}{4+\alpha} \cdot \frac{1}{5+\alpha} \cdot \frac{2}{6+\alpha}. 
\end{equation}
We realize from the product that the order here does not matter: if the order of customer 1 and 2 is swapped, the probability does not change.

\begin{svgraybox}
\begin{definition}[Infinitely Exchangeable]
	$\bx_1, \bx_2, \ldots$ is infinitely exchangeable if for any $N$, $p(\bx_1, \ldots, \bx_N)$ is invariant under permutation. 
\end{definition}
\end{svgraybox}

To show that the CRP-induced distribution over table assignment is exchangeable, we here first introduce some new notations from \citep{gershman2012tutorial}. Let $K_N$ be the number of groups with $N$ total customers, $I_k$ be the set of indices of customers assigned to the $k^{th}$ group, i.e., $I_{k,i}$ is the total number of customers (including customers at other tables) when $i^{th}$ customer in $k^{th}$ group appears. And the cardinality of $I_k$ is equal to the number of customers at table $k$, $N_k$.

Now, consider the joint distribution over $N$ customers $\bz = \{z_1, z_2, \ldots, z_N\}$. The distribution decomposes according to the chain rule
\begin{equation}
p(\bz | \alpha) = p(z_1) p(z_2 | z_1) p(z_3 | z_2, z_1) \ldots p(z_N | z_{N-1}, z_{N-2}, \ldots, z_1),
\label{equation:ifmm_crp_joint_distribution_for_exchangeable}
\end{equation}
where each term can be calculated from Equation~\eqref{equation:ifmm_crp_equation}. Let's separate the eqaution of table $k$ from Equation~\eqref{equation:ifmm_crp_joint_distribution_for_exchangeable}, in which case we can re-write Equation~\eqref{equation:ifmm_crp_joint_distribution_for_exchangeable} by 
\begin{equation}
\begin{aligned}
p(\bz | \alpha) &= p(z_1) p(z_2 | z_1) p(z_3 | z_2, z_1) \ldots p(z_N | z_{N-1}, z_{N-2}, \ldots, z_1) \\
		&= \prod_{k=1}^K p_k,
\label{equation:ifmm_crp_joint_distribution_for_exchangeable_separated_view}
\end{aligned}
\end{equation}
where $p_k$ is
\begin{equation}
p_k = \frac{\alpha \cdot 1 \cdot 2 \ldots \cdot (N_k-1)}{(I_{k,1}-1+\alpha) (I_{k,2}-1+\alpha) \ldots (I_{k,N_k}-1+\alpha) }. 
\end{equation}
Specifically, the probability for first customer at table $k$ is $\frac{\alpha}{I_{k,1}-1+\alpha}$ because he starts a new table; the probability for second customer at table $k$ is $\frac{1}{I_{k,2}-1+\alpha}$ because he sits at a table with one customer, and so on. With this, we can write the joint probability 
\begin{equation}
\begin{aligned}
p(\bz | \alpha) &= p(z_1) p(z_2 | z_1) p(z_3 | z_2, z_1) \ldots p(z_N | z_{N-1}, z_{N-2}, \ldots, z_1) \\
		&= \prod_{k=1}^K \frac{\alpha (N_k-1)!}{(I_{k,1}-1+\alpha) (I_{k,2}-1+\alpha) \ldots (I_{k,N_k}-1+\alpha) }. 
\label{equation:ifmm_crp_joint_distribution_for_exchangeable_detail}
\end{aligned}
\end{equation}
The probability of a particular sequence of table assignments can be obtained from Equation~\eqref{equation:ifmm_crp_joint_distribution_for_exchangeable_detail} as follows:
\begin{equation}
\begin{aligned}
p(\bz | \alpha) &= \prod^N_{n=1} p(z_n | \bz_{1:n-1})
	&= \frac{\alpha^K \prod^K_{k=1}(N_k - 1)!}{ \prod^N_{n=1}(n - 1 + \alpha)} 
	&= \alpha^K \frac{\Gamma(\alpha)}{\Gamma(N + \alpha)} \prod^K_{k=1}(N_k - 1)! .
\end{aligned}
\label{equation:ifmm_assignment_marginal}
\end{equation}
From the above notation, we can see that the CRP-induced distribution over table assignments is exchangeable. Thus, for any new customer entering the restaurant, we can think him as the last customer entering and apply the generative process for the table assignment.

\begin{svgraybox}
\begin{theorem}[Expected Number of Tables in CRP]\label{theorem:number-of-clusters}
	The expected number of occupied tables for $N$ customers in a CRP grows logarithmically. In particular $\E[K_N|\alpha] = O(\alpha \log N)$. 
\end{theorem}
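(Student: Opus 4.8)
The plan is to write $K_N$ as a sum of indicator variables and apply linearity of expectation, reducing the claim to an estimate of a shifted harmonic sum.

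First I would introduce, for each $i \in \{1, \ldots, N\}$, the Bernoulli indicator $b_i$ that equals $1$ precisely when the $i^{th}$ arriving customer opens a new table, so that $K_N = \sum_{i=1}^N b_i$. When the $i^{th}$ customer arrives there are exactly $i-1$ customers already seated, so the normalising constant in Equation~\eqref{equation:ifmm_crp_equation} is $(i-1)+\alpha$ and the probability of starting a fresh table is $\E[b_i \mid \alpha] = \frac{\alpha}{i-1+\alpha}$. In particular $b_1 = 1$ almost surely, consistent with the first customer always founding table one.

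Next, by linearity of expectation I would obtain
$$
\E[K_N \mid \alpha] = \sum_{i=1}^N \frac{\alpha}{i-1+\alpha} = \alpha \sum_{j=0}^{N-1}\frac{1}{j+\alpha},
$$
so that no independence of the $b_i$ is needed. It remains to bound the shifted harmonic sum. Since $x \mapsto \frac{1}{x+\alpha}$ is decreasing on $[0,\infty)$, an integral comparison yields $\sum_{j=1}^{N-1}\frac{1}{j+\alpha} \le \int_0^{N-1}\frac{dx}{x+\alpha} = \log\frac{N-1+\alpha}{\alpha}$, whence
$$
\E[K_N \mid \alpha] \le 1 + \alpha \log\frac{N-1+\alpha}{\alpha} = O(\alpha \log N).
$$
Equivalently, one may recognise $\sum_{j=0}^{N-1}\frac{1}{j+\alpha} = \Psi(N+\alpha) - \Psi(\alpha)$, with $\Psi$ the Digamma function introduced earlier, and invoke the asymptotics $\Psi(x) = \log x + O(1/x)$ to reach the same conclusion; a matching lower bound of the same order, giving the asserted logarithmic growth, follows from the reverse integral comparison.

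There is no serious obstacle here: the only point requiring care is the correct identification of the new-table probability $\frac{\alpha}{i-1+\alpha}$ for the $i^{th}$ arrival, where one must use the count of customers present at the moment of arrival (namely $i-1$) rather than the total $N$ appearing in the exchangeable form of Equation~\eqref{equation:ifmm_crp_equation}. Once the expectation is reduced to the shifted harmonic sum, the estimate is entirely routine.
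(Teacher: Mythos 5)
Your proposal is correct and follows essentially the same route as the paper's own proof: decompose $K_N$ into per-customer new-table indicators, apply linearity of expectation to obtain the shifted harmonic sum $\sum_{n=1}^N \frac{\alpha}{\alpha+n-1}$, and observe that this is of order $O(\alpha \log N)$. The only difference is that you make the final harmonic estimate explicit via an integral comparison (and a Digamma identity), whereas the paper simply asserts that the harmonic series is of the stated order.
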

\end{svgraybox}

\begin{proof}[of Theorem~\ref{theorem:number-of-clusters}]
We introduce a indicator variable $v_n$, which indicates the event that customer $n$ starts a new table. Then the total number of tables after $N$ customers is just $\sum_{n=1}^N = v_n$. The probability of $v_n=1$ is 
\begin{equation*}
p(v_n=1 | \alpha) = \frac{\alpha}{\alpha+n-1}. 
\end{equation*}
Then the expected number of tables after $N$ customers is just
\begin{equation*}
\E[K_N | \alpha] = \E[\sum_{n=1}^N v_n] = \sum_{n=1}^N \E[v_n] = \sum_{n=1}^N \frac{\alpha}{\alpha + n - 1}. 
\end{equation*}
This is a Harmonic series, and is of order $O(\alpha \log N)$. 
\end{proof}

\subsection{The Dirichlet process}
The Dirichlet process (DP) is a distribution over distributions \citep{frigyik2010introduction}. It is parameterized by a concentration parameter $\alpha >0$ and a base distribution $G_0$, which is a distribution over a space $\mathbb{X}$. A random distribution $G$ draw from a DP is denoted $G \sim DP(\alpha, G_0)$. DP can be thought as a random probability measure with Dirichlet marginals, i.e., for any finite decomposition $A_1, \ldots, A_m$ of the whole space $\mathbb{X}$ (i.e., $A_1\cup A_2 \cup \ldots \cup A_m = \mathbb{X}$), we have 
\begin{equation}
(G(A_1), \ldots, G(A_m)) \sim \dirichlet(\alpha G_0(A_1), \ldots, \alpha G_0(A_m) ).
\end{equation}
This means that if we draw a random distribution from the DP and add up the probability mass in a region $A \in \mathbb{X}$, then there will on average be $G_0(A)$ mass in that region (i.e., Base distribution is the ``mean" of DP). The concentration parameter plays the role of an inverse variance; for higher values of $\alpha$, the random probability mass $G(A)$ will concentrate more tightly around $G_0(A)$. I.e., $\E[G(A)] = G_0(A)$ and  $\Var [G(A)] = G_0(A)(1-G_0(A))/( \alpha+ 1) $).

\subsection{CRP V.S. DP}\label{sec:crp_vs_dp}
\begin{svgraybox}
\begin{theorem}{(de Finetti's Theorem).}
	$\bx_1, \bx_2, \ldots, \bx_N, \ldots$ is infinitely exchangeable if and only if there exists a random probability measure $\btheta$, such that 
	\begin{equation}
		p(\bx_1, \ldots, \bx_N) = \int_{\btheta} \prod_ip(\bxi | \btheta) Q(\btheta) d\btheta. 
	\end{equation}
\end{theorem}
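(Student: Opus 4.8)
The plan is to prove the two implications separately, with the forward (sufficiency) direction being essentially a symmetry observation and the converse (necessity) carrying all of the real content. For the \emph{if} direction, suppose the integral representation holds. First I would fix $N$ and an arbitrary permutation $\sigma$ of $\{1,\ldots,N\}$. Because the integrand factorizes as a product over indices, it is invariant under reindexing, so that
\begin{equation*}
p(\bx_{\sigma(1)}, \ldots, \bx_{\sigma(N)}) = \int_{\btheta} \prod_{i=1}^N p(\bx_{\sigma(i)} \mid \btheta)\, Q(\btheta)\, d\btheta = \int_{\btheta} \prod_{i=1}^N p(\bxi \mid \btheta)\, Q(\btheta)\, d\btheta = p(\bx_1, \ldots, \bx_N).
\end{equation*}
Since the joint law is thus permutation-invariant for every $N$, the sequence is infinitely exchangeable.

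The converse is the substantive half, and the strategy is to \emph{construct} the directing random measure $\btheta$ as the limit of empirical distributions and then verify conditional independence. I would define the random empirical measure $P_N = \frac{1}{N}\sum_{n=1}^N \delta_{\bx_n}$ and first show that $P_N$ converges almost surely, in the weak topology, to a random probability measure $\btheta$ on $\mathbb{X}$. For each bounded continuous $f$, the averages $\frac{1}{N}\sum_{n=1}^N f(\bx_n)$ form a \emph{reverse martingale} with respect to the decreasing family of exchangeable $\sigma$-algebras, so reverse-martingale convergence yields an almost-sure limit; because $\mathbb{X}$ is a complete separable (Polish) space, these limits assemble into a bona fide random measure $\btheta$. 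I would then take $Q$ to be the law of $\btheta$.

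The heart of the argument is to show that, conditionally on the exchangeable $\sigma$-algebra $\mathcal{E}$ (equivalently, on $\btheta$), the $\bx_n$ are i.i.d.\ with common distribution $\btheta$; that is,
\begin{equation*}
\E\!\left[ \prod_{j=1}^k f_j(\bx_j) \,\Big|\, \mathcal{E} \right] = \prod_{j=1}^k \int_{\mathbb{X}} f_j \, d\btheta
\end{equation*}
for all $k$ and bounded measurable $f_1, \ldots, f_k$. Exchangeability lets me rewrite the left side as a limit of averages of $\prod_j f_j(\bx_{i_j})$ over distinct index tuples (a U-statistic): each such average is insensitive to the choice of indices, and as $N \to \infty$ the sampling-without-replacement correction vanishes, factorizing the limit into $\prod_j \int f_j \, d\btheta$. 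Taking expectations of this identity and unwinding the conditioning as an integral over $Q$ then reproduces the claimed mixture representation.

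The hard part will be exactly this conditional-independence step. It requires the Hewitt--Savage zero--one law (or an equivalent reverse-martingale/tail argument) to control the exchangeable $\sigma$-field, together with enough measure theory on the Polish space $\mathbb{X}$ to guarantee that the regular conditional distributions and the limiting random measure genuinely exist. These tools lie beyond the first-course-in-probability prerequisite, so in a survey one would either cite Hewitt--Savage and Kallenberg for the general statement or specialize to the binary case, where de Finetti's original method-of-moments computation on the empirical frequency gives a fully elementary proof.
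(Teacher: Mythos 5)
The paper itself does not prove this theorem: it is invoked as a classical black box to connect the CRP to the Dirichlet process, with the reader deferred to the literature, so there is no in-paper argument to compare your proposal against; it must stand on its own, and in outline it does. The forward direction is correct and complete as written. For the converse, your plan is the standard modern proof (Aldous/Kallenberg), and the one identity you should state explicitly is $\E\left[f(\bx_1)\mid \mathcal{E}_N\right] = \frac{1}{N}\sum_{n=1}^{N} f(\bx_n)$, where $\mathcal{E}_N$ is the $\sigma$-field of events invariant under permutations of the first $N$ coordinates: exchangeability gives this by symmetry, it is precisely what makes the empirical averages a reverse martingale, and the same identity applied to products over distinct index tuples is what identifies the U-statistic limit with $\E[\prod_j f_j(\bx_j)\mid \mathcal{E}]$ for $\mathcal{E}=\bigcap_N \mathcal{E}_N$. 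Two smaller corrections: the Hewitt--Savage zero--one law is not actually an ingredient here (it is more naturally a corollary of the representation); what you need is reverse-martingale convergence plus the existence of regular conditional distributions, and the assembly of the functional limits into a genuine random measure $\btheta$ is exactly where the completeness/separability assumption on $\mathbb{X}$ made in Section~\ref{sec:bayesian_nonparametric} enters, so the theorem as stated silently relies on that standing hypothesis. Your closing editorial judgment is the right one for this survey: given its first-course-in-probability prerequisite, citing Kallenberg or Aldous (as the paper implicitly does) is defensible, and if a proof were included, the binary case via de Finetti's original moment/frequency argument is the only version that fits the paper's level.
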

\end{svgraybox}

Since CRP is exchangeable, and the underlying parameter, $\btheta$ for CRP is actually Dirichlet process. 

Consider a random distribution draw from a DP followed by repeated draws from that random distribution, 
\begin{equation}
\begin{aligned}
G 		    &\sim DP(\alpha, G_0), \\
\btheta_n &\sim G, \quad i\in \{1, \ldots, N\}. 
\end{aligned}
\end{equation}
Actually, \citep{ferguson1973bayesian} explored the joint distribution of $\btheta_{1:N}$, which is obtained by marginalizing out the random distribution $G$, 
\begin{equation}
p(\btheta_1, \ldots, \btheta_N | \alpha, G_0) = \int \left( \prod_{n=1}^N p(\btheta_n | G) \right) d P(G | \alpha, G_0).
\end{equation}
\citep{ferguson1973bayesian} showed that, under this joint distribution, the $\btheta_n$ will exhibit a clustering property - they will share repeated values with positive probability. The structure of shared values defines a partition of the integers from 1 to $N$, and the distribution of this partition is a Chinese restaurant process. In the following sections, we use Chinese restaurant process and Dirichlet process exchangeably.

\subsection{Bayesian infinite Gaussian mixture model}
\citep{rasmussen1999infinite, anderson1991adaptive, neal2000markov} proposed a solution to the problem of unsupervised clustering based on a probabilistic model known in machine learning as the infinite mixture model and in statistics as the Dirichlet process mixture model. This model intentionally implements an Occam's razor-like tradeoff between two goals: minimizing the number of clusters posited and maximizing the relative similarity of objects within a cluster.
We will work with the following definition of Bayesian infinite Gaussian mixture model
\begin{equation}
\begin{aligned}
\bx_i | z_i, \{\bmu_k, \bSigma_k \} &\sim \mathcal{N}(\bmu_{z_i}, \bSigma_{z_i}), \\
z_i | \bpi    									&\sim \discrete(\pi_1, \ldots, \pi_K), \\
\{\bmu_k, \bSigma_k\}					&\sim \niw(\bbeta), \\
\bpi 											&\sim \dirichlet(\alpha/K, \ldots, \alpha/K).
\end{aligned}
\end{equation}
The Bayesian infinite Gaussian mixture model is obtained by taking the limit as $K \rightarrow \infty$ (See \citep{neal2000markov} or  Section \ref{sec:bayesian_nonparametric} for more details). 
And the Bayesian infinite Gaussian mixture model is illustrated in Figure \ref{fig:gmm_infinite_without_hyper}. The model is very similar to the finite GMM described in Section \ref{sec:finite_gaussian_mixture}. However, in the infinite model the possible number of mixture components could be infinite while in the finite model the number of mixture components $K$ were known beforehand. 
Although any finite samples contain only finitely many clusters, there is no bound on the number of tables (clusters) and any new data point has non-zero probability of being drawn from a new cluster as shown in Equation~\eqref{equation:ifmm_crp_equation}. Therefore, here we use the term “infinite” mixture model.
Here we present a sketch of the model's critical aspects needed to intuitively understand it. Full mathematical details are provided in next sections. The model assumes as input a matrix of objects and features $\mathcal{X}$, where entry $\mathcal{X}_{n,d}$ contains the value of feature $d$ for object $n$. The goal of the model is then to infer likely output of clusters, $\bm{z}$, or sometimes we may infer the single most probable output of clusters (the maximum a posteriori solution or MAP). 

The probability of an assignment of objects to clusters given the data, $p(\bm{z} | \mathcal{X})$ depends on two factors: the prior probability of the assignment of objects to clusters, and the probability of observed data given the cluster assignments. Formally, the probability of an assignment of clusters $\bm{z}$ given the data $\mathcal{X}$ is
\begin{equation}
p(\bm{z} | \mathcal{X}, \alpha, \bbeta) \propto p(\bz | \alpha) p(\mathcalX | \bz, \bbeta)
\end{equation}
where $\alpha$ and $\bbeta$ are hyperparameters, details are provided in Section \ref{sec:ifmm_collabsed_gibbs}. The probability of a particular assignment $p(\bz | \alpha)$ captures a preference for a small number of assignments relative to the total number of objects, and the strength of this preference is governed by the parameter $\alpha$. The term $p(\mathcalX | \bz, \bbeta)$ assesses the probability of the observed feature values, given the assignment of clusters. Thus, the model \textbf{captures a tradeoff} between two competing factors: $p(\bz | \alpha)$ specifies a preference for simple solutions that use a small number of object assignments (see Equation \eqref{equation:ifmm_assignment_marginal}). The term $p(\mathcalX | \bz, \bbeta)$ favors solutions that explain the data well, and tends to prefer more complex solutions. By combining these terms, we arrive at a model that attempts to find the simplest solution that adequately accounts for the data.

Once the prior $p(\bz | \alpha)$ and the likelihood $p(\mathcalX | \bz, \bbeta)$ have been formalized, clustering can be treated as a problem of finding a $\bz$ that has high probability in the posterior distribution $p(\bz | \mathcalX, \alpha, \bbeta)$. We will address this search problem using Gibbs sampler similar to a finite Gaussian mixture model as introduced previously.

\begin{figure}[h!]
\center
\subfigure[A Bayesian finite GMM.]{\includegraphics[width=0.33\textwidth]{img_visual/bgmm_finite_without_hyper.pdf} \label{fig:gmm_infinite_without_hyper_compare}}
\subfigure[A Bayesian infinite GMM.]{\includegraphics[width=0.33\textwidth]{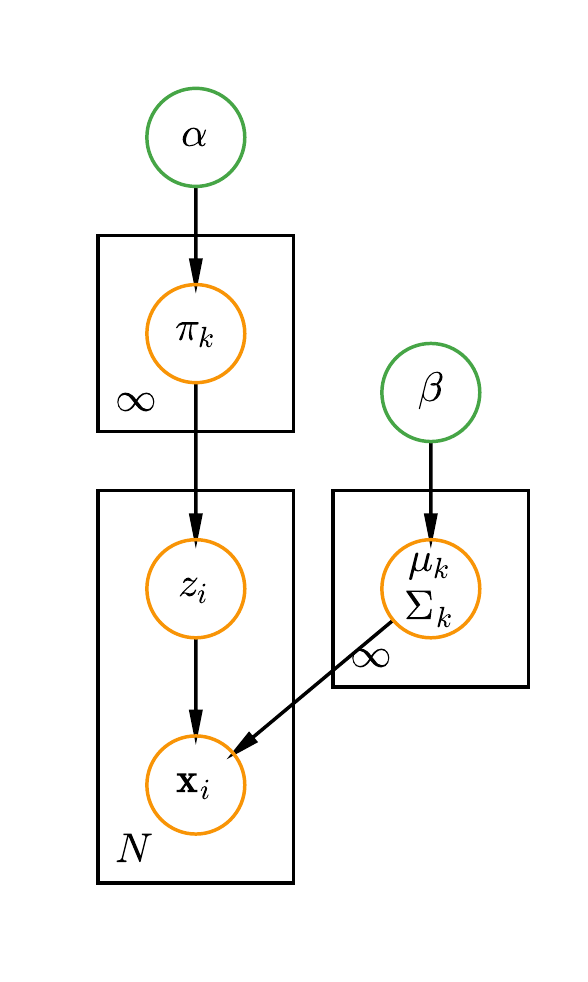} \label{fig:gmm_infinite_without_hyper}}
\caption{A Bayesian infinite GMM compared with a Bayesian finite GMM.}
\label{fig:gmm_infinite_without_hyper_allset}
\end{figure}


In Section~\ref{sec:finite_gaussian_mixture}, we used a Dirichlet distribution as the prior on $\bpi$ for the finite model. Instead, we will use a Dirichlet process (DP) prior with concentration parameter $\alpha$ in the infinite mixture model. It can be shown that by choosing the prior in this way, the model is equivalent to a CRP mixture model, where a short proof is already given by the de Finetti's Theorem in Section \ref{sec:crp_vs_dp}, a more detailed proof can be found in \citep{gershman2012tutorial}. In the case of Gaussian mixture model, we choose a NIW distribution prior with hyperparameters $\bbeta$ for the model parameters of infinite Gaussian components. And again, if using hyperprior on Chinese restaurant process, we represent the hyperparameter of the hyperprior as $a, b$. 

In the following discussion, we thus use the CRP formulation of the DP for simplicity and we also introduce the latent variables to indicate the table assignment as introduced in the Bayesian finite Gaussian mixture model. 


\subsection{Inference by collapsed Gibbs sampling}\label{sec:ifmm_collabsed_gibbs}
Same with finite Gaussian mixture model in Section \ref{sec:fmm_collabsed_gibbs}, we are able to analytically integrate out the parameters $\bpi$, $\bmu_k$ and $\bSigma_k$ due to conjugacy and sample the component assignment $\bz$ directly:
\begin{equation}
\begin{aligned}
p(z_i = k| \bznoi, \mathcal{X} , \alpha, \bbeta)  & \varpropto p(z_i = k | \bznoi, \alpha, \cancel{\bbeta})  p(\mathcal{X} |z_i = k, \bznoi, \cancel{\alpha}, \bbeta) \\
& = p(z_i = k| \bznoi, \alpha) p(\bxi |\mathcal{X}_{-i}, z_i = k, \bznoi, \bbeta) p(\mathcal{X}_{-i} |\cancel{z_i = k}, \bznoi, \bbeta)\\
& \varpropto p(z_i = k| \bznoi, \alpha) p(\bxi|\mathcal{X}_{-i}, z_i = k, \bznoi, \bbeta) \\
& \varpropto p(z_i = k| \bznoi, \alpha) p(\bxi | \xknoi, \bbeta), 										
\label{equation:ifmm_all_term}
\end{aligned}
\end{equation}
This is actually the Algorithm 3 in \citep{neal2000markov}.

\subsubsection{First term: $p(z_i = k| \bznoi, \alpha)$}
The probability $p(z_i = k | \bznoi, \alpha)$ in Equation~\eqref{equation:ifmm_all_term} is the so-called table assignment and is governed by the CRP. From Equation~\eqref{equation:ifmm_crp_equation}, we can thus get
\begin{equation}
p(z_i = k | \bznoi, \alpha)=\left\{
                \begin{array}{ll}
                  \frac{\nknoi}{N+\alpha-1},  \text{ if \textit{k} is an existing component, i.e. } \nknoi > 0, \\
                  \frac{\alpha}{N+\alpha-1},  \text{ if \textit{k} is a new component, i.e. } k = k^{\star} = K + 1	,
                \end{array}
              \right.
\label{equation:crp_equation2}
\end{equation}
where we have assumed  that $z_i$ is the last ``customer” to arrive at the ``restaurant” from exchangeability as shown in above section. For simplicity, we can also denote as $z_i \sim \mathrm{CRP}(\bznoi, \alpha)$.

The first condition in Equation~\eqref{equation:crp_equation2} also follows directly from Equation~\eqref{equation:fmm_first_term_derivation2} (where we called this setting as standard symmetric $\balpha$ setting) as $K \rightarrow \infty$. The second condition also follows from Equation~\eqref{equation:fmm_first_term_derivation2}. A more detailed analysis can be found in Section \ref{sec:bayesian_nonparametric} or \citep{rasmussen1999infinite}. We can thus conclude that Equation~\eqref{equation:crp_equation2} and Equation~\eqref{equation:fmm_first_term_derivation2} are equivalent in the limit as $K \rightarrow \infty$. 

Similarly, from Equation~\eqref{equation:ifmm_assignment_marginal}, the marginal distribution of component assignments of all the data vectors under a CRP prior is given by
\begin{equation}
p(\bz |\alpha)  = \frac{\alpha^K \prod^K_{k=1}(N_k - 1)!}{\prod^N_{n=1}(n - 1 + \alpha)} = \alpha^K \frac{\Gamma(\alpha)}{\Gamma(N+\alpha)} \prod_{k=1}^K (N_k - 1)!. 
\label{equation:ifmm_z_alpha}
\end{equation}
Similar to the discussion above about Equation~\eqref{equation:crp_equation2}, it can be shown that Equation~\eqref{equation:ifmm_z_alpha} results in the limit from Equation~\eqref{equation:fmm_z_alpha2} as $K \rightarrow \infty$ as well \citep{griffiths2005infinite}.

\subsubsection{Second term: $p(\bxi | \xknoi, \bbeta)$}
Similar to the second term discussed in Section~\ref{sec:fmm_second_term}, we can find an expression for $p(\bxi | \mathcal{X}_{-i}, z_i = k, \bznoi, \bbeta)=p(\bxi | \xknoi, \bbeta)$ in Equation~\eqref{equation:ifmm_all_term} by:
\begin{equation}
p(\bxi | \mathcal{X}_{-i}, z_i = k, \bznoi, \bbeta) = p(\bxi | \xknoi, \bbeta) = \frac{p(\mathcal{X}_k | \bbeta)}{p(\xknoi | \bbeta)}. 
\end{equation}
Again, the expression above can be calculated using Equation~\eqref{equation:niw_posterior_predictive_equation} of Equation~\eqref{equation:finite-second-term} if $z_i = k$ is an existing component. When $z_i = k^\star$ is a new component then we have 
\begin{equation}
p(\bxi| \mathcal{X}_{-i}, z_i = k^\star, \bznoi, \bbeta) = p(\bxi | \bbeta) = \int_{\bmu}  \int_{\bSigma} p(\bxi | \bmu, \bSigma) p(\bmu, \bSigma|\bbeta) d\bmu d \bSigma,
\end{equation}
which is just the prior predictive distribution and can be calculated using Equation~\eqref{equation:niw_posterior_predictive_equation} with $\mathcal{X} =\emptyset $ or using Equation~\eqref{equation:niw_prior_predictive_abstract} directly.

The pseudo code for collapsed Gibbs sampler for an infinite Gaussian mixture model is given in Algorithm \ref{algo:ifmm_plain_gibbs}.

\IncMargin{1em}
\begin{algorithm}
\SetKwData{Left}{left}\SetKwData{This}{this}\SetKwData{Up}{up}
\SetKwFunction{Union}{Union}\SetKwFunction{FindCompress}{FindCompress}
\SetKwInOut{Input}{input}\SetKwInOut{Output}{output}
\Input{Choose an initial $\bz$}
\BlankLine

\For{$T$ iterations}{
\For{$i \leftarrow 1$ \KwTo $N$}{
	Remove $\bxi$'s statistics from component $z_i$ \;
	\For{$k\leftarrow 1$ \KwTo $K$}{
		Calculate $p(z_i=k| \bznoi, \alpha) = \frac{\nknoi}{N + \alpha -1}$\;
		Calculate $p(\bxi | \xknoi, \bbeta)$\;
		Calculate $p(z_i = k | \bznoi, \mathcal{X}, \alpha, \bbeta) \propto p(z_i=k| \bznoi, \alpha) p(\bxi | \xknoi, \bbeta)$\;
	}
	Calculate $p(z_i = k^\star | \bznoi, \alpha)=\frac{\alpha}{N + \alpha -1}$\;
	Calculate $p(\bxi | \bbeta)$\;
	Calculate $p(z_i = k^\star | \bznoi, \mathcalX, \alpha, \bbeta) \propto p(z_i = k^\star | \bznoi, \alpha) p(\bxi | \bbeta)$\;
	Sample $k_{new}$ from $p(z_i | \bznoi, \mathcalX, \alpha, \bbeta)$ after normalizing\;
	Add $\bxi$'s statistics to the component $z_i=k_{new}$ \;
	If any component is empty, remove it and decrease $K$.
}
}
\caption{Collapsed Gibbs sampler for an infinite Gaussian mixture model}\label{algo:ifmm_plain_gibbs}
\end{algorithm}\DecMargin{1em}

\subsection{Get the posterior distribution for every parameter}\label{sec:ifmm_posterior_for_all_parameters}

To get the posterior distribution for every parameter we need to derive the conditional posterior distributions on all the other parameters, $p(\theta_i | \btheta_{-i}, \mathcalX)$. But for a graphical model, this conditional distribution is a function only of the nodes in the Markov blanket. In our case, the Bayesian infinite Gaussian mixture model, a directed graphical model, the Markov blanket includes the parents, the children, and the co-parents, as shown in Figure \ref{fig:gmm_infinite_without_hyper}. From this graphical representation, we can find the Markov blanket for each parameter in the model, and then figure out their conditional posterior distribution to be derived:
\begin{align}
&p (\bmu_k, \bSigma_k | \mathcalX_k, \bbeta ), \quad \mathrm{for} \quad k=1, \ldots, K, \\
&p (\bpi | \alpha, \bz). 
\end{align}

\subsubsection{Conditional distribution of cluster mean and covariance}
Because we are using collapsed Gibbs sampler here, we do not get the distribution of mean and covariance from sampling steps. But we can get them from Equation~\eqref{equation:niw_posterior_equation_1}:
\begin{align}
p(\bmu_k, \bSigma_k | \mathcalX_k, \bbeta ) &= \niw (\bmu_k, \bSigma_k | \mathbf{m}_{N_k}, \kappa_{N_k}, \nu_{N_k}, \bS_{N_k}). 
\end{align}
The mode of the joint distribution has the following form
\begin{equation}
\mathrm{arg} \max p(\bmu_k, \bSigma_k | \mathcalX_k) = (\mathbf{m}_{N_k}, \frac{\bS_{N_k}}{\nu_{N_k} + D + 2}),
\end{equation}
where the definition of $\mathbf{m}_{N_k}, \kappa_{N_k}, \nu_{N_k}$ and $\bS_{N_k}$ can be found in Equation~\eqref{equation:niw_posterior_equation_1} by replacing $N$  by $N_k$. 

\subsubsection{Conditional distribution of mixture weights}

We can similarly derive the conditional distributions of mixture weights by an application of Bayes' theorem. Instead of updating each component of $\bpi$ separately, we update them together (this is called a blocked Gibbs sampler):
$$
\begin{aligned}
	p(\bpi | \cdot) &= p(\bpi | \bz, \alpha) \\
	&\propto p(\bpi | \alpha) p(\bz | \alpha, \bpi) \\
	&= \dirichlet(\bpi | \alpha/K) \cdot \mathrm{Multinomial}(\bz | \bpi) \\
	&= \dirichlet(\bpi | \balpha=[\alpha/K, \alpha/K, \ldots, \alpha/K]) \cdot \mathrm{Multinomial}(\bz | \bpi) \\
	&\propto \prod_{k=1}^K \pi_k^{\alpha/K - 1}  \prod_{k=1}^K \pi_k^{N_k}  \\
	&= \prod_{k=1}^K \pi_k^{\alpha + N_k  - 1} \\
	&\propto \dirichlet(N_1 + \alpha/K, \ldots, N_K+\alpha/K). 
\end{aligned}
$$
\begin{figure}[h!]
	\center
	\subfigure[Without hyperprior.]{\includegraphics[width=0.33\textwidth]{img_visual/bgmm_infinite_without_hyper.pdf} \label{fig:gmm_infinite_with_hyper_compare}}
	\subfigure[With hyperprior.]{\includegraphics[width=0.33\textwidth]{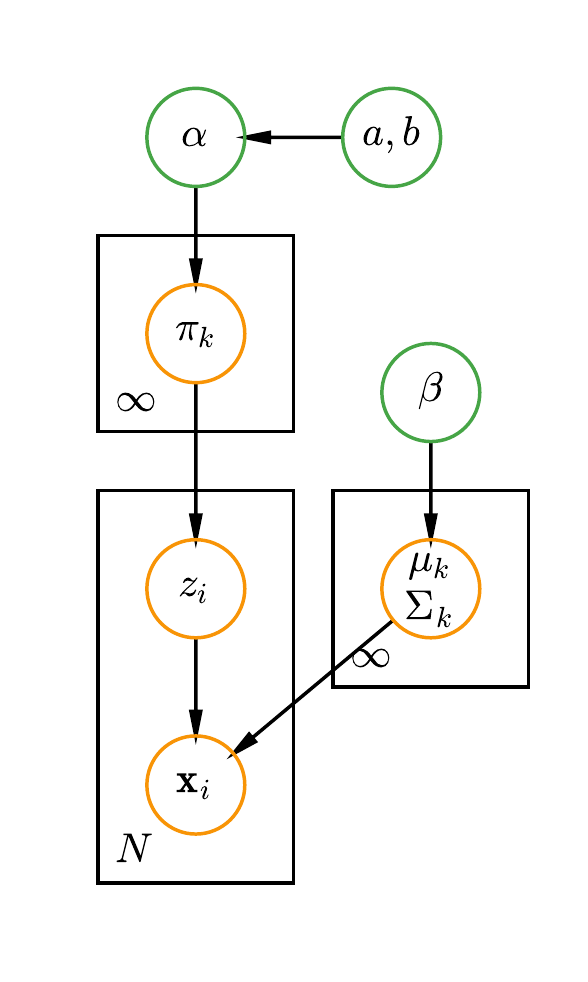} \label{fig:gmm_infinite_with_hyper}}
	\caption{A Bayesian infinite GMM with hyperprior on concentration parameter.}
	\label{fig:gmm_infinite_with_hyper_allset}
\end{figure}

\subsection{Hyperprior on the concentration parameter}


As introduced in \citep{escobar1995bayesian} and further discussed in \citep{west1992hyperparameter}, they put a hyperprior on the concentration parameter of Dirichlet process as shown in Figure~\ref{fig:gmm_infinite_with_hyper}. We here briefly discuss how to put a Gamma prior on the concentration parameter. 
From \citep{antoniak1974mixtures}, the prior distribution of number of clusters $k$ can be written as
\begin{equation}
p(k | \alpha, N)  = z_{n}(k) n! \alpha^k \frac{\Gamma(\alpha)}{\Gamma(\alpha + N)}, \quad (k=1,2,3,\ldots, N)
\label{equation:ifmm_gamma_prior_abstract}
\end{equation} 
and $z_n(k) = p(k|\alpha = 1, N)$ which does not involve $\alpha$. From our model, we can deduce
\begin{equation}
p(\alpha | k, \bpi, \mathcalX) \propto p(\alpha|k) \propto p(\alpha) p(k|\alpha, N). 
\label{equation:ifmm_gamma_prior_posterior}
\end{equation}
For $\alpha > 0$, wen can easily deduce that the Gamma functions in Equation~\eqref{equation:ifmm_gamma_prior_abstract} can be written as,
\begin{equation}
\frac{\Gamma(\alpha)}{\Gamma(\alpha + N)} = \frac{(\alpha + N) \beta(\alpha+1, N)}{\alpha \Gamma(N)},
\end{equation}
where $\beta(.,.)$ is the usual Beta function. Then for Equation~\eqref{equation:ifmm_gamma_prior_posterior}, and for any $k=1,2,\ldots, N$, it follows that 
\begin{equation}
\begin{aligned}
p(\alpha | k, N) &\propto p(\alpha)\alpha^{k-1} (\alpha+N) \beta(\alpha+1, N)\\
					 &\propto p(\alpha)\alpha^{k-1}(\alpha + N) \int_0^1 x^\alpha (1-x)^{N-1} dx,
\end{aligned}
\end{equation}
by using the definition of the Beta function. This implies that $p(\alpha | k, N)$ is the marginal distribution from a joint for $\alpha$ and a continuous quantity $x (0 < x < 1)$ such that
\begin{equation}
p(\alpha, x | k, N) \propto p(\alpha)\alpha^{k-1}(\alpha + N)x^\alpha (1-x)^{N-1}, \quad (0 < \alpha, 0 < x < 1).
\end{equation}
Hence we have conditional posteriors $p(\alpha|x, k, N)$ and $p(x|\alpha, k, N)$ determined as follows. Firstly, under the Gamma(a, b) prior for $\alpha$,
\begin{equation}
\begin{aligned}
p(\alpha|x, k) &\propto \alpha^{a+k-2} (\alpha + N) e^{-\alpha(b-\log(x))} \\
			&\propto \alpha^{a+k-1}e^{-\alpha(b-\log(x))} + N \alpha^{a+k-2}e^{-\alpha(b-\log(x))}. 
\end{aligned}
\end{equation}
for $\alpha > 0$, which reduces easily to a mixture of two gamma densities, i.e.,
\begin{equation}
(\alpha|x, k, N) \sim \pi_x \cdot \gammadist(a + k, b - \log(x)) + (1 - \pi_x) \cdot \gammadist(a + k - 1, b - \log(x)) 
\end{equation}
with weights $\pi_x$ defined by
\begin{equation}
\frac{\pi_x}{1 - \pi_x} = \frac{(a+k-1)}{N(b-\log (x))}. 
\end{equation}
Secondly, 
\begin{equation}
p(x|\alpha, k, N) \propto x^\alpha(1 - x)^{N-1} \quad (0 < x < 1),
\end{equation}
so that $(x|\alpha, k, N) \sim \betadist(\alpha + 1, N)$, a Beta distribution with mean $(\alpha + 1)/(\alpha + N + 1)$.

\subsection{Problem in CRP mixture model}
The development of Markov chain Monte Carlo sampling techniques \citep{ishwaran2001gibbs, ishwaran2002approximate, antoniak1974mixtures, neal2000markov} further popularizes the CRP mixture model in a wide array of applications, such as machine learning, pattern recognition, statistics, etc. Nevertheless, as shown in \citep{xu2016bayesian, miller2013simple}, the CRP mixture models tend to produces relative large number of clusters regardless of whether they are needed to accurately characterize the data - this is particularly true for large data sets. However, some of these clusters are typically redundant and negligible so that interpretability, parsimony, data storage and communication costs all are hampered by having overly many clusters. And when the underlying data generating density is a finite mixture of Gaussians, the posterior number of clusters under the CRP mixture model is inconsistent, i.e., the posterior distribution of the number of clusters does not converge to the point mass at the underlying true number of cluster $K_0$. 

Dirichlet process mixture (DPM) models and closely related formulations have been very widely used for flexible modeling of data and for clustering.  DPMs of Gaussians have been shown to possess frequentist optimality properties in density estimation, obtaining minimax adaptive rates of posterior concentration with respect to the true unknown smoothness of the density \citep{shen2011adaptive}.  DPMs are also very widely used for probabilistic clustering of data.  In the clustering context, it is well known the DPMs favor introducing new components at a log rate as the sample size increases, and tend to produce some large clusters along with many small clusters.  As the sample size $N$ increases, these small clusters can be introduced as an artifact even if they are not needed to characterize the true data generating process; for example, even if the true model has finitely many clusters, the DPM will continue to introduce new clusters as $N$ increases \citep{miller2013simple}.


Continuing to introduce new clusters as $N$ increases can be argued to be an appealing property.  The number of `types’ of individuals is unlikely to be finite in an infinitely large population, and there is always a chance of discovering new types as new samples are collected.  This rationale has motivated a rich literature on generalizations of Dirichlet processes, which have more flexibility in terms of the rate of introduction of new clusters.  For example, the two parameter Poisson-Dirichlet process (aka, the Pitman-Yor process) is a generalization that instead induces a power law rate, which is more consistent with many observed data processes  \citep{perman1992size}. There has also been consideration of a rich class of Gibbs-type processes, which considerably generalize Pitman-Yor to a broad class of so-called exchangeable partition probability functions (EPPFs) \citep{gnedin2006exchangeable, lijoi2010models, de2015gibbs}.  Much of the emphasis in the Gibbs-type process literature has been on data in which `species’ are observed directly, and the goal is predicting the number of new species in a further sample \citep{lijoi2007bayesian}. It remains unclear whether such elaborate generalizations of Dirichlet processes have desirable behavior when clusters/species are latent variables in a mixture model.

\citep{lu2018reducing} proposes a powered Chinese restaurant process to overcome this kind of problem. The emphasis of \citep{lu2018reducing} is on addressing practical problems that arise in implementing DPMs and generalizations when sample sizes and data dimensionality are moderate too large.  In such settings, it is common knowledge that the number of clusters can be too large, leading to a lack of interpretability, computational problems and other issues.  For these reasons, it is well motivated to develop {\em sparser} clustering methods that do not restrict the number of clusters to be finite {\em a priori} but instead favor deletion of small clusters that may not be needed to accurately characterize the true data generating mechanism.  With this goal in mind, we find that the usual focus on exchangeable models, and in particular EPPFs, can limit practical performance.  There has been some previous work on non-exchangeable clustering methods motivated by incorporation of predictor-dependence in clustering \citep{blei2011distance, socher2011spectral}, but the focus is instead on providing a simple approach that tends to delete small and unnecessary clusters produced by a DPM.  Marginalizing out the random measure in the DPM specification produces a Chinese Restaurant Process (CRP).  \citep{lu2018reducing} proposes a simple powered modification to the CRP, which has the desired impact on clustering and develop associated inference methods.

\subsection{Powered Chinese restaurant process (pCRP)}

\subsubsection{Generative powered Chinese restaurant process}
Before our description of powered Chinese restaurant process (pCRP) and to show the properties of pCRP, we first consider a generative process of powered number of customers. The generative process for a pCRP is as follows: each customer sits at a previously occupied table with probability proportional to the powered number of customers already sitting there. For example, the first customer enters and sits at the first table. The second customer enters and sits at the first table with probability $\frac{1}{1+\alpha}$ and at a new table with probability $\frac{\alpha}{1^r+\alpha}$. This power value $r$ will have effect when the table has more than one customer. 

\begin{figure}[!h]
\center
\subfigure[Table index for every customer when $N=10000$]{\includegraphics[width=0.45\textwidth]{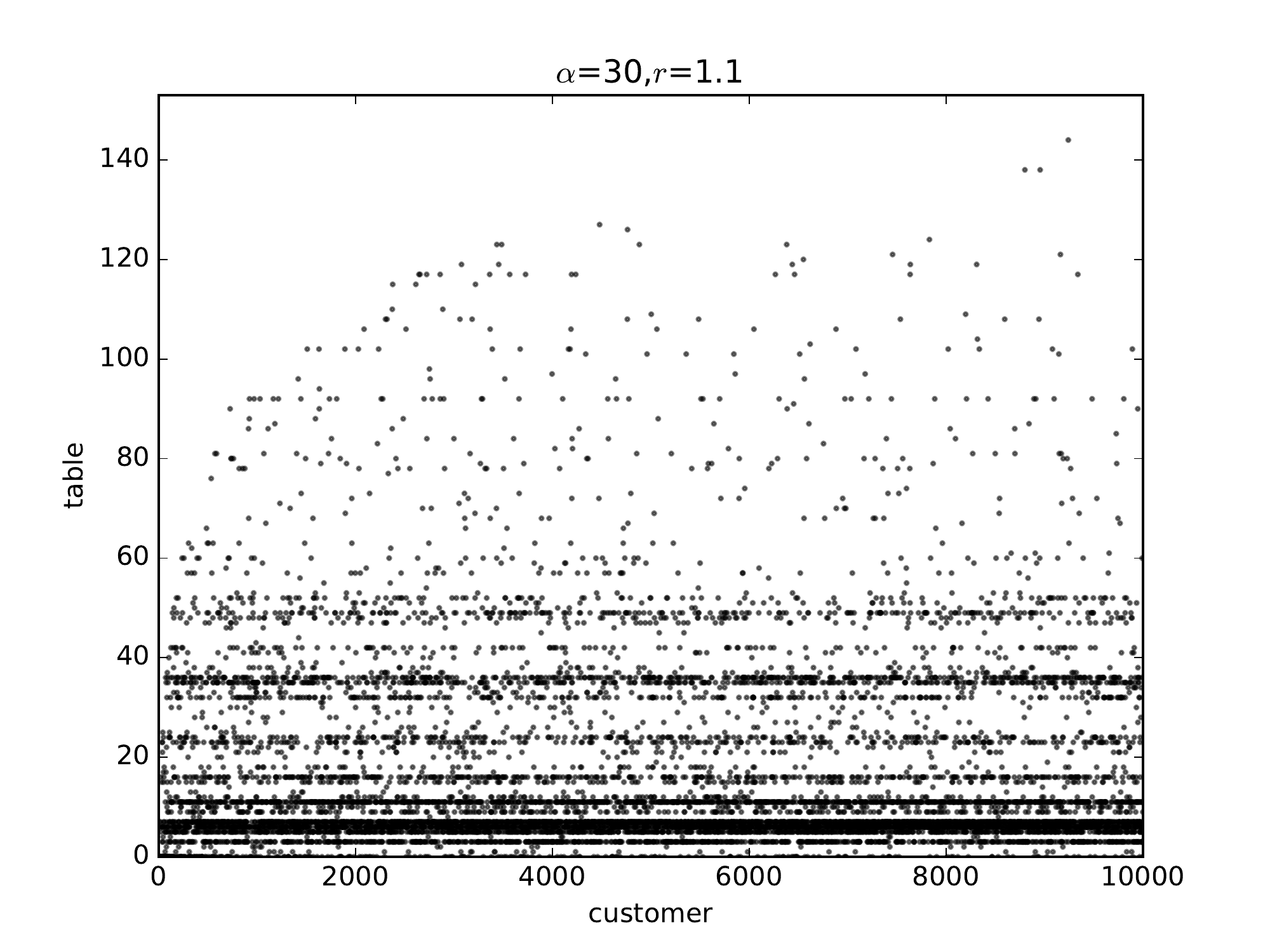} \label{fig:gpcrp_customer_table}}
\subfigure[Number of customers for every table when $N=10000$]{\includegraphics[width=0.45\textwidth]{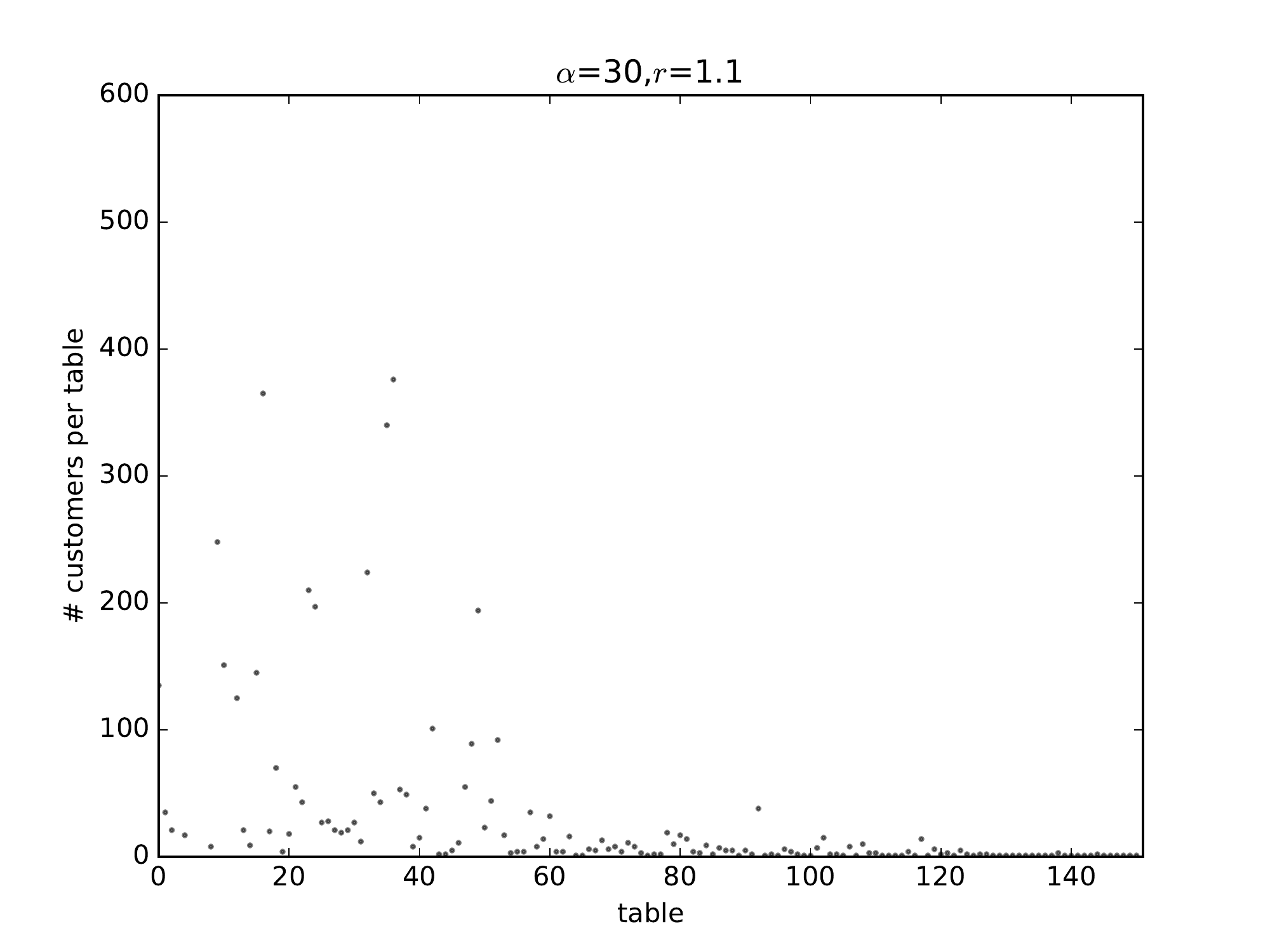} \label{fig:gpcrp_customer_per_table}}
\subfigure[Table index for every customer when $N=10000$]{\includegraphics[width=0.45\textwidth]{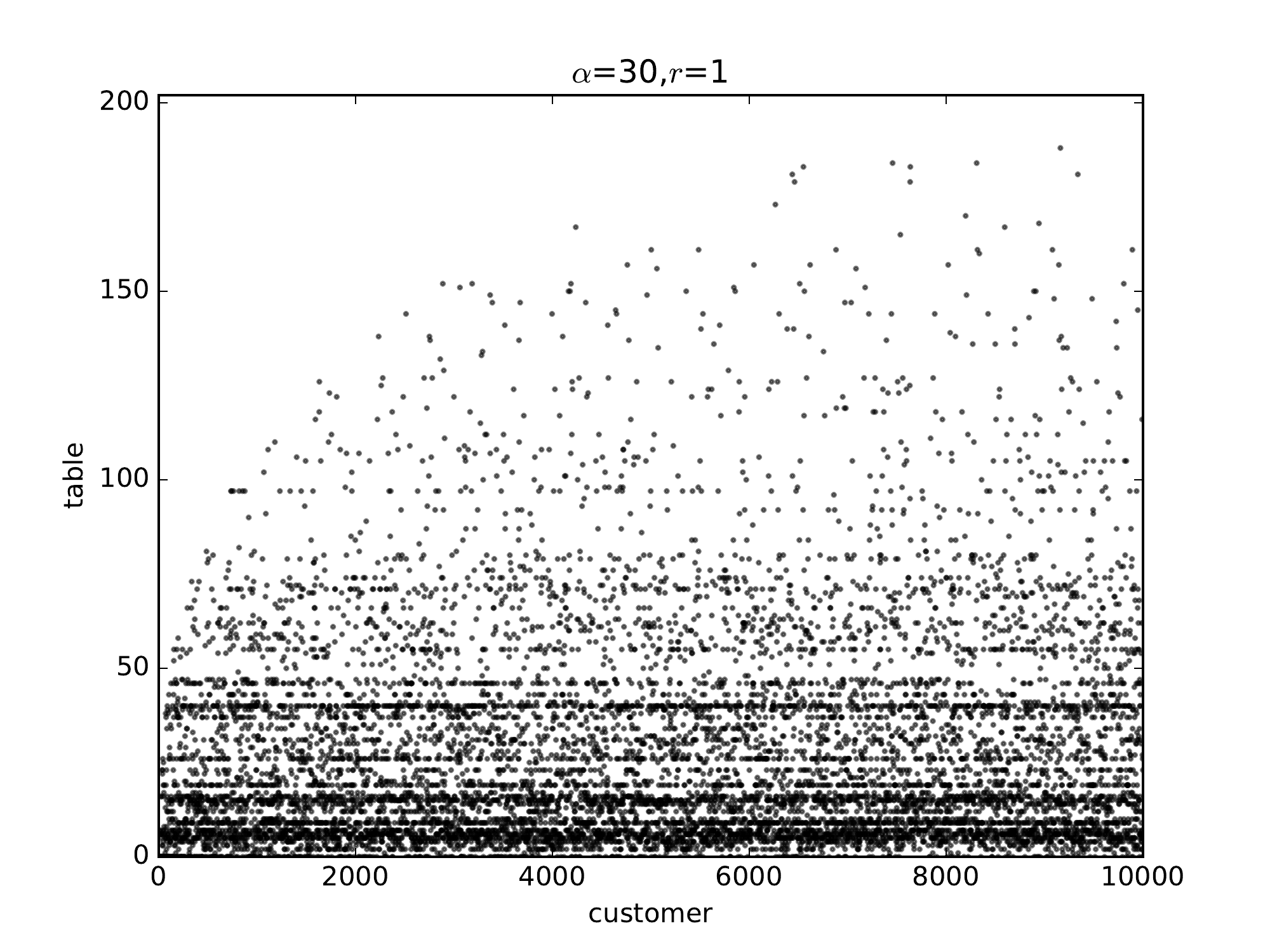} \label{fig:crp_customer_table}}
\subfigure[Number of customers for every table when $N=10000$]{\includegraphics[width=0.45\textwidth]{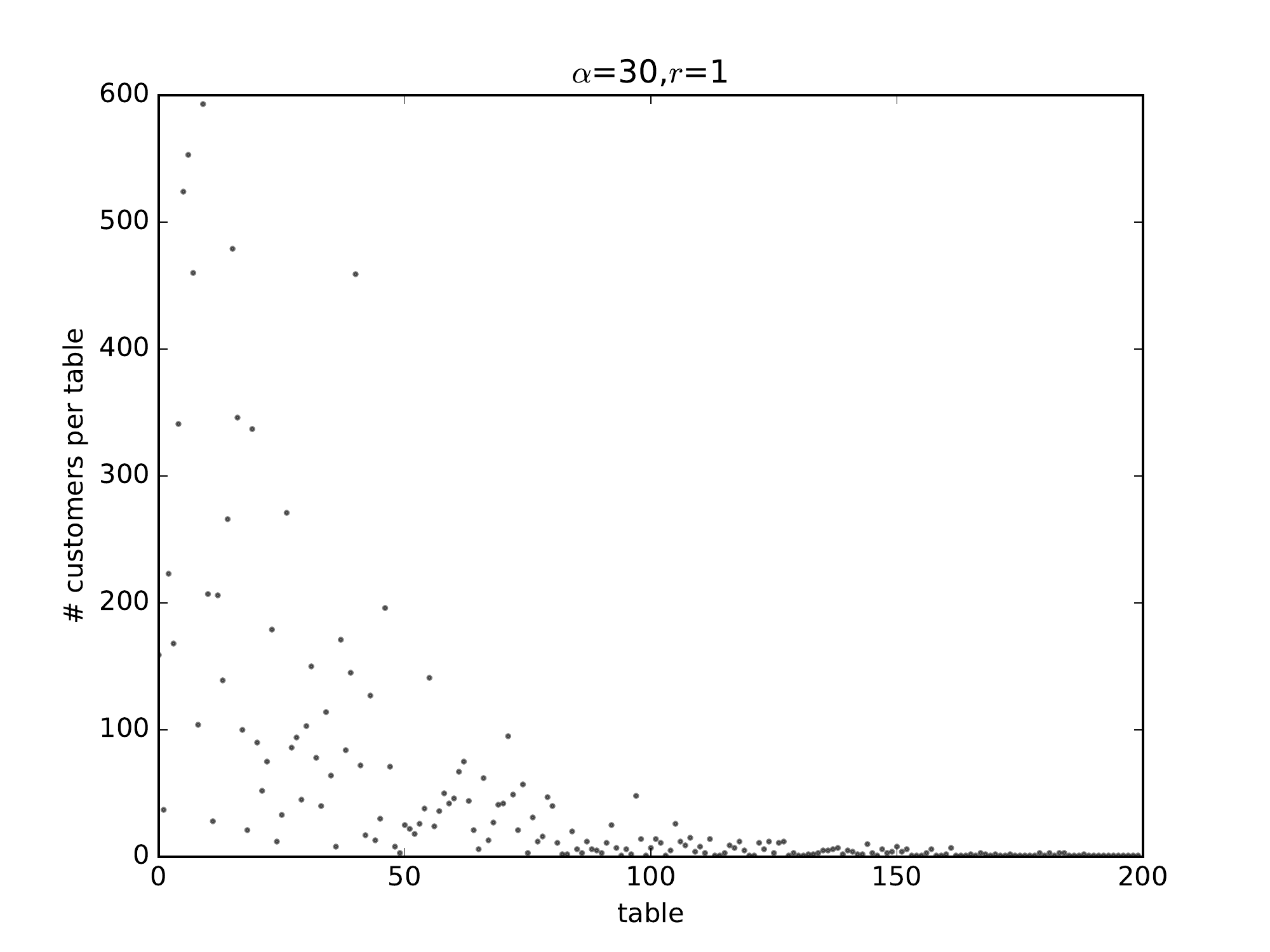} \label{fig:crp_customer_per_table}}
\caption{Draw from a generative powered Chinese restaurant process ($r=1.1$, \textbf{upper-two} figures) and a Chinese restaurant process ($r=1$, \textbf{bottom-two} figures)}
\label{fig:customer_table_pcrp}
\end{figure}

Figure~\ref{fig:customer_table_pcrp} shows each table seated by the customers, and number of customers per table in a draw of pCRP and a draw of CRP. Although this generative powered Chinese restaurant process is not equal to what we propose in Equation~\eqref{equation:powered_crp_equation}, we can see the effect of power value on the number of customers such that it can shrink number of tables (140 in pCRP compared 200 in CRP). Theorem~\ref{theorem:generative_pcrp_expected_num} gives the expected number of tables in a generative power Chinese restaurant process.

\begin{svgraybox}
\begin{theorem}\label{theorem:generative_pcrp_expected_num}
	Assume $N$ customers in a generative pCRP, then the expected number of occupied tables $\E[K_N|\alpha] \in [O(\alpha \varsigma(r)), O(\alpha \log N)]$ when $N \rightarrow \infty$. Where $\varsigma(r)$ is the Riemann zeta function.
\end{theorem}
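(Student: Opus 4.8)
The plan is to mirror the proof of Theorem~\ref{theorem:number-of-clusters} for the ordinary CRP: introduce, for each arriving customer, an indicator of the event that it opens a fresh table, and then sandwich the resulting sum between a harmonic series and a convergent $p$-series. Concretely, let $v_n$ be the indicator that the $n$-th customer starts a new table, so that $K_N = \sum_{n=1}^N v_n$ and, by linearity, $\E[K_N \mid \alpha] = \sum_{n=1}^N \E[v_n]$. Conditioning on the seating configuration of the first $n-1$ customers, the generative rule (new table with probability proportional to $\alpha$, table $k$ with probability proportional to $N_k^r$) gives
\[
\E[v_n] = \E\!\left[ \frac{\alpha}{\alpha + \sum_k N_k^r} \right],
\]
where the inner sum runs over the tables occupied by those $n-1$ customers and $\sum_k N_k = n-1$.

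First I would establish, for $r \ge 1$, the deterministic two-sided bound $n-1 \le \sum_k N_k^r \le (n-1)^r$ valid for every configuration. The left inequality is immediate since $N_k^r \ge N_k$ for integers $N_k \ge 1$; the right inequality is the superadditivity $\sum_k a_k^r \le (\sum_k a_k)^r$ for $a_k \ge 0$, which I would prove by checking $(1+t)^r \ge 1 + t^r$ for $t \ge 0$ and inducting on the number of tables. Because $x \mapsto \alpha/(\alpha+x)$ is decreasing, these bounds pass through the expectation without any knowledge of the configuration's law, yielding
\[
\frac{\alpha}{\alpha + (n-1)^r} \;\le\; \E[v_n] \;\le\; \frac{\alpha}{\alpha + (n-1)}.
\]

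Summing the right-hand estimate recovers a shifted harmonic series, $\sum_{n=1}^N \alpha/(\alpha + n - 1) = O(\alpha \log N)$, exactly as in the ordinary CRP, which supplies the upper endpoint of the claimed interval. Summing the left-hand estimate gives $\sum_{n=1}^{N} \alpha/(\alpha + (n-1)^r) = 1 + \sum_{m=1}^{N-1} \alpha/(\alpha + m^r)$; for $r > 1$ this is a convergent $p$-series, and I would bound its tail on both sides using $m^r \le \alpha + m^r \le (1+\alpha) m^r$ for $m \ge 1$, obtaining $\frac{\alpha}{1+\alpha}\,\varsigma(r) \le \sum_{m\ge 1} \frac{\alpha}{\alpha+m^r} \le \alpha\,\varsigma(r)$. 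This identifies the limiting lower bound as $\Theta(\alpha\,\varsigma(r))$, hence the lower endpoint $O(\alpha\,\varsigma(r))$ as $N \to \infty$.

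The harmonic and zeta asymptotics are routine; the main obstacle is conceptual rather than computational. The lower bound arises from the (extremely improbable) fully concentrated configuration and the upper bound from the fully spread configuration, so the two endpoints are genuinely far apart and the theorem asserts only a containing range rather than a sharp rate. I would therefore be careful to state that the interval is meaningful only for $r > 1$ (so that $\varsigma(r) < \infty$), and to record that at $r = 1$ both endpoints collapse to the $O(\alpha \log N)$ behavior of the classical CRP of Theorem~\ref{theorem:number-of-clusters}, which serves as the consistency check.
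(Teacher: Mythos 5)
Your proof is correct and follows essentially the same route as the paper's: the indicator-variable decomposition $K_N = \sum_n v_n$, the deterministic sandwich $n-1 \le \sum_k N_k^r \le (n-1)^r$ passed through the decreasing map $x \mapsto \alpha/(\alpha+x)$, and the harmonic-series versus convergent $p$-series asymptotics for the two endpoints. The only difference is that you supply the details the paper waves off as ``easily proved'' (the superadditivity inequality, the zeta-tail bounds, and the caveat that $r>1$ is needed for $\varsigma(r)<\infty$), which is a welcome tightening rather than a different argument.
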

\end{svgraybox}

\begin{proof}
Again we introduce a indicator variable $v_i$, which indicates the event that customer i starts a new table. Then the total number of tables after $N$ customers is just $\sum_{n=1}^N = v_n$. The probability of $v_n=1$ is 
\begin{equation}
p(v_n=1 | \alpha) = \frac{\alpha}{\alpha+[n-1]^r},
\end{equation}
where $[n-1]^r$ is the sum of powered number of customer at each table. For example, if there are two tables with $(10)$ and $(n-11)$ customers at each table, then $[n-1]^r = 10^r + (n-11)^r$.

It can be easily proved that $[i-1]^r$ ranges from $i-1$ to $(i-1)^r$ when $r>1$. We have
\begin{equation}
\E[K_N | \alpha] = \E \left[\sum_{n=1}^N v_n\right] = \sum_{n=1}^N \E[v_n] \in \left[\sum_{n=1}^N \frac{\alpha}{\alpha + (n-1)^r} ,\sum_{n=1}^N \frac{\alpha}{\alpha + n - 1}\right]. 
\end{equation}
This gives the result.
\end{proof}

\subsubsection{Powered Chinese restaurant process}

Popular Bayesian nonparametric priors, such as the Dirichlet process \citep{ferguson1973bayesian}, Chinese restaurant process, Pitman-Yor process \citep{perman1992size} and Indian buffet process \citep{griffiths2005infinite}, assume infinite exchangeability.  In particular, suppose we have a clustering process for an infinite sequence of data points $i=1,2,3,\ldots,\infty$.  This clustering process will induce a partition of the integers $\{1,2,\ldots,N\}$ into $K_N$ clusters of size $N_1,N_2,\ldots,N_{K_N}$, for $N=1,2,\ldots,\infty$. For an exchangeable clustering process, the probability of a particular partition of $\{1,2,\ldots,N\}$ only depends on $N_1,N_2,\ldots,N_{K_N}$ and $K_N$, and does not depend on the order of the indices $\{1,2,\ldots,N\}$.  In addition, the probability distributions for different choices of $N$ are {\em coherent};  the probability distribution of partitions of $\{1,2,\ldots,N\}$ can be obtained from the probability distribution of partitions of $\{1,2,\ldots,N+1\}$ by marginalizing out the cluster assignment for data point $i=N+1$.  These properties are often highly appealing computationally and theoretically, but it is nonetheless useful to consider processes that violate the infinite exchangeability assumption.  This can occur when the addition of a new data point $i=N+1$ to a sample of $N$ data points can impact the clustering of the original $N$ data points.  For example, we may re-evaluate whether data point $1$ and $2$ are clustered together in light of new information provided by a third data point, a type of {\em feedback} property.  

The proposed new powered Chinese restaurant process (pCRP), which is designed to favor elimination of artifactual small clusters produced by the usual CRP by implicit incorporation of a feedback property violating the usual exchangeability assumptions.  The proposed pCRP makes the random seating assignment of the customers depend on the powered number of customer at each table (i.e., raise the number of each table to power $r$). Formally, we have
\begin{equation}
p(z_i = k | \bznoi, \alpha)=\left\{
                \begin{array}{ll}
                  \frac{N_{k,-i}^r}{\sum_h^K N_{h,-i}^r+\alpha},  \text{ if \textit{k} is occupied, i.e., }  N_k > 0,\\
                  \frac{\alpha}{\sum_h^K N_{h,-i}^r+\alpha},  \text{ if \textit{k} is a new table, i.e., }  k = k^{\star} = K + 1,
                \end{array}
              \right.
\label{equation:powered_crp_equation}
\end{equation}
where $r>1$ and $N_{k,-i}$ is the number of customers seated at table $k$ excluding customer $i$. More generally, one may consider a $g$-CRP to generalize the CRP such that 
\begin{equation}
p(z_i = k | \bznoi, \alpha)=\left\{
\begin{array}{ll}
\frac{g(N_{k,-i})}{\sum_h^K g(N_{h,-i})+\alpha},  \text{ if \textit{k} is occupied, i.e. }  N_k > 0,\\
\frac{\alpha}{\sum_h^K g(N_{h,-i})+\alpha},  \text{ if \textit{k} is a new table, i.e. }  k = k^{\star} = K + 1,
\end{array}
\right.
\label{equation:g_crp_equation}
\end{equation}
where $g(\cdot): \mathbb{R^+} \rightarrow \mathbb{R^+} $ is an increasing function and $g(0) = 0$. We achieve shrinkage of small clusters via a \textbf{\textit{rich-get-(more)-richer}} property by requiring $g(x) \geq x$ for $x > 1$ to `enlarge'  clusters containing more than one element. We require the $g$-CRP to maintain a \textit{\textbf{proportional invariance}} property:
\begin{equation}\label{eq:proportional.invariant}
\frac{g(cN_1)}{g(cN_2)} = \frac{g(N_1)}{g(N_2)}
\end{equation}
for any $c, N_1, N_2 > 0$, so that scaling cluster sizes by a constant factor has no impact on the prediction rule in Equation~\eqref{equation:g_crp_equation}.  The following Lemma~\ref{lemma:unique} shows that the pCRP in Equation~\eqref{equation:powered_crp_equation} using the power function is the only $g$-CRP that satisfies the proportional invariance property. 
\begin{svgraybox}
\begin{lemma}
	\label{lemma:unique}
	If a continuous function $g(x): \mathbb{R^+} \rightarrow \mathbb{R^+}$ satisfies Equation~\eqref{eq:proportional.invariant}, then $g(x) = g(1) \cdot x^r$ for all $x > 0$ and some constant $r \in \mathbb{R}$. 
\end{lemma}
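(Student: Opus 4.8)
The plan is to recognize the functional equation in Equation~\eqref{eq:proportional.invariant} as a variant of the classical Cauchy-type multiplicative equation, and to reduce it to a single-variable form that I can solve by the logarithmic substitution. The key observation is that the proportional invariance property says the ratio $g(cN_1)/g(cN_2)$ is independent of the common scaling $c$.

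Let me describe the reduction. First I would fix $N_2 = 1$ as a reference point. Setting $N_1 = x$ and $N_2 = 1$ in Equation~\eqref{eq:proportional.invariant} gives, for every $c > 0$,
\begin{equation}
\frac{g(cx)}{g(c)} = \frac{g(x)}{g(1)}.
\end{equation}
Rearranging, this reads $g(cx) = \frac{g(c) g(x)}{g(1)}$ for all $c, x > 0$. So after defining the normalized function $h(x) := g(x)/g(1)$, the equation becomes the purely multiplicative Cauchy equation
\begin{equation}
h(cx) = h(c)\, h(x), \qquad c, x > 0,
\end{equation}
with $h$ continuous and positive on $\mathbb{R}^+$, and $h(1) = 1$.

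The next step is to solve this multiplicative equation. The standard route is to pass to logarithms: define $\phi(t) := \log h(e^t)$ for $t \in \mathbb{R}$. Substituting $c = e^s$, $x = e^t$ turns the multiplicative equation into the additive Cauchy equation $\phi(s+t) = \phi(s) + \phi(t)$, and $\phi$ is continuous because $h$ is continuous and positive and $\log$, $\exp$ are continuous. The classical theorem on the additive Cauchy equation states that a continuous solution must be linear, i.e.\ $\phi(t) = r t$ for some constant $r \in \mathbb{R}$. Undoing the substitutions yields $h(e^t) = e^{rt}$, hence $h(x) = x^r$ for all $x > 0$, and therefore $g(x) = g(1)\, x^r$, which is exactly the claimed conclusion.

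**The main obstacle** is not really analytic but a matter of invoking the right regularity hypothesis cleanly: the additive Cauchy equation has pathological (non-measurable, everywhere-discontinuous) solutions in the absence of a regularity assumption, so the proof must use the continuity of $g$ in an essential way. I would make sure to state explicitly that continuity of $g$ transfers to continuity of $\phi$, and cite (or briefly recall) the standard fact that continuity forces the additive solution to be linear; this is the one place where hand-waving would leave a genuine gap. A minor point to handle carefully is that $g$ maps into $\mathbb{R}^+$ strictly (so $g$ is never zero and the logarithm is well defined), which is guaranteed by the codomain $\mathbb{R}^+$ in the lemma's hypothesis.
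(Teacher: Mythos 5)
Your proposal is correct and follows essentially the same route as the paper's own proof: both reduce the proportional-invariance property to the multiplicative Cauchy equation (your $h(cx)=h(c)h(x)$ is the paper's $f(st)=f(s)f(t)$ with reference point $N=1$), pass to logarithms to obtain the additive Cauchy equation, and invoke continuity to conclude linearity. Your explicit remark that continuity is essential (to rule out pathological solutions of the Cauchy equation) is a slightly more careful statement of a point the paper treats implicitly, but the argument is the same.
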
 
\end{svgraybox}
\begin{proof}[of Lemma~\ref{lemma:unique}]
	It is easy to verify that  $g(x) = g(1) \cdot x^r$ for some $r > 0$ is a solution to the functional equation~\eqref{eq:proportional.invariant}. We next show its uniqueness.
	
	Equation~\eqref{eq:proportional.invariant} implies that $g(cN_1)/g(N_1) = g(cN_2)/g(N_2)$ for any $N_1, N_2 > 0$. Denote $f(c) = g(cN)/g(N) > 0$ for arbitrary $N > 0$. We then have $f(s t) = g(stN)/g(N) = g(stN)/g(tN) \cdot g(tN) / g(N) = f(s) f(t)$ for any $s, t > 0$. By letting $f^*(x) = f(e^x) > 0$, it follows that $\log f^*(s + t) = \log f^*(s) + \log f^*(t)$, which is the well known Cauchy functional equation and has the unique solution $\log f^*(x) = r x$ for some constant $r$. Therefore, $f(x) = f^*(\log(x)) = x^r$ which gives $g(cN) = g(N) c^r$. We complete the proof by letting $N = 1$. 
\end{proof}

As a generalization of the CRP, which corresponds to the special case in which $r=1$, the proposed pCRP with $r > 1$ generates new clusters following a probability that is configuration dependent and not exchangeable. For example, for three customers $z_1, z_2, z_3$, $p(z_3 = 2 \mid z_1=1, z_2=1) < p( z_3 = 1 \mid z_1=1, z_2=2)$, where $z_i = k$ if the $i^{th}$ customer sits at table $k$. 
This non-exchangeability is a critical feature of pCRP, allowing new cluster generation to learn from existing patterns. Consider two extreme configurations: (i) $K_N = N$ with one member in each cluster, and (ii) $K_N = 1$ with all members in a single cluster. The probabilities of generating a new cluster under (i) and (ii) are both $\alpha/(N + \alpha)$ in CRP, but dramatically different in pCRP: (i) $\alpha/(N + \alpha)$ and (ii) $\alpha/(N^r + \alpha)$, respectively. Therefore, if the previous customers are more spread out, there is a larger probability of continuing this pattern by creating new tables.  Similarly, if customers choose a small
number of tables, then a new customer is more likely to join the dominant clusters rather than open a new table.

The power $r$ is a critical parameter controling how much we penalize small clusters. The larger the power $r$, the greater the penalty.  A method is proposed to choose $r$ in a data-driven fashion: cross validation using a proper loss function to select a fixed $r$.

\subsubsection{Power parameter tuning}\label{sec:pcrp_parameter_calibration}
The proportional invariance property makes it easier to define a cross validation (CV) procedure for estimating $r$.  In particular, one can tune $r$ to obtain good performance on an initial training sample and that $r$ would also be appropriate for a subsequent data set that has a very different sample size.  For other choices of $g(\cdot)$, which do not possess proportional invariance, it may be necessary to adapt $r$ to the sample size for appropriate calibration.   

In evaluating generalization error, we use the following loss function based on within-cluster sum of squares:
\begin{equation}
 \sum_{k=1}^K  \sqrt{ \sum_{j: j \in C_k}^{N_k} ||\bx_j - \overline{\bx}_k||^2 },
\end{equation}
where $C_k$ is the data samples in the $k^{th}$ cluster and $ \overline{\bx}_k$ is the mean vector for cluster $k$. The square root has an important impact in favoring a smaller nunber of clusters (see also the discussion about inertia and squared inertia in Section~\ref{section:squared-inertia}); for example, inducing a price to be paid for introducing two clusters with the same mean.  In implementing CV, we start by choosing a small value of $r$ ($r=1+\epsilon$) and then increasing until we identify an inflection point.

\subsubsection{Posterior inference by collapsed Gibbs sampling}
Although the proposed pCRP is generic, we focus on its application in Gaussian mixture models for concreteness. We here introduce a collapsed Gibbs sampling algorithm~\citep{neal2000markov} for posterior computation. In addition, we permute the data at each sampling iteration to eliminate order dependence as in~\citep{socher2011spectral}.

Again, let $\mathcalX$ be the observations, assumed to follow a mixture of multivariate Gaussian distributions. We use a conjugate normal-inverse-Wishart (NIW) prior $p(\bmu, \bSigma | \bbeta)$ for the mean vector $\bmu$ and covariance matrix $\bSigma$ in each multivariate Gaussian component, where $\bbeta$ consists of all the hyperparameters in NIW. A key quantity in a collapsed Gibbs sampler is the probability of each customer $i$ sitting with table $k$: $p(z_i = k | \bznoi, \mathcalX, \alpha, \bbeta)$, where $\bznoi$ are the seating assignments of all the other customers and $\alpha$ is the concentration parameter in CRP and pCRP. This probability is calculated as follows:
\begin{equation} 
\begin{aligned}
p(z_i = k| \bznoi, \mathcal{X} , \alpha, \bbeta)  & \varpropto p(z_i = k | \bznoi, \alpha, \cancel{\bbeta})  p(\mathcal{X} |z_i = k, \bznoi, \cancel{\alpha}, \bbeta) \\
& = p(z_i = k| \bznoi, \alpha) p(\bxi |\mathcal{X}_{-i}, z_i = k, \bznoi, \bbeta) p(\mathcal{X}_{-i} |\cancel{z_i = k}, \bznoi, \bbeta)\\
& \varpropto p(z_i = k| \bznoi, \alpha) p(\bxi|\mathcal{X}_{-i}, z_i = k, \bznoi, \bbeta) \\
& \varpropto p(z_i = k| \bznoi, \alpha) p(\bxi | \xknoi, \bbeta), 										
\end{aligned}
\label{equation:pcrp_ifmm_collabsed_gibbs22}
\end{equation}  
where $\xknoi$ are the observations in table $k$ excluding the $i^{th}$ observation. Algorithm~\ref{algo:pcrp_ifmm_plain_gibbs} gives the pseudo code of the collapsed Gibbs sampler to implement pCRP in Gaussian mixture models.
\IncMargin{1em}
\begin{algorithm}
\SetKwData{Left}{left}\SetKwData{This}{this}\SetKwData{Up}{up}
\SetKwFunction{Union}{Union}\SetKwFunction{FindCompress}{FindCompress}
\SetKwInOut{Input}{input}\SetKwInOut{Output}{output}
\Input{Choose an initial $\bz$, $r$, $\alpha$, $\bbeta$}
\BlankLine

\For{$T$ iterations}{
Sample random permutation $\tau$ of $1, \ldots, N$ \;
\For{$i \in (\tau(1), \ldots, \tau(N))$}{
	Remove $\bxi$'s statistics from component $z_i$ \;
	\For{$k\leftarrow 1$ \KwTo $K$}{
			Calculate $p(z_i=k| \bznoi, \alpha) =  \frac{N_{k,-i}^r}{\sum_h^K N_{h,-i}^r+\alpha}$\;
			Calculate $p(\bxi | \xknoi, \bbeta)$\;
			Calculate $p(z_i = k | \bznoi, \mathcal{X}, \alpha, \bbeta) \propto p(z_i=k| \bznoi, \alpha) p(\bxi | \xknoi, \bbeta)$\;
	}
	Calculate $p(z_i = k^\star | \bznoi, \alpha)= \frac{\alpha}{\sum_h^K N_{h,-i}^r+\alpha}$\;
	Calculate $p(\bxi | \bbeta)$\;
	Calculate $p(z_i = k^\star | \bznoi, \mathcalX, \alpha, \bbeta) \propto p(z_i = k^\star | \bznoi, \alpha) p(\bxi | \bbeta)$\;
	Sample $k_{new}$ from $p(z_i | \bznoi, \mathcalX, \alpha, \bbeta)$ after normalizing\;
	Add $\bxi$'s statistics to the component $z_i=k_{new}$ \;
	If any component is empty, remove it and decrease $K$.
}
}
\caption{Collapsed Gibbs sampler for a pCRP Gaussian mixture model.}\label{algo:pcrp_ifmm_plain_gibbs}
\end{algorithm}\DecMargin{1em}

\subsubsection{Future work}
Further to powered Chinese restaurant process, we introduce an adaptive version of it.
Adaptive powered Chinese restaurant process (Ada-pCRP) is an extension of pCRP that overcomes the main weekness of pCRP. The idea of Ada-pCRP is simple: it adaptively choose the power $r$ from the proportion of small tables in all tables. In machine learning field, we have a lot of adaptive gradient descent methods \citep{ruder2016overview}: AdaGrad \citep{duchi2011adaptive} is an algorithm for gradient-based optimization that does just this: it adapts the learning rate to the parameters, performing larger updates for infrequent and smaller updates for frequent parameters; AdaDelta \citep{zeiler2012adadelta} is an extension of AdaGrad that seeks to reduce its aggressive, monotonically decreasing learning rate. 

Instead of choosing a power $r$ for all sampling steps, we need to choose an upper bound power $r_{up}>1$. When the proportion of small clusters (noise)  $p_{noise}$ is large, we tend to tune the power $r$ towards $r_{up}$. Otherwise we tend to make it close to $1$ (i.e., Chinese restaurant process). In practice, the $p_{noise}$ can be chosen by the percentage of small clusters. Formally we have 
\begin{equation}
r = 1 + (r_{up} - 1) \times p_{noise}
\end{equation}
and
\begin{equation}
p(z_i = k | z_{-i}, \alpha)=\left\{
                \begin{array}{ll}
                  \frac{N_{k,-i}^r}{\sum_h^K N_{h,-i}^r+\alpha},  \text{ if \textit{k} is occupied, i.e. } N_k > 0\\
                  \frac{\alpha}{\sum_h^K N_{h,-i}^r+\alpha},  \text{ if \textit{k} is a new table, i.e. } k = k^{\star} = K + 1	
                \end{array}
              \right.
\label{equation:ada_powered_crp_equation}
\end{equation}
But the convergence of Ada-pCRP cannot be guaranteed because of the changing of power value.

\subsubsection{Examples}
We conduct some examples to demonstrate the main advantages of the proposed pCRP using both synthetic and real data. In a wide range of scenarios across various sample sizes, pCRP reduces over-clustering of CRP, and leads to performances that are as good or better than CRP in terms of density estimation, out of sample prediction, and overall clustering results. 

In all experiments, we run the Gibbs sampler 20,000 iterations with a burn-in of 10,000. The sampler is thinned by keeping every 5$^{th}$ draw. We use the same concentration parameter $\alpha = 1$ for both CRP and pCRP in all scenarios. In addition, we equip CRP with an unfair advantage to match the magnitude of its prior mean $\alpha \log(N)$ to the true number of clusters, termed as {\it CRP-Oracle}. The power $r$ in pCRP is tuned using cross validation. In order to measure overall clustering performance, we use normalized mutual information (NMI) \citep{mcdaid2011normalized} and variation of information (VI) \citep{meilua2003comparing}, which measures the similarity between the true and estimated cluster assignments. Higher NMI and lower VI indicate better performance.  If applicable, metrics using the true clustering are calculated to provide an upper bound for all methods, coded as `Ground Truth'. The metrics are discussed in Section~\ref{section:some-metrics}. Feel free to skip this section for a first reading.

\subsubsection{Simulation experiments}\label{sec:simulated_examples} 
We first use simulated data to assess the performance of pCRP in emptying extra components, compared to the traditional CRP.  Figure~\ref{fig:pcrp_prior_densities} shows the true data generating density, which represent the two cases of well-mixed Gaussian components and shared mean Gaussian mixture coded as Sim 1 and Sim 2, respectively. The parameters of the simulations are as follows:

\begin{itemize}
\item Sim 1: $K_0=3$, with $N$=300, $\bm{\pi}$=\{0.35, 0.4, 0.25\}, $\bm{\mu}$=\{0, 2, 5\} and $\bm{\Sigma}$=\{0.5, 0.5, 1\};
\item Sim 2: $K_0=2$, with $N$=500, $\bm{\pi}$=\{0.65, 0.35\}, $\bm{\mu}$=\{1, 1\} and $\bm{\Sigma}$=\{10, 1\};
\end{itemize}

The oracle concentration parameters in CRP-Oracle are (0.52, 0.40) in Sim 1 and (0.35, 0.26) in Sim 2 corresponding to the two sample sizes (300, 2000), which are all smaller than the unit concentration parameter used in CRP and pCRP. Figure~\ref{fig:pcrp_cross-validation} shows the cross validation curve to select $r$ in pCRP using a training data set with 200 samples. The representative cases of infection point described in Section \ref{sec:pcrp_parameter_calibration} were observed: the loss curve for cross validation blows up at one point of $r$ value in Sim 1, while the curve decreases rapidly at one point of $r$ value in Sim 2. We choose this change point as the power $r$ in either case. 

\begin{figure}[h!]
\center
\subfigure[Sim 1]{\includegraphics[width=0.4\textwidth]{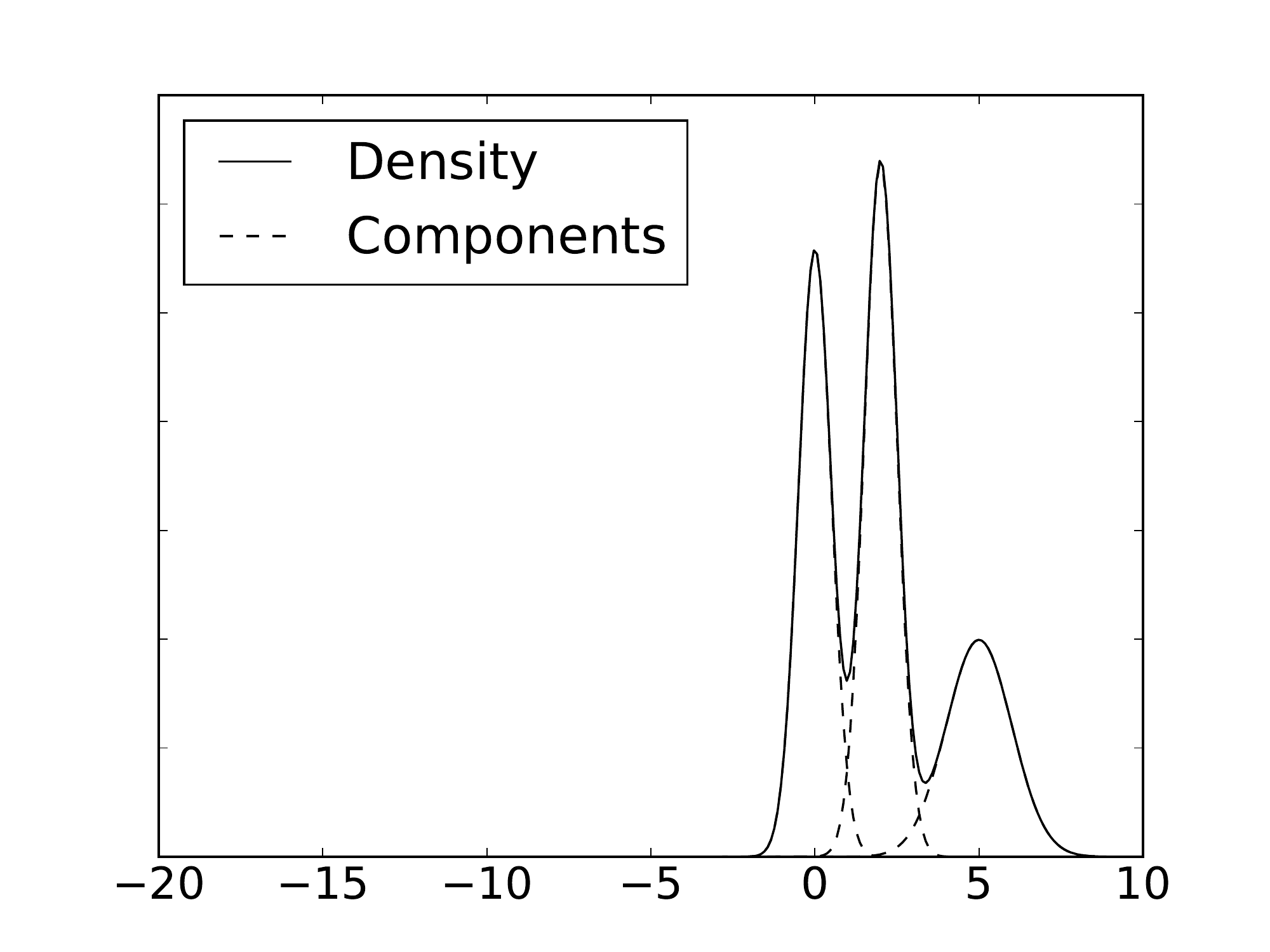} \label{fig:true_density_sim2}}
\subfigure[Sim 2]{\includegraphics[width=0.4\textwidth]{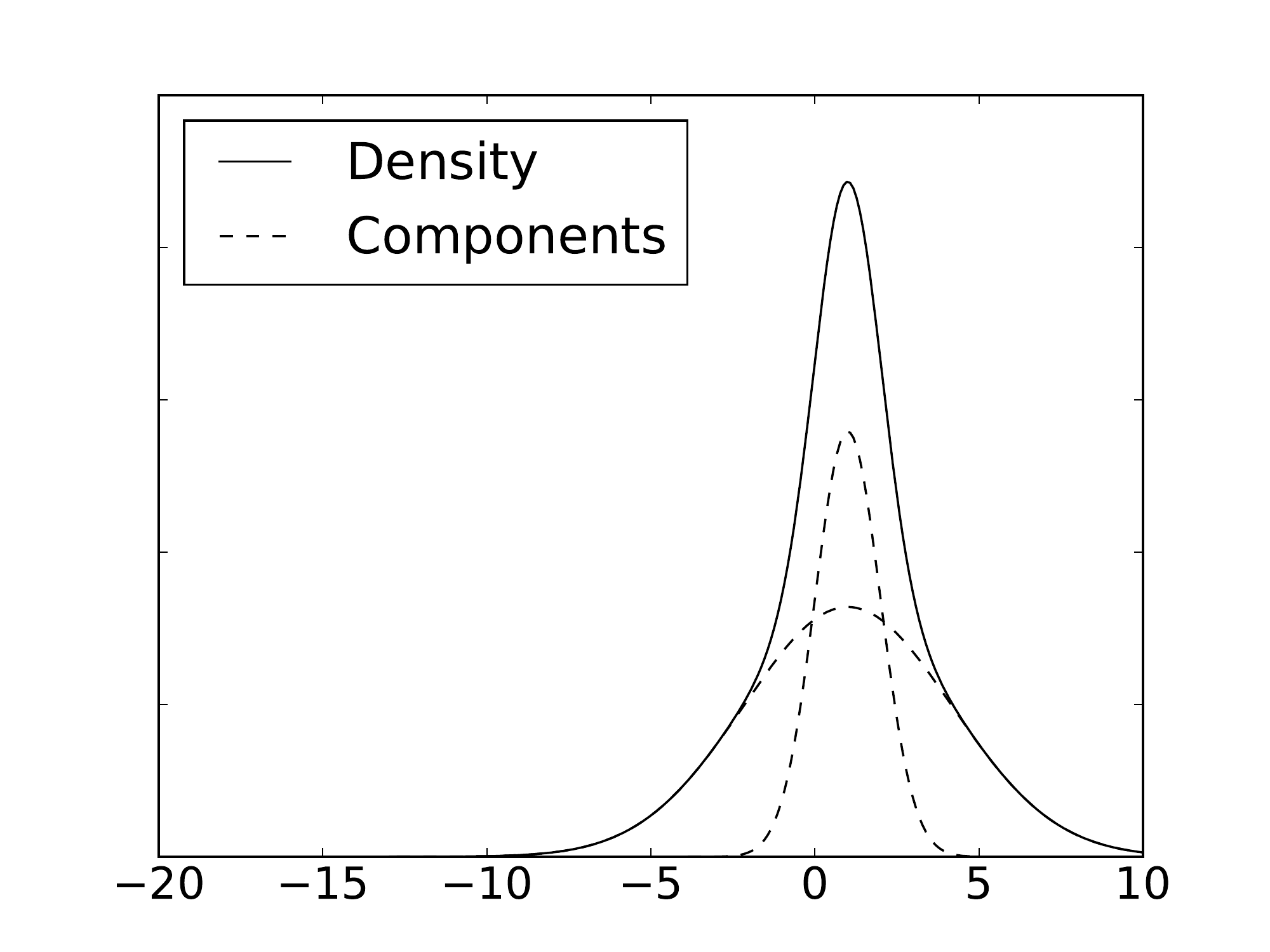} \label{fig:true_density_sim6}}
\caption{Data generating densities for two scenarios: (a) Sim1: a mixture of three poorly separated Gaussian components; (b) Sim 2: a mixture of two components with the same mean value.}
\label{fig:pcrp_prior_densities}
\end{figure}

\begin{figure}[h!]
\center
\subfigure[Sim 1]{\includegraphics[width=0.4\textwidth]{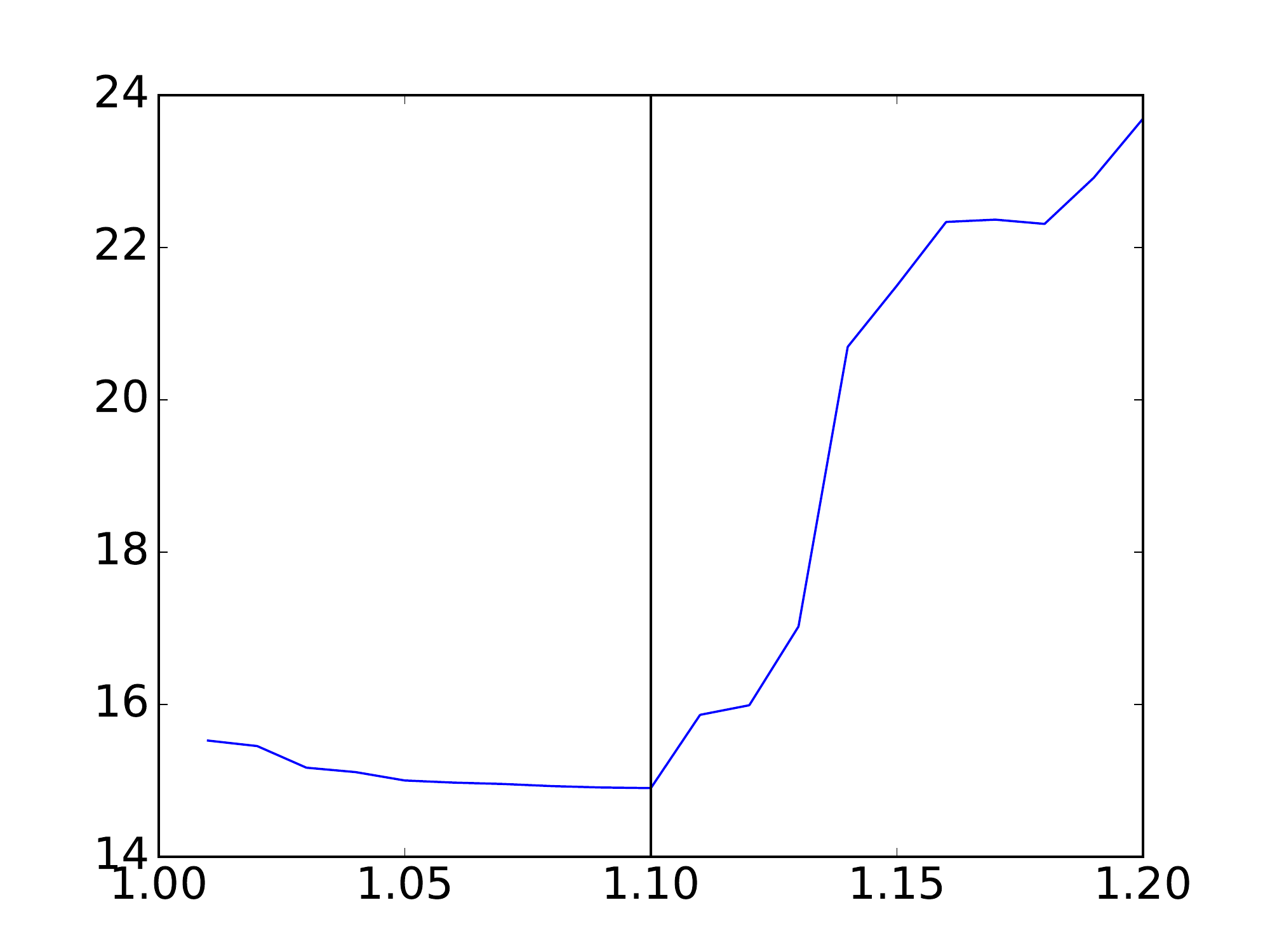} \label{fig:pcrp_cv_sim2}}
\subfigure[Sim 2]{\includegraphics[width=0.4\textwidth]{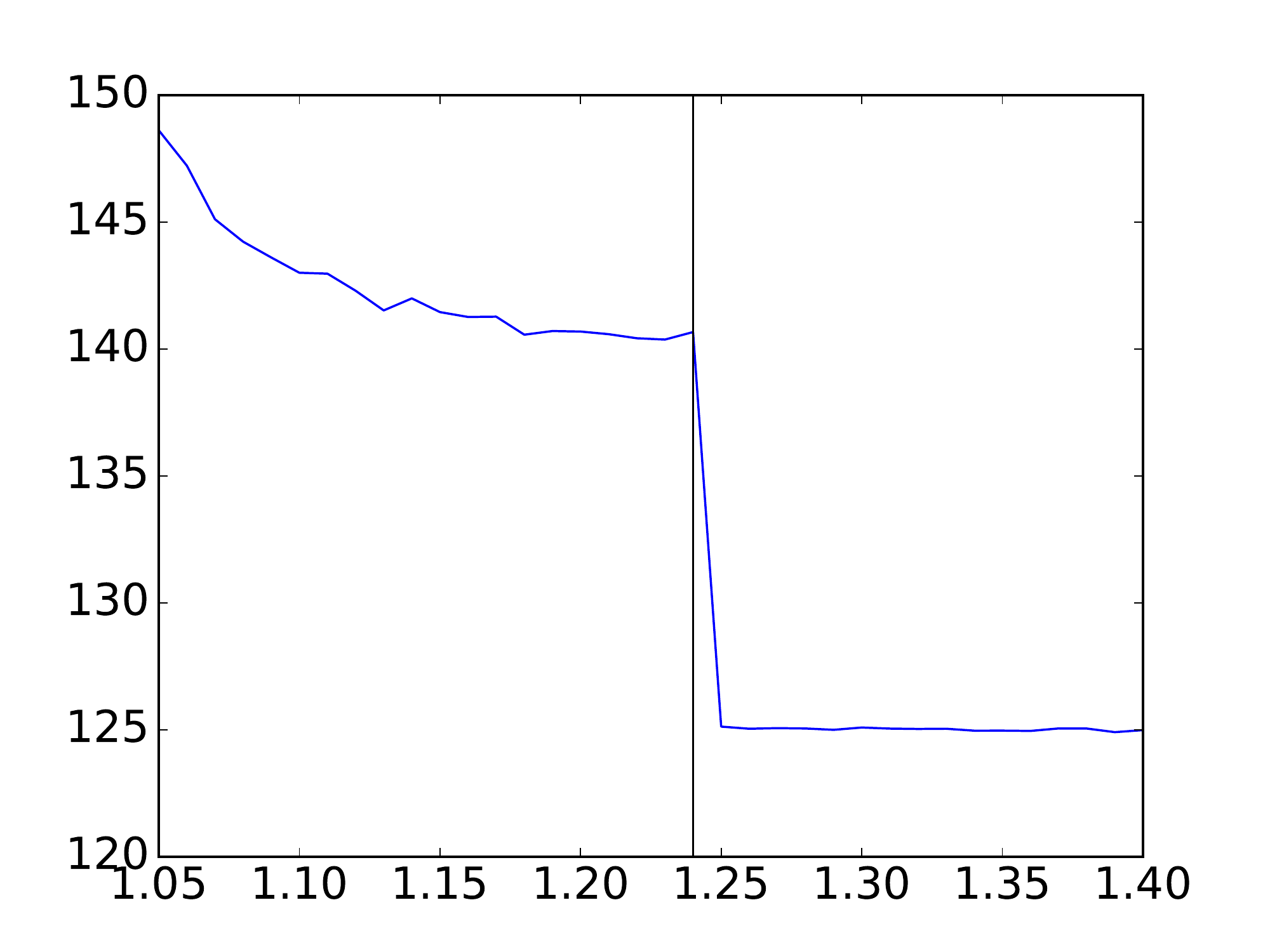} \label{fig:pcrp_cv_sim6}}
\caption{Cross validation curves to choose $r$ for Sim 1 and Sim 2. The $x$-axis is the power value, the $y$-axis is the loss. The vertical line is the chosen power $r$ value.}
\label{fig:pcrp_cross-validation}
\end{figure}

\begin{figure}[!h]
	\center
	\subfigure[ CRP-Oracle]{\includegraphics[width=0.3\textwidth]{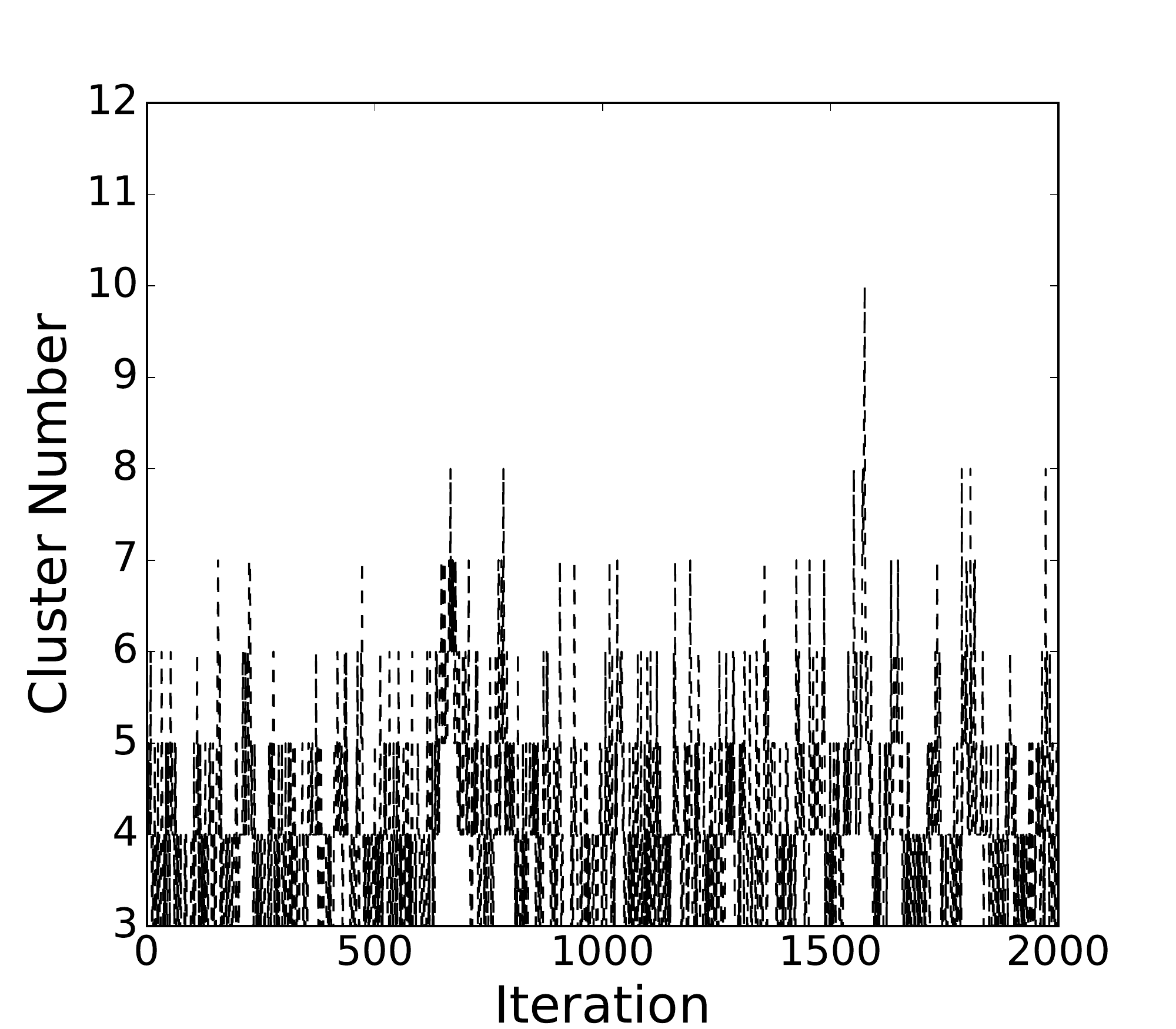} \label{fig:pcrp_clusternum_3methods_traceplot_sim1_crp04}}
	\subfigure[ CRP]{\includegraphics[width=0.3\textwidth]{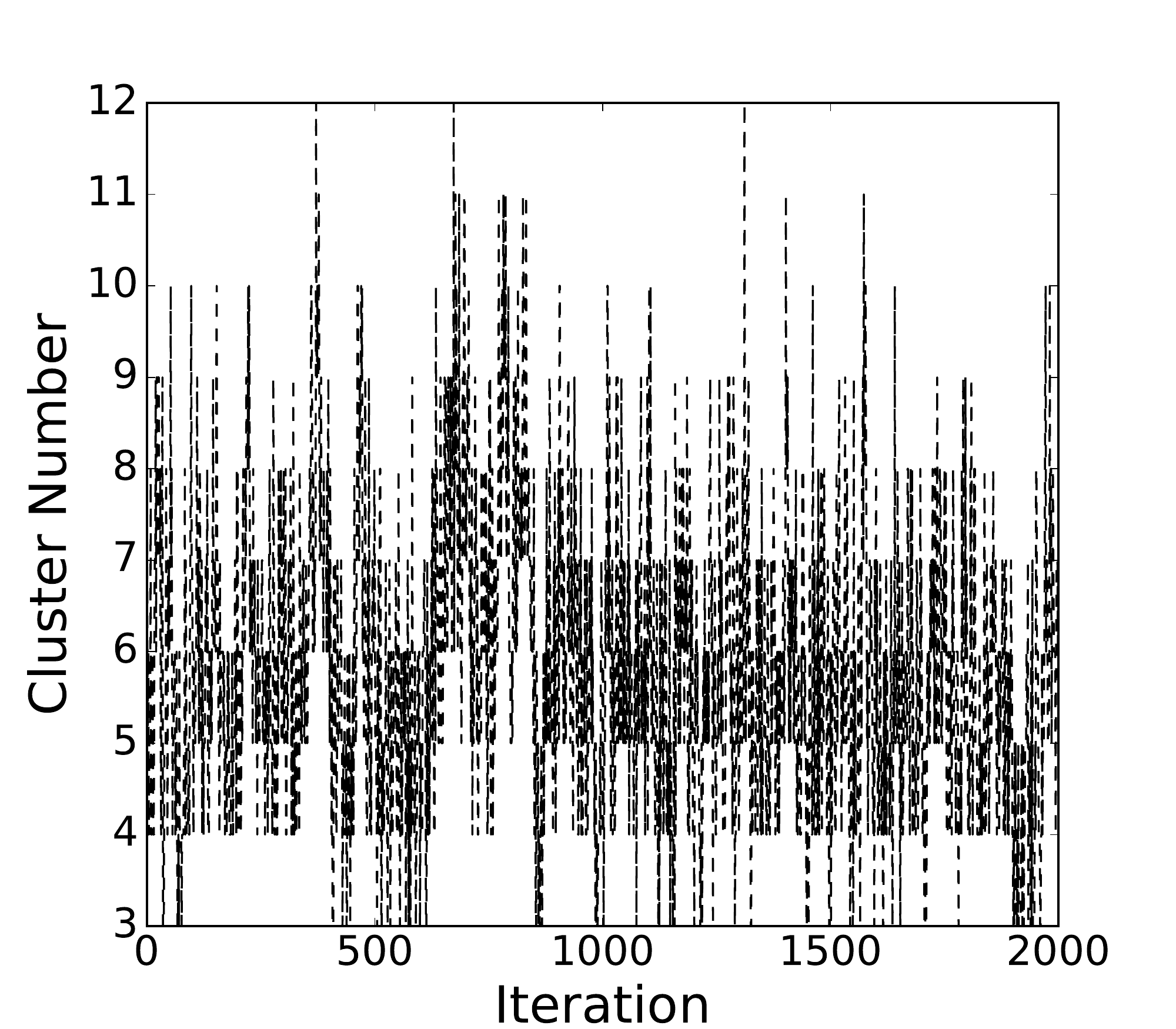} \label{fig:pcrp_clusternum_3methods_traceplot_sim1_crp1}}
	\subfigure[ pCRP]{\includegraphics[width=0.3\textwidth]{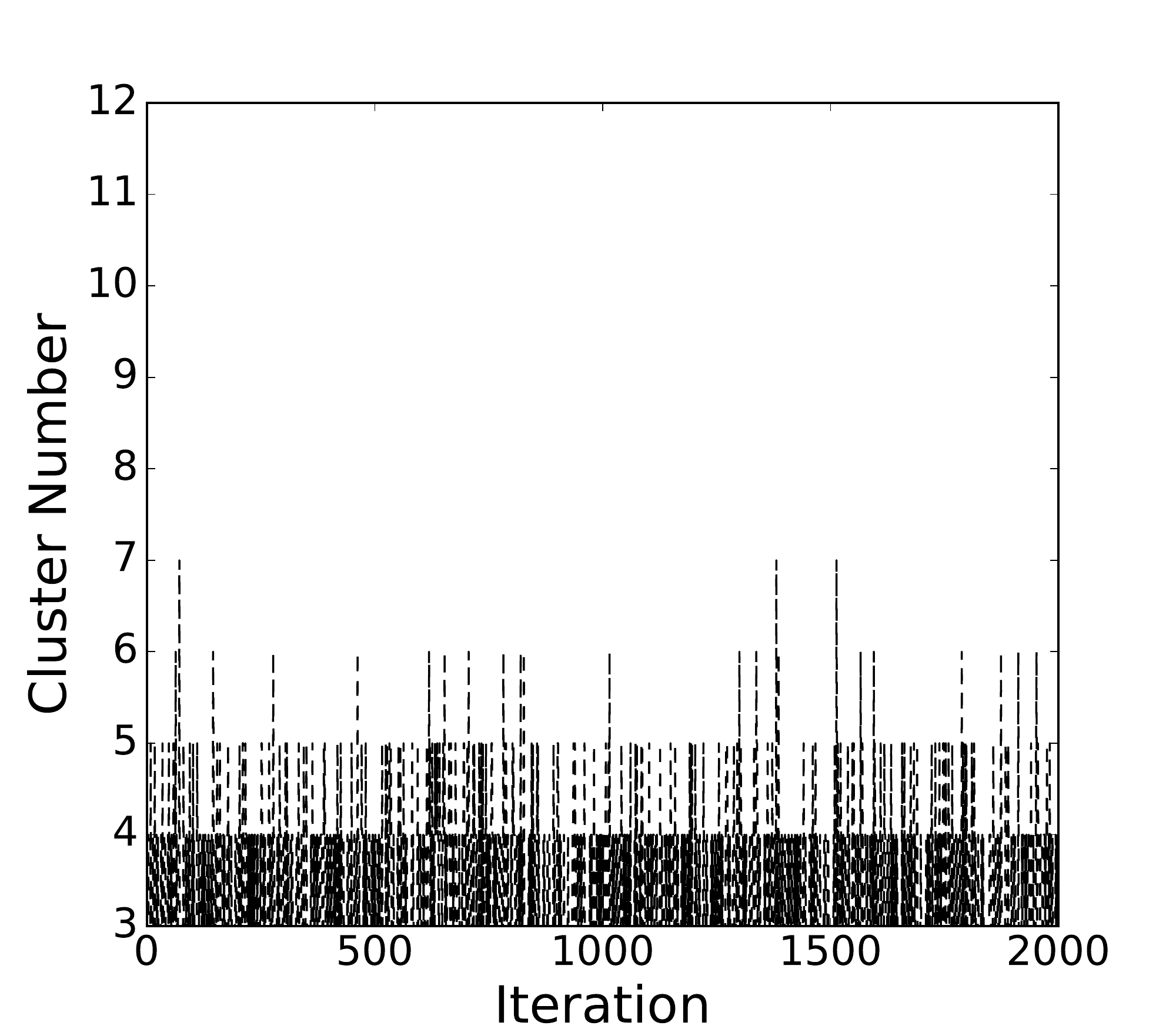} \label{fig:pcrp_clusternum_3methods_traceplot_sim1_pcrp1}}
	\caption{Traceplots of cluster numbers using the three methods in Sim 1 when $N=2000$. The $x$-axis is the sampling iteration, the $y$-axis is the number of clusters.}
	\label{fig:pcrp_clusternum_3methods_traceplot_sim1}
\end{figure}

\begin{figure}[!h]
	\center
	\subfigure[CRP-Oracle]{\includegraphics[width=0.3\textwidth]{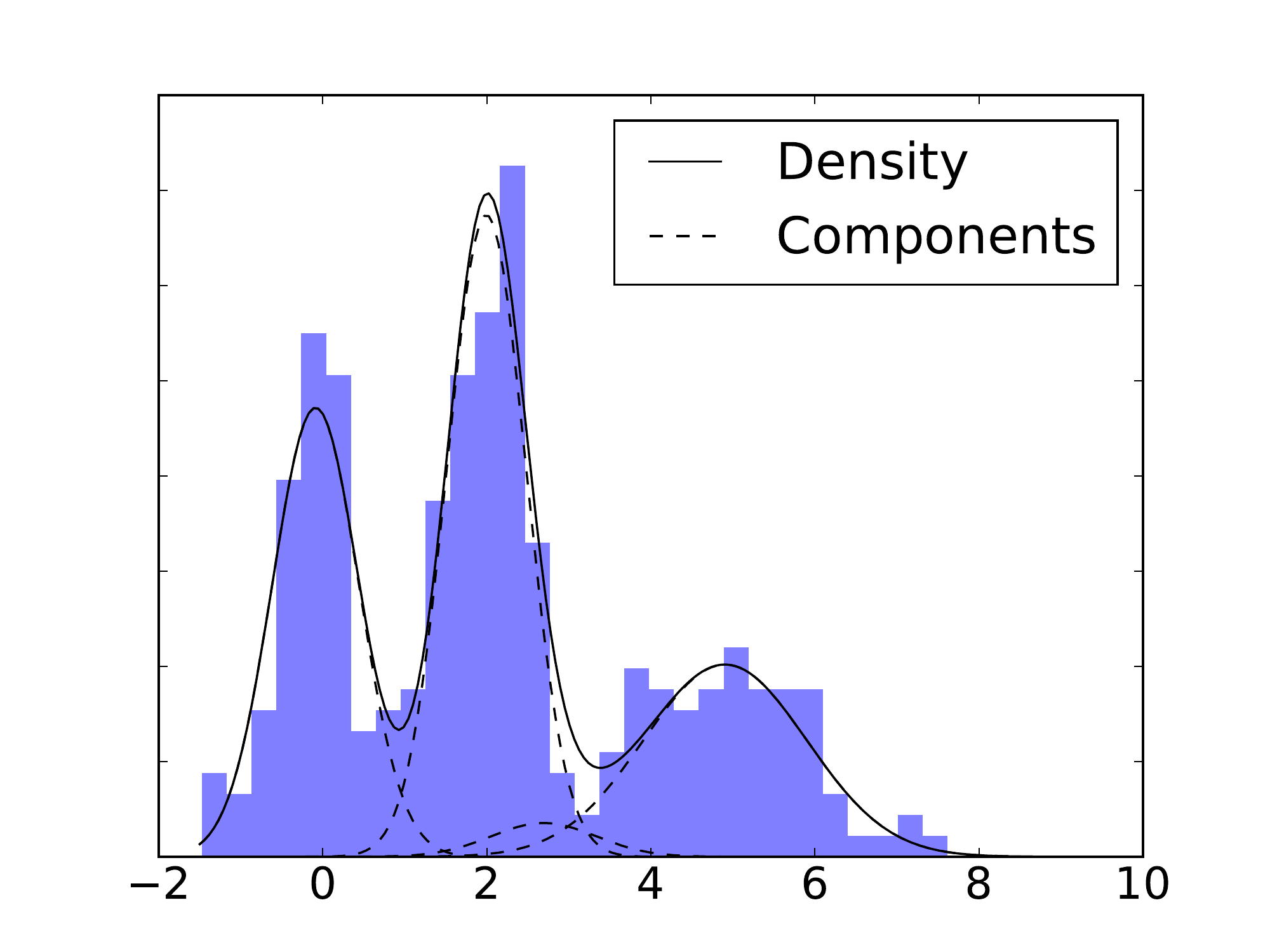} \label{fig:adhoc_3methods_traceplot_sim1_posterior_densities_crp_match}}
	\subfigure[CRP]{\includegraphics[width=0.3\textwidth]{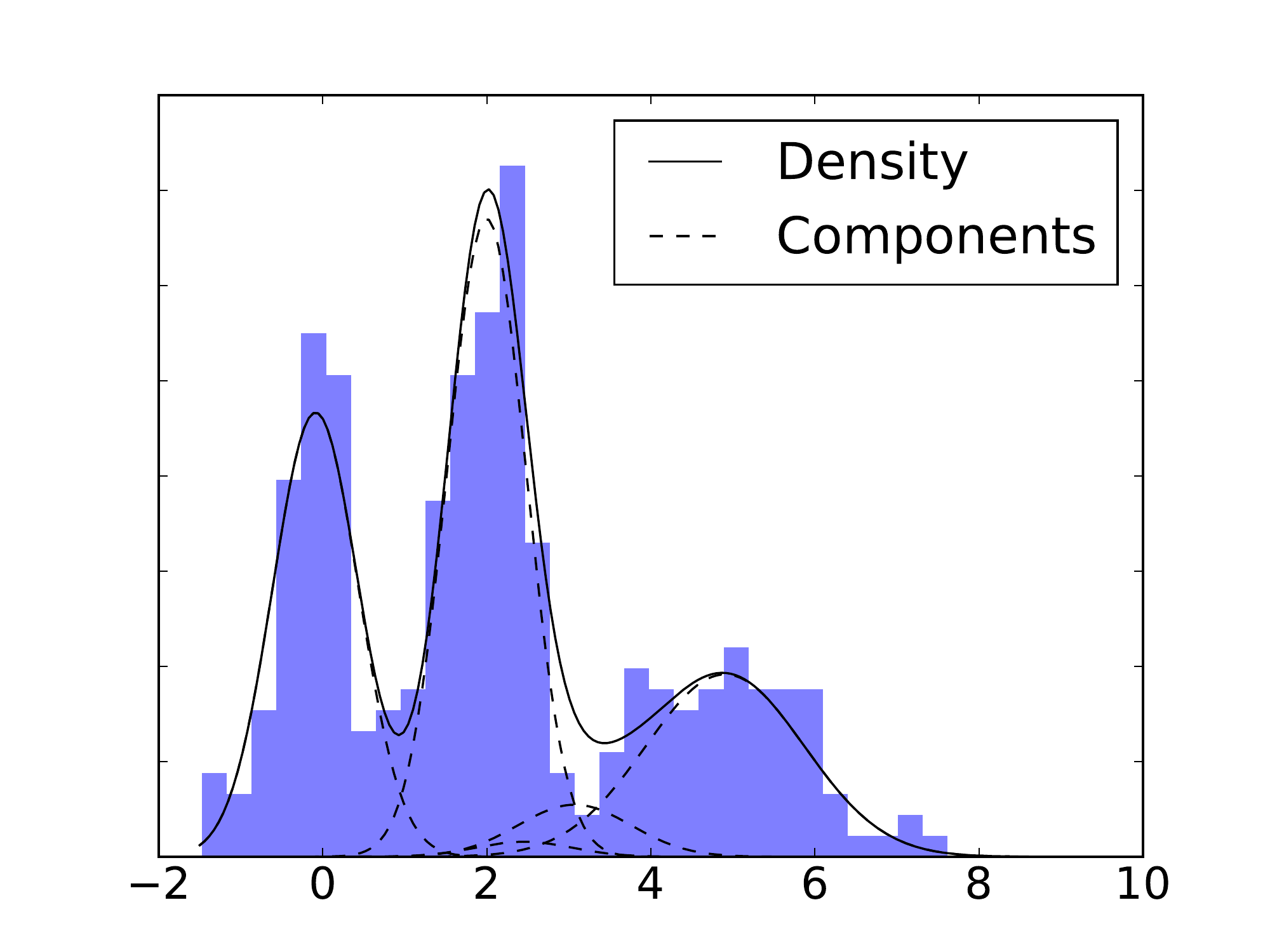} \label{fig:adhoc_3methods_traceplot_sim1_posterior_densities_crp1}}
	\subfigure[pCRP]{\includegraphics[width=0.3\textwidth]{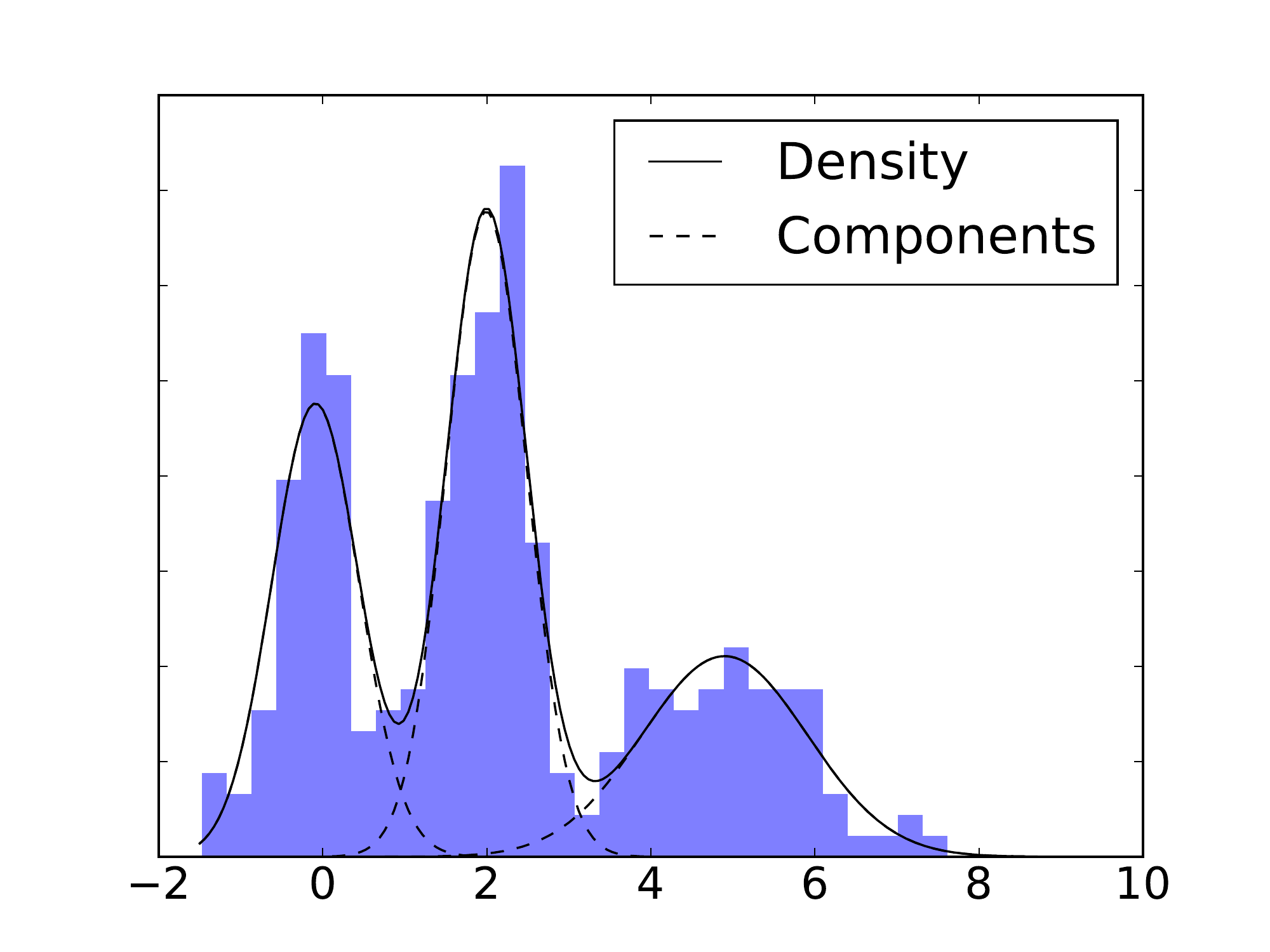} \label{fig:adhoc_3methods_traceplot_sim1_posterior_densities_pcrp1}}
	\caption{Posterior densities for three methods in Sim 1 when $N=2000$. The dashed lines are weighted components. 
	}
	\label{fig:pcrp_3methods_traceplot_sim1_posterior_densities}
\end{figure}

Figure~\ref{fig:pcrp_clusternum_3methods_traceplot_sim1} shows traceplots of posterior samples for the number of clusters for each of the methods in Sim 1.  Clearly pCRP places relatively high posterior probability on three clusters, which is the ground truth. In contrast, CRP has higher posterior variance, systematic over-estimation of the number of clusters, and worse computational efficiency.  The CRP-Oracle has better performance, but does clearly worse than p-CRP, and there is still a tendency for over-estimation.
Figure~\ref{fig:posterior_num_clusters} suggests that CRP will have larger probability on larger cluster numbers especially when the sample size increases, while pCRP tends to have larger probability on the true cluster number as the sample size increases. 
For example, in Sim 1, the probability of selecting three clusters increases from 0.55 to 0.68 in pCRP when $N$ increases from 300 to 2000 and the probability for all the other cluster number decreases. 
However, the probability of finding four clusters stabilizes around 0.37 and 0.38 in CRP-Oracle when $N$ increases from 300 to 2000. CRP has increased probability of selecting larger number of clusters (say 5, 6, 7, 8 clusters) when $N$ increases from 300 to 2000. In fact, the proposed pCRP has the largest concentration probability on the true number of clusters among all the three methods including CRP-Oracle, and this observation is consistent between $N = 300$ and $N = 2000$.  

\begin{figure}[h!]
	\center
	\subfigure[Sim 1, $N$=300]{\includegraphics[width=0.45\textwidth]{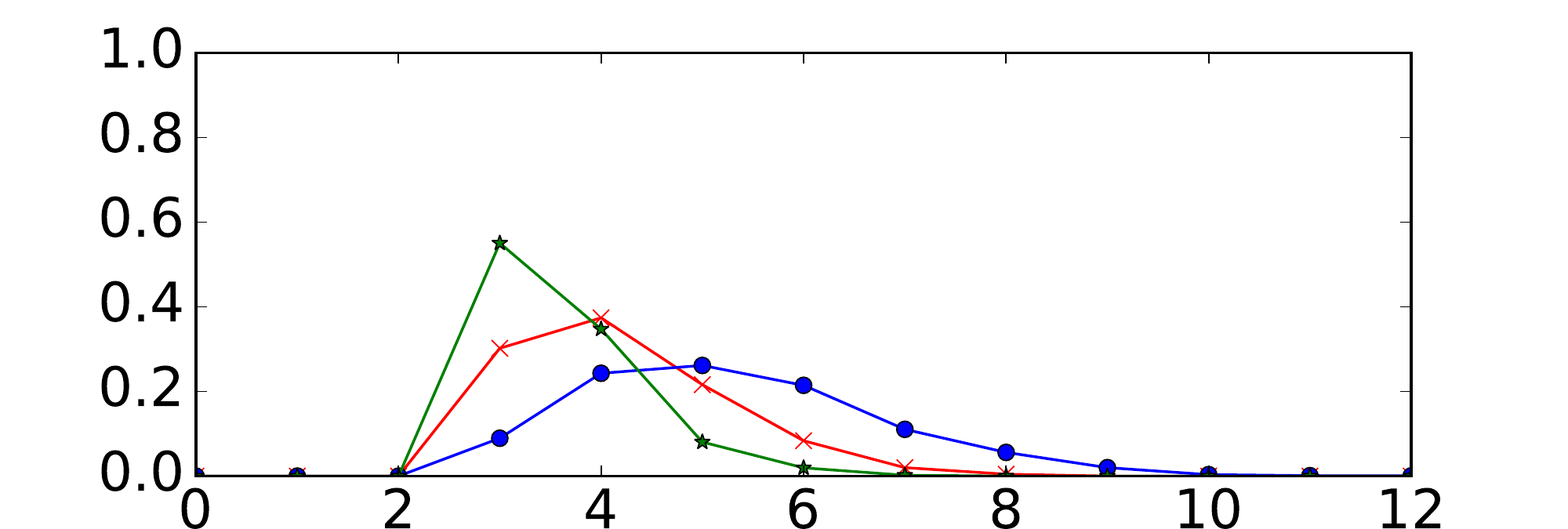} \label{fig:posterior_num_sim2_n300}}
	\subfigure[Sim 2, $N$=300]{\includegraphics[width=0.45\textwidth]{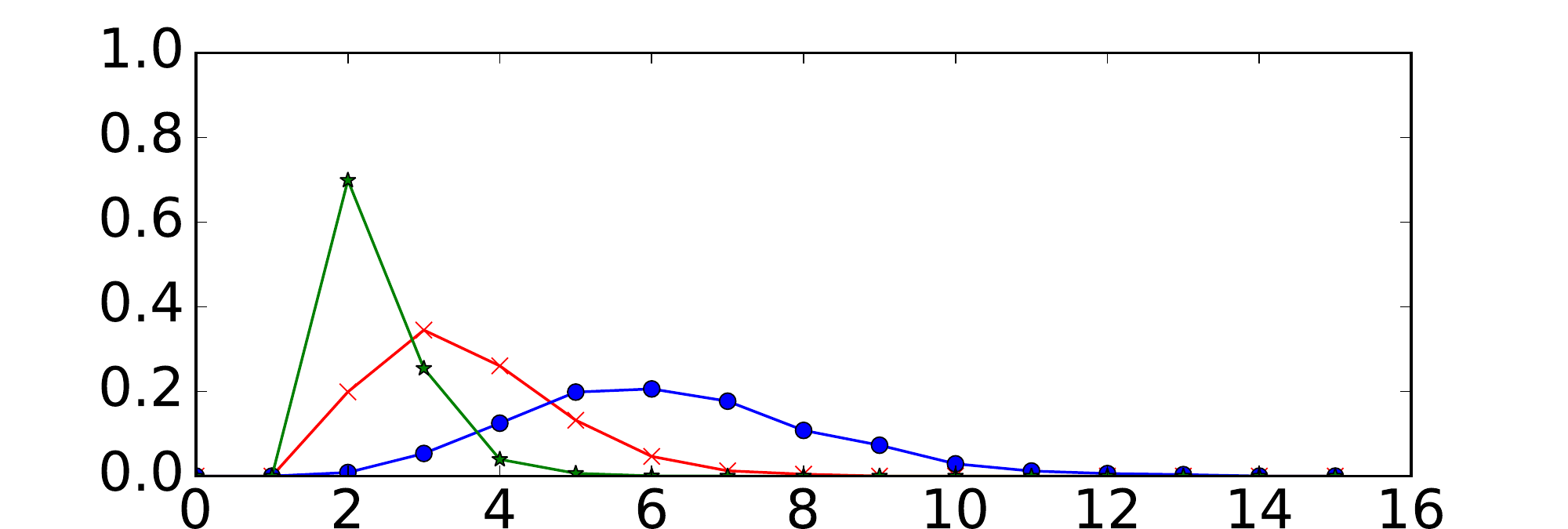} \label{fig:posterior_num_sim6_n300}}
	\subfigure[Sim 1, $N$=2000]{\includegraphics[width=0.45\textwidth]{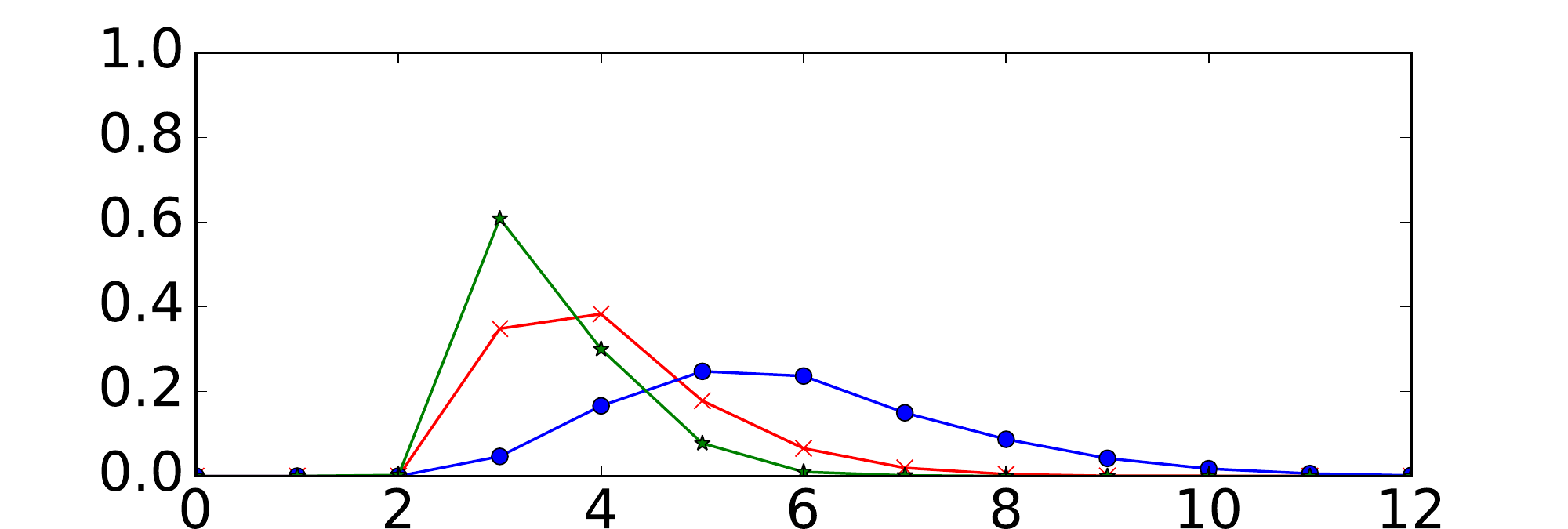} \label{fig:posterior_num_sim2_n2000}}
	\subfigure[Sim 2, $N$=2000]{\includegraphics[width=0.45\textwidth]{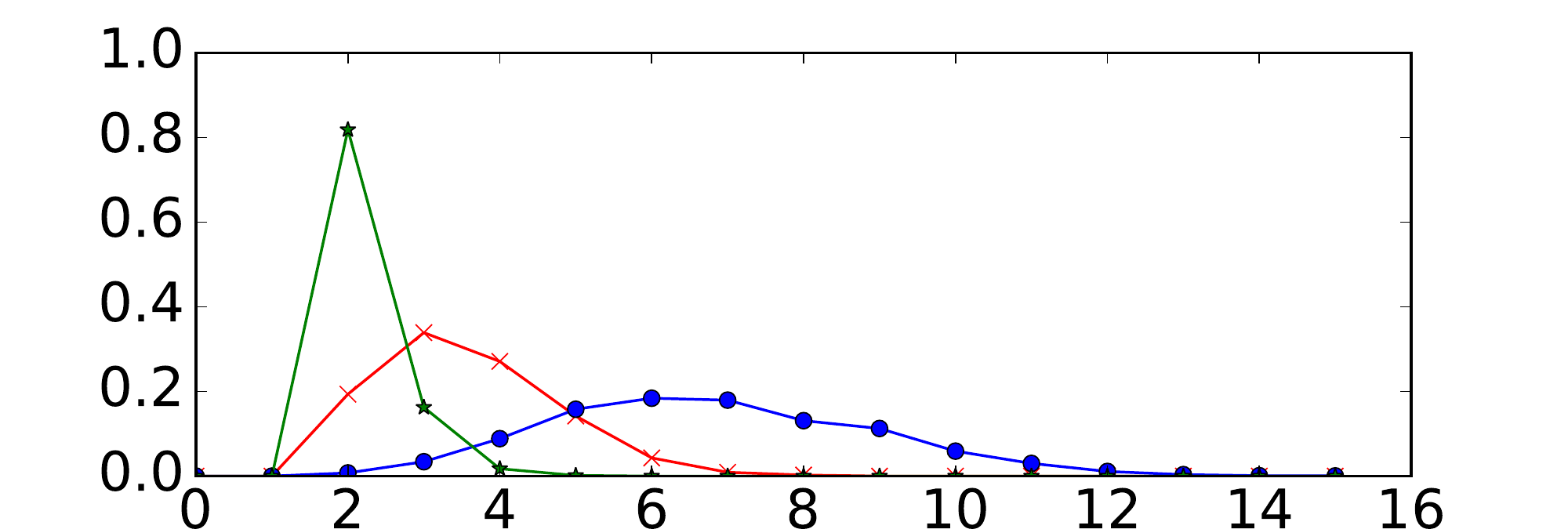} \label{fig:posterior_num_sim6_n2000}}
	\caption{Estimated posterior of the number of clusters in observed data for CRP-Oracle (red x), CRP (blue circle) and pCRP (green star). }
	\label{fig:posterior_num_clusters}
\end{figure}

Table~\ref{table:pcrp_simulation_posterior_summary} provides numerical summaries of this simulation. We can see all three methods lead to similar NMI, but pCRP consistently gives the highest value. Furthermore, pCRP leads to the lowest value of VI in most tests. The parsimonious effect of pCRP discussed above is further confirmed by the average and maximum number of clusters; see the columns $K$ and $K_{\max}$ in the table.

The posterior density plots in Figure~\ref{fig:pcrp_3methods_traceplot_sim1_posterior_densities} show that there is one small unnecessary cluster in CRP-Oracle and two small unnecessary clusters in CRP, while all three methods capture the general shape of the true density thus provide good fitting performance. The over-clustering effect of CRP is much reduced by pCRP as seen in Figure~\ref{fig:adhoc_3methods_traceplot_sim1_posterior_densities_pcrp1}.

\begin{table}[!h]
\centering
\begin{tabular}{|ccccc|}
\hline
\multicolumn{5}{|c|}{$N=300$}                                                                     \\ \hline
Method               & NMI (SE $\times 10^{-3}$) & VI (SE $\times 10^{-3}$) & $K$ (SE $\times 10^{-2}$) & $K_{\max}$ \\ \hline
Ground truth (Sim 1) & 1.0                & 0.0               & 3                  & -            \\
CRP-Oracle (Sim 1)   & 0.800 (1.1)        & 0.669 (4.5)       & 4.2 (2.3)          & 8            \\
CRP (Sim 1)          & 0.773 (1.2)        & 0.795 (5.4)       & 5.3 (3.3)          & 12           \\
pCRP (Sim 1)         & 0.827 (0.7)       & 0.580 (4.4)       & 3.6 (1.7)          & 7            \\ \hline
Ground truth (Sim 2) & 1.0                & 0.0               & 2                  & -            \\
CRP-Oracle (Sim 2)   & 0.211 (1.0)        & 1.803 (6.4)       & 3.5 (2.7)          & 8            \\
CRP (Sim 2)          & 0.189 (0.9)        & 2.164 (7.8)       & 6.2 (4.2)          & 13           \\
pCRP (Sim 2)         & 0.228 (1.0)        & 1.518 (2.1)       & 2.4 (1.3)          & 6            \\ \hline \hline
\multicolumn{5}{|c|}{$N=2000$}                                                                    \\ \hline
Method               & NMI (SE $\times 10^{-4}$) & VI (SE $\times 10^{-3}$) & $K$ (SE $\times 10^{-2}$) & $K_{\max}$ \\ \hline
Ground truth (Sim 1) & 1.0                & 0.0               & 3                  & -            \\
CRP-Oracle (Sim 1)   & 0.812 (5.3)        & 0.610 (2.6)       & 4.0 (2.3)          & 10           \\
CRP (Sim 1)          & 0.782 (8.5)        & 0.732 (4.0)       & 5.8 (3.6)          & 12           \\
pCRP (Sim 1)         & 0.823 (6.6)        & 0.869 (7.3)       & 3.5 (1.6)          & 7            \\ \hline
Ground truth (Sim 2) & 1.0                & 0.0               & 2                  & -            \\
CRP-Oracle (Sim 2)   & 0.258 (7.0)        & 1.537 (5.3)       & 3.5 (2.6)          & 8            \\
CRP (Sim 2)          & 0.238 (7.1)        & 1.755 (6.8)       & 6.8 (4.7)          & 15           \\
pCRP (Sim 2)         & 0.258 (4.2)        & 1.368 (0.7)      & 2.2 (1.0)          & 5            \\ \hline
\end{tabular}
\caption{Comparison of CRP and pCRP on Sim 1 and Sim 2. $K$ is the average number of found clusters. $K_{\max}$ is the maximum number of clusters during sampling. SE is the standard error of mean. Ground truth is calculated using the true assignments.}
\label{table:pcrp_simulation_posterior_summary}
\end{table}

\begin{figure}[h!]
	\center
	\subfigure[True clustering when $N$=3000]{\includegraphics[width=0.45\textwidth]{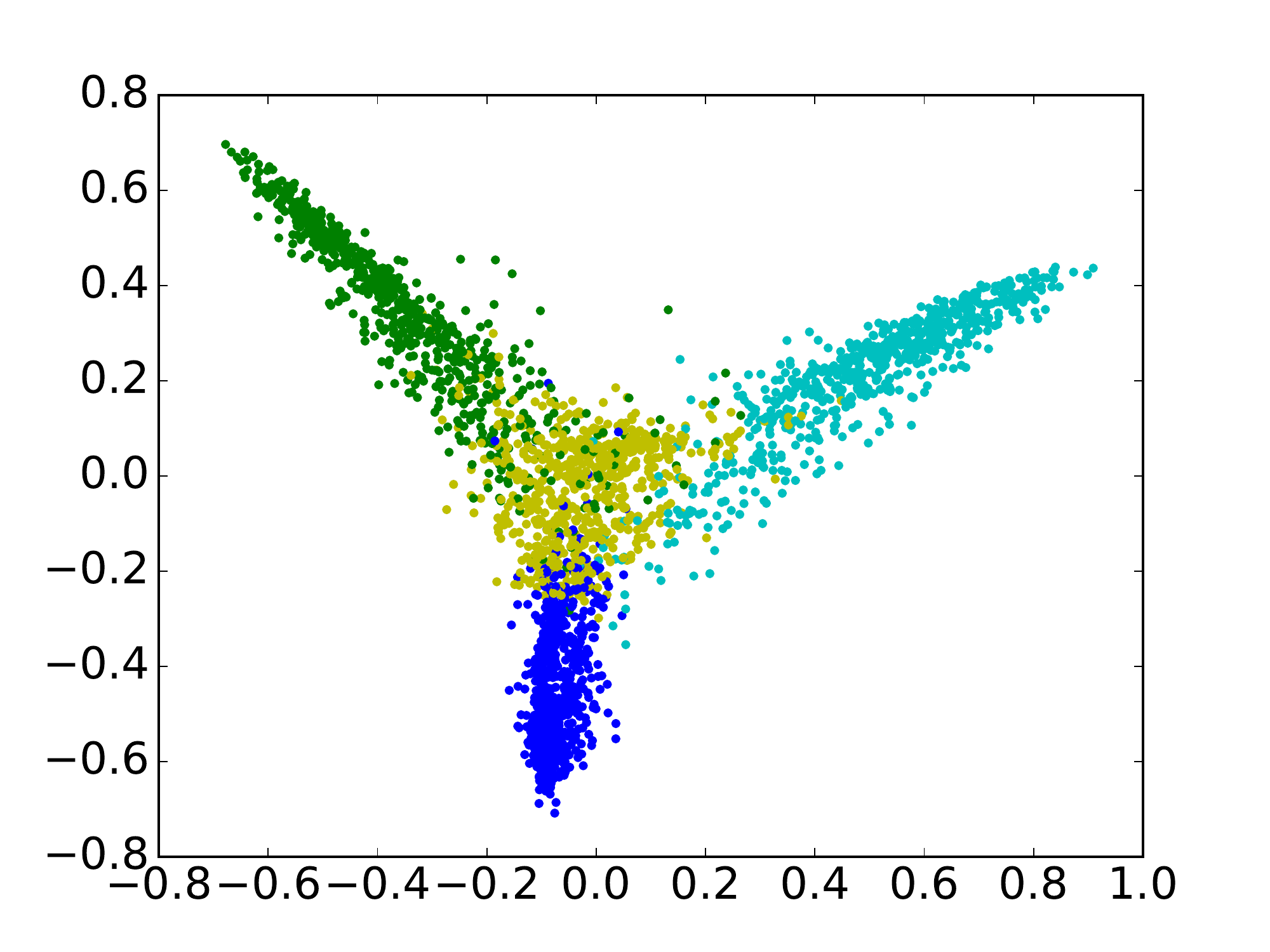} \label{fig:pcrp_digits14_n3000_true_clustering}}
	\subfigure[CRP-Oracle when $N$=3000]{\includegraphics[width=0.45\textwidth]{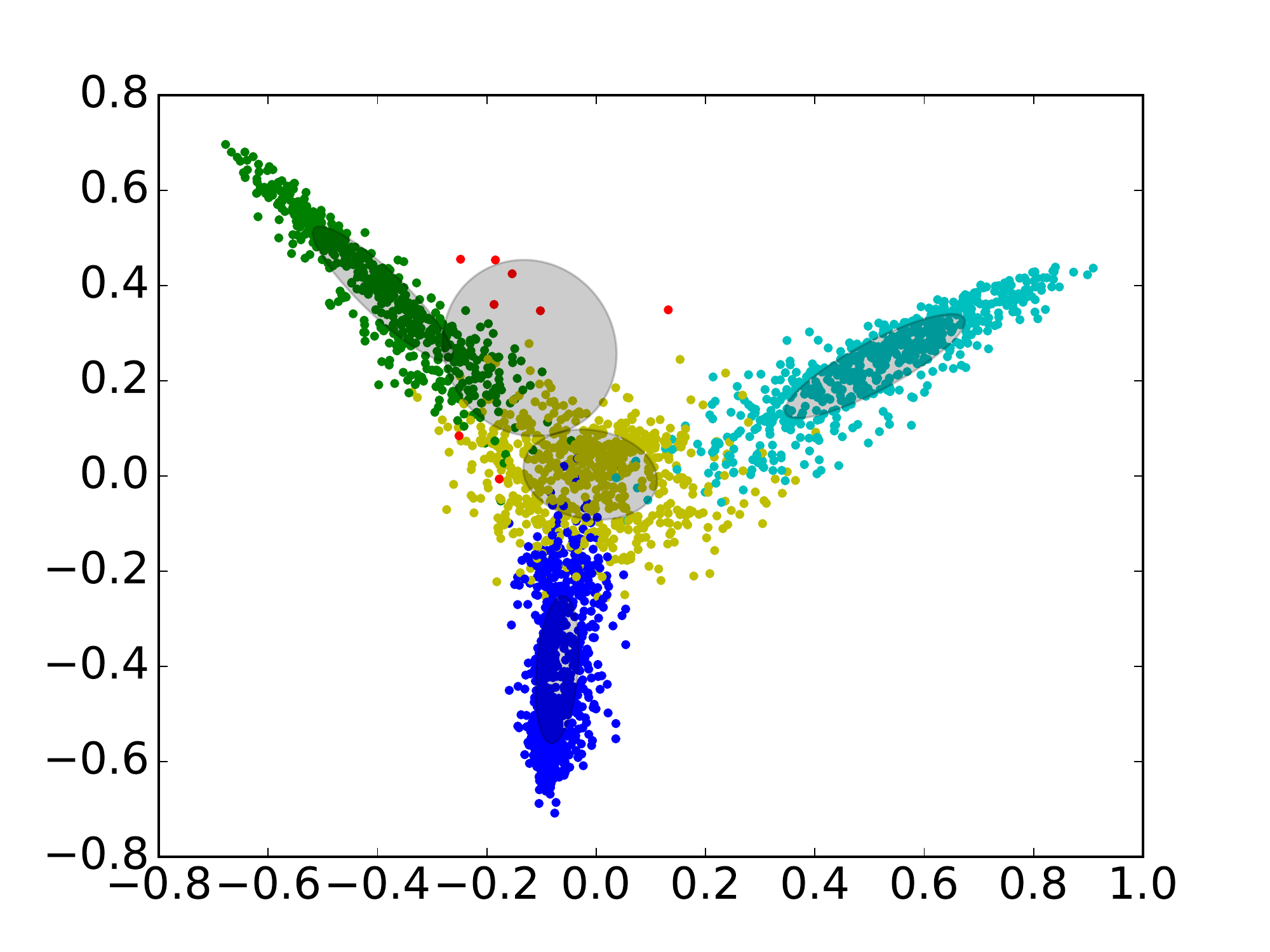} \label{fig:crp_digits14_n3000_crp_match}}
	\subfigure[CRP when $N=3000$]{\includegraphics[width=0.45\textwidth]{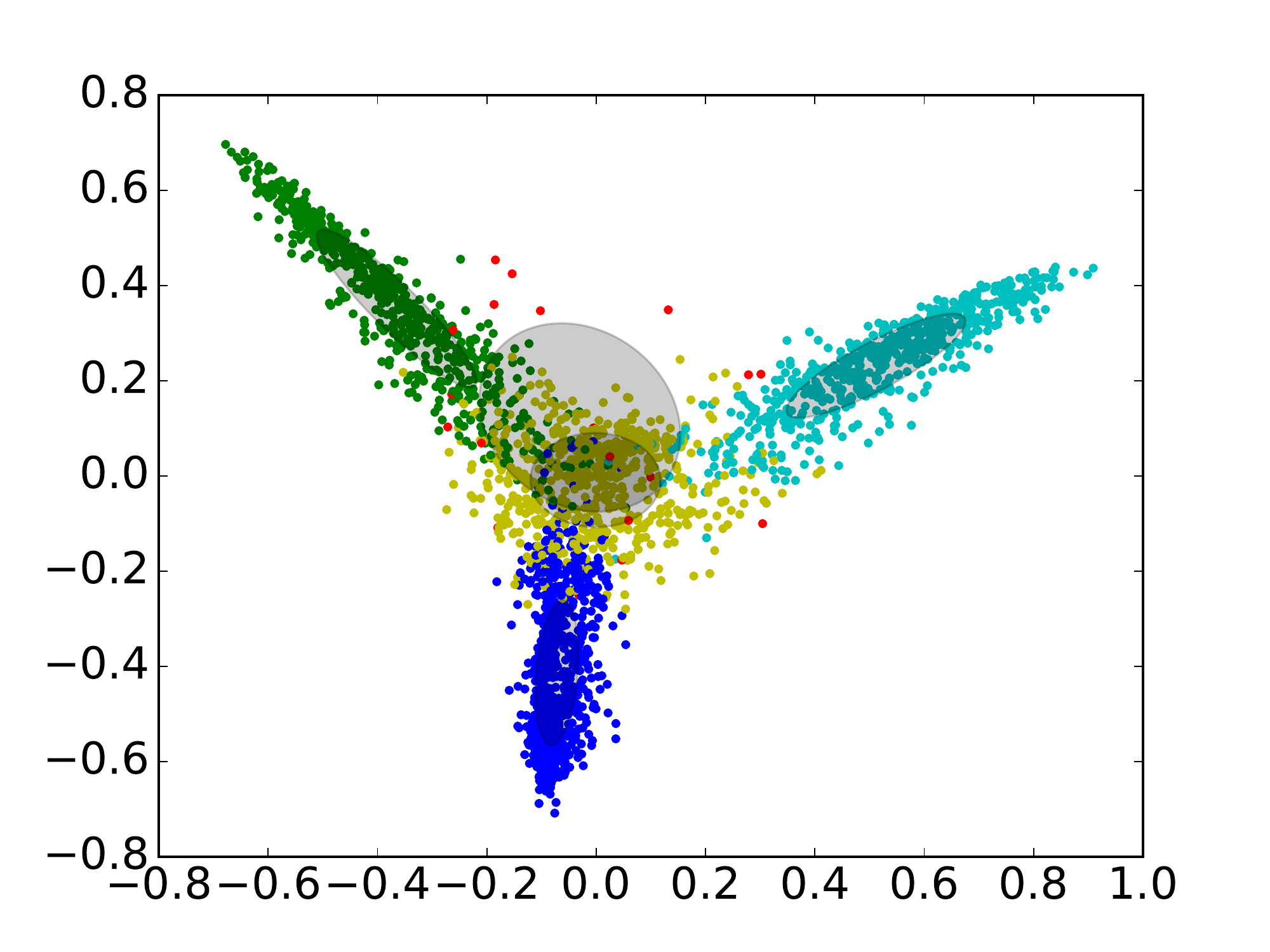} \label{fig:crp_digits14_n3000_crp1}}
	\subfigure[pCRP when $N$=3000]{\includegraphics[width=0.45\textwidth]{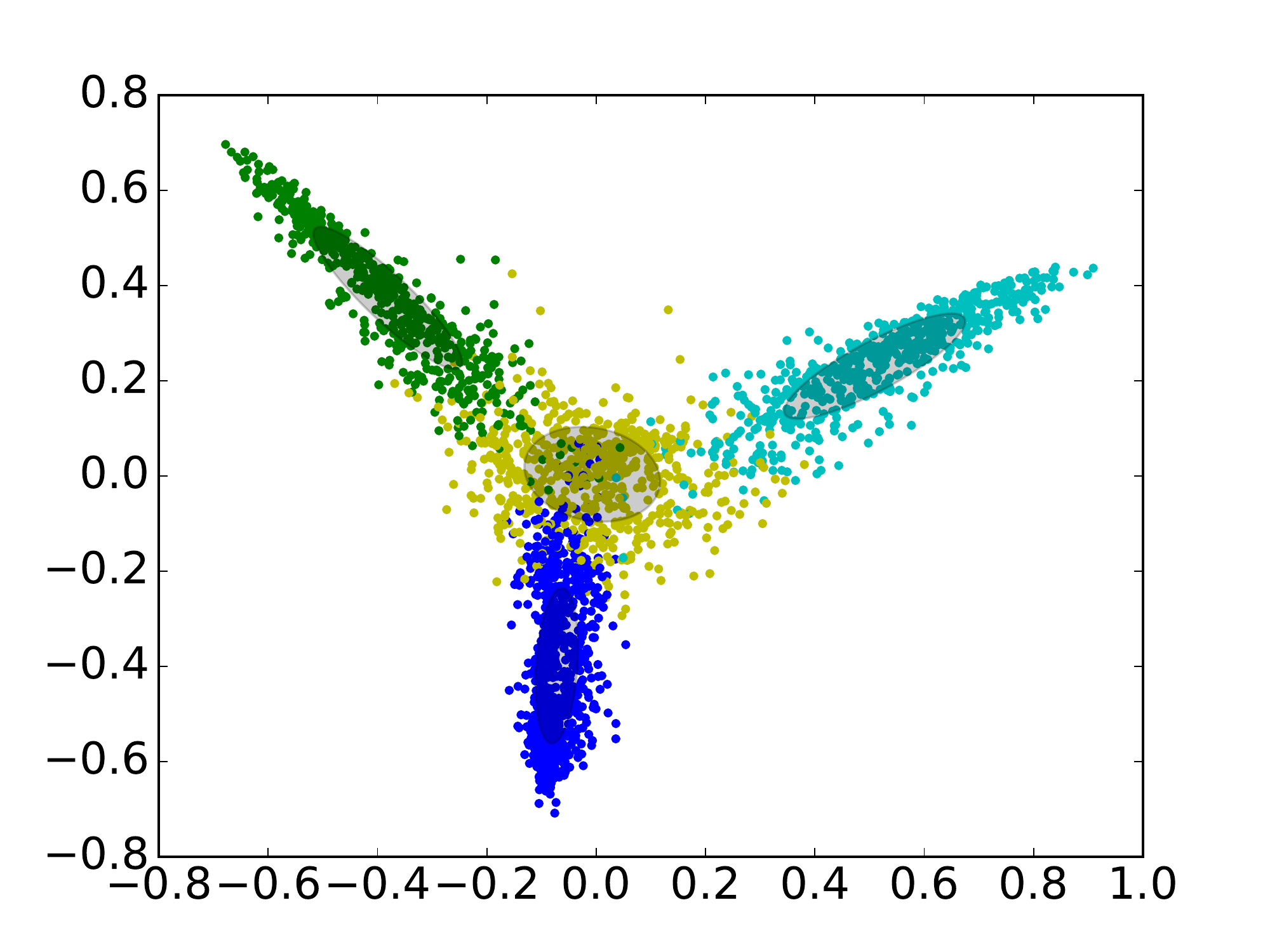} \label{fig:crp_digits14_n3000_pcrp1}}
	
	\caption{Results of clustering 3000 randomly sampled digits from 1 to 4 in spectral space. Observations in the same color represent the same digit. CRP-Oracle and CRP seem to over-fit the noise (the red cluster).}
	\label{fig:pcrp_digits14}
\end{figure}

\begin{table}[!h]
	\centering
	\begin{tabular}{|ccccc|}
		\hline
		\multicolumn{5}{|c|}{$N=1000$}                                                                    \\ \hline
		Method               & NMI (SE $\times 10^{-4}$) & VI (SE $\times 10^{-3}$) & $K$ (SE $\times 10^{-2}$) & $K_{\max}$   \\ \hline 
		Ground truth         & 1.0                & 0                 & 4                  & -            \\
		CRP-Oracle           & 0.651 (3.3)        & 1.382 (1.4)       & 4.37 (1.3)         & 7            \\
		CRP                  & 0.651 (3.3)        & 1.386 (1.4)       & 4.58 (1.6)         & 8            \\
		pCRP                 & 0.651 (3.3)        & 1.382 (1.4)       & 4.08 (0.6)        & 6            \\ \hline \hline
		\multicolumn{5}{|c|}{$N=3000$}                                                                    \\ \hline
		Method               & NMI (SE $\times 10^{-4}$) & VI (SE $\times 10^{-3}$) & $K$ (SE $\times 10^{-2}$) & $K_{\max}$   \\ \hline
		Ground truth         & 1.0                & 0.0               & 4                  & -            \\
		CRP-Oracle           & 0.651 (2.0)        & 1.400 (1.1)       & 5.17 (1.2)         & 8            \\
		CRP                  & 0.651 (2.0)        & 1.402 (1.1)       & 5.44 (1.6)         & 9            \\
		pCRP                 & 0.652 (1.9)        & 1.389 (1.1)       & 4.57 (1.2)         & 7            \\ \hline
	\end{tabular}
	\caption{Comparison of CRP-Oracle, CRP and pCRP on a 4 digits subset of MNIST. $K$ is the average number of found clusters. $K_{\max}$ is the maximum number of clusters during sampling.}
	\label{table:pcrp_digits_result}
\end{table}

\subsubsection{Digits 1-4}
In this experiment, we cluster 1000 and 3000 digits of the classes 1 to 4 in MNIST data set \citep{lecun2010mnist}, where the four clusters are approximate equally distributed. From cross validation on a different set of 1000 samples, we obtain the power value $r=1.05$. The concentration parameter $\alpha$ in CRP-Oracle is calculated as 0.58 ($N = 1000$) and 0.5 $(N = 3000)$. 

Figure \ref{fig:pcrp_digits14} shows the clustering result of all the three methods for $N=3000$. Both CRP and CRP-Oracle seem to over-fit the data by introducing a small cluster (in red), while pCRP gives a cleaner clustering result with four clusters. This comparison is further confirmed by Table~\ref{table:pcrp_digits_result}, where the average posterior cluster number in CRP apparently increases when $N$ grows to 3000. In contrast, pCRP is closer to the true situation by reducing the over-clustering effect, even compared to CRP-Oracle; see the columns of $K$ and $K_{\max}$. All methods lead to similar NMI but pCRP gives lower VI.

\begin{figure}
	\center
	\subfigure[The distribution of Old Faithful Geyser after standardization]{\includegraphics[width=0.4\textwidth]{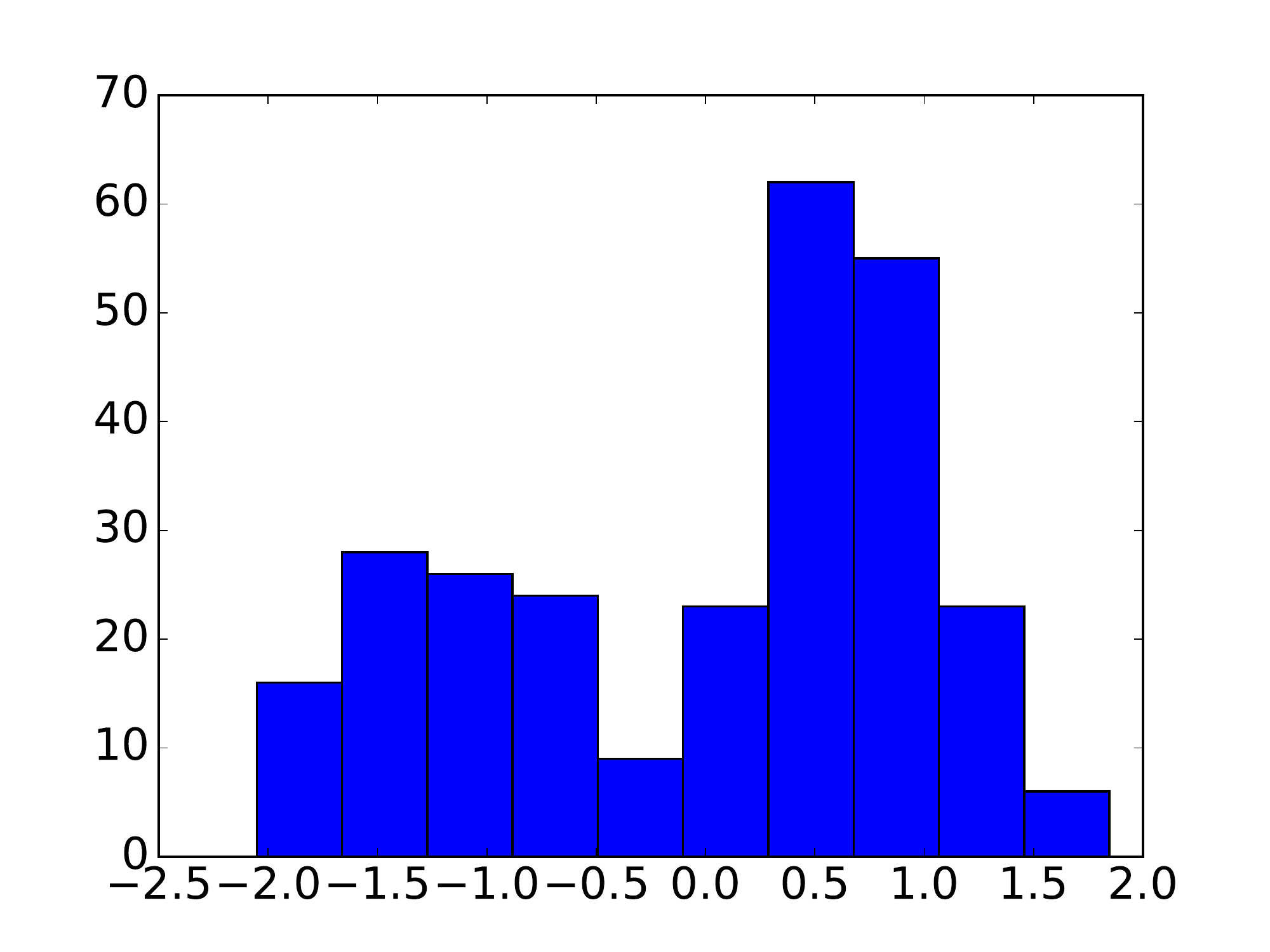} \label{fig:adhoc_oldfaithful_data_distribution}}
	\subfigure[Clustering results using CRP-Oracle, CRP, pCRP and manual clustering.]{\includegraphics[width=0.4\textwidth]{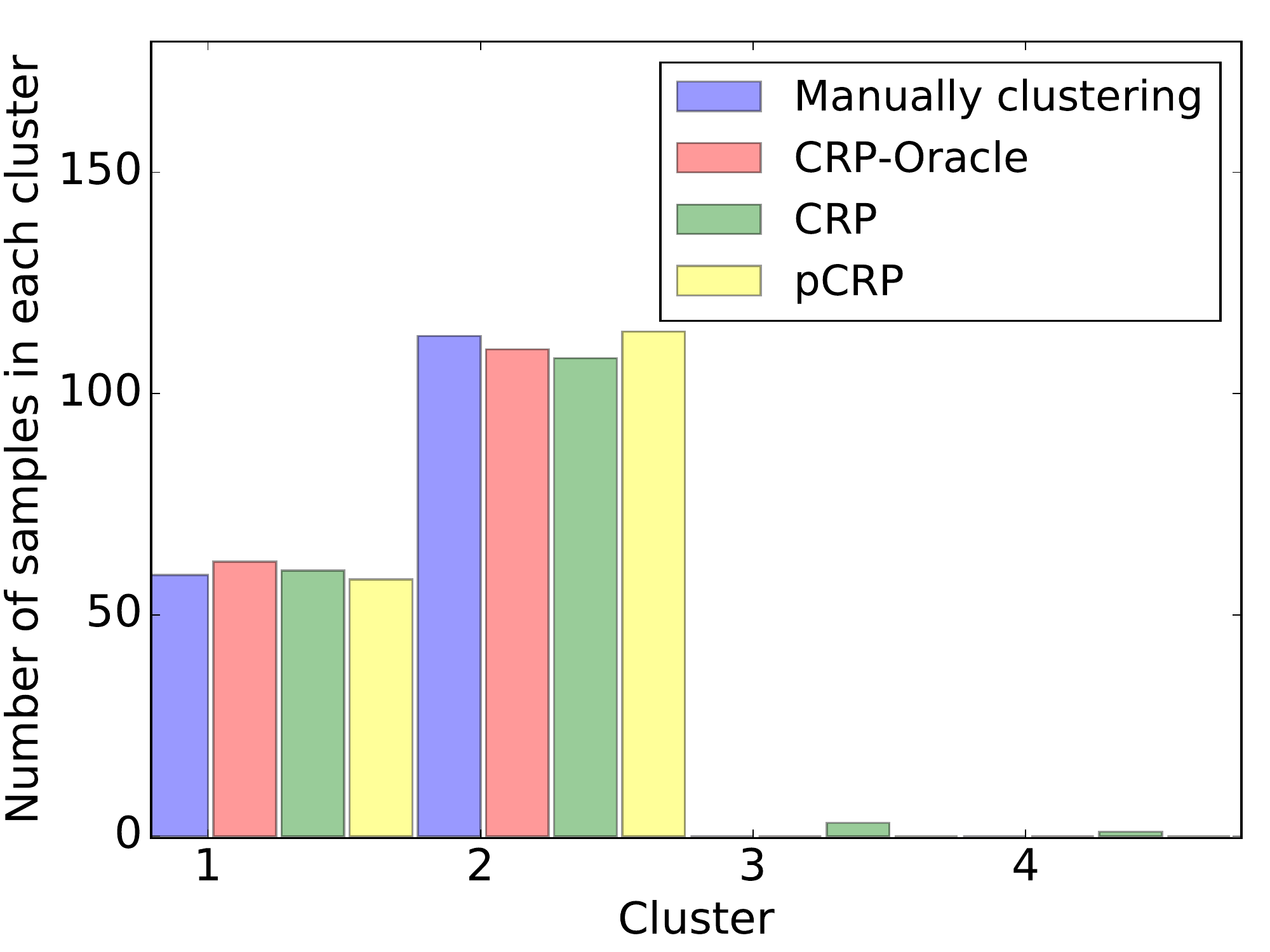} \label{fig:adhoc_oldfaithful_bar_comparison}}
	\caption{Clustering result for Old Faithful Geyser}
	\label{fig:adhoc_oldfaithful}
\end{figure}

\subsubsection{Old Faithful Geyser}
The Old Faithful Geyser data ($N=272$) are widely used to illustrate the performance of clustering algorithms. 
We use a test sample of 100 in CV leading to the power value $r=1.11$. We compare all methods on the other 172 data points. A manual clustering that consists of two Gaussian components is viewed as the ground truth. The concentration parameter is 0.39 in CRP-Oracle.  Figure~\ref{fig:adhoc_oldfaithful_bar_comparison} shows the size of each component obtained from all methods and the manual clustering. We can see that there are two mixture components in CRP-Oracle and pCRP, and four mixture components in the CRP method. In this case where the sample size is relatively small, we again see that pCRP successfully suppresses small components and generate results closer to the ground truth than CRP.


\subsection{Some issues in finite and infinite Gaussian mixture model}
\subsubsection{Non-identifiability due to overfitting finite mixture model or infinite mixture model}
In finite Gaussian mixture model, when the number of components is unknown, the analyst can intentionally or unintentionally propose an overfitting model, i.e., one with more components than the true component the data come from. And in infinite Gaussian mixture model, the model does not assume the upper bound of the number of components. Thus, the problem of non-identifiability in estimation of overfitted mixture model and infinite mixture model is well known. For example, \citep{fruhwirth2006finite} observed that identifiability will be violated as either one of the component weights is 0 or two of the component parameters are equal. 

More precisely, as in Section \ref{sec:fmm_background}, assume we have $N$ observations $\mathcalX = \{\bx_1, \bx_2, \ldots, \bx_N\}$ sampled i.i.d., from a finite mixture distribution with density 
\begin{equation}
	f_0(\bx | \bpi, \bgamma) = \sum_{k=1}^{K_0} \pi_k^0 \Phi(\bx | \bgamma_{k}^0) ,
\end{equation}
with $\bgamma_k \in \Gamma$, $\Gamma$ is the metric space of the parameter for some kernel function, $\Phi$ is the kernel function of each component and $K_0$ is the true component number both in finite and finite Gaussian mixture model. We wish to make Bayesian inference for the model parameters $\btheta = \{\bpi^0, \bgamma^0\}$. 
In such cases the model is non-identifiable since all values of the parameter in the form 
\begin{equation}
	\btheta = \{\pi_1^0, \ldots, \pi_{K_0}^0, 0; \bgamma_1^0, \ldots, \bgamma_{K_0}^0, \bgamma\} ,
\end{equation}
for all $\bgamma \in \Gamma$, and all values of the parameter in the form $\btheta = \{\pi_1^0, \ldots, \pi_j, \ldots,  \pi_{K_0}^0, \pi_{K_0 + 1};$ $\bgamma_1^0, \ldots, \bgamma_{K_0}^0, \bgamma_j^0\} $ with $\pi_j + \pi_{K_0 + 1} = \pi_j^0$ satisfy $f_0 = f_{\btheta}$. As stated in \citep{rousseau2011asymptotic}, this non-identifiability is much more tough to deal with than the non-identifiability corresponding to permutation of the labels in the mixture representation. Interesting readers can refer to \citep{rousseau2011asymptotic} for more details and the references there in.

\subsubsection{Read on}
If you are interested in a more through understanding of the hyperprior in mixture model, \citep{rasmussen1999infinite} gives some ideas how we can put hyperprior on NIW such that release the burden to select hype-parameter $\bbeta$ for NIW.

\subsection{Pruning methods for Dirichlet process mixture model}\label{chapter:pruning_dpmm}
Further to power Chinese restaurant process, we introduce some pruning methods for Dirichlet proces mixture model.

As a recap, some problem of DP mixture models have been brought about when apply them to practical problems. For example, they always produce more components than that the real data should have. The small mixture components are mainly caused by noise. Some approaches have been proposed for solving this problem. In \citep{mccullagh2008many}, an upper bound of the number of components is fixed in advance to limit the number in modeling. In \citep{escobar1995bayesian}, components with little data points are simply discarded, and these data points are reassigned to other existing components. However, this two methods based on simple upper bound or thresholds can not be directly used for real world data, because when you choose a larger bound, DP mixture models can still results in small clusters, and choosing the best thresholds is usually difficult. In this thread, we focus on how to shrink small clusters during sampling.

\subsubsection{Constrained sampling (cSampling)}
During Gibbs sampling, remove small number of clusters every $s$ (e.g., 20) iterations, reassign these data to big clusters by the probability of each cluster. Further extension on cSampling is that when we reassign these small clusters, we can use the assignment method similar to K-means. However, the basic idea is the same.

Important points: 

\begin{itemize}
\item Different to the method in \citep{escobar1995bayesian}, our cSampling method does not need to choose best threshold to discard unuseful clusters. A small threshold is good. It aims to remove very small clusters that can be easily recognized as ``noise".
\item When we remove small cluster during sampling, it will have influence on later sampling iterations. That is where the name ``constrained" come from.
\end{itemize}

\subsubsection{loss based sampling (lSampling)}
In \citep{kulis2011revisiting}, the authors introduce an algorithm called DP-means. We here briefly review DP-means. The authors considered asymptotic behavior of DP mixture models, obtaining a hard clustering algorithm that behaves similarly to K-means with the exception that a new cluster is formed whenever a point is farther than $distance$ away from existing cluster centroid. The $distance$ is very hard to decide, the authors used cross-validation to decide it.  However, this violates the setting of unsupervised learning. 

Inspired by DP-means, when we doing sampling during DP mixture models, we can shrink out small cluster by some metric, for example: marginal of data and component assignment $p(\mathcal{X}, \bz | \alpha, \bbeta)$, where $\alpha$ is the concentration parameter on Dirichlet Process, $\bbeta$ is the prior parameter on kernel (see Section~\ref{section:marginal-data-component}). And also we can use inertia, or so called the within-cluster sum of squares criterion (see Section~\ref{section:inertia}). Again, during Gibbs sampling, remove small number of clusters every $s$ (e.g., 20) iterations by applying to the chosen metric, i.e., if removing the small clusters will get smaller loss, we remove, otherwise, we keep them as they are. In our proposal, we recommend to use the following loss function
\begin{equation}
 \sum_{k=1}^K  \sqrt{ \sum_{j: j \in C_k}^{N_k} ||\bx_j - \overline{\bx}_k||^2 },
\end{equation}
where $C_k$ is the data samples in the $k^{th}$ cluster. The reason we use a square root over each cluster is that it can overcome identifiablity issue. One can imagine that if two clusters have same center value, the square root operation will force the two clusters into one cluster. This idea on the loss function comes from \citep{petralia2012repulsive}, in which case they put a ``repulsive" prior on the mixture components, thus overcomes identifiability issue in some sense.

\subsubsection{Posterior inference}
Although the proposed cSampling and lSampling is generic, we focus on its application in Gaussian mixture models for concreteness. Again, we develop a collapsed Gibbs sampling algorithm~\citep{neal2000markov} for posterior computation.

Let $\mathcalX$ be the observations, assumed to follow a mixture of multivariate Gaussian distributions. We use a conjugate normal-inverse-Wishart (NIW) prior $p(\bmu, \bSigma | \bbeta)$ for the mean vector $\bmu$ and covariance matrix $\bSigma$ in each multivariate Gaussian component, where $\bbeta$ consists of all the hyperparameters in NIW. A key quantity in a collapsed Gibbs sampler is the probability of each customer $i$ sitting with table $k$: $p(z_i = k | \bznoi, \mathcalX, \alpha, \bbeta)$, where $\bznoi$ are the seating assignments of all the other customers and $\alpha$ is the concentration parameter in CRP. This probability is calculated as follows:
\begin{equation} 
\begin{aligned}
p(z_i = k| \bznoi, \mathcal{X} , \alpha, \bbeta)  & \varpropto p(z_i = k | \bznoi, \alpha, \cancel{\bbeta})  p(\mathcal{X} |z_i = k, \bznoi, \cancel{\alpha}, \bbeta) \\
& = p(z_i = k| \bznoi, \alpha) p(\bxi |\mathcal{X}_{-i}, z_i = k, \bznoi, \bbeta) p(\mathcal{X}_{-i} |\cancel{z_i = k}, \bznoi, \bbeta)\\
& \varpropto p(z_i = k| \bznoi, \alpha) p(\bxi|\mathcal{X}_{-i}, z_i = k, \bznoi, \bbeta) \\
& \varpropto p(z_i = k| \bznoi, \alpha) p(\bxi | \xknoi, \bbeta), 										
\end{aligned}
\label{equation:pcrp_ifmm_collabsed_gibbs}
\end{equation}  
where $\xknoi$ are the observations in table $k$ excluding the $i^{th}$ observation. Algorithm~\ref{algo:pruning_csampling} and \ref{algo:pruning_lsampling} give the pseudo code of the collapsed Gibbs sampler to implement cSampling and lSampling in Gaussian mixture models.

\IncMargin{1em}
\begin{algorithm}
\SetKwData{Left}{left}\SetKwData{This}{this}\SetKwData{Up}{up}
\SetKwFunction{Union}{Union}\SetKwFunction{FindCompress}{FindCompress}
\SetKwInOut{Input}{input}\SetKwInOut{Output}{output}
\Input{Choose an initial $\bz$, set constrained step $s$, threshold=$thres$;}
\BlankLine
\For{$t \leftarrow 1$ \KwTo $T$ iterations}{
\For{$i \leftarrow 1$ \KwTo $N$}{
	Remove $\bxi$'s statistics from component $z_i$ \;
	\For{$k\leftarrow 1$ \KwTo $K$}{
		Calculate $p(z_i=k| \bznoi, \alpha) = \frac{\nknoi}{N + \alpha -1}$\;
		Calculate $p(\bxi | \xknoi, \bbeta)$\;
		Calculate $p(z_i = k | \bznoi, \mathcal{X}, \alpha, \bbeta) \propto p(z_i=k| \bznoi, \alpha) p(\bxi | \xknoi, \bbeta)$\;
	}
	Calculate $p(z_i = k^\star | \bznoi, \alpha)=\frac{\alpha}{N + \alpha -1}$\;
	Calculate $p(\bxi | \bbeta)$\;
	Calculate $p(z_i = k^\star | \bznoi, \mathcalX, \alpha, \bbeta) \propto p(z_i = k^\star | \bznoi, \alpha) p(\bxi | \bbeta)$\;
	Sample $k_{new}$ from $p(z_i | \bznoi, \mathcalX, \alpha, \bbeta)$ after normalizing\;
	Add $\bxi$'s statistics to the component $z_i=k_{new}$ \;
	If any component is empty, remove it and decrease $K$.
}
\If(\tcp*[f]{constrain step}){t == s}{\label{ut}
	Get cluster index $k$ where cluster number $n_k> thres$, put these indexes into set $\mathcal{C}$\;

	\For{$i \leftarrow 1$ \KwTo $N$}{
		Remove $\bxi$'s statistics from component $z_i$ \;
		\For{$k$ in $\mathcal{C}$}{
			Calculate $p(z_i=k| \bznoi, \alpha) = \frac{\nknoi}{N + \alpha -1}$\;
			Calculate $p(\bxi | \xknoi, \bbeta)$\;
			Calculate $p(z_i = k | \bznoi, \mathcal{X}, \alpha, \bbeta) \propto p(z_i=k| \bznoi, \alpha) p(\bxi | \xknoi, \bbeta)$\;
		}
		Sample $k_{new}$ from $p(z_i | \bznoi, \mathcalX, \alpha, \bbeta)$ after normalizing\;
		Add $\bxi$'s statistics to the component $z_i=k_{new}$ \;
		If any component is empty, remove it and decrease $K$.
	}
}
}
\caption{Collapsed Gibbs sampler for constrained sampling: reassign these data to big clusters by the probability of each cluster.}\label{algo:pruning_csampling}
\end{algorithm}\DecMargin{1em}

\IncMargin{1em}
\begin{algorithm}
\SetKwData{Left}{left}\SetKwData{This}{this}\SetKwData{Up}{up}
\SetKwFunction{Union}{Union}\SetKwFunction{FindCompress}{FindCompress}
\SetKwInOut{Input}{input}\SetKwInOut{Output}{output}
\Input{Choose an initial $\bz$, set loss-based step $s$;}
\BlankLine
\For{$t \leftarrow 1$ \KwTo $T$ iterations}{
\For{$i \leftarrow 1$ \KwTo $N$}{
	Remove $\bxi$'s statistics from component $z_i$ \;
	\For{$k\leftarrow 1$ \KwTo $K$}{
		Calculate $p(z_i=k| \bznoi, \alpha) = \frac{\nknoi}{N + \alpha -1}$\;
		Calculate $p(\bxi | \xknoi, \bbeta)$\;
		Calculate $p(z_i = k | \bznoi, \mathcal{X}, \alpha, \bbeta) \propto p(z_i=k| \bznoi, \alpha) p(\bxi | \xknoi, \bbeta)$\;
	}
	Calculate $p(z_i = k^\star | \bznoi, \alpha)=\frac{\alpha}{N + \alpha -1}$\;
	Calculate $p(\bxi | \bbeta)$\;
	Calculate $p(z_i = k^\star | \bznoi, \mathcalX, \alpha, \bbeta) \propto p(z_i = k^\star | \bznoi, \alpha) p(\bxi | \bbeta)$\;
	Sample $k_{new}$ from $p(z_i | \bznoi, \mathcalX, \alpha, \bbeta)$ after normalizing\;
	Add $\bxi$'s statistics to the component $z_i=k_{new}$ \;
	If any component is empty, remove it and decrease $K$.
}
\If(\tcp*[f]{loss-based  step}){t == s}{\label{ut}
	\While{$K \geq 2$}{
	Get cluster index $k$ where cluster number $n_k$ is minimal, put these indexes which are not equal to $k$ into set $\mathcal{C}$\;
		
	\For{$i \leftarrow 1$ \KwTo $N$}{
		Remove $\bxi$'s statistics from component $z_i$ \;
		\For{$k$ in $\mathcal{C}$}{
			Calculate $p(z_i=k| \bznoi, \alpha) = \frac{\nknoi}{N + \alpha -1}$\;
			Calculate $p(\bxi | \xknoi, \bbeta)$\;
			Calculate $p(z_i = k | \bznoi, \mathcal{X}, \alpha, \bbeta) \propto p(z_i=k| \bznoi, \alpha) p(\bxi | \xknoi, \bbeta)$\;
		}
		Sample $k_{new}$ from $p(z_i | \bznoi, \mathcalX, \alpha, \bbeta)$ after normalizing\;
		Add $\bxi$'s statistics to the component $z_i=k_{new}$ \;
		If any component is empty, remove it and decrease $K$.
	}
	Calculate current loss $l$ \;
 	}
 	Roll back to the status with minimal loss\;
}
}
\caption{Collapsed Gibbs sampler for loss-based sampling: remove small clusters if it results in smaller loss.}\label{algo:pruning_lsampling}
\end{algorithm}\DecMargin{1em}

\subsubsection{Examples}
We conduct experiments to demonstrate the main advantages of the proposed pruning sampling methods using both synthetic and real data. 

In all experiments, we run the Gibbs sampler 20,000 iterations with a burn-in of 10,000. The sampler is thinned by keeping every 5$^{th}$ draw. We use the same concentration parameter $\alpha = 1$ for both CRP and pruning sampling methods in all scenarios. In addition, we equip CRP with an unfair advantage to match the magnitude of its prior mean $\alpha \log(N)$ to the true number of clusters, termed {\it CRP-Oracle}. In order to measure overall clustering performance, we use normalized mutual information (NMI) \citep{mcdaid2011normalized} and variation of information (VI) \citep{meilua2003comparing}, which measures the similarity between the true and estimated cluster assignments. Higher NMI and lower VI indicate better performance.  If applicable, metrics using the true clustering are calculated to provide an upper bound for all methods, coded as `Ground Truth'. The metrics are discussed in Section~\ref{section:some-metrics}. Feel free to skip this section for a first reading.

\subsubsection{Simulation experiments}\label{sec:cdpmm_simulation}
The parameters of the simulations are as follows:

\begin{itemize}
\item Sim 1: $K_0=3$, $\bm{\pi}$=\{0.35, 0.4, 0.25\}, $\bm{\mu}$=\{0, 2, 5\} and $\bm{\Sigma}$=\{0.5, 0.5, 1\};
\item Sim 2: $K_0=2$, $\bm{\pi}$=\{0.65, 0.35\}, $\bm{\mu}$=\{1, 1\} and $\bm{\Sigma}$=\{10, 1\};
\end{itemize}

In practice, we think that a trivial threshold for cSampling is $4\%$, which means that we consider cluster with samples smaller than $4\%$ of total samples can be regarded as noise. 
For lSampling, we shrink by the proposed loss function every 20 steps.
Table~\ref{table:pruning_posterior_summary} shows the posterior summary for these tests. We notice that cSampling gives better results than CRP-Oracle but a little bit worse than pCRP. lSampling gives worse results than CRP-Oracle.

\begin{table}[]
\centering
\begin{tabular}{|cccccc|}
\hline
\multicolumn{6}{|c|}{$N=300$}                                                                                \\ \hline
Method               & NMI (SE $10^{-3}$) & VI (SE $10^{-3}$) & $K$ (SE $10^{-2}$) & $K_{\max}$ & $K_{mode}$ \\ \hline
Ground truth (Sim 1) & 1.0                & 0.0               & 3                  & -          & -          \\
CRP-Oracle (Sim 1)   & 0.800 (1.1)        & 0.669 (4.5)       & 4.2 (2.3)          & 8          & 4          \\
CRP (Sim 1)          & 0.773 (1.2)        & 0.795 (5.4)       & 5.3 (3.3)          & 12         & 5          \\
pCRP (Sim 1)         & 0.827 (0.74)       & 0.580 (4.4)       & 3.6 (1.7)          & 7          & 3          \\
cSampling (Sim 1)    & 0.829 (0.75)       & 0.695 (6.9)       & 3.3 (1.3)          & 7          & 3          \\
lSampling (Sim 1)    & 0.791 (1.6)        & 0.682 (5.2)       & 4.3 (3.0)          & 11         & 3          \\ \hline
Ground truth (Sim 2) & 1.0                & 0.0               & 2                  & -          & -          \\
CRP-Oracle (Sim 2)   & 0.211 (1.0)        & 1.803 (6.4)       & 3.5 (2.7)          & 8          & 3          \\
CRP (Sim 2)          & 0.189 (0.9)        & 2.164 (7.8)       & 6.2 (4.2)          & 13         & 6          \\
pCRP (Sim 2)         & 0.228 (1.0)        & 1.518 (2.1)       & 2.4 (1.3)          & 6          & 2          \\
cSampling (Sim 2)    & 0.231 (1.0)        & 1.526 (2.2)       & 2.5 (2.0)          & 7          & 2          \\
lSampling (Sim 2)    & 0.218 (1.1)        & 1.707 (5.7)       & 4.1 (4.3)          & 13         & 2          \\ \hline \hline
\multicolumn{6}{|c|}{$N=2000$}                                                                               \\ \hline
Method               & NMI (SE $10^{-4}$) & VI (SE $10^{-3}$) & $K$ (SE $10^{-2}$) & $K_{\max}$ & $K_{mode}$ \\ \hline
Ground truth (Sim 1) & 1.0                & 0.0               & 3                  & -          & -          \\
CRP-Oracle (Sim 1)   & 0.812 (5.3)        & 0.610 (2.6)       & 4.0 (2.3)          & 10         & 4          \\
CRP (Sim 1)          & 0.782 (8.5)        & 0.732 (4.0)       & 5.8 (3.6)          & 12         & 5          \\
pCRP (Sim 1)         & 0.823 (6.6)        & 0.869 (7.3)       & 3.5 (1.6)          & 7          & 3          \\
cSampling (Sim 1)    & 0.825 (2.6)        & 0.552 (3.1)       & 3.3 (1.4)          & 7          & 3          \\
lSampling (Sim 1)    & 0.815 (5.5)        & 0.580 (1.7)       & 4.4 (3.0)          & 10         & 3          \\ \hline
Ground truth (Sim 2) & 1.0                & 0.0               & 2                  & -          & -          \\
CRP-Oracle (Sim 2)   & 0.258 (7.0)        & 1.537 (5.3)       & 3.5 (2.6)          & 8          & 3          \\
CRP (Sim 2)          & 0.238 (7.1)        & 1.755 (6.8)       & 6.8 (4.7)          & 15         & 6          \\
pCRP (Sim 2)         & 0.258 (4.2)        & 1.368 (0.74)      & 2.2 (1.0)          & 5          & 2          \\
cSampling (Sim 2)    & 0.286 (4.0)        & 1.351 (0.81)      & 2.5 (2.0)          & 7          & 2          \\
lSampling (Sim 2)    & 0.278 (4.4)        & 1.396 (1.6)       & 4.1 (4.2)          & 11         & 2          \\ \hline
\end{tabular}
\caption{Posterior summary for pruning methods. $K_{\max}$ is the maximum number of clusters during sampling. $K_{mode}$ is the most probability of cluster number during sampling.}
\label{table:pruning_posterior_summary}
\end{table}

\subsubsection{Digits 1-4}
In this experiment, we cluster 1000 digits of the classes 1 to 4 in MNIST data set \citep{lecun2010mnist}, where the four clusters are approximate equally distributed. The concentration parameter $\alpha$ in CRP-Oracle is calculated as 0.5 ($N = 3000$).

Both CRP and CRP-Oracle seem to over-fit the data by introducing a small cluster, while cSampling gives a cleaner clustering result with four clusters. This comparison is further confirmed by Table~\ref{table:constrain_dp_digits_result}. In contrast, cSampling and lSampling are closer to the true situation by reducing the over-clustering effect, even compared to CRP-Oracle; see the columns of $K$ and $K_{\max}$. All methods lead to similar NMI. We also observe similar result as in simulation test that lSampling gives worse results than cSampling.


\begin{figure}[h!]
\center
\subfigure[True clustering when $N$=3000]{\includegraphics[width=0.31\textwidth]{img_visual/digits_posterior_clustering/true_clustering.pdf} \label{fig:cdpmm_digits14_n3000_true_clustering}}
~
\subfigure[cSampling when $N$=3000]{\includegraphics[width=0.31\textwidth]{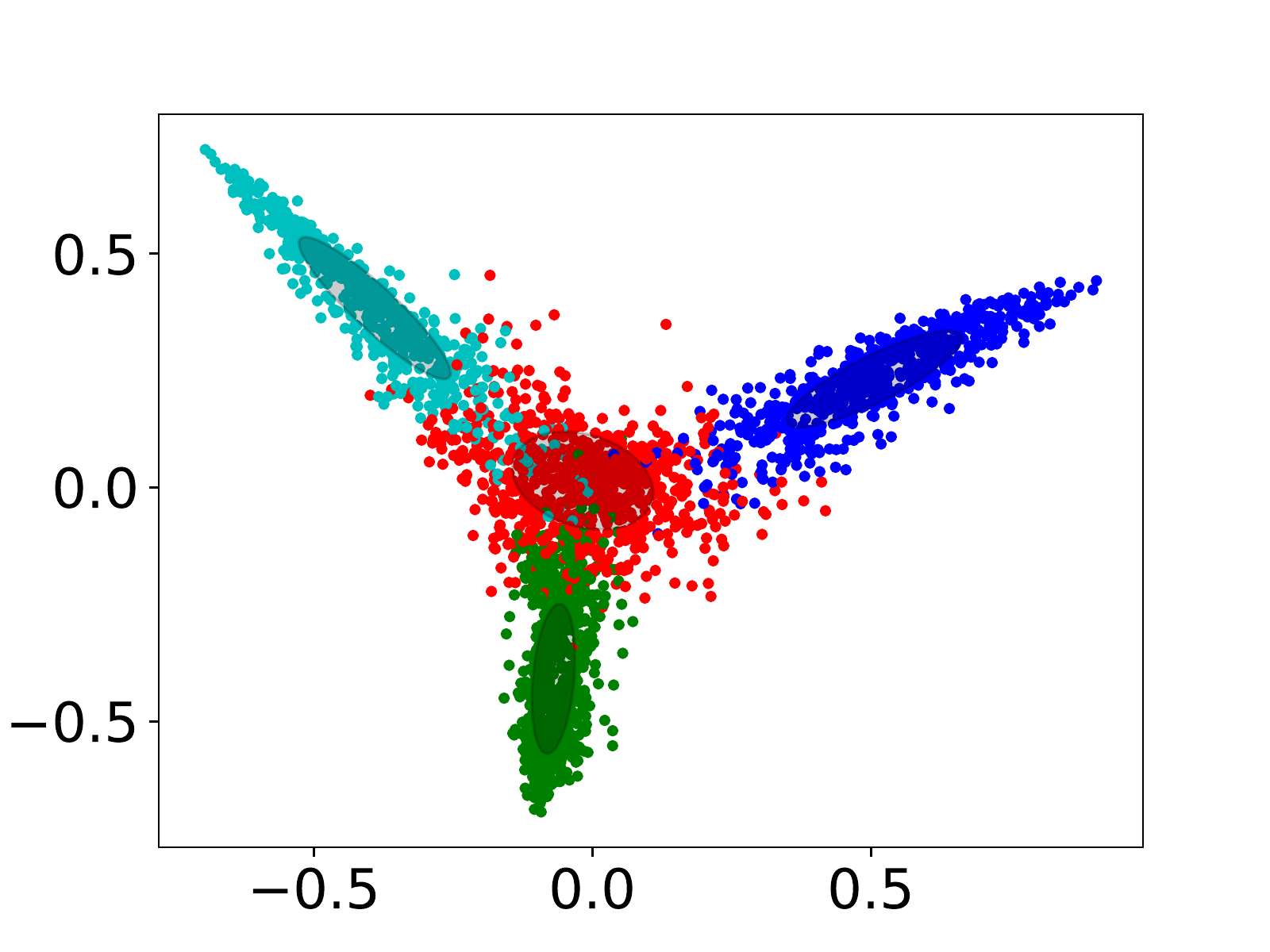} \label{fig:cdpmm_digits14_n3000_csampling}}
~
\subfigure[lSampling $N$=3000]{\includegraphics[width=0.31\textwidth]{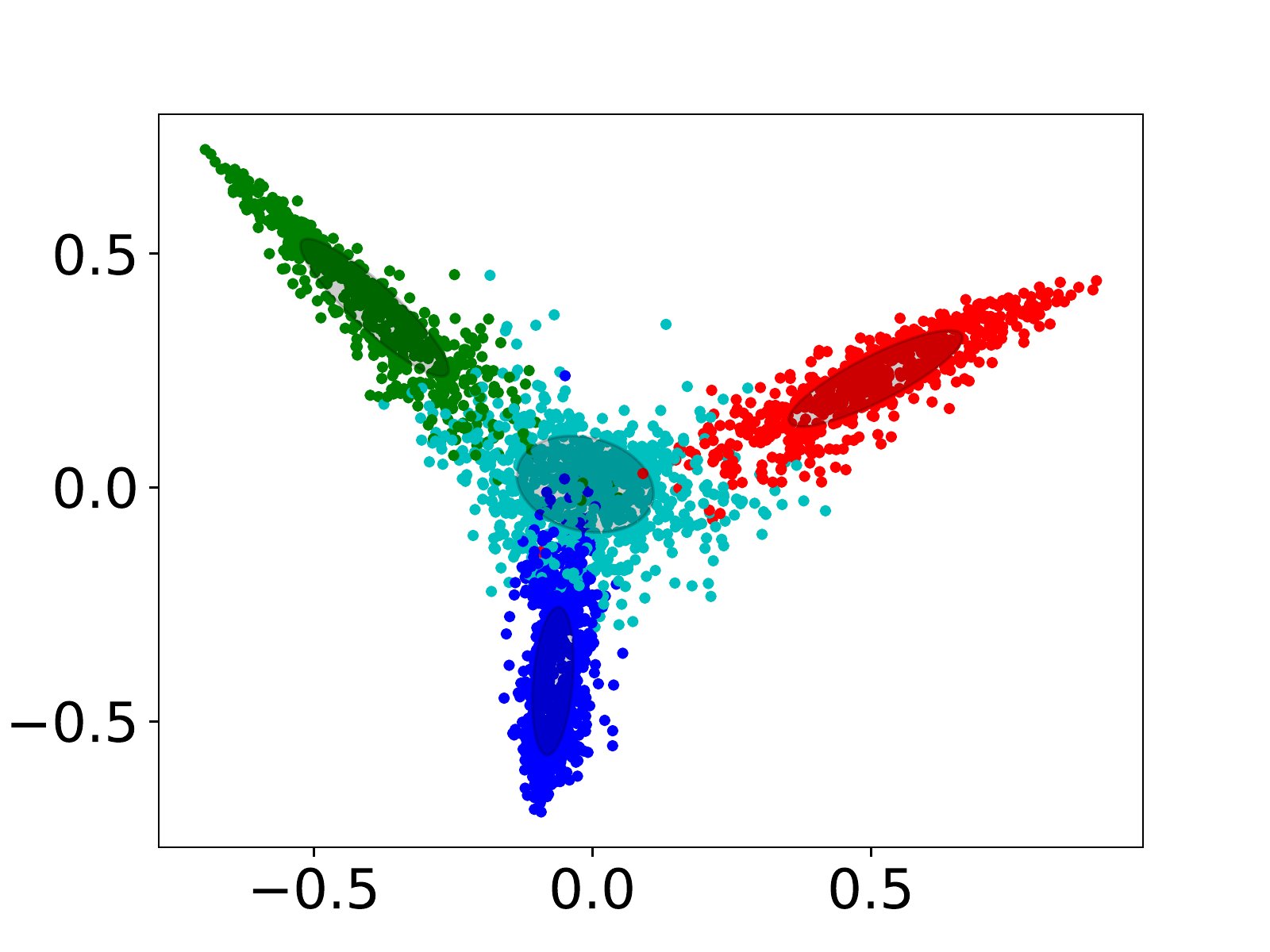} \label{fig:cdpmm_digits14_n3000_lsampling}}
\caption{Results of clustering 3000 randomly sampled digits from 1 to 4 in spectral space. Observations in the same color represent the same digit.}
\label{fig:cdpmm_digits14}
\end{figure}

\begin{table}[!h]
\centering
\begin{tabular}{|ccccc|}
\hline
\multicolumn{5}{|c|}{$N=3000$}                                                                    \\ \hline
Method               & NMI (SE $\times 10^{-4}$) & VI (SE $\times 10^{-3}$) & $K$ (SE $\times 10^{-2}$) & $K_{\max}$   \\ \hline
Ground truth         & 1.0                & 0.0               & 4                  & -            \\
CRP-Oracle           & 0.651 (2.0)        & 1.400 (1.1)       & 5.17 (1.2)         & 8            \\
CRP                  & 0.651 (2.0)        & 1.402 (1.1)       & 5.44 (1.6)         & 9            \\
pCRP                 & 0.652 (1.9)        & 1.389 (1.1)       & 4.57 (1.2)         & 7            \\ \hline
cSampling           & 0.659 (1.7)        & 1.353 (0.7)       & 4.25 (1.2)        & 7            \\ 
lSampling           & 0.653 (3.6)        & 1.379 (1.2)       & 4.81 (1.9)        & 9            \\ \hline
\end{tabular}
\caption{Comparison of CRP-Oracle, CRP and pCRP on a 4 digits subset of MNIST. $K$ is the average number of found clusters. $K_{\max}$ is the maximum number of clusters during sampling.}
\label{table:constrain_dp_digits_result}
\end{table}

\newpage
\section{Some metrics}\label{section:some-metrics}
In order to evaluate the Gibbs sampling procedure and to ensure that mixing is taking place, it is useful to have some metrics to calculate over the sampling iterations. We consider two kinds of metrics, one is label-related, in which case we use the true label of clustering to evaluate the process; the other one is non-label-related, in which case we do not use the true label to evaluate the process.

\subsection{Marginal of data and component assignment}\label{section:marginal-data-component}

\subsubsection{In Bayesian finite Gaussian mixture model}
Marginal of the data and component assignments $p(\mathcal{X}, \bz | \balpha, \bbeta)$ is useful for evaluating the Gibbs sampling process since it captures both changes in the likelihood of the data under the current assignments through $p(\mathcal{X} | \bz, \bbeta)$, as well as the probability of the current component assignments $p(\bz | \balpha)$. This marginal of data and component assignments can be calculated as follows
\begin{equation}
\begin{aligned}
p(\mathcal{X}, \bz | \balpha, \bbeta)  &= p(\mathcal{X} | \bz, \bbeta) p(\bz | \balpha) \\
						&= \left(\prod_{k=1}^K p(\mathcal{X}_k | \bbeta)\right) p(\bz| \balpha) ,
\end{aligned}
\label{equation:fmm_marginal_data_and_component}
\end{equation}
where $\mathcal{X}_k$ is the set of data observations assigned to component/cluster $k$. The terms in the product in Equation~\eqref{equation:fmm_marginal_data_and_component} can each be calculated using Equation~\eqref{equation:niw_marginal_data} and Equation~\eqref{equation:fmm_z_alpha}.

%

\subsubsection{In Bayesian infinite Gaussian mixture model}
Similar to the finite case, the marginal  of the data and component assignments $p(\mathcal{X}, \bz | \alpha, \bbeta)$ can be used as evaluation of the sampling process
\begin{equation}
\begin{aligned}
p(\mathcal{X}, \bz | \alpha, \bbeta)  &= p(\mathcal{X} | \bz, \bbeta) p(\bz | \alpha) \\
						&= \left(\prod_{k=1}^K p(\mathcal{X}_k | \bbeta)\right) p(\bz| \alpha) ,
\end{aligned}
\label{equation:ifmm_marginal_data_and_component}
\end{equation}
The terms in the product in Equation~\eqref{equation:ifmm_marginal_data_and_component} can each be calculated using Equation~\eqref{equation:niw_marginal_data} and Equation~\eqref{equation:ifmm_z_alpha}. The only difference between Equation~\eqref{equation:ifmm_marginal_data_and_component} and Equation~\eqref{equation:fmm_marginal_data_and_component} is that in Equation~\eqref{equation:fmm_marginal_data_and_component} the marginal probability of assignments depends on a vector $\balpha$, while in Equation~\eqref{equation:ifmm_marginal_data_and_component}, it depends on a scalar $\alpha$. Also in infinite case, the cluster number $K$ can increase or decrease at each iteration.

\subsection{Mixture likelihood}
The marginal likelihood can be used as a metric to evaluate the sampling iterations since it captures both the likelihood of data under current assignment through $p(\mathcalX | \bz, \bbeta)$ , and the probability of the current component assignment $p(\bz|\balpha)$. In this sense, an alternative metric can be utilized from Equation~\eqref{equation:mixture-assign-mle} after we sample out the distribution parameters $\bmu_k$'s and $\bSigma_k$'s for multivariate Gaussian distributions:
\begin{equation*}
	\begin{aligned}
		p(\bpi, \bgamma | \mathcalX, \bz) &= \prod_{i=1}^N \pi_{z_i} \normal(\bx_i | \bgamma_{z_i})
		&= \prod_{k=1}^K \pi_k^{N_k}\left[  \prod_{i:i\in C_k} \normal(\bx_i | \bgamma_k)   \right], 
	\end{aligned}
\end{equation*}
where $\bgamma_k=\{\bmu_k, \bSigma_k\}$, and $C_k$ is the data samples in the $k^{th}$ cluster. By evaluating with this metric, the mixture model tends to select maximum likelihood estimates.

\subsection{Inertia}\label{section:inertia}
Inertia \footnote{The name is following from Scikit-learn document, see also \url{http://scikit-learn.org/stable/modules/clustering.html}}, or within-cluster sum-of-squares is mostly used in K-means, in which it aims to choose centroids that minimize the inertia function
\begin{equation}
\sum_{k=1}^K  \sum_{j: j \in C_k}^{N_k} || \bx_j - \overline{\bx}_k||^2.
\end{equation}
where $C_k$ is the data samples in the $k^{th}$ cluster and $ \overline{\bx}_k$ is the mean vector for cluster $k$. This is exactly the same metric used in K-means. Inertia is not a normalized metric, so we just know that lower values are better and zero is optimal. But in very high-dimensional spaces, Euclidean distances tend to become inflated (this is an instance of the so-called ``curse of dimensionality”).

\subsection{Squared inertia}\label{section:squared-inertia}
We propose the following loss function
\begin{equation}
\sum_{k=1}^K \sqrt{\sum_{j: j \in C_k}^{N_k} || \bx_j - \overline{\bx}_k||^2 }.
\end{equation}
where $C_k$ is the data samples in the $k^{th}$ cluster and $ \overline{\bx}_k$ is the mean vector for cluster $k$. The square root has an important impact in favoring a smaller nunber of clusters; for example, inducing a price to be paid for introducing two clusters with the same mean. One can imagine that if two clusters have same center value, the square root operation will force the two clusters into one cluster. For example, if we have the loss value for two clusters 100 and 30 respectively. If we use this square root operation, we will get $\sqrt{100} + \sqrt{30} > \sqrt{100+30}$, thus favoring small cluster number. However, if we do not use square root operation, we will get $(100) + (30)=(100 + 30)$.

\subsection{Label-related metrics}
With the increasing popularity of algorithms for clustering, given a set of true cluster assignments, and the set of clusters found by an algorithm, these sets of cluster assignment can be compared to see how similar or different the sets are. We call this as label-related metrics. A normalized measure is desirable in many contexts, for example assigning a value of 0 where the two sets are totally dissimilar, and 1 where they are identical \citep{mcdaid2011normalized}. We first introduce two un-normalized measures, and a normalized measure is described, all of which come from \textit{information theory}, a field has deep links to statistics and machine learning.
A Python implementation is available online. \footnote{\url{https://github.com/junlulocky/infopy}}

\begin{figure}[h!]
	\centering
	\includegraphics[width=0.6\textwidth]{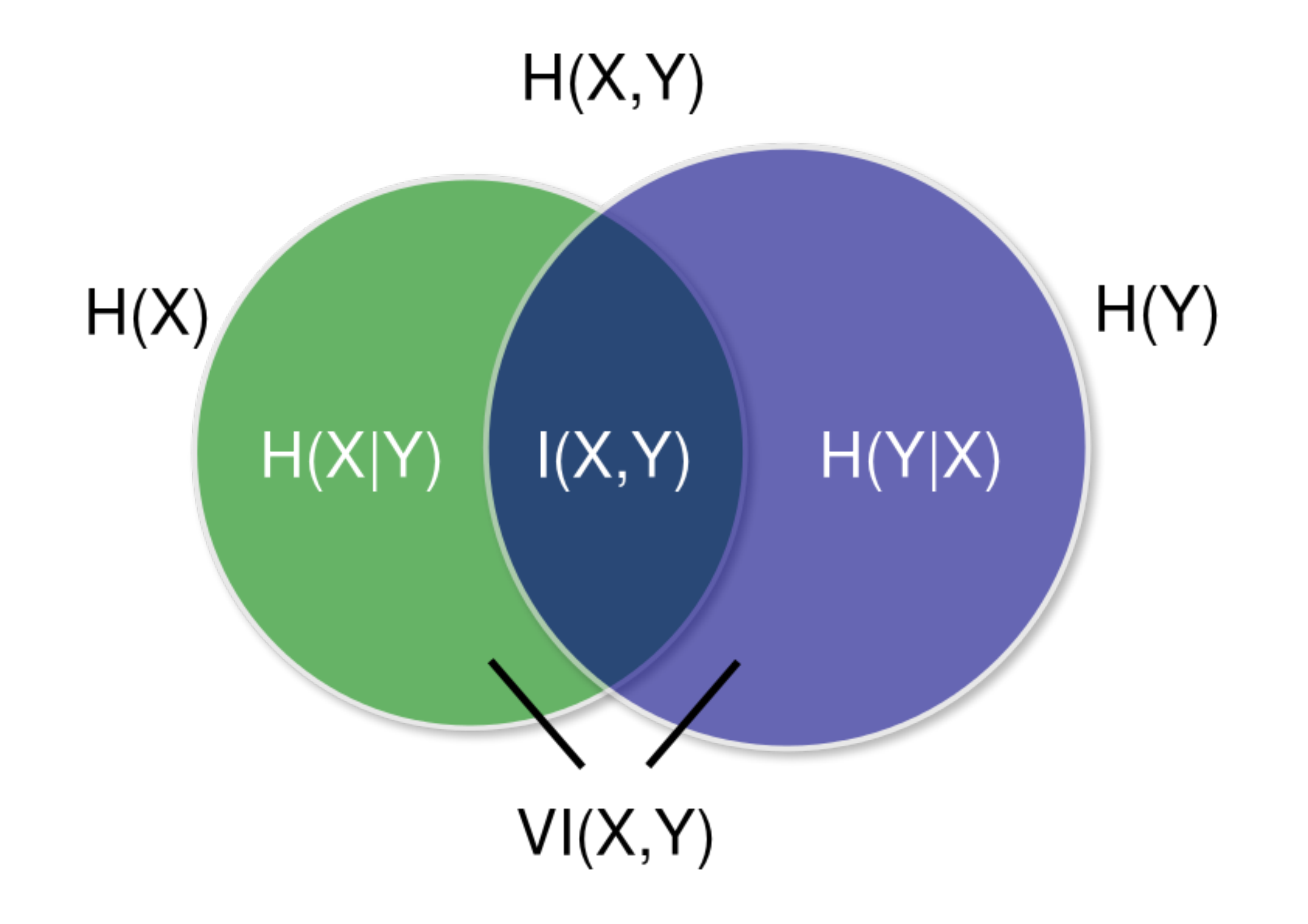}
	\caption{Venn diagram illustrating the relation between information entropies, mutual information and variation of information. The area containing both circles is the joint entropy $H(X,Y)$. The circle on the left (green and grey) is the individual entropy $H(X)$, with the green being the conditional entropy $H(X|Y)$. The circle on the right (purple and grey) is $H(Y)$, with the purple being $H(Y|X)$. The grey is the mutual information $I(X;Y)$. Figure is due to wikipedia.}
	\label{fig:venn_diagram_4_information_theory}
\end{figure}
\subsubsection{Mutual information and variation of information}

Formally, the mutual information of two discrete random variables $X$ and $Y$ can be defined as:
\begin{equation}
MI(X; Y) = \sum_{y \in Y} \sum_{x \in X} p(x,y) \log \left(\frac{p(x,y)}{p(x)p(y)} \right), 
\end{equation}
where $p(x,y)$ is the joint probability distribution function of $X$ and $Y$, and $p(x)$ and $p(y)$ are the marginal probability distribution functions of $X$ and $Y$ respectively \footnote{See also \url{https://nlp.stanford.edu/IR-book/html/htmledition/evaluation-of-clustering-1.html}}. 

Then the variation of information between the two discrete random variables is defined as 
\begin{equation}
VI(X;Y) = - \sum_{y \in Y} \sum_{x \in X}  p(x,y)\left[\log \frac{p(x,y)}{p(x)} + \log \frac{p(x,y)}{p(y)}\right]. 
\end{equation}
Noted that unlike the mutual information, however, the variation of information is a true metric, in that it obeys the triangle inequality.

The relationship between mutual information and variation of information can be shown in Figure~\ref{fig:venn_diagram_4_information_theory}, where the definition of the entropy of a random variable $X$ is $H(X)=\sum_{x\in X}p(x) \log p(x)$. From which we have the relationship between mutual information and variation of information: $VI(X;Y) = H(X) + H(Y) - 2MI(X;Y)$. 

In clustering, each clustering algorithm $C=\{C_1, C_2, \ldots, C_K\}$ defines the probability distribution $P_C$
\begin{equation}
P_C(k) = \frac{n_k}{N},
\end{equation}
where $n_k$ is the number of points in the $k^{th}$ cluster $C_k$ and $N$ is the total number of points in the data set. Different clustering algorithms can determine different number of clusters.

For any clustering distributions $P_{C_1}=(p_1, p_2, \ldots, p_n)$ and $P_{C_2}=(q_1,q_2, \ldots,q_m)$, define the probability distribution $P_{C_1}$ and $P_{C_2}$ and joint probability distribution $R(i,j)$
\begin{equation}
\begin{aligned}
P_{C_1}(i) &= \frac{n_i}{N}, \\
P_{C_2}(j) &= \frac{m_j}{N}, \\
R(i,j) &= \frac{N_{i,j}}{N} \triangleq \frac{|n_i \cap m_j|}{N}.
\end{aligned}
\end{equation}
where $|n_i \cap m_j|$ is the number of observations that is both in cluster $i$ of $C_1$ and cluster $j$ of $C_2$.
Thus, in clustering algorithms, we set $p(x,y) = R(i,j)$, $p(x) = P_{C_1}(i)$ and $p(y) = P_{C_2}(j)$. Then we arrive at the definition of mutual information (MI) and variation of information (VI) \citep{meilua2003comparing} in clustering
\begin{equation}
MI(C_1; C_2) = \sum_{i,j} \frac{N_{i,j}}{N} \log \frac{\frac{N_{i,j}}{N}}{\frac{n_i}{N} \cdot \frac{m_j}{N}} =  \sum_{i,j} \frac{N_{i,j}}{N} \log \frac{N \cdot N_{i,j}}{n_i \cdot m_j},
\end{equation}
and
\begin{equation}
VI(C_1; C_2) = - \sum_{i,j}    \frac{N_{i,j}}{N}[\log \frac{ \frac{N_{i,j}}{N}}{\frac{n_i}{N}} + \log \frac{ \frac{N_{i,j}}{N}}{\frac{m_j}{N}}] = - \sum_{i,j}    \frac{N_{i,j}}{N}[\log \frac{ N_{i,j}}{n_i} + \log \frac{ N_{i,j} }{m_j}]. 
\end{equation}

\subsubsection{Normalized mutual information}
A normalized measure is desirable in many contexts, for example assigning a value of 0 where the two sets are totally dissimilar, and 1 where they are identical.
From Figure~\ref{fig:venn_diagram_4_information_theory}, we find that the mutual information $MI(X;Y) < [H(X) + H(Y)]/2$. Thus we normalized the mutual information to get the normalized mutual information
\begin{equation}
NMI(X;Y) = \frac{MI(X;Y) }{ [H(X) + H(Y)]/2},
\end{equation}
and in clustering, we have
\begin{equation}
NMI(C_1;C_2) = \frac{MI(C_1;C_2) }{ [H(C_1) + H(C_2)]/2}.
\end{equation}
In most situations, we need to compare the clustering algorithm to a true clustering situation, in which case we just set $C_1$ to be the true clustering label. And thus, we expect the higher the mutual information (or normalized mutual information) the better; and the lower variation of information the better.



%
%
%
%
%


\newpage

\appendix
\section{Deriving the Dirichlet distribution}\label{appendix:drive-dirichlet}
\subsection{Derivation}
Let $X_1, X_2, \ldots, X_K$ be i.i.d., random variables drawn from the Gamma distribution such that $X_k \sim \gammadist(\alpha_k, 1)$ for $k\in \{1, 2, \ldots, K\}$. The joint p.d.f., of $X_1, X_2, \ldots, X_K$ is given by 
$$ f_{X_1,X_2,\ldots,X_K}(x_1, x_2, \ldots, x_K) =\left\{
\begin{aligned}
& \prod_{k=1}^{K}\frac{1}{\Gamma(\alpha_k)} x_k^{\alpha_k-1} \exp(- x_k) ,& \mathrm{\,\,if\,\,} x_k \geq 0.  \\
&0 , &\mathrm{\,\,if\,\,} \text{otherwise}.
\end{aligned}
\right.
$$
Define variables $Y_k$'s as follows
\begin{equation}\label{equation:dirichlet-define1}
	\begin{aligned}
		Y_k &= \frac{X_k}{\sum_{k=1}^{K}X_k}, \qquad \forall\,\, k \in \{1, 2, \ldots, K-1\}\\
		Y_K &= \frac{X_K}{\sum_{k=1}^{K}X_k} =1- \sum_{k=1}^{K-1}Y_k,
	\end{aligned}
\end{equation} 
and 
\begin{equation}\label{equation:dirichlet-define2}
Z_K = \sum_{k=1}^{K}X_k.
\end{equation}
Let $\bX = [X_1, X_2, \ldots, X_K]$, $\bY = [Y_1, Y_2, \ldots, Y_{K-1}, Z_K]$, $\bx = [x_1, x_2, \ldots, x_K]$, and $\by = [y_1, y_2, \ldots, y_{K-1}, z_K]$. 
By multidimensional transformation of variables, we have 
$$
f_{\bY}(\by) =  f_{\bX}(g^{-1}(\by))
\left|\det \left[J_{g^{-1}}(\by)  \right] \right|,
$$
where
$$
\begin{bmatrix}
	x_1\\
	x_2\\
	\vdots\\
	x_{K-1}\\
	x_K
\end{bmatrix}
=g^{-1}(\by) = g^{-1}\left(\begin{bmatrix}
y_1\\
y_2\\
\vdots\\
y_{K-1}\\
z_K
\end{bmatrix}\right)=
\begin{bmatrix}
	y_1\cdot z_K\\
	y_2\cdot z_K\\
	\vdots\\
	y_{K-1}\cdot z_K\\
	y_K \cdot z_K
\end{bmatrix},
$$
and the Jacobian matrix is given by 
$$
\begin{aligned}
J_{g^{-1}}(\by)&=
\begin{bmatrix}
	\frac{\partial}{\partial y_1}g_1^{-1}(\by) & \cdots & \frac{\partial}{\partial y_{K-1}}g_1^{-1}(\by)  & \frac{\partial}{\partial z_K}g_1^{-1}(\by) \\
	\vdots & \ddots & \cdots & \vdots \\
	\frac{\partial}{\partial y_1}g_K^{-1}(\by) & \cdots & \frac{\partial}{\partial y_{K-1}}g_K^{-1}(\by)  & \frac{\partial}{\partial z_K}g_K^{-1}(\by) \\
\end{bmatrix}\\
&=
\begin{bmatrix}
z_K & 0& \cdots & 0  &y_1 \\
0 & z_K& \cdots & 0  &y_2 \\
\vdots & \vdots& \ddots & \vdots & \vdots \\
0 & 0& \cdots & z_K  &y_{K-1} \\
-z_K & -z_k &  \cdots & -z_K  &(1-\sum_{k=1}^{K-1}y_k   ) 
\end{bmatrix}=z_K^{K-1}.
\end{aligned}
$$
This implies the joint p.d.f, of $\bY$ is 
$$
f_{\bY}(\by)  = f_{\bX}(g^{-1}(\by)) z_K^{K-1}
=\frac{y_1^{\alpha_1-1}  y_2^{\alpha_2-1} \ldots y_{K-1}^{\alpha_{K-1}-1}  y_K^{\alpha_K-1}}{\prod_{k=1}^{K} \Gamma(\alpha_k)} \exp(-z_K) z_K^{\alpha_1+\alpha_2+\ldots+\alpha_K-1}.
$$
We realize that the righthand size of above equation is proportional to a p.d.f. of Gamma distribution and 
$$
\int \exp(-z_K) z_K^{\alpha_1+\alpha_2+\ldots+\alpha_K-1} dz_K = \Gamma(\alpha_1 + \alpha_2+\ldots+\alpha_K).
$$
Let $\alpha_+=\alpha_1 + \alpha_2+\ldots+\alpha_K$,
this implies 
$$
f(y_1, y_2, \ldots, y_{K-1}) = \frac{\Gamma(\alpha_+)}{\prod_{k=1}^{K} \Gamma(\alpha_k)} \prod_{k=1}^{K}y_k^{\alpha_k-1}.
$$
We notice that $Y_k$'s are defined that $0<Y_k<1$ for all $k\in \{1, 2, \ldots, K\}$, and $\sum_{k=1}^{K}Y_k=1$. This implies the above equation is the p.d.f., of the Dirichlet distribution. The construction shown above can be utilized to generate random variables from the Dirichlet distribution.

\subsection{Properties of Dirichlet distribution}
Suppose $\bY=[Y_1, Y_2, \ldots, Y_K]\sim \dirichlet(\balpha)$ with $\balpha=[\alpha_1, \alpha_2, \ldots, \alpha_K]$, we here show the moments and properties of the Dirichlet distribution. 
\paragraph{Mean of Dirichlet distribution} 
Write out the expectation:
$$
\begin{aligned}
\Exp[Y_1] 
&=\int \cdots \int  
y_1 \cdot \dirichlet(\by | \balpha )
d y_1 dy_2\cdots dy_K\\
&=\int \cdots \int  
y_1\frac{\Gamma(\alpha_+)}{\prod_{k=1}^{K} \Gamma(\alpha_k)} \prod_{k=1}^{K}y_k^{\alpha_k-1}
d y_1 dy_2\cdots dy_K\\
&=\frac{\Gamma(\alpha_+)}{\prod_{k=1}^{K} \Gamma(\alpha_k)} 
\int \cdots \int  
y_1^{\alpha_1+1-1} \prod_{k=2}^{K}y_k^{\alpha_k-1}
d y_1 dy_2\cdots dy_K\\
&=\frac{\Gamma(\alpha_+)}{\prod_{k=1}^{K} \Gamma(\alpha_k)}   
\frac{\Gamma(\alpha_1+1) \prod_{k=2}^{K} \Gamma(\alpha_k)}{\Gamma(\alpha_++1)}\\
&=\frac{\Gamma(\alpha_+)}{\Gamma(\alpha_1)}   
\frac{\Gamma(\alpha_1+1) }{\Gamma(\alpha_++1)}\\
&=\frac{\alpha_1}{\alpha_+},
\end{aligned}
$$
where the last equality comes from the fact that $\Gamma(x+1)=x\Gamma(x)$.

\paragraph{Variance of Dirichlet distribution} 
Write out the variance $\Var[Y_i] = \Exp[Y_i^2] - \Exp[Y_i]^2$. Similarly from the proof of the mean, we have
$$
\Exp[Y_i^2] = \frac{\Gamma(\alpha_+)}{\Gamma(\alpha_++2)} \frac{\Gamma(\alpha_i+2)}{\Gamma(\alpha_i)} =\frac{(\alpha_i+1)\alpha_i}{(\alpha_++1)\alpha_+}.
$$
This implies 
$$
\Var[Y_i] = \Exp[Y_i^2] - \Exp[Y_i]^2 = \frac{(\alpha_i+1)\alpha_i}{(\alpha_++1)\alpha_+} - (\frac{\alpha_i}{\alpha_+})^2 = \frac{\alpha_i (\alpha_+-\alpha_i)}{\alpha_+^2(\alpha_++1)}.
$$

\paragraph{Covariance of Dirichlet distribution}
Write out the covariance $\Cov[Y_i Y_j] = \Exp[Y_i Y_j] - \Exp[Y_i]\Exp[Y_j]$.
Again, similarly from the proof of the mean, for $i\neq j$, we have 
$$
\Exp[Y_i Y_j ] = \frac{\Gamma(\alpha_+)}{\Gamma(\alpha_++2)} \frac{\Gamma(\alpha_i+1)}{\Gamma(\alpha_i)}
\frac{\Gamma(\alpha_j+1)}{\Gamma(\alpha_j)} = \frac{\alpha_i \alpha_j}{\alpha_+ (\alpha_++1)}.
$$
This implies
$$
\Cov[Y_i Y_j ]=\Exp[Y_i Y_j] - \Exp[Y_i]\Exp[Y_j]=\frac{\alpha_i \alpha_j}{\alpha_+ (\alpha_++1)} - \frac{\alpha_i \alpha_j}{\alpha_+^2} = \frac{-\alpha_i \alpha_j}{\alpha_+^2 (\alpha_++1) }.
$$

\paragraph{Marginal distribution of $Y_i$}
By definition in Equation~\eqref{equation:dirichlet-define1} and Equation~\eqref{equation:dirichlet-define2}, we have
$Z_K - X_i \sim \gammadist(\alpha_+ - \alpha_i, 1)$.
This implies 
$$
Y_i = \frac{X_i}{Z_K} =\frac{X_i}{X_i +(Z_K-X_i)} \sim \betadist(\alpha_i, \alpha_+-\alpha_i).
$$
which is from the fact about the p.d.f., of two independent Gamma random variables. \footnote{Suppose $X\sim \gammadist(a, \lambda)$ and $Y\sim \gammadist(b, \lambda)$, then $\frac{X}{X+Y} \sim \betadist(a, b)$.}

\paragraph{Aggregation property}
Suppose $[Y_1, Y_2, \ldots, Y_K]\sim \dirichlet([\alpha_1, \alpha_2, \ldots, \alpha_K])$, Then, Let $M=Y_i+Y_j$, it follows that 
$$
\begin{aligned}
&\gap [Y_1, \ldots Y_{i-1}, Y_{i+1}, \ldots, Y_{j-1}, Y_{j+1}, \ldots, Y_K, M] \\
&\sim \dirichlet([\alpha_1, \ldots, \alpha_{i-1}, \alpha_{i+1}, \ldots, \alpha_{j-1}, \alpha_{j+1}, \ldots, \alpha_K, \alpha_i+\alpha_j]).
\end{aligned}
$$
\begin{proof}
We realize that $M\sim \gammadist(\alpha_i+\alpha_j, 1)$. Again by the multidimensional transformation of variables as shown in the beginning of this section, we conclude the result.
\end{proof}
The results can be extended to a more general case. If $\{A_1, A_2, \ldots, A_r\}$ is a partition of $\{1, 2, \ldots, K\}$, then 
$$
\left[\sum_{i\in A_1} Y_i, \sum_{i\in A_2} Y_i, \ldots, \sum_{i\in A_r} Y_i\right] \sim \dirichlet\left(\left[\sum_{i\in A_1} \alpha_i, \sum_{i\in A_2} \alpha_i, \ldots, \sum_{i\in A_r} \alpha_i\right]\right).
$$

\paragraph{Condition distribution}
Let $Y_0 = \sum_{k=3}^{K}Y_i$ and $\alpha_0=\alpha_+-\alpha_1-\alpha_2$, then $[Y_1, Y_2, Y_0] \sim \dirichlet([\alpha_1, \alpha_2, \alpha_0])$. Therefore
$$
f_{Y_1,Y_2}(y_1,y_2)=
\frac{\Gamma(\alpha_1+\alpha_2+\alpha_0)}{\Gamma(\alpha_1)\Gamma(\alpha_2)\Gamma(\alpha_0)}
y_1^{\alpha_1-1}y_2^{\alpha_2-1} (1-y_1-y_2)^{\alpha_0-1}.
$$
Similarly, we have 
$$
f_{Y_2}(y_2) 
=  \frac{\Gamma(\alpha_1+\alpha_2+\alpha_0)}{\Gamma(\alpha_2)\Gamma(\alpha_1+\alpha_0)}
y_2^{\alpha_2-1} (1-y_2)^{\alpha_1+\alpha_0-1}= \betadist(y_1| \alpha_2, \alpha_1+\alpha_0),
$$
which is a p.d.f., of a Beta distribution. Therefore, the conditional p.d.f., of $Y_1|Y_2 = y_2$ is given by 
$$
f_{Y_1|Y_2 = y_2}(y_1|y_2) = \frac{f_{Y_1,Y_2}(y_1,y_2)}{f_{Y_2}(y_2)} 
=\frac{\Gamma(\alpha_1+\alpha_0)}{\Gamma(\alpha_1)\Gamma(\alpha_0)}
\left(\frac{y_1}{1-y_2}\right)^{\alpha_1-1} \left(1-\frac{y_1}{1-y_2}\right)^{\alpha_0-1} \frac{1}{1-y_2},
$$
which implies 
$$
\frac{1}{1-y_2} Y_1|Y_2=y_2 \sim \betadist(\alpha_1, \alpha_0).
$$
Apply this procedure, we will have 
$$
\bY_{-i} | Y_i\sim (1-y_i)\dirichlet(\alpha_{-i}),
$$
where $Y_{-i}$ is all the $K-1$ variables except $Y_i$, and similarly for $\alpha_{-i}$.

\section{Cholesky decomposition}
\begin{svgraybox}
	\begin{theorem}[Cholesky Decomposition]\label{theorem:cholesky-factor-exist}
		Every positive definite matrix $\bA\in \real^{n\times n}$ can be factored as 
		$$
		\bA = \bR^\top\bR,
		$$
		where $\bR$ is an upper triangular matrix with positive diagonal elements. This decomposition is known as \textbf{Cholesky decomposition}  of $\bA$. $\bR$ is known as the \textbf{Cholesky factor} or \textbf{Cholesky triangle} of $\bA$.
	\end{theorem}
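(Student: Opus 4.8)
The plan is to prove existence by strong induction on the dimension $n$, using a block partition and the positive definiteness of the Schur complement. This is the constructive route underlying the standard Cholesky algorithm, and it sidesteps any heavy appeal to the full LU machinery while still matching the ``based on the existence of the LU decomposition'' remark in the text.

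For the base case $n=1$, positive definiteness forces the single entry $a_{11}>0$, so $\bR=[\sqrt{a_{11}}]$ works and has a positive diagonal. For the inductive step, I would assume the claim for all positive definite matrices of size $(n-1)\times(n-1)$ and write a given positive definite $\bA\in\real^{n\times n}$ in block form with top-left scalar $a_{11}$, off-diagonal column $\bb\in\real^{n-1}$, and trailing block $\bA_{22}\in\real^{(n-1)\times(n-1)}$. Define the Schur complement $\bS=\bA_{22}-\frac{1}{a_{11}}\bb\bb^\top$. Then I would invoke the induction hypothesis to factor $\bS=\bR_{22}^\top\bR_{22}$ and take $\bR$ to be the block upper-triangular matrix with diagonal blocks $\sqrt{a_{11}}$ and $\bR_{22}$ and top-right block $\frac{1}{\sqrt{a_{11}}}\bb^\top$. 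A direct $2\times 2$ block multiplication confirms $\bR^\top\bR=\bA$ (the trailing block collapses to $\frac{1}{a_{11}}\bb\bb^\top+\bS=\bA_{22}$), and the diagonal of $\bR$ is positive because $\sqrt{a_{11}}>0$ and the diagonal of $\bR_{22}$ is positive by hypothesis.

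The main obstacle is verifying the two facts that make the induction go through: that $a_{11}>0$ and that $\bS$ is itself symmetric and positive definite, so that the induction hypothesis actually applies. Symmetry of $\bS$ is immediate from symmetry of $\bA$. For positive definiteness, the key idea is to test $\bA$ against a well-chosen vector: for any nonzero $\by\in\real^{n-1}$, evaluating the quadratic form $\bx^\top\bA\bx$ on $\bx=\big(-\frac{1}{a_{11}}\bb^\top\by,\ \by\big)^\top$ (equivalently, completing the square in the top coordinate) yields exactly $\by^\top\bS\by$; since this $\bx$ is nonzero and $\bA$ is positive definite, we obtain $\by^\top\bS\by>0$. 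Testing $\bA$ against the coordinate vectors $\be_j$ likewise recovers $a_{11}>0$ directly.

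Finally I would note that the construction is purely algorithmic: unwinding the recursion produces explicit formulas for the entries of $\bR$, which is precisely the mechanism behind the rank-one Cholesky update discussed earlier. Uniqueness is not asserted in the statement, but I would remark that it would follow from the positivity constraint on the diagonal should it be needed downstream.
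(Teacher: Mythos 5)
Your proposal is correct and follows essentially the same route as the paper's proof: the same block partition, the same Schur-complement lemma established by testing $\bA$ against the vector $\bigl(-\tfrac{1}{a_{11}}\bb^\top\by,\ \by\bigr)^\top$, and the same positivity of diagonal entries via unit vectors; the only difference is that you phrase the recursion as a clean induction with a block-assembled $\bR$, whereas the paper unwinds it as an explicit product $\bR_1\bR_2\cdots\bR_n$ of upper triangular factors. Both are the same argument in different packaging.
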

\end{svgraybox}

%
%

\subsection{Existence of the Cholesky decomposition}
Before showing the existence of Cholesky decomposition, we need the following definitions and lemmas.
\begin{svgraybox}
\begin{definition}[Positive Definite and Positive Semidefinite]
	A matrix $\bA\in \real^{n\times n}$ is positive definite if $\bx^\top\bA\bx>0$ for all nonzero $\bx\in \real^n$.
	And a matrix $\bA\in \real^{n\times n}$ is positive semidefinite if $\bx^\top\bA\bx \geq 0$ for all $\bx\in \real^n$.
\end{definition}
\end{svgraybox}

\begin{svgraybox}
\begin{lemma}[Positive Diagonals of Positive Definite Matrices]\label{lemma:positive-in-pd}
	The diagonal elements of a positive definite matrix $\bA$ are all positive.
\end{lemma}	
\end{svgraybox}
\begin{proof}[of Lemma~\ref{lemma:positive-in-pd}]
	From the definition of positive definite matrix, we have $\bx^\top\bA \bx >0$ for all nonzero $\bx$. In particular, let $\bx=\be_i$ where $\be_i$ is the $i$-th unit vector with the $i$-th entry equal to 1 and other entries equal to 0. Then, 
	$$
	\be_i^\top\bA \be_i = \bA_{ii}>0, \qquad \forall i \in \{1, 2, \cdots, n\}.
	$$	
	This completes the proof.
\end{proof}

\begin{svgraybox}
\begin{lemma}[Schur Complement of Positive Definite Matrices]\label{lemma:pd-of-schur}
	For any positive definite matrix $\bA\in \real^{n\times n}$, its Schur complement of $\bA_{11}$ is $\bS_{n-1}=\bA_{2:n,2:n}-\frac{1}{\bA_{11}} \bA_{2:n,1}\bA_{2:n,1}^\top$ and it is also positive definite. 
	
	Note that the subscript $n-1$ of $\bS_{n-1}$ means it is of size $(n-1)\times (n-1)$ and it is a Schur complement of a $n\times n$ positive definite matrix. We will use this notation in the following section.
\end{lemma}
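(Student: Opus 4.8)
The plan is to reduce the positive definiteness of $\bS_{n-1}$ to that of the full matrix $\bA$ by engineering, for each nonzero test vector $\by \in \real^{n-1}$, an augmented vector $\bx \in \real^n$ whose quadratic form under $\bA$ equals $\by^\top \bS_{n-1}\by$. First I would record two preliminaries: by Lemma~\ref{lemma:positive-in-pd} the diagonal entry $\bA_{11} > 0$, so the division in $\bS_{n-1} = \bA_{2:n,2:n} - \frac{1}{\bA_{11}}\bA_{2:n,1}\bA_{2:n,1}^\top$ is legitimate, and the symmetry of $\bA$ gives $\bA_{1,2:n} = \bA_{2:n,1}^\top$, which is what lets the off-diagonal blocks be written through the single vector $\bA_{2:n,1}$.

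The key construction is a ``complete the square'' choice of the leading coordinate. Given a nonzero $\by \in \real^{n-1}$, set $x_1 = -\frac{1}{\bA_{11}}\bA_{2:n,1}^\top \by$ and form the augmented vector $\bx = \begin{bmatrix} x_1 \\ \by \end{bmatrix} \in \real^n$ by stacking $x_1$ on top of $\by$. Expanding $\bx^\top \bA \bx$ block by block yields
\[
\bx^\top\bA\bx = \bA_{11}x_1^2 + 2 x_1 \bA_{2:n,1}^\top \by + \by^\top \bA_{2:n,2:n}\by.
\]
Substituting the chosen $x_1$ collapses the first two terms to $-\frac{1}{\bA_{11}}(\bA_{2:n,1}^\top\by)^2$, and rewriting the scalar square as $(\bA_{2:n,1}^\top\by)^2 = \by^\top \bA_{2:n,1}\bA_{2:n,1}^\top\by$ produces exactly
\[
\bx^\top\bA\bx = \by^\top\left( \bA_{2:n,2:n} - \frac{1}{\bA_{11}}\bA_{2:n,1}\bA_{2:n,1}^\top\right)\by = \by^\top \bS_{n-1}\by.
\]
Finally, since $\by \neq \bzero$, the augmented vector $\bx$ is nonzero regardless of the value of $x_1$; the positive definiteness of $\bA$ then forces $\bx^\top\bA\bx > 0$, and hence $\by^\top \bS_{n-1}\by > 0$ for every nonzero $\by$, which is the claim.

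I do not expect a serious obstacle: the conceptual content is entirely captured by the choice of $x_1$, and the only thing to execute carefully is the block-expansion bookkeeping and the cancellation of the cross terms. The one point worth stating explicitly is that the definition of positive definiteness used in the excerpt is the quadratic-form definition ($\bx^\top\bA\bx > 0$ for all nonzero $\bx$), so exhibiting a nonzero $\bx$ with $\bx^\top\bA\bx = \by^\top\bS_{n-1}\by$ suffices and no separate symmetry argument for $\bS_{n-1}$ is required.
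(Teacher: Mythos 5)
Your proof is correct and follows essentially the same route as the paper: both construct the augmented vector $\bx = \bigl[-\tfrac{1}{\bA_{11}}\bA_{2:n,1}^\top\by,\ \by\bigr]^\top$ and show that $\bx^\top\bA\bx = \by^\top\bS_{n-1}\by$, so positive definiteness of $\bA$ transfers to $\bS_{n-1}$. The only difference is cosmetic: you expand the quadratic form scalar-wise and complete the square, while the paper carries out the block matrix multiplication directly.
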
	
\end{svgraybox}
\begin{proof}[of Lemma~\ref{lemma:pd-of-schur}]
	For any nonzero vector $\bv\in \real^{n-1}$, we can construct a vector $\bx\in \real^n$
	$$
	\bx = 
	\begin{bmatrix}
		-\frac{1}{\bA_{11}} \bA_{2:n,1}^\top  \bv \\
		\bv
	\end{bmatrix},
	$$
	which is nonzero. Then
	$$
	\begin{aligned}
		\bx^\top\bA\bx 
		&= [-\frac{1}{\bA_{11}} \bv^\top \bA_{2:n,1}\qquad \bv^\top]
		\begin{bmatrix}
			\bA_{11} & \bA_{2:n,1}^\top \\
			\bA_{2:n,1} & \bA_{2:n,2:n}
		\end{bmatrix}
		\begin{bmatrix}
			-\frac{1}{\bA_{11}} \bA_{2:n,1}^\top  \bv \\
			\bv
		\end{bmatrix} \\
		&= [-\frac{1}{\bA_{11}} \bv^\top \bA_{2:n,1}\qquad \bv^\top]
		\begin{bmatrix}
			0 \\
			\bS_{n-1}\bv
		\end{bmatrix} \\
		&= \bv^\top\bS_{n-1}\bv.
	\end{aligned}
	$$
	Since $\bA$ is positive definite, we have $\bx^\top\bA\bx = \bv^\top\bS_{n-1}\bv >0$ for all nonzero $\bv$. Thus, $\bS$ is positive definite.
\end{proof}

\begin{mdframed}[hidealllines=\mdframehidelineNote,backgroundcolor=\mdframecolorNote]
\textbf{A word on the Schur complement}:
It can be easily proved that this Schur complement $\bS_{n-1}=\bA_{2:n,2:n}-\frac{1}{\bA_{11}} \bA_{2:n,1}\bA_{2:n,1}^\top$ is also nonsingular if $\bA$ is nonsngular and $\bA_{11}\neq 0$. Similarly, the Schur complement of $\bA_{nn}$ in $\bA$ is $\bar{\bS}_{n-1} =\bA_{1:n-1,1:n-1} - \frac{1}{\bA_{nn}}\bA_{1:n-1,n} \bA_{1:n-1,n}^\top$ which is also positive definite if $\bA$ is positive definite. This property can help prove the leading principle minors of positive definite matrices are all positive. See Appendix~\ref{appendix:leading-minors-pd} for more details. 
\end{mdframed}

We then prove the existence of Cholesky decomposition using these lemmas.
\begin{proof}[\textbf{of Theorem~\ref{theorem:cholesky-factor-exist}: Existence of Cholesky Decomposition}]
	For any positive definite matrix $\bA$, we can write out (since $\bA_{11}$ is positive)
	$$
	\begin{aligned}
		\bA &= 
		\begin{bmatrix}
			\bA_{11} & \bA_{2:n,1}^\top \\
			\bA_{2:n,1} & \bA_{2:n,2:n}
		\end{bmatrix} \\
		&=\begin{bmatrix}
			\sqrt{\bA_{11}} &\bzero\\
			\frac{1}{\sqrt{\bA_{11}}} \bA_{2:n,1} &\bI 
		\end{bmatrix}
		\begin{bmatrix}
			\sqrt{\bA_{11}} & \frac{1}{\sqrt{\bA_{11}}}\bA_{2:n,1}^\top \\
			\bzero & \bA_{2:n,2:n}-\frac{1}{\bA_{11}} \bA_{2:n,1}\bA_{2:n,1}^\top
		\end{bmatrix}\\
		&=\begin{bmatrix}
			\sqrt{\bA_{11}} &\bzero\\
			\frac{1}{\sqrt{\bA_{11}}} \bA_{2:n,1} &\bI 
		\end{bmatrix}
		\begin{bmatrix}
			1 & \bzero \\
			\bzero & \bA_{2:n,2:n}-\frac{1}{\bA_{11}} \bA_{2:n,1}\bA_{2:n,1}^\top
		\end{bmatrix}
		\begin{bmatrix}
			\sqrt{\bA_{11}} & \frac{1}{\sqrt{\bA_{11}}}\bA_{2:n,1}^\top \\
			\bzero & \bI
		\end{bmatrix}\\
		&=\bR_1^\top
		\begin{bmatrix}
			1 & \bzero \\
			\bzero & \bS_{n-1}
		\end{bmatrix}
		\bR_1.
	\end{aligned}
	$$
	where  
	$$\bR_1 = 
	\begin{bmatrix}
		\sqrt{\bA_{11}} & \frac{1}{\sqrt{\bA_{11}}}\bA_{2:n,1}^\top \\
		\bzero & \bI
	\end{bmatrix}.
	$$
	Since we proved the Schur complement $\bS_{n-1}$ is positive definite. We can factor it in the same way
	$$
	\bS_{n-1}=
	\hat{\bR}_2^\top
	\begin{bmatrix}
		1 & \bzero \\
		\bzero & \bS_{n-2}
	\end{bmatrix}
	\hat{\bR}_2.
	$$
	We then have
	$$
	\begin{aligned}
		\bA &= \bR_1^\top
		\begin{bmatrix}
			1 & \bzero \\
			\bzero & \hat{\bR}_2^\top
			\begin{bmatrix}
				1 & \bzero \\
				\bzero & \bS_{n-2}
			\end{bmatrix}
			\hat{\bR}_2.
		\end{bmatrix}
		\bR_1\\
		&=
		\bR_1^\top
		\begin{bmatrix}
			1 &\bzero \\
			\bzero &\hat{\bR}_2^\top
		\end{bmatrix}
		\begin{bmatrix}
			1 &\bzero \\
			\bzero &\begin{bmatrix}
				1 & \bzero \\
				\bzero & \bS_{n-2}
			\end{bmatrix}
		\end{bmatrix}
		\begin{bmatrix}
			1 &\bzero \\
			\bzero &\hat{\bR}_2
		\end{bmatrix}
		\bR_1\\
		&=
		\bR_1^\top \bR_2^\top
		\begin{bmatrix}
			1 &\bzero \\
			\bzero &\begin{bmatrix}
				1 & \bzero \\
				\bzero & \bS_{n-2}
			\end{bmatrix}
		\end{bmatrix}
		\bR_2 \bR_1.
	\end{aligned}
	$$
	The same formula can be recursively applied. This process gradually continues down to the bottom-right corner giving us the decomposition
	$$
	\begin{aligned}
		\bA &= \bR_1^\top\bR_2^\top\cdots \bR_n^\top \bR_n\cdots \bR_2\bR_1\\
		&= \bR^\top \bR,
	\end{aligned}
	$$
	where $\bR_1, \bR_2, \cdots \bR_n,$ are upper triangular matrices with positive diagonal elements and $\bR=\bR_1\bR_2\cdots\bR_n$ is also an upper triangular matrix with positive diagonal elements.
\end{proof}
The process in the proof can also be used to compute the Cholesky decomposition. In next section, we use another point of view to do the computation.  
\begin{mdframed}[hidealllines=\mdframehideline,backgroundcolor=\mdframecolor]
	\begin{corollary}[$\bR^\top\bR$ is PD]\label{lemma:r-to-pd}
		For any upper triangular matrix with positive diagonal elements, then 
		$
		\bA = \bR^\top\bR
		$
		is positive definite.
	\end{corollary}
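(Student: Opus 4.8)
The plan is to verify the definition of positive definiteness directly by reducing the quadratic form $\bx^\top \bA \bx$ to a squared Euclidean norm. First I would observe that for any $\bx \in \real^n$,
$$
\bx^\top \bA \bx = \bx^\top \bR^\top \bR \bx = (\bR\bx)^\top (\bR\bx) = \|\bR\bx\|^2 \geq 0,
$$
so the quadratic form is always nonnegative. Establishing strict positivity for nonzero $\bx$ then amounts to showing that $\bR\bx \neq \bzero$ whenever $\bx \neq \bzero$, i.e., that $\bR$ has trivial null space.

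The key step is to invoke triangularity. Since $\bR$ is upper triangular, its determinant equals the product of its diagonal entries, and by hypothesis these are all strictly positive, so $\det(\bR) > 0$. Hence $\bR$ is nonsingular, and its only null vector is $\bzero$. Consequently, for every nonzero $\bx$ we have $\bR\bx \neq \bzero$, which forces $\|\bR\bx\|^2 > 0$, and therefore $\bx^\top \bA \bx > 0$. This is exactly the definition of positive definiteness, concluding the argument.

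There is essentially no hard obstacle here; the only point that requires care is justifying that $\bR$ is nonsingular, which I would handle via the determinant-of-a-triangular-matrix fact rather than through an explicit back-substitution argument (though the latter would work equally well). I would also remark in passing that $\bA = \bR^\top\bR$ is automatically symmetric, since $\bA^\top = (\bR^\top \bR)^\top = \bR^\top \bR = \bA$, so $\bA$ is a genuine symmetric positive definite matrix. This matches the hypotheses under which the Cholesky decomposition was established in Theorem~\ref{theorem:cholesky-factor-exist}, confirming that $\bR^\top\bR$ construction and the positive definite class are consistent.
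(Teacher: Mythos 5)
Your proof is correct and follows essentially the same route as the paper's: both reduce the quadratic form to $\|\bR\bx\|^2$ and conclude positivity from $\bR$ having trivial null space, the only (minor) difference being that you justify nonsingularity via $\det(\bR)$ equaling the product of the positive diagonal entries, whereas the paper simply asserts full column rank from triangularity. Your added remark on the symmetry of $\bR^\top\bR$ is a harmless bonus but not needed for the statement.
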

\end{mdframed}
\begin{proof}[of Corollary~\ref{lemma:r-to-pd}]
	If an upper triangular matrix $\bR$ has positive diagonal, it is full column rank, and the null space of $\bR$ is 0. As a result, $\bR\bx \neq \bzero$ for any nonzero vector $\bx$. Thus $\bx^\top\bA\bx = ||\bR\bx||^2 >0$ for any nonzero vector $\bx$.
\end{proof}
This corollary can be extended to any $\bR$ with independent columns.

\subsection{Computing the Cholesky decomposition}

To compute Cholesky decomposition, we write out the equality $\bA=\bR^\top\bR$:
$$
\begin{aligned}
	\bA=\left[
	\begin{matrix}
		\bA_{11} & \bA_{1,2:n} \\
		\bA_{2:n,1} & \bA_{2:n,2:n}
	\end{matrix}
	\right] 
	&=\left[
	\begin{matrix}
		\bR_{11} & 0 \\
		\bR_{1,2:n}^\top & \bR_{2:n,2:n}^\top
	\end{matrix}
	\right] 
	\left[
	\begin{matrix}
		\bR_{11} & \bR_{1,2:n} \\
		0 & \bR_{2:n,2:n}
	\end{matrix}
	\right]\\
	&=
	\left[
	\begin{matrix}
		\bR_{11}^2 & \bR_{11}\bR_{1,2:n} \\
		\bR_{11}\bR_{1,2:n}^\top & \bR_{1,2:n}^\top\bR_{1,2:n} + \bR_{2:n,2:n}^\top\bR_{2:n,2:n}
	\end{matrix}
	\right],  
\end{aligned}
$$
which allows to determine the first row of $\bR$
$$
\bR_{11} = \sqrt{\bA_{11}}, \qquad \bR_{1,2:n} = \frac{1}{\bR_{11}}\bA_{1,2:n}.
$$
Let $\bA_2=\bR_{2:n,2:n}^\top\bR_{2:n,2:n}$. The equality $\bA_{2:n,2:n} = \bR_{1,2:n}^\top\bR_{1,2:n} + \bR_{2:n,2:n}^\top\bR_{2:n,2:n}$ gives out
$$
\begin{aligned}
	\bA_2=\bR_{2:n,2:n}^\top\bR_{2:n,2:n} &= \bA_{2:n,2:n} - \bR_{1,2:n}^\top\bR_{1,2:n} \\
	&= \bA_{2:n,2:n} - \frac{1}{\bA_{11}} \bA_{1,2:n}^\top\bA_{1,2:n} \\
	&= \bA_{2:n,2:n} - \frac{1}{\bA_{11}} \bA_{2:n,1}\bA_{1,2:n} \qquad (\bA \mbox{ is positive definite}).
\end{aligned}
$$
$\bA_2$ is the Schur complement of $\bA_{11}$ in $\bA$ of size $(n-1)\times (n-1)$. And to get $\bR_{2:n,2:n}$ we must compute the Cholesky decomposition of matrix $\bA_2$ of $(n-1)\times (n-1)$. Again, this is a recursive algorithm and formulated in Algorithm~\ref{alg:compute-choklesky}.

\begin{algorithm}[H] 
	\caption{Cholesky Decomposition} 
	\label{alg:compute-choklesky} 
	\begin{algorithmic}[1] 
		\Require 
		positive definite matrix $\bA$ with size $n\times n$; 
		\State Calculate first of $\bR$ by $\bR_{11} = \sqrt{\bA_{11}}, \bR_{1,2:n} = \frac{1}{\bR_{11}}\bA_{1,2:n}$, ($n$ flops);
		\State Compute the Cholesky decomposition of the $(n-1)\times (n-1)$ matrix
		$$
		\bA_2=\bR_{2:n,2:n}^\top\bR_{2:n,2:n}=\bA_{2:n,2:n} - \frac{1}{\bA_{11}} \bA_{2:n,1}\bA_{1,2:n}, \qquad\text{($n^2-n$ flops).}
		$$
		
	\end{algorithmic} 
\end{algorithm}

Further, this process can be used to determine if a matrix is positive definite or not. If we try to factor a non positive definite matrix, at some point, we will encounter a nonpositive element in entry (1,1) of $\bA, \bA_2, \bA_3, \cdots$.

\begin{theorem}\label{theorem:cholesky-complexity}
	Algorithm~\ref{alg:compute-choklesky} requires $\sim(1/3)n^3$ flops to compute a Cholesky decomposition of an $n\times n$ positive definite matrix.
\end{theorem}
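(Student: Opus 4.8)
The plan is to set up and solve the recurrence for the flop count implied by the two per-step costs already stated in Algorithm~\ref{alg:compute-choklesky}. Let $f(n)$ denote the number of floating-point operations required to form the Cholesky factor of an $n\times n$ positive definite matrix by this algorithm. Reading the two lines of the algorithm directly: computing the first row $\bR_{11}$ and $\bR_{1,2:n}$ costs $n$ flops, and forming the $(n-1)\times(n-1)$ Schur complement $\bA_2 = \bA_{2:n,2:n} - \frac{1}{\bA_{11}}\bA_{2:n,1}\bA_{1,2:n}$ costs $n^2 - n$ flops; after this the same procedure is applied recursively to $\bA_2$. This yields the recurrence $f(n) = n + (n^2 - n) + f(n-1) = n^2 + f(n-1)$, with base case $f(1)=1$ (equivalently $f(0)=0$), since the $1\times 1$ factorization is a single square root.

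Next I would unroll the recurrence. Telescoping $f(n) = n^2 + f(n-1)$ down to the base case gives $f(n) = \sum_{k=1}^{n} k^2 = \frac{n(n+1)(2n+1)}{6}$, a standard closed form that I would either establish by the same induction or quote directly. Expanding, $f(n) = \frac{1}{3}n^3 + \frac{1}{2}n^2 + \frac{1}{6}n$.

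Finally, I would verify the asymptotic claim against the definition of $\sim$ given earlier in this section. Dividing by $(1/3)n^3$ and letting $n\to\infty$,
\begin{equation*}
\lim_{n\to\infty} \frac{f(n)}{(1/3)n^3} = \lim_{n\to\infty}\left(1 + \frac{3}{2n} + \frac{1}{2n^2}\right) = 1,
\end{equation*}
which is exactly the statement $f(n) \sim (1/3)n^3$.

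There is no real obstacle here beyond bookkeeping: the whole argument is a one-line recurrence together with its closed form. The only point deserving a sentence of care is justifying the two per-step flop counts---in particular that the dominant cost at each recursion level is the rank-one Schur-complement update, contributing $n^2 - n$---and noting that the precise lower-order terms (and the treatment of the square root, which I absorb into the $O(n)$ first-row cost) are irrelevant to the leading coefficient, since the $\sim$ notation concerns only the $n^3$ term. I would also remark that the leading coefficient $1/3$ arises precisely because the per-level cost sums to exactly $n^2$, so that $\sum_{k=1}^n k^2 \sim n^3/3$.
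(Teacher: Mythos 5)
Your proposal is correct and follows essentially the same route as the paper: per-level cost $n + (n^2 - n) = n^2$, telescoping to $\sum_{k=1}^{n} k^2 = \frac{n(n+1)(2n+1)}{6} = \frac{2n^3+3n^2+n}{6}$, whose leading term gives $\sim(1/3)n^3$. The only substantive content the paper's proof adds is precisely the point you flagged but left unargued — justifying the $n^2-n$ count for the Schur-complement step by exploiting symmetry, so that both the multiplications and the subtractions drop from $(n-1)^2$ to $1+2+\cdots+(n-1)=\frac{n^2-n}{2}$ each; without that observation the per-level cost would be about $2n^2$ and the argument would land on the LU coefficient $2/3$ rather than $1/3$.
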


\begin{proof}[of Theorem~\ref{theorem:cholesky-complexity}]
	Step 1 takes 1 square root and $(n-1)$ division which takes $n$ flops totally.  
	
	For step 2, Note that $\frac{1}{\bA_{11}} \bA_{2:n,1}\bA_{1,2:n} = (\frac{1}{\sqrt{\bA_{11}}} \bA_{2:n,1})(\frac{1}{\sqrt{\bA_{11}}}\bA_{1,2:n}) = \bR_{1,2:n}^\top \bR_{1,2:n}$. If we calculate the complexity directly from the equation in step 2, we will get the same complexity as LU decomposition. But since $\bR_{1,2:n}^\top \bR_{1,2:n}$ is symmetric, the complexity of $\bR_{1,2:n}^\top \bR_{1,2:n}$ reduces from $(n-1)\times(n-1)$ multiplications to $1+2+\cdots+(n-1)=\frac{n^2-n}{2}$ multiplications. The cost of matrix division reduces from $(n-1)\times(n-1)$ to $1+2+\cdots+(n-1)=\frac{n^2-n}{2}$ as well. So it costs $n^2-n$ flops for step 2.
	
	Simple calculation will show the total complexity is $\frac{2n^3+3n^2+n}{6}$ flops which is $(1/3)n^3$ flops if we keep only the leading term.
\end{proof}

\section{Leading Principle Minors of PD Matrices}\label{appendix:leading-minors-pd}

In Lemma~\ref{lemma:pd-of-schur}, we proved for any positive definite matrix $\bA\in \real^{n\times n}$, it's Schur complement of $\bA_{11}$ is $\bS_{n-1}=\bA_{2:n,2:n}-\frac{1}{\bA_{11}} \bA_{2:n,1}\bA_{2:n,1}^\top$ and it is also positive definite. 
This is also true for its Schur complement of $\bA_{nn}$, i.e., $\bS_{n-1}^\prime = \bA_{1:n-1,1:n-1} -\frac{1}{\bA_{nn}} \bA_{1:n-1,n}\bA_{1:n-1,n}^\top$ is also positive definite.

We then claim all the leading principle minors of a positive definite matrix $\bA \in \real^{n\times n}$ are positive.

\begin{proof}
	We will prove by induction.
	Since all the components on the diagonal of positive definite matrices are all positive (see Lemma~\ref{lemma:positive-in-pd}). The case for $n=1$ is trivial that $\det(\bA_{11})> 0$.
	
	Suppose all the leading principle minors for $k\times k$ matrices are all positive. If we could prove this is also true for $(k+1)\times (k+1)$ matrices, then we complete the proof.
	
	For a $(k+1)\times (k+1)$ matrix $\bM=\begin{bmatrix}
		\bA & \bb\\
		\bb^\top & d
	\end{bmatrix}$, where $\bA$ is a $k\times k$ submatrix. Then its Schur complement of $d$, $\bS_{k} = \bA - \frac{1}{d} \bb\bb^\top$ is also positive definite and its determinant is positive from the assumption. And $\det(\bM) = \det(d)\det( \bA - \frac{1}{d} \bb\bb^\top) $=
	\footnote{By the fact that if matrix $\bM$ has a block formulation: $\bM=\begin{bmatrix}
			\bA & \bB \\
			\bC & \bD 
		\end{bmatrix}$, then $\det(\bM) = \det(\bD)\det(\bA-\bB\bD^{-1}\bC)$.}  
	$d\cdot \det( \bA - \frac{1}{d} \bb\bb^\top)>0$, which completes the proof.
\end{proof}

\section{Convexity results}\label{appendix:revisit}
We prove that $x \mapsto \Gamma(Kx)/(x^{K-1} [\Gamma(x)]^K)$ is strictly log-convex, $x \mapsto \Gamma(Kx)/[\Gamma(x)]^K$ is strictly log-concave and the function $x \mapsto \Gamma(x+t)/\Gamma(x)$ is also strictly log-concave where $\Gamma(x)$ is the Gamma function.
\begin{svgraybox}
\begin{theorem}
Define 
\begin{equation}
F(x) = \frac{\Gamma(Kx)}{x^{K-1} [\Gamma(x)]^K}, \quad G(x) = \frac{\Gamma(K x)}{[\Gamma(x)]^K}. 
\label{equation:gx_log_concave_appendix}
\end{equation}
For $x > 0$ and $K$ is an arbitrary positive integer, the function F is strictly log-convex and the function G is strictly log-concave. 
\end{theorem}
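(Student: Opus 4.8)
The plan is to reduce everything to the trigamma computation already carried out in the proof of Theorem~\ref{theorem:gx_log_concave}. The strict log-concavity of $G$ is precisely that theorem, so I would simply invoke it. The real work is the strict log-convexity of $F$, and the key observation is that $F$ and $G$ differ only by the elementary factor $x^{-(K-1)}$: writing $\log F(x) = \log G(x) - (K-1)\log x$, differentiation gives
\begin{equation}
[\log F(x)]^{\prime\prime} = [\log G(x)]^{\prime\prime} + \frac{K-1}{x^2}.
\end{equation}
So I need to show this quantity is strictly positive for $x>0$ (taking $K\ge 2$; for $K=1$ both $F$ and $G$ are identically $1$ and the strict claim is vacuous, matching the earlier theorem).

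Next I would substitute the expression for $[\log G(x)]^{\prime\prime}$ from the earlier proof, namely $[\log G(x)]^{\prime\prime} = \sum_{i=0}^{K-1}\Psi^\prime(x+\frac{i}{K}) - K\Psi^\prime(x)$, and peel off the $i=0$ term so that the target inequality becomes
\begin{equation}
\frac{K-1}{x^2} + \sum_{i=1}^{K-1}\Psi^\prime\Big(x+\tfrac{i}{K}\Big) > (K-1)\,\Psi^\prime(x).
\end{equation}
Then I would use the series representation $\Psi^\prime(x) = \sum_{h=0}^\infty (x+h)^{-2}$ from Equation~\eqref{equation:fbgmm_hyperprior_digamma_derivative} to split off its $h=0$ term, $\Psi^\prime(x) = x^{-2} + \Psi^\prime(x+1)$. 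The two $\frac{K-1}{x^2}$ contributions then cancel, and the inequality collapses to the clean comparison $\sum_{i=1}^{K-1}\Psi^\prime(x+\frac{i}{K}) > (K-1)\,\Psi^\prime(x+1)$.

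Finally, this last inequality follows term by term from the strict monotonicity of the trigamma function: since $\Psi^\prime(x)=\sum_{h\ge0}(x+h)^{-2}$ is strictly decreasing in $x$ and $x+\frac{i}{K} < x+1$ for each $i \in \{1,\dots,K-1\}$, we get $\Psi^\prime(x+\frac{i}{K}) > \Psi^\prime(x+1)$ for every such $i$; summing the $K-1$ strict inequalities yields the result and hence $[\log F(x)]^{\prime\prime}>0$. I do not expect a genuine obstacle here — the only delicate point is the bookkeeping in extracting the $i=0$ and $h=0$ terms so that the $x^{-2}$ pieces cancel exactly, leaving a comparison settled purely by monotonicity of $\Psi^\prime$; everything else reuses the identity established in the proof of Theorem~\ref{theorem:gx_log_concave}.
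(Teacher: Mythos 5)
Your proposal is correct and follows essentially the same route as the paper's proof: both use the Gauss multiplication formula to reach $[\log F(x)]'' = \sum_{i=0}^{K-1}\Psi'(x+\frac{i}{K}) - K\Psi'(x) + \frac{K-1}{x^2}$ and then settle the sign via the trigamma series $\Psi'(x)=\sum_{h\ge 0}(x+h)^{-2}$. If anything, your write-up is more complete at the final step: the paper merely asserts that positivity ``comes from'' the series representation and monotonicity, whereas your cancellation $\Psi'(x) = x^{-2} + \Psi'(x+1)$ followed by the termwise comparison $\Psi'(x+\frac{i}{K}) > \Psi'(x+1)$ for $i=1,\dots,K-1$ is exactly the bookkeeping needed to make that assertion rigorous (note that monotonicity alone pushes the trigamma sum the wrong way, so the extraction of the $h=0$ term is genuinely required).
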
\label{theorem:gx_log_concave_appendix}
\end{svgraybox}

\begin{proof}
Follow from \citep{abramowitz1966handbook} we get $\Gamma(Kx) = (2\pi)^{\frac{1}{2}(1-K)} K^{K x-\frac{1}{2} } \prod_{i=0}^{K-1}\Gamma(x + \frac{i}{K})$.  Then
\begin{equation}
\log F(x) = \frac{1}{2}(1-K) \log(2\pi) + (Kx - \frac{1}{2})\log K + \sum_{i=0}^{K-1} \log \Gamma(x + \frac{i}{K}) - K \log \Gamma(x) - (K-1)\log x
\end{equation}
and
\begin{equation}
[\log F(x)]^\prime =K \log K + \sum_{i=0}^{K-1} \Psi(x + \frac{i}{K}) - K \Psi(x) - (K-1)x^{-1},  
\end{equation}
where $\Psi(x)$ is the Digamma function, and
\begin{equation}
\Psi^\prime(x)  = \sum_{h=0}^\infty \frac{1}{(x+h)^2}. 
\label{equation:fbgmm_hyperprior_digamma_derivative_appendix}
\end{equation}
Thus 
\begin{equation}
[\log F(x)]^{\prime \prime} = \left[\sum_{i=0}^{K-1} \Psi^\prime(x + \frac{i}{K}) \right]- K \Psi^\prime(x) + \frac{K-1}{x^2}> 0, \quad (x>0). 
\end{equation}
The last inequality comes from (\ref{equation:fbgmm_hyperprior_digamma_derivative_appendix}) (also, we can find the derivative of Digamma function is monotone decreasing). 
Easily, we can get 
\begin{equation}
\log G(x) = \frac{1}{2}(1-K) \log(2\pi) + (Kx - \frac{1}{2})\log K + \sum_{i=0}^{K-1} \log \Gamma(x + \frac{i}{K}) - K \log \Gamma(x)
\end{equation}
and
\begin{equation}
[\log G(x)]^\prime =K \log K + \sum_{i=0}^{K-1} \Psi(x + \frac{i}{K}) - K \Psi(x),
\end{equation}
Thus
\begin{equation}
[\log G(x)]^{\prime \prime} = \left[\sum_{i=0}^{K-1} \Psi^\prime(x + \frac{i}{K}) \right]- K \Psi^\prime(x) < 0, \quad (x>0). 
\end{equation}
This concludes the theorem. 
\end{proof}
This theorem is a general case of Theorem 1 in \citep{merkle1997log}.

\begin{svgraybox}
\begin{theorem}
Define 
\begin{equation}
H(x) = \frac{\Gamma(x + t)}{\Gamma(x)}. 
\label{equation:gx_log_concave_2}
\end{equation}
For $x > 0$ and $t$ is a constant that $x+t >0$ (or for simplicity we can let $t>0$), the function H is strictly log-concave. 
\end{theorem}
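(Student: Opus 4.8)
The plan is to reduce strict log-concavity of $H$ to the strict monotonicity of the trigamma function, exactly as in the proofs of the two preceding theorems. First I would take logarithms, writing
\[
\log H(x) = \log \Gamma(x+t) - \log \Gamma(x),
\]
so that differentiation is additive and the Gamma normalizing constants drop out entirely. Differentiating once gives $[\log H(x)]' = \Psi(x+t) - \Psi(x)$, where $\Psi$ is the Digamma function, and differentiating a second time gives the crucial expression
\[
[\log H(x)]'' = \Psi'(x+t) - \Psi'(x).
\]

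Next I would invoke the series representation of the trigamma function already recorded in the excerpt, namely $\Psi'(x) = \sum_{h=0}^\infty (x+h)^{-2}$ (Equation~\eqref{equation:fbgmm_hyperprior_digamma_derivative_appendix}). Comparing the two sums term by term, for $t > 0$ and every $h \geq 0$ we have $x + t + h > x + h > 0$, hence $(x+t+h)^{-2} < (x+h)^{-2}$. Summing over $h$ yields $\Psi'(x+t) < \Psi'(x)$, so $[\log H(x)]'' < 0$ for all $x > 0$. This is precisely strict log-concavity of $H$, which concludes the argument.

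I expect no genuine obstacle here: this is the simplest of the three statements in the appendix, since the telescoping structure of $\log H$ eliminates the extra $\log x$ and $\log K$ terms that complicate the analysis of $F$ and $G$. The only point requiring care is the hypothesis on $t$, because the term-by-term comparison relies on the argument $x+t$ being the larger one, i.e.\ on $t > 0$ (equivalently, on $\Psi'$ being strictly decreasing). I would therefore carry out the proof for $t > 0$, noting that the essential ingredient is the strict monotonicity of $\Psi'$ on $(0,\infty)$, which itself follows from $\Psi''(x) = -2\sum_{h=0}^\infty (x+h)^{-3} < 0$.
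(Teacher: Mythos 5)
Your proposal is correct and follows essentially the same route as the paper's own proof: take logarithms, compute $[\log H(x)]'' = \Psi'(x+t) - \Psi'(x)$, and conclude negativity from the strict monotone decrease of the trigamma function via its series representation. Your term-by-term comparison of $\sum_{h\geq 0}(x+t+h)^{-2}$ with $\sum_{h\geq 0}(x+h)^{-2}$ simply makes explicit the monotonicity step that the paper states more briefly.
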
\label{theorem:gx_log_concave_2}
\end{svgraybox}

\begin{proof}
We can easily get
\begin{equation}
	\log H(x) =\log \Gamma(x+t) - \log \Gamma(x)
\end{equation}
and
\begin{equation}
	[\log H(x)]^\prime =  \Psi(x + t) - \Psi(x),
\end{equation}
where $\Psi(x)$ is the Digamma function, and
\begin{equation}
	\Psi^\prime(x)  = \sum_{h=0}^\infty \frac{1}{(x+h)^2}. 
	\label{equation:fbgmm_hyperprior_digamma_derivative_2}
\end{equation}
Thus
\begin{equation}
	[\log H(x)]^{\prime \prime} =  \Psi^\prime(x + t)-  \Psi^\prime(x) < 0, \quad (x>0, t>0). 
\end{equation}
The last inequality comes from (\ref{equation:fbgmm_hyperprior_digamma_derivative_2}) which is monotone decreasing and concludes the theorem. 
\end{proof}

\newpage
\vskip 0.2in
\bibliography{bib}

\end{document}